%% file: main.tex
\documentclass[accepted]{uai2026} % after acceptance, for a revised version; 
\usepackage[american]{babel}
\usepackage{natbib} % has a nice set of citation styles and commands
\usepackage{mathtools} % amsmath with fixes and additions
\usepackage{booktabs} % commands to create good-looking tables
\usepackage{tikz} % nice language for creating drawings and diagrams

\usepackage{algorithm}
\usepackage[noend]{algpseudocode}
\algrenewcommand\algorithmicindent{0.7em} 

\usepackage{amsmath}

\usepackage{booktabs} % for professional tables
\usepackage{hyperref}

\usepackage{hyperref}
\hypersetup{
    colorlinks=true,
    linkcolor=blue,    % Internal links (sections, equations)
    citecolor=blue,    % Citations
    filecolor=magenta, 
    urlcolor=blue
}

\usepackage{amsmath}
\usepackage{amssymb}
\usepackage{mathtools}
\usepackage{amsthm}

\usepackage{tikz-cd}

\usepackage{etoc}
\usepackage[capitalize,noabbrev]{cleveref}

\theoremstyle{plain}
\usepackage[textsize=small]{todonotes}
\setuptodonotes{inline}

\input{header.tex}

\input{def.tex}

\title{Gaussian Approximation and Multiplier Bootstrap for Federated Linear Stochastic Approximation}

\author[1]{{Ilya Levin$^{*}$}{}}

\author[1]{%
Maksim Shuklin$^{*}$
}

\author[2]{%
Eric Moulines
}

\author[3]{%
Paul Mangold
}

\author[1]{%
Sergey Samsonov
}

\affil[1]{%
    HSE University, Russia
}

\affil[2]{%
    MBZUAI
}

\affil[3]{%
    CMAP, CNRS, École Polytechnique, Institut Polytechnique de Paris, 91120 Palaiseau, France
}

\begin{document}
\maketitle

\begin{abstract}
 In this paper, we establish Berry-Esseen-type bounds for federated linear stochastic approximation (LSA). Our results provide the first federated Gaussian approximations for LSA that explicitly capture communication-computation trade-offs and heterogeneity-aware error terms, quantifying the effects of local step size, number of local updates, and heterogeneity on convergence rates. We present results for both (i) constant step size regime and (ii) decreasing step size with an increasing number of local iterations, recovering the recent rates of~\cite{bonnerjee2025sharp} as a special case. As a primary application of our results, we develop an online multiplier bootstrap procedure for inference on the last iterate, which avoids explicit estimation of the asymptotic covariance matrix, and obtain non-asymptotic validity guarantees for this procedure. % Our analysis relies on recent Gaussian approximation techniques for nonlinear statistics of independent random variables.
  % In this paper, we establish the Berry-Esseen type bounds for federated linear stochastic approximation (FedLSA) algorithm. Our results provide the first federated Gaussian approximations for LSA with communication–computation trade‑offs and heterogeneity‑aware error terms. We quantify the effects of local step size, number of local steps, and agent heterogeneity on the convergence rates. In particular, in the regime of constant number of local iterations our finding recover the recent rates obtained in [Bonnerjee et al, 2025]. Our analysis builds on the Gaussian approximation results for nonlinear statistics of independent random variables.
\end{abstract}

\section{Introduction}
\label{sec:intro}
\input{introduction}

\section{Related Work}
\label{sec:related_work}

\input{related_work}

\section{Federated LSA Setting}
\label{sec:sgd_decreasing_step_size}
\input{fedlsa_setting}

\section{Gaussian Approximation}%  for FedLSA}
\label{sec:normal_approximation_fedlsa}
\input{normal_approx}

\section{Numerical experiments}
\label{sec:numeric_experiments}

\input{numerical_section}

\section{Conclusion}
\label{sec:conslusion}
\input{conclusion}

% \clearpage
% \newpage
% References
\bibliography{uai2026-template}

\newpage
\clearpage
\appendix
\onecolumn
\section*{Appendix}
\etocsettocstyle{\subsection*{Table of Contents}}{}
\etocsetnexttocdepth{subsubsection}
\tableofcontents

% \title{Supplementary Material}
% \maketitle

\section{Proof for moment bounds}
\label{sec:sgd_decreasing_step_size_proofs}
\input{appendix/fedlsa_decreasing}
\input{appendix/fedlsa_boot}

\section{Experimental Results}
\label{sec:experiments}
\input{appendix/experiments}

\section{Technical results}
\label{sec:add_lemmas}
\input{appendix/add_lemmas}
\label{sec:technical-lemmas}
\input{appendix/technical_lemmas}

\end{document}

%% file: header.tex
\usepackage{graphicx}	% Including figure files
\usepackage[lofdepth,lotdepth]{subfig}
\usepackage{amsmath,mathtools,amsthm}	% Advanced maths commands
\usepackage{amssymb,dsfont,bbm}	% Extra maths symbols
\usepackage{xcolor}  % colors
\usepackage{comment}
\usepackage{bm}

\usepackage{xargs}
\usepackage{fancyhdr}
\usepackage{natbib}

\usepackage{setspace}
\usepackage{lastpage}
\usepackage{upgreek}

\usepackage{mathtools}	% Advanced maths commands
\usepackage{enumitem}
\usepackage{mathrsfs}
\usepackage{aliascnt}
\usepackage{cleveref}

\usepackage{booktabs}
\usepackage{subcaption}

\usepackage{tabularx}
\newcolumntype{Y}{>{\centering\arraybackslash}X}

%% file: def.tex
\newcommandx{\norm}[2][2=]{\Vert#1 \Vert_{{#2}}}
\newcommandx{\bnorm}[2][2=]{\Big\Vert#1 \Big\Vert_{{#2}}}

\newcommandx{\thetalim}[1][1=]{\theta^{#1}_\star}
\newcommandx{\noisecov}[1][1=]{\Sigma^{#1}_{\varepsilon}}
\newcommandx{\noisecovA}[1][1=]{\Sigma^{#1}_{\funcAw}}
\newcommandx{\noisecovst}[1][1=]{\Sigma^{#1}_{*}}

\def\step{\eta}
\def\initstep{\eta}
\def\msz{\mathsf{Z}}
\def\mcz{\mathcal{Z}}
\def\Var{\mathrm{Var}}

\newcommand{\supnorm}[1]{\norm{ #1 }[\infty]}
\def\supconsteps{\supnorm{\funnoisew}}

\def\varhet{\bar{\sigma}_{\textrm{het}}}
\def\varbound{\bar{\sigma}_{\varepsilon}}
\def\varboundA{\bar{\sigma}_{\funcAw}}
\def\varnoise{\bar{\sigma}_{\textrm{noise}}}
\newcommand{\sumHT}[1]{{S}_{#1}}

\def\noisecovavgst{\Sigma_{*}^{\sf avg}}

\def\firsthgty{\zeta_1}
\def\secondhgty{\zeta_2}

\def\funcAw{\mathbf{A}}

\def\funcbw{\mathbf{b}}
\newcommand{\funcb}[1]{\funcbw(#1)}
\newcommand{\zfuncA}[2][1=]{\funcAw^{#1}(#2)}
\newcommand{\zfuncb}[2][1=]{\funcbw^{#1}(#2)}
\newcommandx{\zmfuncA}[2][1=]{\widetilde{\funcAw}^{#1}(#2)}
\newcommandx{\zmfuncAw}[1][1=]{\widetilde{\funcAw}_{#1}}
\newcommandx{\zmfuncbw}[1][1=]{\widetilde{\funcbw}_{#1}}
\newcommandx{\zmfuncb}[2][1=]{\widetilde{\funcbw}^{#1}(#2)}
\def\funnoisew{\varepsilon}
\newcommandx{\funcnoise}[2][1=]{\funnoisew^{#1}(#2)}
\newcommandx{\funcnoiset}[1][1=]{\tilde{\funnoisew}^{#1}}
\newcommandx{\bfuncnoise}[1]{\bar{\funnoisew}(#1)}
\newcommandx{\funcnoiseth}[3][1=]{\funnoisew^{#1}(#2,#3)}
\newcommandx{\bfuncnoiseth}[2]{\bar{\funnoisew}(#1,#2)}

\newcommandx{\funcct}[2][1=]{\funcctilde^{#1}(#2)}
\def\qcond{\kappa_{Q}}
\def\Auxconst{\operatorname{c}}

\newcommandx{\bA}[1][1=]{\bar{\mathbf{A}}^{#1}}
\newcommandx{\barb}[1][1=]{\bar{\mathbf{b}}^{#1}}

\def\nagent{N}

\def\nlupdates{H}

\def\initnlupdates{H}

\def\varphiavg{\varphi^{\sf avg}}
\def\tauavg{\tau^{\sf avg}}
\def\rhoavg{\rho^{\sf avg}}
\def\Gammaavg{\Gamma^{\sf avg}}
\def\Gammaavgboot{\Gamma^{\boot, \sf avg}}
\newcommandx{\Gammaavgpert}[1]{\Gamma^{\sf{avg}, (#1)}}

\def\Deltaavg{\Delta^{\sf avg}}

\newcommandx{\Deltaavgpert}[1]{\Delta^{\sf{avg}, (#1)}}

\def\Epsavg{\mathcal{E}^{\sf avg}}

\def\Epsavgboot{\mathcal{E}^{\boot, \sf avg}}
\def\Gavg{G^{\sf avg}}

\def\boot{\mathsf{b}}
\newcommand{\PPb}{\mathbb{P}^\boot}
\newcommand{\PEb}{\mathbb{E}^\boot}

\def\Conv{\mathsf{C}}
\def\ProdB{\Gamma}

\newcommandx{\ProdBatc}[2]{\ProdB_{#1}^{#2}}
\newcommandx{\avgProdB}[2]{\bar{\Gamma}_{#1}^{#2}}
\newcommandx{\avgProdDetB}[2]{\bar{\Gamma}^{#2}_{#1}}
\newcommandx{\RavgProdDetB}[2]{\bar{\Theta}^{#2}_{#1}}
\newcommandx{\avgbias}[1]{\bar{\rho}_{#1}}
\newcommandx{\avgflbias}[1]{\bar{\tau}_{#1}}
\newcommandx{\bbias}[2]{\rho_{#2}}
\newcommandx{\vbias}[1]{\tilde{v}_{#1}}
\newcommandx{\avgfl}[1]{\bar{\varphi}_{#1}}
\def\kolmogorov{\mathsf{d}_{\Conv}}
\def\kolmogorovboot{\mathsf{d}_{\Conv}^\boot}

\newcommand{\bConst}[1]{\operatorname{C}_{{\bf #1}}}

\newcommand{\PP}{\mathbb{P}}

\def\Id{\mathrm{I}}
\def\rmd{\mathrm{d}}
\def\rme{\mathrm{e}}
\def\rset{\mathbb{R}}

\newcommand{\continuation}{??}

\def\eqsp{\,}

\def\PE{\mathbb{E}}

\newcommandx{\CPE}[3][1=]{\mathsf{E}_{#1}\bigl[  #2 | #3 \bigr]}

\def\nset{\mathbb{N}}
\def\rset{\mathbb{R}}

\DeclareMathAlphabet\mathbfcal{OMS}{cmsy}{b}{n} % Bold Mathcal font

\crefname{section}{Section}{Sections}
\crefname{appendix}{Appendix}{Appendices}

\newtheorem{theorem}{Theorem}
\crefname{theorem}{theorem}{theorems}
\Crefname{theorem}{Theorem}{Theorems}

\newtheorem{lemma}{Lemma}
\crefname{lemma}{lemma}{lemmas}
\Crefname{lemma}{Lemma}{Lemmas}

\newtheorem{corollary}{Corollary}
\crefname{corollary}{corollary}{corollaries}
\Crefname{corollary}{Corollary}{Corollaries}

\newtheorem{proposition}{Proposition}
\crefname{proposition}{proposition}{propositions}
\Crefname{proposition}{Proposition}{Propositions}

\newaliascnt{definition}{theorem}

\aliascntresetthe{definition}
\crefname{definition}{definition}{definitions}
\Crefname{Definition}{Definition}{Definitions}

\newaliascnt{definitionProposition}{theorem}

\aliascntresetthe{definitionProposition}
\crefname{Proposition and Definition}{Proposition and Definition}{Proposition and Definition}
\Crefname{Proposition and Definition}{Proposition and Definition}{Proposition and Definition}

\newaliascnt{remark}{theorem}

\aliascntresetthe{remark}
\crefname{remark}{remark}{remarks}
\Crefname{Remark}{Remark}{Remarks}

\crefname{example}{example}{examples}
\Crefname{Example}{Example}{Examples}

\crefname{figure}{figure}{figures}
\Crefname{Figure}{Figure}{Figures}

\newtheorem{assumption}{\textbf{A}\hspace{-3pt}}
\Crefname{assumption}{\textbf{A}\hspace{-3pt}}{\textbf{A}\hspace{-3pt}}
\crefname{assumption}{\textbf{A}}{\textbf{A}}

\newtheorem{assumprime}{\textbf{A'}\hspace{-1pt}}
\Crefname{assumprime}{\textbf{A'}\hspace{-1pt}}{\textbf{A'}\hspace{-1pt}}
\crefname{assumprime}{\textbf{A'}}{\textbf{A'}}

\newtheorem{stepsize}{\textbf{S}\hspace{-3pt}}
\Crefname{stepsize}{\textbf{S}\hspace{-3pt}}{\textbf{S}\hspace{-3pt}}
\crefname{stepsize}{\textbf{S}}{\textbf{S}}

\Crefname{assumptionC}{\textbf{C}\hspace{-3pt}}{\textbf{C}\hspace{-3pt}}
\crefname{assumptionC}{\textbf{C}}{\textbf{C}}

\def\Zset{\mathsf{Z}}

\newcommandx{\metricd}[1][1=]{\mathsf{d}_{#1}}

\def\mrl\mathrm{L}

\def\iid{i.i.d.}
\def\gauss{\mathrm{N}}

\newcommand{\term}[1]{\boldsymbol{\mathrm{#1}}}

%% file: introduction.tex
% Something about the great importance of our bounds.
Federated learning \citep{mcmahan2017communication, kairouz2021advances} enables collaborative training of machine learning models across distributed data sources. In this context, the Federated Averaging (FedAvg) algorithm \citep{mcmahan2017communication} is considered the most fundamental algorithm in federated learning. It reduces communication complexity by using local updates with periodic aggregation \citep{stich2018local, khaled2020tighter, woodworth2020minibatch}. Most of the federated learning literature adopts an optimization perspective, establishing convergence rates and communication complexity for variants of the FedAvg optimization scheme \citep{khaled2020tighter, karimireddy2020scaffold, li2020federated, mishchenko2022proxskip, ogier2022flamby}. At the same time, an important question concerns inference procedures that aim to provide confidence intervals for parameters of interest or their functionals. The most challenging aspect of theoretical grounding in such algorithms is the Gaussian approximation (GAR) for the underlying model parameters. While there is a growing number of results studying the rates of Gaussian approximation for SGD \citep{shao2022berry, sheshukova2025gaussian} and other stochastic approximation algorithms \citep{samsonov2024gaussian, wu2024statistical, wu2025uncertainty, butyrin2025improved}, the number of related contributions to the analysis of federated learning algorithms remains limited.

%In the centralized setting, significant effort has been devoted to understanding the limiting behavior of SGD, establishing central limit theorems for the algorithm's iterates \citep{polyak1992acceleration, kushner2003stochastic, benveniste2012adaptive}, and high-probability convergence rates \citep{rakhlin2011making, sadiev2023high}. More recently, non-asymptotic convergence rates to a Gaussian limit, known as \emph{Berry-Esseen} type bounds, have been derived for SGD \citep{samsonov2024gaussian, sheshukova2025gaussian}. These results are particularly important for deriving valid statistical inference and obtaining confidence intervals through multiplier bootstrap methods \citep{fang2018online, fang2019scalable, zhong2023online}.

To our knowledge, the few existing works on statistical inference for FedAvg \citep{li2022statistical,gu2024statistical} establish only \emph{asymptotic} convergence to a limiting Gaussian distribution. A notable exception is the recent work of \citet{bonnerjee2025sharp}, which establishes GAR rates for federated averaging with a \emph{constant} number of local updates. Notably, they apply their rates to an inference procedure based on the multiplier bootstrap approach \citep{fang2018online} and conjecture, without proof, a rate of approximation for this procedure for the Polyak-Ruppert averaged iterates. This remains an important gap in the federated literature, which lacks non-asymptotic methods with formal guarantees for quantifying uncertainty.

In this paper, we aim to close this gap and develop GAR results for federated averaging with an \emph{increasing number of local iterations}, along with an inference procedure based on the multiplier bootstrap with non-asymptotic validity. Our main contributions are as follows:

%In this paper, we develop a new statistical inference result for FedAvg for the federated linear stochastic approximation problem (see detailed setting in \Cref{sec:sgd_decreasing_step_size}). We provide convergence rates, in terms of convex distance, of FedAvg toward a limiting normal distribution, whose variance we characterize. Importantly, we derive these rates for decreasing step size and an \emph{increasing number of local iterations}. Based on these results, we construct a multiplier bootstrap procedure and establish its validity, confirming the possibility of statistical inference via bootstrap conjectured by \citet{bonnerjee2025sharp}. 

\begin{itemize}[leftmargin=*]
    \item We present new high-order moment bounds for the error of the FedLSA algorithm with decreasing step size and increasing local iterations, capturing the effects of heterogeneity bias and noisy updates. To our knowledge, these are the first moment bounds of order $p$ with $p \geq 2$ available in the literature.
    \item We present the first Gaussian approximation bound for the last iterate of FedLSA (see \Cref{thm:gauss_approx_last_iter}). Our analysis establishes a rate of order up to $t^{-1/2}$ in terms of convex distance between probability distributions (see \Cref{sec:normal_approximation_fedlsa} for precise definitions). Our bounds are \emph{fully non-asymptotic} and capture the interplay between the number of local iterations and the choice of step size. 
    \item  We present an inference procedure to approximate the error of the last iterate of FedLSA using the multiplier bootstrap \citep{fang2018online}. Specifically, our bounds show that this distribution can be approximated at a rate of up to $t^{-1/2}$ in convex distance, bypassing the Gaussian approximation step involving the asymptotic covariance matrix of the last iterate, $\Sigma_{\infty}$; see \Cref{sec:bootstrap}. This underscores the distinction between our proposed procedure and various plug-in or batch-mean techniques for constructing confidence intervals, which seek to directly approximate $\Sigma_{\infty}$. Therefore, we provide \emph{the first theoretically grounded method for assessing uncertainty in federated learning}, confirming the hypothesis of \cite{bonnerjee2025sharp}.
\end{itemize}
\paragraph{Notations and definitions.} For a matrix $A \in \rset^{d \times d}$, we denote its operator norm by $\norm{A}$. Let $\nagent$ be the number of agents; we use the notation $\smash{\PE_c[a_c] = \nagent^{-1}\sum_{c=1}^\nagent a_c}$ for the average over different agents. For a symmetric positive-definite matrix $Q = Q^\top \succ 0$, $Q \in \rset^{d \times d}$, and $x \in \rset^{d}$, we define the corresponding norm $\|x\|_Q = \sqrt{x^\top Q x}$, and the respective matrix $Q$-norm of a matrix $B \in \rset^{d \times d}$ by $\norm{B}[Q] = \sup_{x \neq 0} \norm{Bx}[Q]/\norm{x}[Q]$. For sequences $a_n$ and $b_n$, we write $a_n \lesssim_{\log_n} b_n$ if there exist $c, \alpha > 0$ (not depending on $n$) such that $a_n \leq c (1+\log n)^{\alpha} b_n$. In this text, the following abbreviations are used: "w.r.t." stands for "with respect to," "\iid" for "independent and identically distributed," and "GAR" for "Gaussian Approximation."

%% file: related_work.tex
% \paragraph{Gaussian approximation.}

% \paragraph{Statistical inference.}

% \begin{itemize}
%     \item Inference algorithms: asymptotic covariance matrix estimation (iid + Markov noise), and bootstrap.
% \end{itemize}

\paragraph{Federated stochastic approximation.}
Most of the federated learning literature has aimed to establish convergence rates for federated averaging under various assumptions \citep{mcmahan2017communication, khaled2020tighter, wang2024unreasonable, mangold2025refined, glasgow2022sharp, patel2023still, stich2018local, pmlr-v130-gorbunov21a}, along with mechanisms to mitigate the impact of heterogeneity using control variates \citep{karimireddy2020scaffold, mishchenko2022proxskip, mangold2025scaffold, luo2025revisiting}, local proximal updates \citep{li2020federated, grudzien2023can}, or modified objectives \citep{wang2021novel, mitra2021linear}.
% A substantial part of the federated learning literature aimed to characterize the communication-computation trade-off of federated averaging with homogeneous clients \citep{}. 
% under data heterogeneity, as well as algorithmic mechanisms to mitigate the client drift (e.g., control variates, proximal corrections, and related ideas) \citep{khaled2020tighter,karimireddy2020scaffold,li2020federated,mishchenko2022proxskip}.

Finite-sample properties of federated SA were first studied by \citet{doan2020local,wai2020convergence,liu2023distributed} without local training. \citet{khodadadian2022federated,wang2024federated} subsequently examined federated SA with local updates. More recently, \citet{mangold2024scafflsa,zhu2025achieving} analyzed federated linear SA with local training and bias correction. In this paper, we extend this line of work by addressing the decreasing-step-size regime alongside an increasing number of local iterations, and by obtaining $p$-moment bounds.

\paragraph{Gaussian approximation for SGD and SA.}
Classical asymptotic normality results for stochastic approximation and SGD algorithms date back to foundational papers \citep{polyak1992acceleration, kushner2003stochastic, benveniste2012adaptive}. Gaussian approximation and Berry-Esseen type bounds for nonlinear statistics of i.i.d. random variables were extensively studied in \cite{shorack2000probability, shao2022berry}. These methods were applied to Polyak–Ruppert averaged linear stochastic approximation in \cite{samsonov2024gaussian, wu2024statistical}, and to SGD in \cite{sheshukova2025gaussian}, along with non-asymptotic multiplier bootstrap guarantees. The case of stochastic approximation with Markovian noise was recently studied in \cite{wu2025uncertainty, samsonov2025statistical, liu2025central}. In the decentralized setting, Gaussian approximation results are much more limited: \citet{bonnerjee2025sharp} established Gaussian approximation results for local SGD, including a Berry-Esseen bound for Polyak–Ruppert averaged SA and outlined the corresponding results for the last iterate, both in the regime of a \emph{constant} number of local training steps. In contrast, our results track the heterogeneity-aware error terms that arise in the more general regime of an increasing number of local training steps.

\paragraph{Statistical inference.} Classical methods for constructing confidence sets for optimal parameters are based on direct estimation of the asymptotic covariance matrix using plug-in or batch-mean methods \cite{chen2020aos,chen2021statistical,chen2022online}. For federated learning, \citet{li2022statistical} studied asymptotic versions of the functional CLT and their applications to statistical estimation via plug-in and random scaling methods. \citet{gu2024statistical} developed similar inference based on batch-mean methods. Both papers remain asymptotic in their analysis. Alternative approaches use bootstrap and resampling schemes \cite{fang2018online,zhong2023online}, with non-asymptotic analysis provided in \cite{samsonov2024gaussian,samsonov2025statistical}. I this paper, we aim to generalize the fully non-asymptotic results obtained in these papers for the setting of Federated LSA.

%% file: fedlsa_setting.tex
In this paper, we consider federated linear stochastic approximation (LSA) with $\nagent$ agents, who aim to collaboratively solve the following linear system:
\begin{equation}
 \label{eq:lsa_eq_def}
 \bA \thetalim = \barb \eqsp, ~\text{with} \eqsp~ \bA = \frac{1}{N} \sum_{c = 1}^N \bA[c], \eqsp \barb = \frac{1}{N} \sum \limits_{c = 1}^N \barb[c],
\end{equation}
where, for $c \in \{1, \dots, \nagent\}$, $\bA[c] \in \rset^{d \times d}$ and $ \barb[c] \in \rset^d$ are distributed among multiple agents.
Furthermore, we assume that the system matrices $\bA[c]$ and vectors $\barb[c]$ are unknown, and agent $c$ only has access to estimates of $\bA[c]$ and $\barb[c]$.

\paragraph{Stochastic estimation.}
To estimate $\bA[c]$ and $\barb[c]$, agent $c$ obtains independent samples $Z_1^c, \dots, Z_T^c$ from a distribution $\pi_c$, and computes, for $t \in \{1, \dots, T\}$, the estimates $\zfuncA[c]{Z_t^c}$ and $\zfuncb[c]{Z_t^c}$ such that
\begin{equation}
\PE_{Z \sim \pi_c}[\zfuncA[c]{Z}] = \bA[c]
\eqsp,
\quad
\PE_{Z \sim \pi_c}[\zfuncb[c]{Z}] = \barb[c]
\eqsp.
\end{equation}
We introduce the notation for the errors of observations
\begin{equation}
\label{eq:def-tilde-Ab-epsilon}
\begin{aligned}
    \zmfuncA[c]{z} &= \zfuncA[c]{z} - \bA[c] \eqsp, \quad \zmfuncb[c]{z} = \zfuncb[c]{z} - \barb[c] \eqsp, \\
    &\quad \funcnoise[c]{\theta, z} = \zmfuncA[c]{z} \theta - \zmfuncb[c]{z} \eqsp.
\end{aligned}
\end{equation}
We assume that the $\zfuncA[c]{\cdot}$ satisfy the following assumption, which is classical in the LSA literature \citep{samsonov2024improved, mangold2024scafflsa}.
\begin{assumption}[p]
\label{ass: linear-decay}
For each agent $c \in [\nagent]$, the matrix $-\bA[c]$ is Hurwitz, and there exist constants $a > 0$ and $\eta_{\infty,p} > 0$ such that 
$\eta_{\infty,p} a \leq \frac{1}{2}$ and for all $0 < \eta < \eta_{\infty,p}$ and $u \in \mathbb{R}^d$,
\[
\PE^{1/p}
\bigl[
\|(\Id - \eta \zfuncA[c]{Z})u\|^p
\bigr]
\leq
(1 - \eta a)\|u\| \eqsp,
\]
for $Z \sim \pi_c$ with a distribution $\pi_c$ over $\mathsf{Z}$.
\end{assumption}
% \begin{assumption}[p]
% \label{ass: linear-decay}
%     There exists some constant $a > 0$ and $\eta_{\infty,p} > 0$, such that $\eta_{\infty,p} a \leq \frac{1}{2}$ and for any $0 < \eta < \eta_{\infty,p}$, $c = 1 \ldots N$ and $u \in \mathbb{R}^d$ it holds 
%     \[
%     \PE^{1/p} [ \| (\Id - \eta A^c (Z)) u \|^p ] \leq (1 - \eta a) \| u \|\eqsp, 
%     \]
%     for $Z \sim \pi_c$ with a distribution $\pi_c$ over $\mathsf{Z}$.
% \end{assumption}
% We note that Assumption~\Cref{ass: linear-decay} is standard in the analysis of linear stochastic approximation and has been adopted in [..., ..., and ...], as it is satisfied, for instance, in the case of TD learning.
This assumption ensures the existence and uniqueness of $\thetalim$ and $\thetalim[c]$, respectively defined as the solutions of $\bA \thetalim = \barb$ and $\bA[c] \thetalim[c] = \barb[c]$.
Given the existence of $\thetalim[c]$, we will sometimes omit the dependence on $\theta$ in $\funcnoise[c]{\thetalim[c], z}$, denoting
\begin{equation}
\funcnoise[c]{z}
=
\funcnoise[c]{\thetalim[c], z}
=
\zmfuncA[c]{z} \thetalim[c] - \zmfuncb[c]{z}
\eqsp.
\end{equation}
To handle stochasticity in the estimates of $\bA[c]$ and $\barb[c]$, we introduce the following quantities:
\begin{align*}
    \supconsteps = \max_{c \in [\nagent]} \sup_{z \in \Zset} \norm{\funcnoise[c]{z}} \eqsp, \quad \bConst{A} = \max_{c \in [\nagent]} \sup_{z \in \Zset} \norm{\zfuncA[c]{z}} \eqsp,
\end{align*}
which measure observation noise and the stochastic part of the LSA error. Now, we can introduce assumptions about the underlying random process and the stability of the system.
\begin{assumption}
\label{assum:noise-level-flsa}
For each agent $c$, $(Z_{k}^c)_{k \in \nset}$ are \iid\ random variables with values in $(\msz,\mcz)$ and distribution $\pi_c$ satisfying $\PE_{\pi_c}[\zfuncA[c]{Z_{k}^c}] = \bA[c]$ and $\PE_{\pi_c}[ \funcb{Z_{k}^c} ] = \barb[c]$. Moreover, the uniform boundness condition holds
\begin{align*}
    \supconsteps &= \max_{c \in [\nagent]} \sup_{z \in \Zset} \norm{\funcnoise[c]{z}} < \infty \eqsp,\\
    \bConst{A} &= \max_{c\in [\nagent]}\sup_{z\in\Zset}\norm{\zfuncA[c]{z}} \vee \norm{\zmfuncA[c]{z}} < \infty \eqsp.
\end{align*}
\end{assumption}
\paragraph{Heterogeneity.}
For federated LSA, a crucial challenge is to ensure that we obtain a solution to the averaged system \eqref{eq:lsa_eq_def}, even though $\bA[c]$ and $\barb[c]$ differ across agents. To assess the impact of heterogeneity, we measure the differences between $\bA[c]$ and $\barb[c]$ using the following two quantities.
\begin{equation}
\label{eq:def-heterogeneity}
\firsthgty^2 \!=\! \frac{1}{\nagent} \sum_{c=1}^\nagent \norm{ \bA[c](\thetalim[c] \!- \thetalim)}^2
\eqsp,
~
\secondhgty^2 \!=\! \frac{1}{\nagent} \sum_{c=1}^\nagent \norm{ \bA[c] \!- \bA }^2
\eqsp,
\end{equation}
where, $\firsthgty$ measures the heterogeneity of the solutions, while $\secondhgty$ measures the heterogeneity of the matrices $\bA[c]$. Under the standard assumptions \Cref{ass: linear-decay} and \Cref{assum:noise-level-flsa}, these two quantities are always defined and bounded. They play a key role in analyzing the impact of heterogeneity.

\paragraph{Federated LSA algorithm.} 
To solve the federated LSA problem~\eqref{eq:lsa_eq_def}, we use the \textsc{FedLSA} algorithm~\cite{doan2020local,mangold2024scafflsa}. This method leverages local training on each agent to reduce communication overhead. At each communication round $t$, the server broadcasts the current global model $\theta_t$ to all agents. Each agent $c$ then performs $H_t$ local stochastic updates using its own data, starting from $\theta_{t,0}^c = \theta_{t-1}$ and, for $h = 1, \dots, H_t$, performing the local update
\begin{equation}
\label{eq:fedlsa_iter_def_main}
\theta^c_{t,h} = \theta^c_{t,h-1} - \eta_t \big( \mathbf{A}^c(Z^c_{t,h}) \, \theta^c_{t,h-1} - \mathbf{b}^c(Z^c_{t,h}) \big) \eqsp, % \quad h = 1, \dots, H_t,
\end{equation}
where $\eta_t$ is the learning rate at round $t$. After completing the local updates, the server aggregates the agent models by averaging: $\theta_{t+1} = \frac{1}{N}\sum_{c=1}^N \theta^c_{t,H_t}$. The full procedure is detailed in Algorithm~\ref{alg:fedlsa}. In contrast to previous work, we consider a decreasing sequence of step sizes along with an increasing sequence of local step numbers. 
\begin{algorithm}[t]
\caption{FedLSA}
\label{alg:fedlsa}
\begin{algorithmic}[1]

\State \textbf{Input:} stepsizes $\{\eta_t\}_{t=1}^T$, 
initial model $\theta_0 \in \mathbb{R}^d$, 
number of rounds $T$, 
number of agents $N$, 
local steps $H_t$
\For{$t = 1$ to $T$}
   % \State $\theta_{t,0} \gets \theta_{t-1}$
    
    \For{$c = 1$ to $N$}
        \State Initialize current round $\theta^c_{t,0} \gets \theta_{t-1}$
        
        \For{$h = 1$ to $H_t$}
            \State Receive data noise $Z^c_{t,h}$
            \State Compute: \mbox{$g_{t,h}^c \leftarrow \mathbf{A}^c(Z^c_{t,h})\theta^c_{t,h-1}-\mathbf{b}^c(Z^c_{t,h})$}
            \State Update local parameter: $\theta_{t,h}^c \leftarrow \theta_{t,h-1}^c - \eta_t g_{t,h}^c$
            % \vspace{-1.5em}
            % % \State Update $\theta^c_{t,h}$ using
            % %\State
            % { \small
            % \begin{equation*}
            % \qquad \quad {\theta^c_{t,h}\gets\theta^c_{t,h-1}-\eta_t()}
            % \end{equation*}
            % \vspace{-1.5em}
            % }
            % \State $\theta^c_{t,h} \gets 
            % \theta^c_{t,h - 1}
            % - \eta_t\big(
            % \mathbf{A}^c(Z^c_{t,h})
            % \theta^c_{t,h-1}
            % - \mathbf{b}^c(Z^c_{t,h})
            % \big)$
        \EndFor
    \EndFor
    
    \State Update global parameter $
    \theta_t \gets
    \frac{1}{N}
    \sum_{c=1}^{N}
    \theta^c_{t,H_t}$
    
\EndFor
\State \textbf{Return:} final parameter $\theta_{T}$ 

\end{algorithmic}
\end{algorithm}

\section{Moment bounds for FedLSA}
We first derive bounds on the moments of the error in FedLSA. First, we bound the second moment of the FedLSA error when using decreasing step sizes and increasing numbers of local iterations. This result will be used in \Cref{sec:normal_approximation_fedlsa} to obtain rates for Gaussian approximation in FedLSA. Next, we extend the analysis to high-order moment bounds that will be used in the analysis of the federated multiplier bootstrap procedure presented in \Cref{sec:bootstrap}.

\subsection{Error Decomposition for FedLSA}
\label{sec:decomp-error-fedlsa}
To derive our moment bounds, we extend the error decomposition framework from \cite{mangold2024scafflsa} to accommodate non-constant step sizes and numbers of local iterations. We define the following quantities:
\begin{align}
%\label{eq:def-gamma-prod}
\nonumber
    \Gamma^c_{t, l:r} = \prod \limits_{h = l}^r (\Id - &\eta_{t, h} \zfuncA[c]{Z_{t,h}^c})
    \eqsp, \eqsp\eqsp
%     \qquad
%     \Gamma^c_{t} = \Gamma^c_{t,1:\nlupdates_t}
%     \eqsp,\\
%     &\qquad
% \textstyle
    \Gammaavg_t = \frac{1}{\nagent} \sum_{c=1}^\nagent \Gamma^c_{t,1:\nlupdates_t}
    \eqsp.
\end{align}
For brevity, we will denote $\Gamma^c_{t} = \Gamma^c_{t,1:\nlupdates_t}$.
We also define the noisy part of these matrices as
\begin{align*} \textstyle
    \Delta_{s, t} = 
    \prod_{i = s + 1}^t \Gammaavg_i 
    - \mathbb{E} \big[\prod_{i = s + 1}^t \Gammaavg_i \big] \eqsp,
\end{align*}
which captures the fluctuations of the matrix product $\Gammaavg_i$. Using these notations, we obtain the following error decomposition for the last iterate:
\begin{align}
\label{eq:decomp-theta-t-one-iter-main}
\theta_t - \thetalim = \tilde{\theta}_t^{\mathrm{(tr)}} + \tilde{\theta}_t^{\mathrm{(bi, bi)}} + \tilde{\theta}_t^{\mathrm{(fl, bi)}} + \tilde{\theta}_t^{\mathrm{(fl)}} \eqsp,
\end{align}
where we introduce the following notations
\begin{align}
    \label{eq:def-theta-tr}
    \tilde{\theta}_t^{\mathrm{(tr)}}
    & = \textstyle
    \prod_{s = 1}^t \Gammaavg_s \left\{ \theta_0 - \thetalim \right\}
    \eqsp, 
    \\
\label{eq:def-theta-bibi}
    \tilde{\theta}_t^{\mathrm{(bi, bi)}} 
    & = \textstyle
    \sum_{s = 1}^t \mathbb{E} \Big[\prod_{i = s + 1}^t \Gammaavg_i \Big] \rhoavg_{s} 
    \eqsp,
    \\
\label{eq:def-theta-flbi}
    \tilde{\theta}_t^{\mathrm{(fl, bi)}}
    & = \textstyle
    \sum_{s = 1}^t \prod_{i = s + 1}^t \Gammaavg_i \tauavg_{s} + \Delta_{s, t} \rhoavg_{s} 
    \eqsp,
    \\
\label{eq:def-theta-fl}
    \tilde{\theta}_t^{\mathrm{(fl)}} 
    & = \textstyle
    \sum_{s = 1}^t \big\{  \prod_{i = s + 1}^t \Gammaavg_i \cdot \varphiavg_{s} \big\}
    \eqsp.
\end{align}
The first term, $\smash{\tilde{\theta}_t^{\mathrm{(tr)}}}$, characterizes the rate at which the initial error decays. The terms $\smash{\tilde{\theta}_t^{\mathrm{(bi, bi)}}}$ and $\smash{\tilde{\theta}_t^{\mathrm{(fl, bi)}}}$ capture the bias and fluctuations arising from statistical heterogeneity. In the special case of homogeneous agents (i.e., $\smash{\bA[c] = \bA}$ for all agents $c \in [\nagent]$), these two terms vanish. Finally, $\smash{\tilde{\theta}_t^{\text{(fl)}}}$ accounts for the fluctuations of $\theta_t$ around the solution $\thetalim$.

% and also $\Delta_{s, t} = \left \{ \prod_{i = s + 1}^t \Gammaavg_i \right \} - \mathbb{E} \Big[\prod_{i = s + 1}^t \Gammaavg_i \Big]$, the fluctuations of the matrix product $\Gammaavg_i$. Then, we establish error decomposition for the last iterate
% \begin{align}
% \label{eq:decomp-theta-t-one-iter}
%     \theta_t - \thetalim = \Gammaavg_t (\theta_{t - 1} - \thetalim) + \rhoavg_{t} + \tauavg_{t} + \varphiavg_{t}
%     \eqsp.
% \end{align}

\subsection{Moment Bounds}
\label{sec:moment bounds}
% Assumptions (on $\gamma$ on $H$).

We now establish bounds on the moments of the error.
We emphasize that, although similar to \citet{mangold2024scafflsa}'s decomposition, the varying step size and number of local updates change the nature of the terms from decomposition \Cref{eq:decomp-theta-t-one-iter-main}. Thus we emphasize that the bounds derived in this section do not follow from the ones obtained in \citet{mangold2024scafflsa}. We consider two choices of step size and local update schedules: 
\begin{stepsize}[Constant]
\label{ass: lr-constant}
    For any $t > 0$, we set $\step_t = \step$ and $\nlupdates_t = \nlupdates$ for some $0 < \step \le \step_{\infty,p}$ and $\nlupdates > 0$.
\end{stepsize}
\begin{stepsize}[Decreasing]
\label{ass: lr-polynomial-decay}
    For any $t > 0$, we set $\step_t = \step(1+t)^{-\gamma_\step}$ and $\nlupdates_t = \lceil \nlupdates(1+t)^{\gamma_\nlupdates} \rceil$, for some $\step \in (0, 1)$, $\nlupdates > 0$ and  $\gamma_\step \in [{1\over 2}, 1)$, $\gamma_\nlupdates \geq 0$, such that $\gamma_\step \geq \gamma_\nlupdates$. Additionally, we assume that $\step \nlupdates < a^{-1}$ and we denote $\gamma = \gamma_\step - \gamma_\nlupdates$.
\end{stepsize}
The scheme \Cref{ass: lr-polynomial-decay} extends the classical polynomially decaying step size regime by allowing the number of local updates to increase over time. 
In federated settings, such scaling is natural: as the step size decreases, bias reduces, and performing more local updates per communication round can improve communication efficiency. 
The condition $\gamma_\step \ge \gamma_\nlupdates$ ensures that the overall contraction remains stable, while $\smash{\step \nlupdates < a^{-1}}$ guarantees uniform stability of the local dynamics. To our knowledge, this joint regime of decaying step sizes and increasing number of local updates has not been previously analyzed for federated learning.

\paragraph{Second moment bounds.}
We first derive a general theorem on the squared error, applicable to any sequence of decreasing step sizes and increasing numbers of local iterations.

\begin{theorem}
\label{theorem: mse-estimation-for-general-eta-main}
    Assume \Cref{ass: linear-decay}(2), \Cref{assum:noise-level-flsa}. Let $\step_s \le \step_{\infty, 2}$, be decreasing, and $\nlupdates_s$ be increasing, for all $s \ge 0$. Then, it holds   
    \begin{align*}
        & \textstyle \PE^{1/2} [\| \theta_t %- \tilde{\theta}_t^{(\sf{bi, bi})}
        - \thetalim \|^2]
        \leq
        \prod_{s = 1}^t (1 - \eta_{s} a)^{H_s} \| \theta_0 - \thetalim\|
        \\
        & \textstyle
        +
        \zeta_1 \zeta_2\sum_{s=1}^t \step_s^2 \nlupdates_s^2 \exp{\Big( -a\sum_{i=s+1}^t \step_i \nlupdates_i \Big)} \\
        & \textstyle
        +  
        \frac{\varbound + 2 \varhet}{\sqrt{N}} %\sqrt{\frac{\varbound^2}{N}}
        \sqrt{\sum_{s=1}^t  \eta_s^2 H_s  
        \exp(- 2a \sum_{r=1}^t \eta_r \nlupdates_r))}
        % \\
        % & \quad \textstyle
        % +
        % \sqrt{\frac{4 \varhet^2}{N} }\sqrt{\sum \limits_{s = 1}^t
        %  \eta_s^2 H_s \exp(-2a \sum_{r = s }^t \eta_r H_r )
        %   }
        \\
        & \textstyle
        +
         \sqrt{ \frac{4 \varboundA^2 \firsthgty^2 \secondhgty^2 }{\nagent a e} }
         \sqrt{
    \sum_{s=1}^t \step_s^5 H_s^4  \exp \big( - a \sum_{r = s + 1}^t \eta_r H_r \big) } 
    \eqsp,
    \end{align*}
    where $\varbound, \varhet$ and $\varboundA$ are defined in \eqref{eq:def-heter-and-noise}.
\end{theorem}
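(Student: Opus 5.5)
Starting from the error decomposition \eqref{eq:decomp-theta-t-one-iter-main}, Minkowski's inequality gives
\[
\PE^{1/2}[\|\theta_t-\thetalim\|^2] \le \PE^{1/2}[\|\tilde\theta_t^{(\mathrm{tr})}\|^2] + \|\tilde\theta_t^{(\mathrm{bi,bi})}\| + \PE^{1/2}[\|\tilde\theta_t^{(\mathrm{fl,bi})}\|^2] + \PE^{1/2}[\|\tilde\theta_t^{(\mathrm{fl})}\|^2],
\]
so we bound each term separately. Each bound should match one line of the statement. The one basic tool is a contraction estimate for the random products. By \Cref{ass: linear-decay}(2), for any vector $u$ measurable with respect to the past, $\PE^{1/2}[\|\Gamma^c_{s}u\|^2]\le (1-\eta_s a)^{H_s}\PE^{1/2}[\|u\|^2]$. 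We obtain this by conditioning step by step on the local filtration. Convexity of the $L^2$ norm then gives the same bound for $\Gammaavg_s$. Iterating over rounds, and using the independence of rounds, yields
\[
\PE^{1/2}\Big[\Big\|\prod_{i=s+1}^t\Gammaavg_i u\Big\|^2\Big]\le \prod_{i=s+1}^t(1-\eta_i a)^{H_i}\,\PE^{1/2}[\|u\|^2] \le \exp\Big(-a\sum_{i=s+1}^t\eta_iH_i\Big)\PE^{1/2}[\|u\|^2].
\]
Applied with $u=\theta_0-\thetalim$, this immediately gives the transient term.

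For the fluctuation term $\tilde\theta_t^{(\mathrm{fl})}$, we use the fact that $\varphiavg_s$ is a sum over agents and local steps of martingale increments of the form $\eta_s\Gamma^c_{s,h+1:H_s}\funcnoise[c]{\theta,Z^c_{s,h}}/N$. Here the noise is evaluated along the local trajectory. We split it into the noise at $\thetalim[c]$, controlled by $\varbound$, and a heterogeneity part, controlled by $\varhet$, as in \eqref{eq:def-heter-and-noise}. Independence across agents and orthogonality of martingale increments, both across local steps and across rounds (after conditioning on the future products, which we handle by writing $\tilde\theta^{(\mathrm{fl})}_t$ as a backward martingale or by using the recursion $e_t=\Gammaavg_t e_{t-1}+\varphiavg_t$ and expanding the square), give
\[
\PE\|\tilde\theta_t^{(\mathrm{fl})}\|^2\lesssim \frac{(\varbound+2\varhet)^2}{N}\sum_s\eta_s^2H_s\,\exp\Big(-2a\sum_{r}\eta_rH_r\Big).
\]
The factor $1/N$ comes from the cross-agent independence, and $H_s$ from the sum over local steps.

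For the bias terms, we first compute $\rhoavg_s$, the deterministic one-round drift. Expanding $\PE[\Gamma^c_s]=(\Id-\eta_s\bA[c])^{H_s}$ to second order in $\eta_sH_s$, and using $\PE_c[\bA[c](\thetalim[c]-\thetalim)]=0$, the first-order term cancels. This leaves $\|\rhoavg_s\|\lesssim \eta_s^2H_s^2\firsthgty\secondhgty$ by Cauchy--Schwarz over agents. The deterministic products $\PE[\prod\Gammaavg_i]$ contract at rate $\exp(-a\sum\eta_iH_i)$, which gives the $\zeta_1\zeta_2$ line. For $\tilde\theta^{(\mathrm{fl,bi})}_t$, the terms $\tauavg_s$ and $\Delta_{s,t}\rhoavg_s$ are zero-mean fluctuations. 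Their second moment carries a factor $\varboundA^2/N$ times $\eta_s^2H_s$ times the squared bias size $\eta_s^2H_s^2\firsthgty\secondhgty$ (or $\eta_s^3H_s^3$ times products of heterogeneity). We sum these using martingale orthogonality. The extra $1/(ae)$ comes from bounding $x e^{-ax}\le 1/(ae)$ when absorbing one power of $\eta_sH_s$ against the exponential.

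I expect the main obstacle to be the term $\tilde\theta^{(\mathrm{fl,bi})}_t$, and especially $\Delta_{s,t}\rhoavg_s$. The difference between the random and the expected products is not a simple martingale. We plan to telescope $\Delta_{s,t}=\sum_{j=s+1}^t\prod_{i>j}\Gammaavg_i(\Gammaavg_j-\PE\Gammaavg_j)\PE[\prod_{s<i<j}\Gammaavg_i]$. Each summand is then an orthogonal increment whose size is bounded via $\PE\|\Gammaavg_j-\PE\Gammaavg_j\|^2\lesssim\varboundA^2\eta_j^2H_j/N$. After that, the resulting double sum must be reorganized using monotonicity of $\eta_s$ and $H_s$ so that it collapses to the single sum $\sum_s\eta_s^5H_s^4\exp(-a\sum_{r>s}\eta_rH_r)$.
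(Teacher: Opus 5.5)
Your architecture matches the paper's: the four-term decomposition with Minkowski, the $L^2$ contraction of the random products, the cancellation in $\rhoavg_s$ via $\sum_c\bA[c](\thetalim[c]-\thetalim)=0$, and the telescoping of $\Delta_{s,t}$. However, you attribute the $\varhet$ part of the third line to the wrong term, and the step this forces on you is false.

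\textbf{Where the $\varhet$ term actually comes from.} In the decomposition you start from, $\varphiavg_s$ contains $\funcnoise[c]{Z^c_{s,h}}=\funcnoiseth[c]{\thetalim[c]}{Z^c_{s,h}}$, the noise at the \emph{fixed} local solution. All trajectory dependence sits in the random products $\Gamma^c_{s,h+1:H_s}$, so there is nothing to split, and $\tilde\theta^{(\mathrm{fl})}_t$ yields only the $\varbound/\sqrt{\nagent}$ part. Evaluating the noise along the trajectory while keeping random products would double count, and the remainder $\zmfuncA[c]{Z}(\theta^c_{s,h-1}-\thetalim[c])$ is not controlled by $\varhet$ anyway.

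The $2\varhet/\sqrt{\nagent}$ part comes from $\sum_s\prod_{i>s}\Gammaavg_i\tauavg_s$, and $\tauavg_s$ is not ``bias times fluctuation''. The matrix $G^c_s-\Gamma^c_s$ expands into $\eta_s$ times a sum over $h$ of terms, each containing one centered factor $\zmfuncA[c]{Z^c_{s,h}}$, and it acts on the $O(1)$ vector $\thetalim[c]-\thetalim$. Because agents fluctuate independently, $\sum_c\bA[c](\thetalim[c]-\thetalim)=0$ gives no cancellation here. Agent independence and conditional orthogonality in $h$ give $\PE\norm{\tauavg_s}^2\le H_s\eta_s^2(1-\eta_s a)^{2(H_s-1)}\varhet^2/\nagent$, and $(1-\eta_s a)^{-2}\le 4$ produces the factor $2$. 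Your plan would instead have this term covered by the fourth line, which is impossible: when $\secondhgty=0$ the fourth line vanishes while this term does not.

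\textbf{The term $\sum_s\Delta_{s,t}\rhoavg_s$.} The paper's proof takes the same step here (it sums $\PE\norm{\Delta_{s,t}\rhoavg_s}^2$ over $s$, citing a ``reverse martingale structure''), so following it does not close this gap. For fixed $s$ your telescoped increments are orthogonal in $j$. The vectors $\Delta_{s,t}\rhoavg_s$ for different $s$ are not. With $E_j=\Gammaavg_j-\Gavg_j$, for example,
\[
\PE\big[(\Delta_{1,3}\rhoavg_1)^\top\Delta_{2,3}\rhoavg_2\big]=\PE\big[(E_3\Gavg_2\rhoavg_1)^\top E_3\rhoavg_2\big],
\]
which is generally nonzero. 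So the double sum over $(s,j)$ is not the second moment. The genuinely orthogonal decomposition is over $j$ alone, $\sum_s\Delta_{s,t}\rhoavg_s=\sum_j\big(\prod_{i>j}\Gammaavg_i\big)E_j\,\tilde\theta^{(\mathrm{bi,bi})}_{j-1}$, and it gives
\[
\PE\Big\|\sum_s\Delta_{s,t}\rhoavg_s\Big\|^2\le\frac{4\varboundA^2}{\nagent}\sum_{j=2}^t\eta_j^2H_j\exp\Big(-2a\sum_{r=j}^t\eta_rH_r\Big)\norm{\tilde\theta^{(\mathrm{bi,bi})}_{j-1}}^2 .
\]
For constant $\eta,H$ this is of order $\eta^3H^2$, since $\norm{\tilde\theta^{(\mathrm{bi,bi})}}$ is of order $\eta H$. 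That order is attained, for instance in a scalar example ($H\ge 2$) with two noiseless heterogeneous agents and one noisy agent with $\thetalim[c]=\thetalim$, where $\tilde\theta^{(\mathrm{fl,bi})}_t$ reduces exactly to this term. The square of the fourth line is only of order $\eta^4H^3$. So no reorganization via monotonicity can produce $\sum_s\eta_s^5H_s^4\exp(-a\sum_{r>s}\eta_rH_r)$; this route yields only the $j$-indexed bound above.

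\textbf{A typo in the statement.} Read the exponent in the third line as $\sum_{r=s+1}^t$. With $r$ starting at $1$, the right-hand side tends to $0$ in the homogeneous case with $\theta_0=\thetalim$. Meanwhile $\PE\norm{\theta_t-\thetalim}^2\ge\PE\norm{\varphiavg_t}^2\ge\eta_t^2\varbound^2/\nagent$ for every $t$, so the literal statement is false.
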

The proof of \Cref{theorem: mse-estimation-for-general-eta-main} relies on bounding each term in the error decomposition \eqref{eq:decomp-theta-t-one-iter-main}. We provide a  proof with explicit constants in \Cref{appendix: mse-bound-estimation}.
When the step size and the number of local updates are constant over time, the above result simplifies and recovers an improved variant of Theorem A.6 in \citet{mangold2024scafflsa}.
\begin{corollary}
\label{coro:second-moment-cst}
    Assume \Cref{ass: linear-decay}(2), \Cref{assum:noise-level-flsa} and \Cref{ass: lr-constant}. 
    Then, it holds    
    \begin{align*}
        & \PE^{1/2} [\norm{\theta_t - 
        \thetalim}^2]
        \leq \textstyle 
        (1 - \step a)^{H t} \| \theta_0 - \thetalim\|
        +
        \frac{\step \nlupdates \zeta_1 \zeta_2 }{a}
        \\ 
        & \qquad\qquad\qquad \textstyle +
        \sqrt{\frac{\step \varbound^2}{N a}}
        +
        \sqrt{\frac{\step \varhet^2}{N a} } 
        +
         \sqrt{ \frac{4 \step^4 H^3 \varboundA^2 \firsthgty^2 \secondhgty^2 }{\nagent a^2 e} }
    \eqsp.
    \end{align*}
\end{corollary}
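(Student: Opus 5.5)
The plan is to specialize \Cref{theorem: mse-estimation-for-general-eta-main} to $\step_s = \step$ and $\nlupdates_s = \nlupdates$. Both are trivially monotone, and $\step \le \step_{\infty,2}$ by \Cref{ass: lr-constant} (monotonicity of moments in $p$ lets $\step_{\infty,2}$ play the role of $\step_{\infty,p}$). The four terms on the right-hand side of the theorem are then bounded one at a time by geometric sums.

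\textbf{Transient term.} It becomes $\prod_{s=1}^t (1-\step a)^{\nlupdates} = (1-\step a)^{\nlupdates t}$ exactly.

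\textbf{Bias term.} Set $q = \exp(-a\step\nlupdates)$. Then
\[
\zeta_1\zeta_2 \step^2\nlupdates^2 \sum_{s=1}^t q^{t-s} \le \frac{\zeta_1\zeta_2 \step^2\nlupdates^2}{1-q}.
\]
Since $x = a\step\nlupdates$ is not assumed small, I would use $1-e^{-x} \ge x/(1+x)$. This gives the bound $\step\nlupdates\zeta_1\zeta_2(1+a\step\nlupdates)/a$. To obtain exactly $\step\nlupdates\zeta_1\zeta_2/a$, I expect the appendix version of this term to use the exponent $\exp(-a\sum_{i=s}^t \step_i\nlupdates_i)$ (sum including $s$), which supplies an extra factor $q$ and gives $q/(1-q) = 1/(e^{x}-1) \le 1/x$. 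If the exponent instead starts at $s+1$, I would absorb the constant.

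\textbf{Noise term.} The exponent as displayed, $\sum_{r=1}^t$, is presumably a typo for $\sum_{r=s+1}^t$ (or $r = s$). With the latter reading,
\[
\sum_{s=1}^t \step^2\nlupdates\, q^{2(t-s)} \le \frac{\step^2\nlupdates}{1-q^2}.
\]
Using $e^{2x}-1 \ge 2x$ together with the extra factor $q^2$, this is at most $\step^2\nlupdates/(2a\step\nlupdates) = \step/(2a)$. Multiplying by $(\varbound+2\varhet)/\sqrt{\nagent}$ gives $\sqrt{\step/(2a\nagent)}\,(\varbound+2\varhet)$. This splits into $\sqrt{\step\varbound^2/(\nagent a)}$ plus a multiple of $\sqrt{\step\varhet^2/(\nagent a)}$; the exact constant in front of $\varhet$ should be read off from the explicit-constant version in the appendix. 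The stated form suggests it is $\sqrt{2}\,\varhet\sqrt{\step/(\nagent a)}$ or smaller.

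\textbf{Last term.} The sum is $\step^5\nlupdates^4\sum_s q^{t-s} \le \step^5\nlupdates^4/(1-q)$. With the same $1/x$ estimate this is at most $\step^4\nlupdates^3/a$, so the square-root term becomes $\sqrt{4\step^4\nlupdates^3\varboundA^2\firsthgty^2\secondhgty^2/(\nagent a^2 e)}$, as stated.

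\textbf{Main obstacle.} The only delicate step is bookkeeping the inequality $1/(1-e^{-x})$ versus $1/x$ when $x = a\step\nlupdates$ may be of order one. This depends on whether the exponential sums in the theorem include the index $s$. I would first check the appendix form of \Cref{theorem: mse-estimation-for-general-eta-main}, or re-derive the bound directly from decomposition \eqref{eq:decomp-theta-t-one-iter-main}, where each summand carries a full factor $(1-\step a)^{\nlupdates}$ from $\Gammaavg_s$ itself. That factor is exactly what yields the clean $1/x$ bounds.
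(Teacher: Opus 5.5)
Your route is the paper's: specialize \Cref{theorem: mse-estimation-for-general-eta-main} and sum the geometric series. You have also found its weak spot. The paper's one-line proof uses $\sum_{s=1}^t e^{-u(t-s)}\le(1-e^{-u})^{-1}\le u^{-1}$, and the second inequality is false, since $1-e^{-u}<u$ for every $u>0$. Your proposal does not repair this, so as written it does not prove the stated inequality.

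\begin{itemize}
\item \textbf{Bias term.} In the theorem, and in \Cref{lemma: theta-bi-bi-estimation} from which it is assembled, this term carries $\exp(-a\sum_{i=s+1}^t\step_i\nlupdates_i)$. The extra factor $q=e^{-x}$, $x=a\step\nlupdates$, that you hope for is therefore absent. Your alternative remedy also fails: the $s=t$ summand of $\tilde{\theta}_t^{(\mathrm{bi,bi})}=\sum_s\PE[\prod_{i=s+1}^t\Gammaavg_i]\rhoavg_s$ is $\rhoavg_t$ itself, with no contraction. Hence the specialized bias term tends to $\zeta_1\zeta_2\step^2\nlupdates^2/(1-e^{-x})$ as $t\to\infty$. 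This exceeds $\step\nlupdates\zeta_1\zeta_2/a$ for \emph{every} $x>0$, not only when $x$ is of order one.
\item \textbf{Last term.} It has the same exponent (see \Cref{lemma: lemma-theta-fl-bi}), and your ``same $1/x$ estimate'' there is exactly the false inequality.
\item \textbf{Noise and heterogeneity terms.} The coefficient $2\varhet$ in the theorem gives at best $\sqrt{2\step\varhet^2/(\nagent a)}$. Your $\varbound$ computation uses a factor $q^2$ that \Cref{lemma: theta-fl-estimation} does not provide, since its exponent starts at $s+1$.
\end{itemize}

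Your fallback $\step\nlupdates\zeta_1\zeta_2(1+a\step\nlupdates)/a$ is a correct consequence of the theorem. However, absorbing constants cannot deliver a statement with explicit constants.

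The missing idea is to stop before turning products into exponentials and keep exact geometric sums. Set $r=1-\step a$ and use $1-r^{\nlupdates}=\step a\sum_{k=0}^{\nlupdates-1}r^k$; this cancels against the geometric weights generated inside a round.

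\begin{itemize}
\item \textbf{Bias.} Since $\|\PE[\prod_{i=s+1}^t\Gammaavg_i]\|\le r^{\nlupdates(t-s)}$, you get $\|\tilde{\theta}_t^{(\mathrm{bi,bi})}\|\le\|\rhoavg\|/(1-r^{\nlupdates})$, where $\rhoavg$ is the common value of the $\rhoavg_s$. Redo the telescoping in \Cref{lemma: rho-t-estimation}, keeping $\|(\Id-\step\bA[c])^{\ell-1}\|\,\|(\Id-\step\bA)^{h-\ell}\|\le r^{h-1}$; both bounds follow from \Cref{ass: linear-decay}(2) by Jensen. This yields $\|\rhoavg\|\le\step^2\zeta_1\zeta_2\sum_{h=1}^{\nlupdates-1}h\,r^{h-1}\le\step^2\zeta_1\zeta_2(\nlupdates-1)\sum_{k=0}^{\nlupdates-1}r^k$. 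The ratio is then at most $\step(\nlupdates-1)\zeta_1\zeta_2/a$.
\item \textbf{Noise and heterogeneity.} Put $\rho=r^2$ and note $1-\rho\ge\step a$. The proof of \Cref{lemma: theta-fl-estimation} gives $\PE\|\tilde{\theta}_t^{(\mathrm{fl})}\|^2\le\frac{\step^2\varbound^2}{\nagent}\cdot\frac{\sum_{h<\nlupdates}\rho^h}{1-\rho^{\nlupdates}}=\frac{\step^2\varbound^2}{\nagent(1-\rho)}$. For the $\tauavg$-part, the proof of \Cref{lemma: lemma-theta-fl-bi} gives $\frac{\step^2\varhet^2}{\nagent}\cdot\frac{\nlupdates\rho^{\nlupdates-1}}{1-\rho^{\nlupdates}}\le\frac{\step^2\varhet^2}{\nagent(1-\rho)}$, because $1-\rho^{\nlupdates}\ge(1-\rho)\nlupdates\rho^{\nlupdates-1}$. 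These are the stated noise and heterogeneity terms.
\item \textbf{Last term.} Insert $\|\rhoavg\|^2\le\frac{\step^2\nlupdates^2\zeta_1\zeta_2}{2}\cdot\frac{\step\nlupdates\zeta_1\zeta_2}{a}(1-r^{\nlupdates})$ into the first inequality of the display ending in \eqref{eq:bound-flbi-T2}. The leftover factor satisfies $(1-r^{\nlupdates})\sum_s e^{-x(t-s)}\le(1-r^{\nlupdates})/(1-e^{-x})\le 3/2$. This holds because $r^{\nlupdates}\ge e^{-(1+\step a)x}$ for $\step a\le 1/2$, and $1-e^{-cx}\le c(1-e^{-x})$ for $c\ge 1$. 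You obtain $3\step^4\nlupdates^3\varboundA^2\zeta_1^2\zeta_2^2/(\nagent a^2\rme)$, within the stated constant.
\end{itemize}
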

This bound shows that in order to obtain MSE of order $\PE[\norm{\theta_t - \thetalim}^2] \leq \epsilon^2$, one needs to choose the step size $\eta \asymp \epsilon^2$, and the product $\step \nlupdates \asymp \epsilon$.
% To remedy this, we introduce a polynomial decay schedule for the step size together with a (potentially) increasing number of local updates, which will be used in subsequent results.
This choice is due to non-vanishing variance terms. 
In contrast, one can make the variance vanish by choosing appropriately decreasing step size.
% Decaying step sizes aim to remedy this problem. \som{Why it is a problem?}
We thus establish the result on the last iterate moment bounds under the scheme \Cref{ass: lr-polynomial-decay}.
\begin{corollary}
\label{thm:moment_bounds_main}
Assume \Cref{ass: linear-decay}(2), \Cref{assum:noise-level-flsa} and \Cref{ass: lr-polynomial-decay}. Then, for all $t \geq 1$, the following bound holds
\begin{align*}
& \textstyle 
\PE^{1/2} \big[\|\theta_t
- \thetalim\|^2\big] 
\lesssim
\exp\big(- \frac{a \initstep \initnlupdates}{1 - \gamma} (t + 1)^{1 - \gamma} \big) \|\theta_0 - \thetalim\|
\\
& \textstyle 
+ \frac{8\zeta_1 \zeta_2 \step\nlupdates}{a (t+1)^{\gamma}}
+ \textstyle
\sqrt{\frac{\initstep}{N a}} 
\frac{\varbound + \varhet}{(t+1)^{\gamma_\step / 2}} 
+ \sqrt{\tfrac{\initstep^4 \initnlupdates^3}{\nagent a e}} \frac{\varboundA \firsthgty \secondhgty}{(t+1)^{(4 \gamma_\step - 3 \gamma_\nlupdates)/2}}
\eqsp.
\end{align*}
\end{corollary}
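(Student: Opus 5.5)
The plan is to specialize \Cref{theorem: mse-estimation-for-general-eta-main} to the schedule \Cref{ass: lr-polynomial-decay} and bound each of its four terms separately. The main tool is a standard estimate for sums of the form
\[
\sum_{s=1}^t \eta_s^{k}H_s^{m}\exp\Bigl(-c\,a\sum_{i=s+1}^t \eta_iH_i\Bigr).
\]
Under \Cref{ass: lr-polynomial-decay} we have $\eta_iH_i \asymp \eta H (1+i)^{-\gamma}$, using $\lceil x\rceil \le 2x$ when $x\ge 1$ (and a constant factor otherwise). Since $\gamma<1$,
\[
\sum_{i=s+1}^t \eta_iH_i \gtrsim \frac{\eta H}{1-\gamma}\bigl((t+1)^{1-\gamma}-(s+1)^{1-\gamma}\bigr).
\]

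\emph{Transient term.} Use $(1-\eta_s a)^{H_s}\le \exp(-a\eta_sH_s)$ and the lower bound $\eta_sH_s\ge \eta H(1+s)^{-\gamma}$ (from $H_s \ge H(1+s)^{\gamma_H}$). Comparing the sum with an integral gives
\[
\sum_{s=1}^t \eta H (1+s)^{-\gamma} \ge \frac{\eta H}{1-\gamma}\bigl((t+1)^{1-\gamma}-2^{1-\gamma}\bigr),
\]
which yields the stated exponential up to a constant absorbed in $\lesssim$.

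\emph{Remaining three terms.} Each has the form $\sum_s w_s\, \eta_sH_s \exp(-a\sum_{i>s}\eta_iH_i)$, where $w_s$ is a slowly varying weight:
\begin{itemize}
\item for the $\zeta_1\zeta_2$ term, $w_s=\eta_sH_s\asymp (1+s)^{-\gamma}$, giving the claimed $(t+1)^{-\gamma}$;
\item for the noise term, $w_s=\eta_s$, since $\eta_s^2H_s=\eta_s\cdot\eta_sH_s$, which gives $(t+1)^{-\gamma_\eta}$ before taking the square root;
\item for the last term, $w_s=\eta_s^4H_s^3\asymp (1+s)^{-(4\gamma_\eta-3\gamma_H)}$.
\end{itemize}
The key lemma I would prove (or invoke from the technical appendix, in the style of the lemmas used for polynomially decaying SA) is
\[
\sum_{s=1}^t (1+s)^{-q}\,\eta_sH_s\exp\Bigl(-a\sum_{i=s+1}^t\eta_iH_i\Bigr)\le \frac{C}{a}\,(t+1)^{-q}
\]
for fixed $q\ge 0$, provided $\eta H$ is small relative to $a^{-1}$, which is exactly the condition $\eta H<a^{-1}$. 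The constants $C$, such as the $8$ in front of $\zeta_1\zeta_2$, come from this lemma.

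To prove the lemma, I split the sum at $s=\lfloor t/2\rfloor$. For $s\ge t/2$ we have $(1+s)^{-q}\le 2^q(t+1)^{-q}$. The remaining sum $\sum_s \eta_sH_s\exp(-a\sum_{i>s}\eta_iH_i)$ is then bounded by $\exp(a\eta H)/a\le e/a$ via the telescoping inequality $x e^{-aS_{s}}\le \tfrac{e^{a x}}{a}\,(e^{-aS_s}-e^{-aS_{s-1}})$, equivalently by comparison with $\int e^{-au}\,du$. For $s<t/2$, the exponent is at least $a\eta H c(t+1)^{1-\gamma}$, so these terms decay like $\exp(-c't^{1-\gamma})$, faster than any polynomial. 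They are dominated by $(t+1)^{-q}$ up to a constant depending on $q,\gamma,a\eta H$, which is hidden in $\lesssim$.

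I expect this lemma to be the main obstacle. One issue is the case where $(1+s)^{-q}$ is not small relative to the exponential on the early range, since $q$ may exceed what the exponential can absorb uniformly in $t$. Another is keeping the constants explicit while handling the ceiling in $H_t$. A further subtlety is that the third term of the theorem, as typed, has $\exp(-2a\sum_{r=1}^t)$, which appears to be a typo for $r=s+1$. I would use the $r=s+1$ version, which is the natural one from the fluctuation term, and apply the same lemma with $2a$ in place of $a$.
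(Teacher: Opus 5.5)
Your proposal is correct and is essentially the paper's proof. The paper inserts the schedule \Cref{ass: lr-polynomial-decay} into \Cref{theorem: mse-estimation-for-general-eta-main}, reading the third term with $\sum_{r=s+1}^t$ as you rightly do, and bounds every resulting sum by \Cref{lemma: legendary-sum}, a split-at-$t/2$ estimate of your type that uses a geometric series where you telescope on the late range. The early-range case you flag as the main obstacle is harmless: bounding that part by $C\,t\,e^{-c\,a\eta H(t+1)^{1-\gamma}}$ and using $\sup_{y\ge 1}y^{q+1}e^{-v y^{1-\gamma}}\lesssim v^{-(q+1)/(1-\gamma)}$ gives a $t$-uniform constant polynomial in $(a\eta H)^{-1}$, the same dependence hidden in that lemma's constant $L u^{-1}(1+u^{-\alpha/(1-\beta)})$ with $u=a\eta H$.
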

This corollary shows that, when using decaying step size, one can obtain convergence to arbitrary precision without starting with tiny step sizes.
Specifically, this is the case as long as $\gamma > 0$, implying that step sizes must decrease faster than the number of local iteration increases. 

\paragraph{Higher-order moment bounds.}
Similarly, we establish the result for higher-order moment bounds, which are crucial for further multiplier bootstrap analysis in \Cref{sec:bootstrap}. 
\begin{theorem}
\label{theorem: last-iterate-moment-bound-estimation}
    Assume \Cref{ass: linear-decay}(p), \Cref{assum:noise-level-flsa}. Let $\step_s \le \step_{\infty, 2}$, be decreasing, and $\nlupdates_s$ be increasing, for all $s \ge 0$. Then, it holds   
    \begin{align*}
            & \textstyle \PE^{1/p}[\norm{\theta_t - \thetalim}^p]
            \leq \exp{\left( -a \sum _{s = 1}^{t} \step_s \nlupdates_s \right)} \, \norm{\theta_0 - \thetalim}\\
            & \textstyle+ \sqrt{ \frac{p^3 \supconsteps^2}{\nagent^2} \sum _{s = 1}^{t} \step_s^2 \nlupdates_s \exp{\left( -2 a \sum \limits_{i = s + 1}^t \step_i \nlupdates_i \right)} }\\
            & \textstyle + \zeta_1 \zeta_2\sum_{s=1}^t \step_s^2 \nlupdates_s^2 \exp{\left( -a\sum_{i=s+1}^t \step_i \nlupdates_i \right)}\\
            & \textstyle +  
            2 \bConst{A} p^2 \zeta_*
            %\frac{2 \bConst{A} p^2\sum _{c = 1}^\nagent \norm{\thetalim[c] - \thetalim}}{\nagent}
            \sqrt{ \sum _{s = 1}^{t} \step_s^2 \nlupdates_s \exp{\Big( -2a \sum _{i = s}^{t} \step_i \nlupdates_i \Big)} } \\
            & \textstyle + \frac{\bConst{A} p^3 \zeta_1 \zeta_2}{\sqrt{\nagent a \rme}} \sqrt{ \sum _{s = 1}^{t} \step_s^5 \nlupdates_s^4 \exp{\Big( -a \sum _{i = s + 1}^{t} \step_i \nlupdates_i \Big)}}
            \eqsp,
        \end{align*}
        where $\zeta_* = 1/\nagent \sum _{c = 1}^\nagent \norm{\thetalim[c] - \thetalim}$.
\end{theorem}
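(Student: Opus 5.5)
We bound each of the four terms of the decomposition \eqref{eq:decomp-theta-t-one-iter-main} in $L^p$ and combine them with Minkowski's inequality. First we record the one-round recursion behind the decomposition:
\[
\theta_t-\thetalim=\Gammaavg_t(\theta_{t-1}-\thetalim)+\rhoavg_t+\tauavg_t+\varphiavg_t .
\]
Here $\rhoavg_t=\PE[\cdot]$ is the deterministic heterogeneity drift $\PE_c[(\PE\Gamma^c_t-\Id)(\thetalim-\thetalim[c])]$, $\tauavg_t$ is its centered fluctuation $\PE_c[(\Gamma^c_t-\PE\Gamma^c_t)(\thetalim-\thetalim[c])]$, and $\varphiavg_t=-\PE_c[\sum_h\eta_t\Gamma^c_{t,h+1:H_t}\funcnoise[c]{Z^c_{t,h}}]$ is the martingale noise.

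The basic tool is a contraction estimate: $\PE^{1/p}\|\prod_{i=s+1}^t\Gammaavg_i u\|^p\le\prod_i(1-\eta_i a)^{H_i}\|u\|\le\exp(-a\sum_{i=s+1}^t\eta_iH_i)\|u\|$ for any $\mathcal F_s$-measurable $u$. To prove it, condition round by round: by independence, \Cref{ass: linear-decay}(p), and Jensen applied to the average over agents, each $\Gamma^c_i$ contracts in conditional $L^p$ by $(1-\eta_ia)^{H_i}$. This immediately gives the transient term $\exp(-a\sum_{s\le t}\eta_sH_s)\|\theta_0-\thetalim\|$.

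For the bias term $\tilde\theta^{(\rm bi,bi)}_t$ we reuse the deterministic argument from \Cref{theorem: mse-estimation-for-general-eta-main}. Expanding $\PE\Gamma^c_s=(\Id-\eta_s\bA[c])^{H_s}$ to second order and using $\PE_c[\bA[c](\thetalim[c]-\thetalim)]=0$, we obtain $\|\rhoavg_s\|\lesssim\eta_s^2H_s^2\firsthgty\secondhgty$. Together with $\|\PE\prod\Gammaavg_i\|\le\exp(-a\sum\eta_iH_i)$, this yields the $\zeta_1\zeta_2\sum_s\eta_s^2H_s^2e^{-a\sum}$ term.

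The three fluctuation terms, $\sum_s\prod\Gammaavg_i\varphiavg_s$, $\sum_s\prod\Gammaavg_i\tauavg_s$ and $\sum_s\Delta_{s,t}\rhoavg_s$, we treat as martingale sums.

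\textbf{Noise term.} Write the fluctuation term as a reverse martingale indexed by all pairs $(s,h,c)$. Two layers of Burkholder--Rosenthal are needed: one over the $N$ agents within a round, which gives the $1/N$ factor, and one over local steps and rounds, which gives $\sum_s\eta_s^2H_s$. This produces $p^{3/2}\supconsteps/N$ times the square root displayed in the statement. The exact power of $p$ and of $N$ depends on the version of Rosenthal used; we would take the bounded-increment Pinelis form $\sqrt p$ at each layer.

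\textbf{Heterogeneity fluctuation $\tauavg$.} The term $\tauavg_s$ is a centered sum over local steps of $\Gamma$-products applied to $\zmfuncA[c]{Z}(\thetalim-\thetalim[c])$. Its increments are bounded by $\bConst{A}\|\thetalim[c]-\thetalim\|\eta_s$. The same Burkholder argument, after averaging over $c$ with the triangle inequality, gives the $2\bConst{A}p^2\zeta_*\sqrt{\sum\eta_s^2H_se^{-2a\sum}}$ term.

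\textbf{Cross term $\Delta_{s,t}\rhoavg_s$.} This is the main obstacle. Telescoping gives
\[
\Delta_{s,t}=\sum_{k=s+1}^t\prod_{i=k+1}^t\Gammaavg_i\,(\Gammaavg_k-\PE\Gammaavg_k)\,\PE\Big[\prod_{i=s+1}^{k-1}\Gammaavg_i\Big].
\]
Each summand is then a martingale increment in $k$. The centered factor $\Gammaavg_k-\PE\Gammaavg_k$ is an average of $N$ independent centered matrices of size $\lesssim\bConst{A}\sqrt{\eta_kH_k}$ in $L^p$, the square root coming from an inner Burkholder step over the local iterations. It therefore contributes $\bConst{A}p\sqrt{\eta_kH_k/N}$. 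We then exchange the sums over $s$ and $k$, insert $\|\rhoavg_s\|\lesssim\eta_s^2H_s^2\zeta_1\zeta_2$, and apply Burkholder in $k$. Bounding the inner deterministic sum $\sum_{s<k}\eta_s^2H_s^2e^{-a\sum_{s+1}^{k}}$ via Cauchy--Schwarz together with $\sup_x xe^{-ax}\le1/(ae)$ should collapse it to $\sqrt{\sum_s\eta_s^5H_s^4e^{-a\sum}}/\sqrt{Nae}$. The $p^3$ prefactor comes from three nested moment inequalities. Controlling these nested conditional-$L^p$ contractions cleanly, especially since the forward products are random while the backward products are deterministic, is the delicate part. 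Summing the five contributions completes the bound.
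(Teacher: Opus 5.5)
Your treatment of the transient, bias and $\tauavg$ terms is fine and follows the paper. For $\tauavg$, your triangle inequality over agents lands directly on $\zeta_*$, while the paper's Burkholder step over agents gives the smaller $\zeta_3N^{-1/2}\le\zeta_*$.

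\textbf{Noise term.} This is where the plan breaks, and it cannot be repaired, because the second term of the statement is not a valid bound. Your claim that the Burkholder step over the $N$ agents ``gives the $1/N$ factor'' is false. For independent centered $X_1,\dots,X_N$ one only gets $\PE^{1/p}[\|N^{-1}\sum_c X_c\|^p]\le pN^{-1}(\sum_c\PE^{2/p}[\|X_c\|^p])^{1/2}$, i.e.\ a factor $N^{-1/2}$, and for $p=2$ orthogonality makes the corresponding identity exact. Here is a concrete counterexample. Take $d=1$ and homogeneous agents with $\mathbf{A}^c(z)\equiv 1$ and $\mathbf{b}^c(Z)=\xi$ for a Rademacher sign $\xi$. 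Set $H_1=1$, $\eta_1=\eta<1/2$ and $\theta_0=\thetalim=0$. Then \Cref{ass: linear-decay}(2) and \Cref{assum:noise-level-flsa} hold with $a=1$, $\supconsteps=1$ and $\zeta_1=\zeta_2=\zeta_*=0$. Since $\theta_1=\eta N^{-1}\sum_c\xi_{1,c}$, we get $\PE^{1/2}[\theta_1^2]=\eta/\sqrt N$. The right-hand side for $p=2$, $t=1$ equals $\sqrt8\,\eta/N$, so the inequality fails for every $N\ge 9$. What the decomposition actually yields is the paper's \Cref{lemma: higher-order-moment-bounds-theta-fl}, namely $p^2\supconsteps N^{-1/2}\big(\sum_s e^{-2a\sum_{i>s}\eta_iH_i}(\eta_s^2H_s\wedge\eta_s/a)\big)^{1/2}$. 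The $p^3/N^2$ in the statement is inconsistent with that lemma, so the paper's ``combine the lemmas'' step does not deliver the displayed second term either; a correct statement must carry $N^{-1/2}$ there. The power of $p$ is not the issue: a single Burkholder inequality over all triples $(s,h,c)$, ordered backwards in time, already gives $p\supconsteps N^{-1/2}(\cdots)^{1/2}$.

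\textbf{Cross term.} The cross term $T_2=\sum_s\Delta_{s,t}\rhoavg_s$ has a second, independent problem. Your exchange of sums is the right way to obtain a genuine reverse martingale: it gives $T_2=\sum_k\prod_{i>k}\Gammaavg_i\,(\Gammaavg_k-\Gavg_k)\,v_k$ with deterministic $v_k=\sum_{s<k}\prod_{i=s+1}^{k-1}\Gavg_i\,\rhoavg_s=\tilde\theta^{(\mathrm{bi,bi})}_{k-1}$. This is not what the paper does. The paper bounds each $\Delta_{s,t}\rhoavg_s$ separately: Burkholder over the telescoping index, then $\eta_i\le\eta_s$ to extract a factor $\eta_s$, then $xe^{-2cx}\le(ce)^{-1}e^{-cx}$. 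That is where $\eta_s\|\rhoavg_s\|^2\lesssim\eta_s^5H_s^4\zeta_1^2\zeta_2^2$ comes from. It then applies Burkholder once more over $s$, although the $\Delta_{s,t}\rhoavg_s$ are not martingale increments in $s$: they share the fluctuations of all later rounds, and already for scalar i.i.d.\ rounds $\PE[\Delta_{s,t}\Delta_{s',t}]>0$.

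However, the collapse you hope for does not happen. Per agent, the centered factor satisfies $\PE^{1/p}[\|(\Gamma_k^c-G_k^c)u\|^p]\lesssim p\bConst{A}\eta_k\sqrt{H_k}\,\|u\|$, not $\sqrt{\eta_kH_k}$. Moreover, $v_k$ is the bias itself, of size $\asymp\zeta_1\zeta_2\eta_kH_k/a$ for slowly varying schedules. Cauchy--Schwarz and $\sup_x xe^{-ax}\le (ae)^{-1}$ cannot improve that order. Your route therefore gives
\[
\PE^{1/p}\big[\|T_2\|^p\big]\lesssim\frac{p^3\bConst{A}\zeta_1\zeta_2}{\sqrt N}\Big(\sum_k\eta_k^2H_k\,e^{-2a\sum_{i\ge k}\eta_iH_i}\Big(\sum_{s<k}\eta_s^2H_s^2\,e^{-a\sum_{i=s+1}^{k-1}\eta_iH_i}\Big)^2\Big)^{1/2}.
\]
For constant $\eta,H$ this exceeds the displayed fifth term by a factor of order $(a\eta H)^{-1/2}$. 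An exact $L^2$ computation in a scalar example shows this order is attained as $\eta\to0$; take two agents with different $\bA[c]$, $H\ge 2$ and nondegenerate matrix noise. So the fifth term cannot be obtained as a stand-alone bound on $T_2$. You have to keep this contribution in the form above, or absorb it into the bias term $\zeta_1\zeta_2\sum_s\eta_s^2H_s^2e^{-a\sum_{i>s}\eta_iH_i}$ in regimes where that term dominates.
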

%\som{dimensional factor?}
This theorem is very similar to \Cref{theorem: mse-estimation-for-general-eta-main}, up to an additional factor depending on $p$.
Following the second-order case, we obtain the counterpart of \Cref{coro:second-moment-cst,thm:moment_bounds_main}, which we state in \Cref{sec:moment-bound-appendix}, together with a proof of \Cref{theorem: last-iterate-moment-bound-estimation}.
% to a and exactly matches it in the special case $p=2$.
% We observe that, in the special case $p = 2$, the bound in  \Cref{theorem: last-iterate-moment-bound-estimation} coincides with that of \Cref{theorem: mse-estimation-for-general-eta}, 
% up to a dimensional factor depending on $p$ (which reduces to a constant when $p=2$) 
% and the homogeneity-related constants.

%% file: normal_approx.tex
\subsection{\mbox{Gaussian approximation framework}}
\label{sec:gaussian-approx-framework}
In this section, we present our main result, establishing non-asymptotic Gaussian approximation rates for the last iterate of FedLSA. Before presenting particular results, we outline a general scheme for proving the normal approximation in \Cref{sec:gaussian-approx-framework}. 
\par 
The classical object studied in the literature on Gaussian approximation \citep{chen2007, shao2022berry} is the general vector-valued nonlinear statistic $T(X_1, \ldots, X_n) \in \rset^{d}$, which can be represented as
\begin{equation}
\label{eq:linear_non_linear_t_decomposition}
\textstyle
T = W + D\eqsp,
\end{equation}
where $W$ is a linear statistic of $X_1, \ldots, X_n$, and $D$ has small moments. We consider the decomposition \eqref{eq:linear_non_linear_t_decomposition} and assume, without loss of generality, that $\PE[W W^\top] = \Id_{d}$. To measure the approximation quality, we focus on the convex distance $\kolmogorov$, defined for probability measures $\mu, \nu$ on $\rset^{d}$ as 
\begin{equation}
\label{eq:convex_distance_def} 
\kolmogorov(\mu, \nu) = \sup_{B \in \Conv(\rset^{d})} |\mu(B) - \nu(B)|\eqsp. 
\end{equation}
Estimating $\kolmogorov(T, \mathcal{N}(0,\Id_{d}))$ can be reduced to the following two steps. First, one needs to estimate the convex distance $\kolmogorov(W, \mathcal{N}(0, \Id_{d}))$ using the tools for normal approximation for linear statistics, such as \cite{bentkus2004,shao2022berry}. In the second step, one proceeds with one of the following approaches:
\begin{enumerate} %[leftmargin=*]
\item Estimate moments $\PE[\norm{D}^p]$ for some $p \geq 1$ and then apply \Cref{prop:nonlinearapprox} (see also \cite[Proposition~1]{sheshukova2025gaussian});
\item Estimation of the perturbations of the term $D$ within the randomized concentration approach \cite{shao2022berry}, requiring a bound of the fluctuations of the term $D$ when one of the variables $X_i$, for some $i \in [n]$, is modified.  
\end{enumerate}
The first approach is technically simpler, yet it requires one to estimate moments of increasing order $p$, which produces some subtle logarithmic factors in the final convergence rates. The second one is technically more involved, yet it requires only estimating the second moment of the term $D$ and provides the bounds that are sharper in terms of extra logarithmic factors.

\subsection{Linear statistic of FedLSA} % for Gaussian Approximation}
\label{sec:lin-stat-fedlsa}

\paragraph{Self-normalized statistic.}
To apply Gaussian approximation results for nonlinear statistics of
independent random variables, we extract the leading linear component
of the noise, a linear statistic of $\{ Z_k \}_{k \geq 0}$, and apply \citet[Theorem 2.1]{shao2022berry}. We write the error as follows.
\begin{align}
\label{eq:linearization_uai}
    \step_t^{-1/2}(\theta_t - \thetalim)
    =
    M_t
    +
    D_t^{\mathrm{rm}},
\end{align}
where $M_t$ collects the linear statistics of $\{Z_{s,h}^c\}$ and
$D_t^{\mathrm{rm}}$ is a higher-order remainder term.
Following this decomposition, we define the covariance matrix
\begin{align}
\label{eq:sigma_t_def_main}
    \Sigma_t
    :=
    \PE[M_t M_t^\top]
    \eqsp,
\end{align}
which characterizes the effective variance of the
last iterate. Under \Cref{ass: lr-polynomial-decay},
$\Sigma_t$ converges to the solution of the associated discrete-time Lyapunov equation $\Sigma_\infty$, which determines the asymptotic covariance of FedLSA. 
We then analyze the self-normalized statistic
\begin{align}
\label{eq:self_normalized_stat_uai}
    T_t
    :=
    \step_t^{-1/2}
    \Sigma_t^{-1/2}
    (\theta_t - \thetalim),
\end{align}
where $\Sigma_t$ is a covariance matrix representing the main linear part of non-linear statistics $T_{t}$, and is defined in \eqref{eq:sigma_t_def_main}.
Under the stability
conditions imposed in the previous section, $T_t$ weakly converges to 
$\mathcal{N}(0,\Id_d)$ as $t \to \infty$. Our objective is to quantify this
convergence at finite time using the convex distance \eqref{eq:convex_distance_def}.

% \subsection{Linearization of the Error}
\paragraph{Linear statistic of FedLSA's error.}
We start from the decomposition
\begin{align}
\label{eq:error_decomp_last_iter_main_uai}
\theta_t - \thetalim
&=
\textstyle
\prod_{i=1}^{t}\Gammaavg_i(\theta_0-\thetalim)
\\
&\quad+
\nonumber
\textstyle
\sum_{s=1}^{t}
\prod_{i=s+1}^{t}\Gammaavg_i
\big(
    \Deltaavg_{s,\nlupdates_s}
    -
    \step_s \Epsavg_{s,\nlupdates_s}
\big),
\end{align}
where we recall that
\begin{align*}
\textstyle
\Deltaavg_{t,h}
&=
\textstyle
\nagent^{-1}
\sum_{c=1}^{\nagent}
(\Id-\Gamma_{t,h}^c)
(\thetalim[c]-\thetalim),
\\
\Epsavg_{t,h}
&=
\textstyle
\nagent^{-1}
\sum_{c=1}^{\nagent}
\sum_{s=1}^{h}
\Gamma_{t,s+1:h}^c
\funcnoise[c]{Z_{t,s,c}}.
\end{align*}
The transient term $\prod_{i = 1}^t \Gammaavg_i (\theta_0 - \thetalim)$ in \eqref{eq:error_decomp_last_iter_main_uai}
is negligible under \Cref{ass: linear-decay}. The main stochastic contribution arises from the second term, which aggregates agent noise and heterogeneity-induced fluctuations. 
By
\Cref{thm:moment_bounds_main}, both components contribute at the same
order up to heterogeneity constants ($\varbound^2$ and $\varhet^2$ respectively) and must be handled jointly.
To obtain the linear statistic, we center all stochastic contraction matrices around their expectation in \eqref{eq:error_decomp_last_iter_main_uai}, using the following matrices, defined for $t, l, r \ge 0$,
\begin{align}
G_{t,l:r}^{c} = \PE[ \Gamma_{t,l:r}^c ]
\eqsp,
\text{   and   }
\Gavg_i = \PE[ \Gammaavg_i ]
\eqsp.
\end{align}
Unrolling the resulting decomposition and identifying the leading term at each step gives the linear statistic % $M_t$ is given by
\begin{align}
% \label{eq:M_t_def_main}
    \nonumber M_t =
    -\nagent^{-1}
    \step_t^{-1/2}
    \sum_{s=1}^{t}
    \step_s
    \prod_{i=s+1}^{t}\Gavg_i
    \, (\mathrm{U}_{s}^{\mathrm{noise}} + \mathrm{U}_{s}^{\mathrm{het}}) \eqsp,
\end{align}
where $\mathrm{U}_{s}^{\mathrm{noise}}$ accounts for the noise directly due to noisy updates, and $\mathrm{U}_{s}^{\mathrm{het}}$ is a noise term which arises due to the heterogeneity of the local solutions,
\begin{align*}
    &\textstyle \mathrm{U}_{s}^{\mathrm{noise}} := \sum_{c=1}^{\nagent}
    \sum_{h=1}^{\nlupdates_s}
    G_{s,h+1:\nlupdates_s}^{c}
    \funcnoise[c]{Z_{s,h,c}}
    \eqsp, \\
    &\textstyle \mathrm{U}_{s}^{\mathrm{het}} := 
    \sum_{c=1}^{\nagent}
    \sum_{h=1}^{\nlupdates_s}
    G_{s,1:h-1}^{c}
    \zmfuncA[c]{Z_{s,h,c}}
    (\thetalim[c]-\thetalim)
    \eqsp.
\end{align*}
The remainder term is then given by
\begin{align*}
    D_t^{\text{rm}} = \step_t^{-1/2} \sum_{s=1}^t \prod_{i=s+1}^t \Gammaavg_i (\Deltaavg_{s,\nlupdates_s} - \step_s \Epsavg_{s,\nlupdates_s}) - M_t \eqsp.
\end{align*}
Then, we can rewrite our self-normalized statistic $T_{t}$ as
\begin{align}
\label{eq:self-normalized-statistics-decomposition}
    \step_t^{-1/2} \Sigma_t^{-1/2} (\theta_t - \thetalim) = W_t + D_t \eqsp,
\end{align}
where
%\begin{align*}
   % &
    $W_t := \Sigma_t^{-1/2} M_t$ and %\eqsp, \quad
    $D_t = \Sigma_t^{-1/2} D_{t}^{\mathrm{rm}}$.

\subsection{Last Iterate Gaussian Approximation}
\label{sec: gaussian-approximation-for-the-last-iterate}

We now present the main result of this section. Applying \citet[Theorem~2.1]{shao2022berry} to the decomposition~\eqref{eq:self-normalized-statistics-decomposition} yields a non-asymptotic Gaussian approximation for 
$\smash{\step_t^{-1/2} \Sigma_t^{-1/2} (\theta_t - \thetalim)}$, leading to the following result. % which forms the basis of \Cref{thm:gauss_approx_last_iter}/ % presented in \Cref{sec: gaussian-approximation-for-the-last-iterate}.

\begin{theorem}
\label{thm:gauss_approx_last_iter}
    Under assumptions \Cref{ass: linear-decay}(2), \Cref{assum:noise-level-flsa} and \Cref{ass: lr-polynomial-decay}, setting $Y \sim \mathcal{N}(0, \Id_d)$, we obtain
    \begin{align*}
        &\kolmogorov(\step_t^{-1/2}\Sigma_t^{-1/2}(\theta_t - \thetalim), Y) \\
        &\lesssim \nagent^{-1/2} (1+t)^{-\gamma_\step/2} + \nagent^{1/2}(1+t)^{-\gamma}\\
        &+ (1+t)^{-(3\gamma - \gamma_\step)/2} \eqsp,
    \end{align*}
    where $\Sigma_t$ is defined in \eqref{eq:sigma_t_def_main}.
\end{theorem}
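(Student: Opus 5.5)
We follow the two-step scheme of \Cref{sec:gaussian-approx-framework}, applied to the decomposition \eqref{eq:self-normalized-statistics-decomposition}, $T_t = W_t + D_t$. The linear part $W_t = \Sigma_t^{-1/2} M_t$ is a sum of independent, centered summands indexed by $(s,h,c)$:
\[
\xi_{s,h,c} = -\nagent^{-1}\step_t^{-1/2}\step_s \Sigma_t^{-1/2}\prod_{i=s+1}^t \Gavg_i \bigl(G^c_{s,h+1:\nlupdates_s}\funcnoise[c]{Z_{s,h,c}} + G^c_{s,1:h-1}\zmfuncA[c]{Z_{s,h,c}}(\thetalim[c]-\thetalim)\bigr),
\]
and by construction $\PE[W_tW_t^\top]=\Id_d$.

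\textbf{Step 1: the linear part.} We apply the Berry--Esseen bound for sums of independent vectors \citep{bentkus2004,shao2022berry}, which gives $\kolmogorov(W_t,Y)\lesssim d^{1/4}\sum_{s,h,c}\PE\|\xi_{s,h,c}\|^3$. Two ingredients are needed.
\begin{itemize}
\item First, we need $\lambda_{\min}(\Sigma_t)$ bounded away from zero uniformly in $t$. For this we show $\Sigma_t\to\Sigma_\infty\succ 0$, using the Lyapunov-equation characterization together with the bound $\|\prod_{i=s+1}^t\Gavg_i\|\le\exp(-a\sum_{i=s+1}^t\step_i\nlupdates_i)$ inherited from \Cref{ass: linear-decay}.
\item Second, we need the size of each summand. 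By \Cref{assum:noise-level-flsa} and $\step\nlupdates<a^{-1}$, we have $\|\xi_{s,h,c}\|\lesssim \nagent^{-1}\step_t^{-1/2}\step_s e^{-a\sum_{i>s}\step_i\nlupdates_i}$.
\end{itemize}
Summing the cubes over $c$ and $h$ gives $\nagent^{-2}\step_t^{-3/2}\sum_s\step_s^3\nlupdates_s e^{-3a\sum_{i>s}\step_i\nlupdates_i}$. Under \Cref{ass: lr-polynomial-decay}, the standard summation lemma for $\sum_s \step_s^{k}\nlupdates_s e^{-c\sum_{i>s}\step_i\nlupdates_i}\lesssim \step_t^{k-1}$ (the appendix technical lemmas) bounds this by $\step_t^2$. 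The whole Step 1 contribution is then $\lesssim\nagent^{-2}\step_t^{1/2}$, which is dominated by the first term $\nagent^{-1/2}(1+t)^{-\gamma_\step/2}$ of the claimed bound.

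\textbf{Step 2: the remainder.} Here we use the simpler moment route, \Cref{prop:nonlinearapprox} with $p=2$: $\kolmogorov(T_t,Y)\le\kolmogorov(W_t,Y)+C\,\PE^{1/3}[\|D_t\|^2]$, or the sharper randomized-concentration form of \citet[Theorem~2.1]{shao2022berry}, $\kolmogorov(T_t,Y)\lesssim\kolmogorov(W_t,Y)+\PE\|W_t\|\|D_t\|+\sum_{j}\PE\|\xi_j\|\|D_t-D_t^{(j)}\|$. We split $D_t^{\mathrm{rm}}$ into four pieces.
\begin{itemize}
\item The \emph{transient} piece $\step_t^{-1/2}\prod_i\Gammaavg_i(\theta_0-\thetalim)$ decays like a stretched exponential and is negligible.
\item The \emph{deterministic heterogeneity bias} $\step_t^{-1/2}\tilde\theta_t^{(\mathrm{bi,bi})}$ is of order $\step_t^{-1/2}\zeta_1\zeta_2\step\nlupdates(1+t)^{-\gamma}$, by \Cref{thm:moment_bounds_main}. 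After centering, the part that survives scales with $\nagent^{1/2}$ relative to the $\nagent^{-1/2}$-normalized fluctuation. This produces the term $\nagent^{1/2}(1+t)^{-\gamma}$, once the comparison of the bias with $\sqrt{\Sigma_t}$ (of order $\nagent^{-1/2}$) is made.
\item The \emph{second-order fluctuation} pieces come from replacing $\Gammaavg_i$ by $\Gavg_i$ and $\Gamma^c_{s,\cdot}$ by $G^c_{s,\cdot}$. These are products of two centered factors, each contributing an extra $\step_s\nlupdates_s^{1/2}$ or $\step_s\nlupdates_s$ relative to the linear term. Their second moments follow from the same martingale and Burkholder arguments used for \Cref{theorem: mse-estimation-for-general-eta-main}. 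We expect $\PE^{1/2}\|D\|^2\lesssim\step_t^{-1/2}\step_t\nlupdates_t\cdot(\step_t\nlupdates_t)^{1/2}\asymp (1+t)^{-(3\gamma-\gamma_\step)/2}$, which is the last term of the claim.
\end{itemize}

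\textbf{Main obstacle.} The hardest step is the last one. We must bound the second-order fluctuation remainders sharply enough to get the exponent $(3\gamma-\gamma_\step)/2$ without a cube-root loss, which is why we would use the perturbation/randomized concentration form. This requires controlling $\|D_t-D_t^{(s,h,c)}\|$ when a single $Z_{s,h,c}$ is resampled. Resampling changes one factor in the product $\Gamma^c_{s}$ and one noise term. Its effect propagates through $\prod_{i>s}\Gammaavg_i$, which contracts only in $L^2$ via \Cref{ass: linear-decay}(2), so we would bound it using the conditional independence of the later rounds. Throughout, we must track how $\nlupdates_s$ enters each sum, using the summation lemma with the exponent $\gamma=\gamma_\step-\gamma_\nlupdates$.
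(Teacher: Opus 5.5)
Your overall route is the paper's. You apply Theorem~2.1 of \citet{shao2022berry} to $W_t+D_t$, controlling the third-moment term, $\PE[\|W_t\|\|D_t\|]$, and the resampling terms $\PE[\|\upsilon_{s,h,c}\|\|D_t-D_t^{(s,h,c)}\|]$. The paper splits $D_t$ into $D_{t,1},\dots,D_{t,6}$: transient, the remainder $R_s^c$ from linearizing $\tauavg_s$, $(\prod_i\Gammaavg_i-\prod_i\Gavg_i)\Deltaavg_s$, the bias $\prod_i\Gavg_i\rhoavg_s$, and two noise-product terms. These are your three groups. In the paper, the $(1+t)^{-(3\gamma-\gamma_\step)/2}$ term comes from the resampling sum for $D_{t,2}$, not from $\PE^{1/2}[\|D_t\|^2]$.

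\textbf{A slip in Step~1.} $\Sigma_t\asymp\nagent^{-1}$, so $\|\Sigma_t^{-1/2}\|\le\nagent^{1/2}\bConst{\Sigma}$ (\Cref{lemma: sigma-n-norm-estimation}). Hence $\|\xi_{s,h,c}\|\lesssim\nagent^{-1/2}\step_t^{-1/2}\step_s e^{-a\sum_{i>s}\step_i\nlupdates_i}$, not $\nagent^{-1}\step_t^{-1/2}\step_s e^{-a\sum_{i>s}\step_i\nlupdates_i}$. Your version would give $\sum\PE\|\xi_{s,h,c}\|^2\lesssim\nagent^{-1}$, contradicting $\sum\PE\|\xi_{s,h,c}\|^2=d$. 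The third-moment term is therefore $\nagent^{-1/2}\step_t^{1/2}$, which is exactly the first term of the bound, not something dominated by it.

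\textbf{The genuine gap is the bias bullet.} You correctly get $\|\step_t^{-1/2}\tilde\theta_t^{\mathrm{(bi,bi)}}\|\lesssim\step_t^{-1/2}\zeta_1\zeta_2\step\nlupdates(1+t)^{-\gamma}$. Multiplying by $\|\Sigma_t^{-1/2}\|\lesssim\nagent^{1/2}$ gives $\nagent^{1/2}\step_t^{-1/2}(1+t)^{-\gamma}\asymp\nagent^{1/2}(1+t)^{\gamma_\step/2-\gamma}$, not $\nagent^{1/2}(1+t)^{-\gamma}$: the factor $\step_t^{-1/2}$ has silently disappeared.
\begin{itemize}
\item No centering removes it. This piece is deterministic and, up to the transient, equal to $\PE[T_t]$, and it enters $\PE[\|W_t\|\|D_t\|]$ linearly.
\item It is not an artifact of a crude bound. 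Expanding $\Id-G_s^c$ to second order and using $\sum_c\bA[c](\thetalim[c]-\thetalim)=0$ gives $\rhoavg_s=-\binom{\nlupdates_s}{2}\step_s^2\,\nagent^{-1}\sum_c(\bA[c])^2(\thetalim[c]-\thetalim)+O((\step_s\nlupdates_s)^3)$. This is generically nonzero, so $\|\PE[T_t]\|\asymp\nagent^{1/2}\step_t^{1/2}\nlupdates_t$.
\item It is also a lower bound. Shifting a nearly standard Gaussian by $\mu$ costs about $\|\mu\|/\sqrt{2\pi}$ in convex distance, so this rate bounds the distance from below once the other error terms are smaller.
\item For fixed $\nagent$, $\gamma>0$ and $\gamma_\nlupdates>0$, it decays like $(1+t)^{-(\gamma_\step/2-\gamma_\nlupdates)}$, strictly slower than each term in the statement. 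It does not decay at all if $\gamma_\nlupdates\ge\gamma_\step/2$, which \Cref{ass: lr-polynomial-decay} permits. For example, with $\gamma_\step=0.9$ and $\gamma_\nlupdates=0.5$ the claimed right-hand side still tends to zero.
\end{itemize}

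The paper's own estimate is $\PE^{1/2}[\|D_{t,4}\|^2]\le\Auxconst_{D,4}\bConst{\Sigma}\nagent^{1/2}(1+t)^{-\gamma}$ with $\Auxconst_{D,4}=8a^{-1}\step\nlupdates$. It drops the same $\step_t^{-1/2}$, so here you have reproduced the paper's step rather than departed from it. Neither argument justifies the middle term. What this route actually delivers is the bound with $\nagent^{1/2}(1+t)^{-\gamma}$ replaced by $\nagent^{1/2}(1+t)^{\gamma_\step/2-\gamma}$. To recover the stated form you would need to recenter $\theta_t$ at $\thetalim+\tilde\theta_t^{\mathrm{(bi,bi)}}$, or restrict to a regime without heterogeneity bias (homogeneous $\bA[c]$, or $\nlupdates_s\equiv 1$).
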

\Cref{thm:gauss_approx_last_iter} extends \cite[Theorem 2.2]{bonnerjee2025sharp} - which assume a constant number of local steps ($\gamma_{H} = 0$). Note that \Cref{thm:gauss_approx_last_iter} implies, since convex distance is invariant under non-degenerate linear mappings, approximation rates for
\[
\kolmogorov(\step_t^{-1/2}(\theta_t - \thetalim), \mathcal{N}(0,\Sigma_t))\eqsp,
\]
and not for 
\[
\kolmogorov(\step_t^{-1/2}(\theta_t - \thetalim), \mathcal{N}(0,\Sigma_\infty))\eqsp,
\]
where $\Sigma_\infty$ is the limiting covariance matrix of the last iterate of federated LSA defined as the unique solution of the Lyapunov equation (see e.g. \cite{fort2015central}):
\begin{equation}
\label{eq:lyapunov}
\bA \Sigma_\infty + \Sigma_\infty \bA[\top] = \nagent^{-1}\noisecovavgst \eqsp.
\end{equation}
Here $\noisecovavgst$ is the covariance of noise at $\thetalim$
\begin{align*}
\textstyle
    \noisecovavgst =  \nagent^{-1}\sum_{c=1}^\nagent \PE[\funcnoiseth[c]{\thetalim}{Z_{c}} (\funcnoiseth[c]{\thetalim}{Z_{c}})^{\top}]\eqsp.
\end{align*}
Although the empirical covariance matrix $\Sigma_t$ converges to $\Sigma_\infty$ as $t \to \infty$, the rate of convergence may be slow and can therefore dominate the overall Gaussian approximation error. Similar effects were previously observed for Polyak-Ruppert averaged SGD in the single-agent setting in \cite{shao2022berry} and \cite{sheshukova2025gaussian}. Interestingly, the same phenomenon persists for the last iterate bounds. The next proposition quantifies this effect.
\begin{proposition}
\label{proposition: covariance-matrix-comparison-main}
Under assumptions \Cref{ass: linear-decay}(2), \Cref{assum:noise-level-flsa}, and \Cref{ass: lr-polynomial-decay}, and choosing $\step\nlupdates \leq \beta_\infty$ for some $\beta_\infty > 0$, we have
\begin{align*}
\norm{\Sigma_t - \Sigma_\infty} \leq C_{\infty, 1} (1+t)^{\gamma - 1} + C_{\infty, 2} (1+t)^{-\gamma} \eqsp,
\end{align*}
where the constants $C_{\infty, 1}$ and $C_{\infty, 2}$ are given in \eqref{eq:constant_sigma_infty_1} and \eqref{eq:constant_sigma_infty_2}.
\end{proposition}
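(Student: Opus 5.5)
We will compare the recursion satisfied by $\Sigma_t$ with the Lyapunov equation \eqref{eq:lyapunov}. Since the summands $\mathrm{U}_s^{\mathrm{noise}}+\mathrm{U}_s^{\mathrm{het}}$ are independent across rounds and centered, the cross terms in $\PE[M_t M_t^\top]$ vanish. Writing $V_s = \nagent^{-2}\PE[(\mathrm{U}_s^{\mathrm{noise}}+\mathrm{U}_s^{\mathrm{het}})(\cdot)^\top]$, we get
\[
\Sigma_t = \step_t^{-1}\sum_{s=1}^t \step_s^2 \Big(\prod_{i=s+1}^t \Gavg_i\Big) V_s \Big(\prod_{i=s+1}^t \Gavg_i\Big)^\top ,
\]
hence the one-step recursion
\[
\step_t \Sigma_t = \Gavg_t\, \step_{t-1}\Sigma_{t-1} (\Gavg_t)^\top + \step_t^2 V_t .
\]

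The first step is to expand each ingredient in powers of $\step_t\nlupdates_t$. We have $G^c_{t,l:r} = (\Id-\step_t\bA[c])^{r-l+1}$, so $\Gavg_t = \Id - \step_t\nlupdates_t \bA + R_t$ with $\norm{R_t}\lesssim \step_t^2\nlupdates_t^2(1+\secondhgty)$, using $\PE_c[\bA[c]]=\bA$. Similarly,
\[
V_t = \nagent^{-1}\nlupdates_t\big(\noisecovavgst + \Sigma^{\mathrm{het}}\big) + O\big(\nagent^{-1}\step_t\nlupdates_t^2\big),
\]
where $\Sigma^{\mathrm{het}}$ is the covariance coming from $\zmfuncA[c]{Z}(\thetalim[c]-\thetalim)$. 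I would define $\Sigma_\infty$ consistently with this leading term, i.e. absorb the heterogeneity covariance and the factor $\nlupdates_t$ into the effective time scale $\step_t\nlupdates_t$. The normalization by $\step_t^{-1}$ versus $\step_t\nlupdates_t$ has to be matched carefully; this is a bookkeeping point I would settle first.

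The second step is to set $E_t = \Sigma_t - \Sigma_\infty$ and substitute into the recursion, using $\bA\Sigma_\infty+\Sigma_\infty\bA^\top = \nagent^{-1}\noisecovavgst$ to cancel the $O(\step_t\nlupdates_t)$ terms. The leftover forcing has two sources:
\begin{enumerate}
\item the ratio mismatch $(\step_{t-1}/\step_t - 1)\Sigma_\infty$ and $(\nlupdates_t/\nlupdates_{t-1}-1)$, each of order $t^{-1}$;
\item the quadratic remainders, of order $(\step_t\nlupdates_t)^2$.
\end{enumerate}
This gives
\[
E_t = \Gavg_t E_{t-1}(\Gavg_t)^\top + O\big(t^{-1}\big) + O\big(\step_t^2\nlupdates_t^2\big),
\]
up to factors of $\step_{t-1}/\step_t = 1+O(t^{-1})$ multiplying $E_{t-1}$.

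The third step is to unroll this recursion. Under \Cref{ass: linear-decay}(2), together with $\step\nlupdates\le\beta_\infty$, I would show $\norm{\prod_{i=s+1}^t\Gavg_i}_Q\le \exp(-a\sum_{i=s+1}^t\step_i\nlupdates_i)$ in a suitable norm. Then I would apply the standard summation lemma from the technical appendix, $\sum_s b_s\exp(-a\sum_{i>s}\step_i\nlupdates_i)\lesssim b_t/(\step_t\nlupdates_t)$ for polynomially varying $b_s$. With $b_s\asymp s^{-1}$ this yields $t^{\gamma-1}$. With $b_s\asymp s^{-2\gamma}$ it yields $t^{-\gamma}$. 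The initial error $E_0$ decays like $\exp(-ct^{1-\gamma})$ and is absorbed.

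I expect the main obstacle to be the correct identification of the leading-order noise term with the Lyapunov right-hand side. This involves three things: the $\nlupdates_t$ scaling, the heterogeneity contribution $\mathrm{U}^{\mathrm{het}}$, and the cross-covariance between $\mathrm{U}^{\mathrm{noise}}$ and $\mathrm{U}^{\mathrm{het}}$, which is nonzero. Carrying the absolute constants $C_{\infty,1}$ and $C_{\infty,2}$ through the matrix perturbation expansions is routine once that is fixed.
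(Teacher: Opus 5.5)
Your skeleton is the paper's route: the one-step recursion $\step_t\Sigma_t=\Gavg_t\,\step_{t-1}\Sigma_{t-1}(\Gavg_t)^\top+\step_t^2V_t$, cancellation of the first-order terms through \eqref{eq:lyapunov}, forcing of order $t^{-1}+(\step_t\nlupdates_t)^2$, and unrolling with \Cref{lemma: legendary-sum}. The gap is the step you call the main obstacle, and your proposed fix does not prove the stated result. $\Sigma_\infty$ is fixed by \eqref{eq:lyapunov}, so you cannot redefine it. You must show that the leading part of $V_t$ is exactly $\nagent^{-1}\nlupdates_t\noisecovavgst$.

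Your formula $\nagent^{-1}\nlupdates_t(\noisecovavgst+\Sigma^{\mathrm{het}})$ double-counts. $\noisecovavgst$ is the noise covariance at the \emph{global} solution $\thetalim$, not at the local solutions $\thetalim[c]$, so it already contains what you call $\Sigma^{\mathrm{het}}$ and all cross-covariances. The missing identity is
\[
\funcnoise[c]{z}-\zmfuncA[c]{z}\bigl(\thetalim[c]-\thetalim\bigr)=\zmfuncA[c]{z}\thetalim-\zmfuncb[c]{z}=\funcnoiseth[c]{\thetalim}{z}\eqsp.
\]
Expanding $\tauavg_s$ (through $G^c_s-\Gamma^c_s$) and $-\step_s\Epsavg_{s,\nlupdates_s}$ shows that the heterogeneity part enters with the sign opposite to the noise part, as in \eqref{eq:M_t_noise}--\eqref{eq:M_t_heter}. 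The main-text display with $\mathrm{U}^{\mathrm{noise}}_s+\mathrm{U}^{\mathrm{het}}_s$ has a sign slip; with that sign, $\Sigma_t$ would in general not converge to $\Sigma_\infty$. With the correct sign, replacing the $G$-factors by $\Id$ turns the $(s,h,c)$ summand into $-\step_s\funcnoiseth[c]{\thetalim}{Z_{s,h,c}}$, whose covariance is $\noisecovst[c]$. Hence $V_t=\nagent^{-1}\nlupdates_t\noisecovavgst+O(\nagent^{-1}\step_t\nlupdates_t^2)$. This merged term is the paper's $\hat M_t=M_{t,1}+M_{t,2}$, with covariance \eqref{eq:hat_Sigma_t_def}.

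Once this is fixed, no time-scale bookkeeping remains. $\nlupdates_t$ multiplies both the drift $\step_t\nlupdates_t(\bA\Sigma+\Sigma\bA[\top])$ and the forcing $\step_t\nlupdates_t\nagent^{-1}\noisecovavgst$ at the same index, so the fixed point is exactly $\Sigma_\infty$. No $\nlupdates_t/\nlupdates_{t-1}$ term arises, which matters: with the ceiling in \Cref{ass: lr-polynomial-decay}, that ratio jumps by $1/\nlupdates_{t-1}$ and is not $1+O(t^{-1})$.

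Handle the factor $\step_{t-1}/\step_t$ in front of $E_{t-1}$ by moving $(\step_{t-1}/\step_t-1)\Gavg_tE_{t-1}(\Gavg_t)^\top$ into the $O(t^{-1})$ forcing. This uses the a priori bound $\sup_t\norm{\Sigma_t}\lesssim\nagent^{-1}$, as the paper does for $\hat\Sigma_t$.

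Keeping $\Gavg_t(\cdot)(\Gavg_t)^\top$ exact is a legitimate simplification over the paper. The bound $\norm{\Gavg_t}\le(1-\step_t a)^{\nlupdates_t}$ follows from \Cref{ass: linear-decay}(2) by Jensen's inequality. The paper instead linearizes to $\Id-\step_t\nlupdates_t(\Id\otimes\bA+\bA\otimes\Id)$ and needs a $Q$-norm contraction for that Kronecker sum; that is where $\qcond$, $b$ and the condition $\step\nlupdates\le\beta_\infty$ enter \eqref{eq:constant_sigma_infty_1}. Your constants would come out in terms of $a$ instead. Working with the full $V_t$ also directly accounts for the cross terms between $\hat M_t$ and $M_{t,3}+M_{t,4}$. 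These are of order $\nagent^{-1}\step_t\nlupdates_t$, so they fall into the $(1+t)^{-\gamma}$ term.
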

\Cref{proposition: covariance-matrix-comparison-main} 
reveals an intrinsic trade-off in the choice of the parameter 
$\gamma = \gamma_\step - \gamma_\nlupdates$. 
On the one hand, $\gamma$ cannot be too small, as sufficiently fast decay is required to ensure moment control and Gaussian approximation rates. 
On the other hand, taking $\gamma$ close to $1$ deteriorates the covariance stabilization rate through the term 
\begin{equation}
\label{eq:gaus_approximation_tight}
C_{\infty,1}(1+t)^{\gamma - 1}\eqsp.
\end{equation}
Interestingly, we can show the lower bound demonstrating that both terms in the bound of \Cref{proposition: covariance-matrix-comparison-main} are essential. 

\textbf{Lower bound}. Consider a $1$-dimensional linear stochastic approximation with single agent ($\nagent = 1$):
\[
\bA \thetalim = \barb, \qquad \bA = 1, \quad \barb = 0 \eqsp,
\]
where the unbiased stochastic estimates are given by
\[
\mathbf{A}(Z_j) = 1 + \xi_j, \qquad \mathbf{b}(Z_j) = \barb = 0,
\]
with $\xi_j \sim \mathcal{N}(0,1)$. Note that in this setting there are no synchronization steps across agents (since $\nagent = 1$), so $\nlupdates_t~=~1$, and the algorithm reduces to the usual $1$-dimensional LSA. Then, the \textsc{FedLSA} algorithm with starting point $\theta_0 = 0$ can be written as the recurrence
\begin{equation}
    \theta_t = \theta_{t-1} - \step_t (1 + \xi_i) \theta_{t-1}, \quad \theta_0 = 0.
\end{equation}
We set the step sizes as $\eta_t = 1/(1 + t)^{\gamma_\step}$, where $\gamma_\step \in (0,1)$. Note that in the single-agent case we have $\gamma_\nlupdates = 0$, so that $\gamma = \gamma_\step$. In this case, it is possible to compute the exact value of the limiting variance $\sigma_{\infty}^2$ and to derive a lower bound for the convex distance:
\begin{proposition}
\label{proposition: limiting-variance-lower-bound}
    Assume that $\step_1 = 0$ and $\step_{\ell} = (1+\ell)^{-\gamma}$ with $0 < \gamma < 1$. For $1$-dimensional single agent \textsc{FedLSA} with $\bA \thetalim = \barb, \bA = 1, \barb = 0$ and $\mathbf{A}(Z_j) = 1 + \xi_j, \mathbf{b}(Z_j) = \barb$ with $\xi_j \sim \mathcal{N}(0,1)$ it holds that
    \begin{align*}
        \sigma_\infty^2 := \lim_{t \to \infty} \Var[\step_t^{-1/2} (\theta_t - \thetalim)] = \frac{1}{2} \eqsp.
    \end{align*}
    Moreover, for any $t \geq 4$, it can be shown that
    \begin{align}
    \label{eq:limiting_cov_lower_bound}
        \Big| v_t - \frac{1}{2} \Big| \geq \bConst{v} (t^{\gamma - 1} + t^{-\gamma}) \eqsp,
    \end{align}
    for some positive constant $\bConst{v} > 0$ that depends only on $\gamma$.
\end{proposition}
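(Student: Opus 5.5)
Throughout I read $v_t$ as the variance $\Sigma_t$ from \eqref{eq:sigma_t_def_main} for this one-dimensional example, normalized so that the noise has unit variance. I adopt this reading because with $\theta_0=\thetalim=0$ and $\barb=0$ the iterates vanish identically, so the linearized variance is the only nondegenerate object; the proof of the statement as literally written would have to supply a different definition. With this reading,
\[
v_t=\step_t^{-1}u_t,\qquad u_t=\sum_{s\le t}\step_s^2\prod_{i=s+1}^t(1-\step_i)^2 .
\]
The plan is to reduce everything to a scalar recursion and read off both the limit and the two-sided deviation from it.

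\textbf{Step 1 (recursion).} Since $u_t=(1-\step_t)^2u_{t-1}+\step_t^2$, setting $r_t=\step_{t-1}/\step_t=(1+1/(t+1))^{\gamma}\ge 1$ gives
\[
v_t=r_t(1-\step_t)^2v_{t-1}+\step_t .
\]
Put $w_t=v_t-\tfrac12$. Then $w_t=q_tw_{t-1}+f_t$, with
\[
q_t=r_t(1-\step_t)^2,\qquad f_t=\tfrac12\bigl(r_t(1-\step_t)^2-1\bigr)+\step_t .
\]
Expanding $r_t=1+\gamma/t+O(t^{-2})$ yields
\[
f_t=\frac{\gamma}{2t}+\frac{\step_t^2}{2}-\frac{\gamma\step_t}{t}+O(t^{-2}).
\]
Independently of the expansion, $r_t\ge1$ gives the exact bound $f_t\ge \step_t^2/2>0$. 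For large $t$ the expansion also gives $f_t\ge c(t^{-1}+t^{-2\gamma})$.

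\textbf{Step 2 (limit).} Unrolling gives
\[
w_t=w_{t_0}\prod_{i=t_0+1}^tq_i+\sum_{s=t_0+1}^t f_s\prod_{i=s+1}^tq_i .
\]
Since $q_i\le 1-2\step_i+\gamma/i+O(\step_i^2)\le 1-\step_i$ for large $i$, the initial term decays like $\exp(-c\,t^{1-\gamma})$. The sum is a standard Robbins--Monro-type sum with kernel $\exp(-2\sum_{i=s+1}^t\step_i)$. It is bounded by $C\sup_{s\approx t}f_s/\step_s\lesssim t^{\gamma-1}+t^{-\gamma}\to0$. Hence $v_t\to\tfrac12$, which gives $\sigma_\infty^2=\tfrac12$; this is consistent with the Lyapunov equation \eqref{eq:lyapunov}, $2\sigma_\infty^2=1$. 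The same computation supplies the matching upper bound.

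\textbf{Step 3 (lower bound).} Every $f_s$ is positive, so the sum term is a sum of positive terms. I restrict it to the window $s\in[t-t^{\gamma},t]$. On that window $\sum_{i>s}\step_i=O(1)$, hence $\prod_{i>s}q_i\ge c_0>0$, and $f_s\ge c(t^{-1}+t^{-2\gamma})$. Multiplying by the window length $t^{\gamma}$ gives a sum of at least $c'(t^{\gamma-1}+t^{-\gamma})$.

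The initial term may be negative: with $\step_1=0$ one gets $v_2=\step_2<\tfrac12$. It is, however, $O(\exp(-ct^{1-\gamma}))$, so it is absorbed for $t\ge t_1(\gamma)$. This yields $|w_t|\ge w_t\ge \bConst{v}(t^{\gamma-1}+t^{-\gamma})$ for large $t$.

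\textbf{Main obstacle.} The finite range $4\le t<t_1$ is where I expect the difficulty. There the bound only requires $w_t\neq0$, after which one shrinks $\bConst{v}$. I would first check positivity via the induction ``$w_{t-1}\ge0\Rightarrow w_t\ge f_t>0$''. It then suffices to check $w_t\ge0$ at the first step after $t=2$ where it holds, and to bound the finitely many exceptional values away from $0$ numerically as functions of $\gamma$. Proving $w_t\neq0$ for every $\gamma\in(0,1)$ uniformly may need care near $\gamma\to0$ or $\gamma\to1$. The statement allows $\bConst{v}$ to depend on $\gamma$, so a $\gamma$-dependent threshold suffices.
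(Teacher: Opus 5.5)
Your argument is correct, and your reading of $v_t$ is exactly the explicit formula the paper's proof works with. The routes differ mainly in how the estimates are organized.

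\textbf{Comparison with the paper.} The paper does not use a recursion. It writes $\eta_j^2=\frac12\eta_j\bigl(1-(1-\eta_j)^2\bigr)+\frac12\eta_j^3$ and applies Abel summation to get $\eta_t(v_t-\frac12)=\frac12R_{t,1}+\frac12S_{t,2}$. Here $R_{t,1}=\sum_{j=1}^{t-1}(\eta_j-\eta_{j+1})P_{j,t}$, $S_{t,2}=\sum_j\eta_j^3P_{j,t}$ and $P_{j,t}=\prod_{\ell=j+1}^t(1-\eta_\ell)^2$. It then bounds each sum from below by an integral handled with Laplace's method.

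Unrolling your recursion gives the same identity, since $\prod_{i=s+1}^tq_i=(\eta_s/\eta_t)P_{s,t}$ and $\eta_sf_s=\frac12(\eta_{s-1}-\eta_s)(1-\eta_s)^2+\frac12\eta_s^3$. So the two parts $\gamma/(2t)$ and $\eta_t^2/2$ of your forcing are precisely the paper's $R_{t,1}$ and $S_{t,2}$.

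What you do differently is the estimation: pointwise positivity of the forcing, a window of length $t^{\gamma}$ for the lower bound, and a discounted-sum bound for the upper bound. This buys three things:
\begin{itemize}
\item it is more elementary;
\item it is two-sided, so it actually proves $\sigma_\infty^2=\frac12$, whereas the paper only bounds the sums from below;
\item it makes the effect of $\eta_1=0$ explicit.
\end{itemize}
In the paper that effect is the $j=1$ term $-\eta_2P_{1,t}<0$, to which its inequality $\eta_j-\eta_{j+1}\ge\frac{\gamma}{2}(j+1)^{-\gamma-1}$ does not apply, and small $t$ is covered only through $1+\mathcal{O}(t^{\gamma-1})$ asymptotics. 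In return, the paper's route displays explicit constants in its two lower bounds ($\frac{\gamma}{8}t^{-1}$ and $\frac14t^{-2\gamma}$).

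\textbf{Closing the obstacle you flag.} Your own induction handles it, after two small corrections.
\begin{itemize}
\item The ratio is $r_t=\eta_{t-1}/\eta_t=(1+1/t)^{\gamma}$, not $(1+1/(t+1))^{\gamma}$. The forcing is exactly $f_t=\frac12(r_t-1)(1-\eta_t)^2+\frac12\eta_t^2$, and $r_t-1\ge\gamma/(t+1)$. Hence $f_t\ge c_\gamma(t^{-1}+t^{-2\gamma})$ for every $t\ge3$, with no expansion needed.
\item Your claim $v_2=3^{-\gamma}<\frac12$ holds only for $\gamma>\log2/\log3$. This does not matter, because in every case $v_3=(4/9)^{\gamma}(1-4^{-\gamma})^2+4^{-\gamma}>\frac12$ on $(0,1)$. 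For $\gamma\le\frac12$ this is immediate since $4^{-\gamma}\ge\frac12$. On $(\frac12,1)$ it is an elementary one-variable check, with equality only in the limit $\gamma=1$.
\end{itemize}
Hence $w_3>0$, and your induction gives $w_t>0$ for all $t\ge3$. Every term of the sum unrolled from $s=3$ is then nonnegative. The window estimate, with $s$ restricted to $[\max(3,t-t^{\gamma}),t]$, therefore holds for every $t\ge4$ with a $\gamma$-dependent constant. No absorption of an initial term, no finite-range case check, and no uniformity in $\gamma$ are needed.
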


% In this case, the limiting variance is $\sigma_\infty^2 = \tfrac{1}{2}$ (one can easily deduce it from the Lyapunov equation \eqref{eq:lyapunov}), and the convex distance can be lower bounded as
% \begin{equation}
% \label{eq:normal_approx_sigma_infty}
% \kolmogorov\big(\step_t^{-1/2} (\theta_t - \thetalim), \mathcal{N}(0, \sigma_\infty^2)\big) \gtrsim t^{-\gamma} + t^{\gamma - 1} \eqsp.
% \end{equation}
We provide the proof of this lower bound in Section~\ref{appendix: lower-bound-for-gaussian-approximation}. To the best of our knowledge, this is the first result that establishes a non-asymptotic lower bound for the last iterate of LSA in terms of the convex distance, thereby extending the lower bound results for Polyak--Ruppert averaging due to~\citet[Theorem 5]{sheshukova2025gaussian}.

% \Cref{proposition: covariance-matrix-comparison-main} shows up appearing trade-off for the parameter $\gamma$: from one side it shouldn't be too small as otherwise we will not have convergence in terms of moment bounds and Gaussian approximation. From the other side, \Cref{proposition: covariance-matrix-comparison-main} does not allow to make $\gamma$ large enough as $\gamma \to 1$ will make the first term $C_{\infty, 1} (1 + t)^{\gamma - 1}$ dominating. Moreover, it can be show that this trade-off on parameter $\gamma$ is essential, i.e. Gaussian approximation of $\step_t^{-1/2} (\theta_t - \thetalim)$ to $\mathcal{N}(0, \Sigma_\infty)$ cannot be to small. More formally, we can provide the lower bound on $\kolmogorov(\step_t^{-1/2} (\theta_t - \thetalim), \mathcal{0, \Sigma_\infty})$:

% \textbf{Lower bound}. Todo.
\textbf{Discussion.} \Cref{thm:gauss_approx_last_iter} demonstrates that increasing the number of local updates over time 
(i.e., $\gamma_\nlupdates > 0$) reduces $\gamma$ and therefore 
deteriorates the rate of normal approximation. While enlarging the number of local iterations may improve 
optimization accuracy or communication efficiency, it adversely affects distributional convergence. From the perspective of Gaussian approximation, 
this suggests that a constant number of local updates ($\gamma_\nlupdates = 0$) is preferable. However, this choice can not always be implementable due to communication constraints.

\section{\!\!\!\mbox{\textls[-25]{Multiplier bootstrap for FedLSA}}}
\label{sec:bootstrap}
We now aim to perform statistical inference for the last-iterate estimator $\theta_t$, and to construct confidence intervals for $\thetalim$. Instead of explicitly estimating the asymptotic covariance matrix $\Sigma_\infty$ and relying on plug-in Gaussian approximations \citep{li2022statistical}, we use an online multiplier bootstrap that directly approximates the sampling distribution of $\theta_t$ in a data-driven manner.
To this end, we define a bootstrapped sequence, starting with the same initialization as the original FedLSA dynamics, but using randomly weighted local updates.
Bootstrap iterates are defined by
\begin{align*}
\theta_{t,h}^{\boot,c}
=
\theta_{t,h-1}^{\boot,c}
-
\step_t w_{t,h,c}
\big(
\zfuncA[c]{Z_{t,h,c}}\theta_{t,h-1}^{\boot,c}
-
\zfuncb[c]{Z_{t,h,c}}
\big),
% \label{eq:bootstrap_update_intro}
\end{align*}
where the weights $w_{t,h,c}$ are i.i.d.\ random variables independent of the data $\{ Z_{s, h}^c \}$ which satisfy
\[
\PE[w_{1,1,1}] = 1, 
\qquad 
\Var[w_{1,1,1}] = 1.
\]
For ease of notation, we gather these weights in a set
\[
\mathcal{W}^t 
= 
\{w_{s,h,c} : s \in [t],\, h \in [\nlupdates_s],\, c \in [\nagent]\}
\eqsp.
\]
After each block of local updates, the global bootstrap iterate is updated as $\theta_t^{\boot} = 1/\nagent \sum_{c=1}^\nagent \theta_{t,H_t}^{\boot, c}$.

\paragraph{Confidence intervals.}
Conditionally on the observed data, the distribution of $\smash{\step_t^{-1/2}(\theta_t^\boot - \theta_t)}$ approximates the sampling distribution of $\smash{\step_t^{-1/2}(\theta_t - \thetalim)}$. More precisely, denoting conditional probability and expectation given~the~data~by
\[
    \mathbb{P}^b(\cdot) = \mathbb{P}(\cdot \mid \mathcal{Z}^t),
    \qquad
    \mathbb{E}^b(\cdot) = \mathbb{E}(\cdot \mid \mathcal{Z}^t) \eqsp.
\]
Our objective, which is the main result of this section, is to 
show that the bootstrap validity holds, that is, the quantity
\begin{align}
\label{eq:bootstrap_objective}
\nonumber
    \sup_{B \in \text{Conv}(\mathbb{R}^d)} \Big| \PPb(\step_t^{-1/2}&(\theta_t^\boot - \theta_t) \in B) \\
    &- \PP(\step_t^{-1/2}(\theta_t - \thetalim) \in B) \Big|
\end{align}
converges to zero as $t \to \infty$, and to establish a rate for this convergence. 
This approximation result is essential, as it allows us to construct confidence intervals for the \emph{true solution} $\thetalim$ based only on the distribution of $\smash{\step_t^{-1/2}(\theta_t^\boot - \theta_t)}$. For example, for a coordinate $j \in [d]$ and confidence level $(1-\alpha)$, with $\alpha \in (0, 1)$, 
let $q_{\alpha/2}^b$ and $q_{1-\alpha/2}^b$ denote the empirical conditional quantiles of 
$\smash{\step_t^{-1/2}(\theta_{t,j}^\boot - \theta_{t,j})}$. A $(1-\alpha)$ confidence interval for $\thetalim$'s $j$-th coordinate is then% given by
\[
\big[
\theta_{t,j} - \step_t^{1/2} q_{1-\alpha/2}^b,
\;
\theta_{t,j} - \step_t^{1/2} q_{\alpha/2}^b
\big].
\]
Similarly, simultaneous confidence intervals can be constructed using bootstrap quantiles of $\smash{\|\step_t^{-1/2}(\theta_t^\boot - \theta_t)\|}$. 
This approach avoids explicit covariance estimation and remains fully online.
To establish its validity, we first specify the linear statistics of the bootstrap iterates in \Cref{sec:boot-decomp}, and then establish our main theorem in \Cref{sec:boot-main-result}.

\subsection{Bootstrap decomposition}
\label{sec:boot-decomp}

We now aim to extract the linear statistic from $\theta_t^\boot - \theta_t$. 
To this end, we introduce the bootstrap counterparts of matrix product, for $t, \ell, r \ge 0$, as
\begin{align*}
    &\textstyle \Gamma_{t,\ell:r}^{\boot,c} = \prod_{h=l}^r (\Id - \step_t w_{t,h,c}\zfuncA[c]{Z_{t,h,c}})\eqsp, \quad \Gamma_t^{\boot,c} = \Gamma_{t,1:\nlupdates_t}^{\boot,c} \eqsp,% \\
    %&\Gammaavgboot_t = \nagent^{-1}\sum_{c=1}^\nagent \Gamma_t^{\boot, c} \eqsp
\end{align*}
together with $\Gammaavgboot_t = \nagent^{-1}\sum_{c=1}^\nagent \Gamma_t^{\boot, c}$, the averaged of these bootstrap matrix products.
The main difference between these matrices and the $\smash{\Gamma_{t,\ell:r}}$ from \Cref{sec:decomp-error-fedlsa} is that they incorporate the weights $w_{t,h,c}$.
Arguing as in the original process, we obtain the following decomposition of $\theta_t^\boot - \theta_t$ using these bootstrap matrices
\begin{align*}
    \textstyle
\theta_t^\boot - \theta_t
&
    \textstyle=
\prod_{i=1}^t \Gammaavgboot_i(\theta_0^\boot - \theta_0) \\
&
    \textstyle \quad - \sum_{s=1}^t \step_s \prod_{i=s+1}^t \Gammaavgboot_i \Epsavgboot_s \eqsp,
\end{align*}
where $\Epsavgboot_s$ is defined as
\begin{align*}
    \Epsavgboot_s &
    \textstyle
    = \nagent^{-1} \sum_{c=1}^\nagent \sum_{h=1}^{\nlupdates_s} (w_{s,h,c} -1 )\Gamma_{s,h+1:\nlupdates_s}^{\boot,c} \funcnoiset[c]_{t,h} \eqsp,
    % \\
    % \textstyle
    % \funcnoiset[c]_{t,h} &
    % \textstyle= \funcnoise[c]{Z_{t,h,c}} + \zfuncA[c]{Z_{t,h,c}}(\theta_{t,h-1}^c - \thetalim[c])\eqsp.
\end{align*}
with $\funcnoiset[c]_{t,h} = \funcnoise[c]{Z_{t,h,c}} + \zfuncA[c]{Z_{t,h,c}}(\theta_{t,h-1}^c - \thetalim[c])$.
Note that here, in contrast with the decomposition \eqref{eq:error_decomp_last_iter_main_uai}, there is no additional heterogeneity term. 
This is expected, as we are considering the difference $\smash{\theta_t^\boot - \theta_t}$ and not the error $\smash{\theta_t - \thetalim}$.
Extracting the leading linear statistics $\smash{M_t^\boot}$ in terms of bootstrap weights $w_{s,h,c}$ yields the following decomposition
\begin{align}
\label{eq:error_decomp_boot_main}
\step_t^{-1/2}
(\Sigma_t^\boot)^{-1/2}
(\theta_t^\boot - \theta_t)
=
W_t^\boot
+
D_t^\boot,
\end{align}
where we define the bootstrap covariance matrix as
\begin{align}
    &\Sigma_t^\boot =\mathbb{E}^b\!\left[ M_t^\boot (M_t^\boot)^{\top}
\right] \eqsp, \label{eq:Sigma_t_boot_def_main}
% \\
    % \nonumber &W_{t}^\boot := (\Sigma_t^\boot)^{-1/2} M_t^\boot \eqsp, \\
    % \nonumber &D_{t}^\boot = \step_t^{-1/2} (\Sigma_t^\boot)^{-1/2} (\theta_t^\boot - \theta_t) - W_t^\boot \eqsp
\end{align}
together with
$D_{t}^\boot = \step_t^{-1/2} (\Sigma_t^\boot)^{-1/2} (\theta_t^\boot - \theta_t) - W_t^\boot$, and $W_{t}^\boot := (\Sigma_t^\boot)^{-1/2} M_t^\boot$  % \eqsp, \\
    % \nonumber &
     where the linear term $M_t^\boot$ given by
\begin{align*}
\nonumber
M_t^\boot
& \!=\!
\textstyle
-
\nagent^{-1}
\step_t^{-1/2}
\sum_{s=1}^t
\step_s
\prod_{i=s+1}^t \Gammaavg_i \, \mathrm{U}_{s}^{\mathrm{noise}, \boot} \eqsp, \\
\mathrm{U}_{s}^{\mathrm{noise}, \boot} 
& \!=\! 
\textstyle
\sum_{c=1}^{\nagent} \!
\sum_{h=1}^{\nlupdates_s}
(w_{s,h,c}\!-\!1)
\Gamma_{s,h+1:\nlupdates_s}^c \!
\funcnoiseth[c]{\thetalim}{Z_{s,h,c}} .
\end{align*}
Again, in comparison with results from \Cref{sec:lin-stat-fedlsa}, conditionally on $\mathcal{Z}^t$, no heterogeneity term appears, since we compare $\theta_t^\boot$ with $\theta_t$.

\subsection{Bootstrap validity}
\label{sec:boot-main-result}

We now establish the validity of the proposed multiplier bootstrap procedure 
for the last-iterate estimator in FedLSA. 
To conduct our analysis, we impose one additional technical assumption on the bootstrap weights $\mathcal{W}^t$, together with a condition on the minimal number of iterations.
\begin{assumption}
\label{ass: boot_weights}
The bootstrap weights $\mathcal{W}^t = \{w_{s,h,c} : s \in [t], h \in [\nlupdates_s], c \in [\nagent]\}$ satisfy $0 < W_{\min} < w_{1,1,1} < W_{\max} < +\infty$ almost surely for some $W_{\min}, W_{\max} > 0$. Additionally, we assume that $\step < \step_{\infty,p}/W_{\max}$.
\end{assumption}
\begin{assumption}
\label{assum:sample_size}
 Number of observations $t$ is large enough, that is, $t \geq t_0$. Precise expression for $t_0$ is given in Appendix, see ~\Cref{assum:sample_size_prime}.
\end{assumption}
These assumptions are mostly technical.
Examples of random variables that satisfy the assumption \Cref{ass: boot_weights} are provided in, e.g., \cite{sheshukova2025gaussian};   
% Here we state an additional assumption on the minimal number of iterations.
\Cref{assum:sample_size} ensures sufficient convergence of $\Sigma_t$ to $\Sigma_\infty$ and $\Sigma_t^\boot$ to $\Sigma_t$, see \Cref{proposition: covariance-matrix-comparison-main}.
We now show that the conditional distribution of the bootstrap statistic provides an accurate Gaussian approximation to the sampling distribution of the normalized estimation error.
\begin{theorem}[Bootstrap validity]
\label{thm:bootstrap_validity}
Assume \Cref{ass: linear-decay}($\log(\nagent t)$), ~\Cref{assum:noise-level-flsa}, 
~\Cref{ass: lr-polynomial-decay}, 
~\Cref{ass: boot_weights}, \Cref{assum:sample_size}, and let $\step\nlupdates \leq \beta_\infty$. Then there exists an event $\Omega_0$ with $\PP(\Omega_0) \geq 1 - 4/t$ such that, 
on this event,
\begin{align*}
\sup_{B \in \text{Conv}(\mathbb{R}^d)} &\Big| \PPb(\step_t^{-1/2}(\theta_t^\boot - \theta_t) \in B) \\
&\quad\quad\quad\quad\quad \;-\PP(\step_t^{-1/2}(\theta_t - \thetalim) \in B) \Big|\\
&\lesssim_{\log_t}\; \nagent^{-1/2}t^{-\gamma_\step/2} + \nagent^{1/2}t^{-\gamma} + t^{-(3\gamma-\gamma_\step)/2} \eqsp.
\end{align*}
\end{theorem}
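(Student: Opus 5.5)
We will prove the bound by a triangle inequality through Gaussian laws. Write $\xi_t = \step_t^{-1/2}(\theta_t - \thetalim)$ and $\xi_t^\boot = \step_t^{-1/2}(\theta_t^\boot - \theta_t)$. Since $\kolmogorov$ is invariant under non-degenerate linear maps, we have
\begin{align*}
\kolmogorov^{\boot}(\xi_t^\boot, \xi_t)
&\le \kolmogorov^{\boot}\big(\xi_t^\boot, \mathcal{N}(0,\Sigma_t^\boot)\big)
+ \kolmogorov\big(\mathcal{N}(0,\Sigma_t^\boot), \mathcal{N}(0,\Sigma_t)\big)\\
&\quad + \kolmogorov\big(\xi_t, \mathcal{N}(0,\Sigma_t)\big).
\end{align*}
\Cref{thm:gauss_approx_last_iter} controls the third term directly and gives exactly the claimed rate. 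It remains to handle the bootstrap Gaussian approximation and the covariance comparison.

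\textbf{Bootstrap Gaussian approximation.} Conditionally on $\mathcal{Z}^t$, we use the decomposition \eqref{eq:error_decomp_boot_main}. Here $W_t^\boot$ is a sum of independent, conditionally centered terms in the weights $w_{s,h,c}-1$. We apply the approach based on \Cref{prop:nonlinearapprox} (moment bounds of $D_t^\boot$ of order $p=\log(\nagent t)$) rather than randomized concentration. This choice explains the $\lesssim_{\log_t}$ and the assumption \Cref{ass: linear-decay}($\log(\nagent t)$).

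The linear part is handled by a Berry--Esseen bound for sums of independent vectors (Bentkus / \cite{shao2022berry}). The resulting third-moment term is bounded by $\sum_{s,h,c} \step_t^{-3/2}\step_s^3 \nagent^{-3}\|\prod G\|^3\|\funcnoise[c]{Z}\|^3$ times $\|(\Sigma_t^\boot)^{-1/2}\|^3$. By \Cref{assum:noise-level-flsa} the noise is bounded, so this is deterministic and of order $\nagent^{-1/2}t^{-\gamma_\step/2}$.

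The remainder $D_t^\boot$ involves two pieces: the replacement of $\Gamma^\boot$ by $\Gamma$, and of $\funcnoiset[c]$ by $\funcnoise[c]{\thetalim,\cdot}$, the latter difference being $\zfuncA[c]{\cdot}(\theta^c_{t,h-1}-\thetalim[c])$. Its conditional $p$-th moments are bounded on a high-probability event. That event is built from \Cref{theorem: last-iterate-moment-bound-estimation} with $p=\log t$ plus Markov's inequality, which costs probability $1/t$ per event, and union bounds give $\PP(\Omega_0)\ge 1-4/t$. This yields the heterogeneity contribution $\nagent^{1/2}t^{-\gamma}$ (through $\zeta_*$ and $\zeta_1\zeta_2$ terms) and the $t^{-(3\gamma-\gamma_\step)/2}$ term.

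\textbf{Covariance comparison.} For the middle term we use the Gaussian comparison $\kolmogorov(\mathcal{N}(0,\Sigma_1),\mathcal{N}(0,\Sigma_2)) \lesssim \sqrt{d}\,\|\Sigma_2^{-1/2}\Sigma_1\Sigma_2^{-1/2}-\Id\|$ (Devroye et al.), which requires $\|\Sigma_t^\boot-\Sigma_t\|$ to be small on $\Omega_0$. Writing $\Sigma_t^\boot = \nagent^{-2}\step_t^{-1}\sum_{s,h,c}\step_s^2 v_{s,h,c}v_{s,h,c}^\top$ with random matrix products, $\Sigma_t^\boot-\Sigma_t$ splits into two parts:
\begin{itemize}
\item a martingale/independent-sum fluctuation around $\PE[\Sigma_t^\boot]$, controlled by matrix Bernstein or Burkholder inequalities at level $\sqrt{\log t}\,\nagent^{-1/2}t^{-\gamma_\step/2}$;
\item the bias $\PE[\Sigma_t^\boot]-\Sigma_t$, caused by $\Gamma$ versus $G$ fluctuations, of order $\step H$ per block, i.e. $t^{-\gamma}$.
\end{itemize}
Lower bounds on $\lambda_{\min}(\Sigma_t)$ follow from \Cref{proposition: covariance-matrix-comparison-main} together with \Cref{assum:sample_size}: for $t\ge t_0$, $\Sigma_t$ is close to $\Sigma_\infty\succ0$. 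These lower bounds also justify the normalizations above.

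\textbf{Main obstacle.} The hardest step is controlling $D_t^\boot$ conditionally. It mixes the bootstrap products $\Gamma^\boot$ with data-dependent iterates $\theta_{t,h-1}^c$, so we need simultaneous high-probability bounds on products of random matrices and on iterate errors that remain uniform over blocks $s\le t$. We will first expand $\prod\Gammaavgboot - \prod\Gammaavg$ as a telescoping sum and use \Cref{ass: boot_weights} to keep every factor a contraction in the sense of \Cref{ass: linear-decay}($p$), with stepsize $\step W_{\max}$. Then we will apply Burkholder's inequality in the weights with $p=\log(\nagent t)$.
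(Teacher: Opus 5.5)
Your architecture is the paper's: the split into $T_1+T_2+T_3$, \Cref{prop:nonlinearapprox} with $p\asymp\log t$ in the bootstrap world, Devroye's comparison for the two Gaussians, and an event of probability $1-4/t$. However, the step where you treat the replacement of $\funcnoiset[c]_{t,h}$ by $\funcnoiseth[c]{\thetalim}{Z_{t,h,c}}$ as a small remainder fails.

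The difference is not $\zfuncA[c]{Z_{t,h,c}}(\theta^c_{t,h-1}-\thetalim[c])$. That is the difference with $\funcnoise[c]{Z_{t,h,c}}=\funcnoiseth[c]{\thetalim[c]}{Z_{t,h,c}}$, and it is not small anyway, since $\theta^c_{t,h-1}\approx\thetalim\neq\thetalim[c]$. Using $\funcnoiset[c]_{t,h}=\zfuncA[c]{Z_{t,h,c}}\theta^c_{t,h-1}-\zfuncb[c]{Z_{t,h,c}}$ and $\bA[c]\thetalim[c]=\barb[c]$, one gets
\[
\funcnoiset[c]_{t,h}-\funcnoiseth[c]{\thetalim}{Z_{t,h,c}}=\zfuncA[c]{Z_{t,h,c}}(\theta^c_{t,h-1}-\thetalim)+\bA[c](\thetalim-\thetalim[c]).
\]
The first piece is indeed controlled by the moment bounds. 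The second is deterministic and does not decay: agent $c$'s mean gradient at the global solution is nonzero. Multiplied by the agent-specific weights, it gives the linear statistic
\[
-\nagent^{-1}\step_t^{-1/2}\sum_{s}\step_s\prod_{i=s+1}^t\Gammaavg_i\sum_{c,h}(w_{s,h,c}-1)\Gamma^c_{s,h+1:\nlupdates_s}\bA[c](\thetalim-\thetalim[c]).
\]
Up to lower-order corrections, its conditional covariance is $\nagent^{-1}\step_t^{-1}\sum_s\step_s^2\nlupdates_s\Pi_s\Xi\Pi_s^\top$, where $\Pi_s=\prod_{i=s+1}^t\Gavg_i$ and $\Xi=\nagent^{-1}\sum_c\bA[c](\thetalim-\thetalim[c])(\thetalim-\thetalim[c])^\top(\bA[c])^\top$, with $\mathrm{Tr}\,\Xi=\zeta_1^2$. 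This is of the same order $\nagent^{-1}$ as $\Sigma_t$ itself. So the term cannot go into $D_t^\boot$: no moment bound makes it $\lesssim\nagent^{1/2}t^{-\gamma}$. Nor can it go into $M_t^\boot$ without making $\Sigma_t^\boot-\Sigma_t$ non-vanishing, which breaks your covariance-comparison step.

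This cannot be repaired inside the argument. Take $d=1$, $\nagent=2$, $\nlupdates_t=1$, $\zfuncA[c]{z}\equiv1$, and $\zfuncb[c]{Z}=\barb[c]+\xi$ with $\xi$ Rademacher and $\barb[1]=-\barb[2]=1$. Then $\thetalim=0$, $\thetalim[c]=\pm1$, $\zeta_1=1$, $\zeta_2=0$ and $\noisecovavgst=1$. All assumptions hold, e.g. with $w\in\{1/2,3\}$, $\PP(w=1/2)=4/5$, and $\step$ small.

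In the real world, $\theta_t=(1-\step_t)\theta_{t-1}+\tfrac{\step_t}{2}(\xi_{t,1}+\xi_{t,2})$, so $\step_t^{-1}\Var(\theta_t)\to1/4$. The bootstrap increment is $-\tfrac{\step_t}{2}\sum_c(w_{t,1,c}-1)(\theta_{t-1}-\barb[c]-\xi_{t,c})$. Its conditional variance is $\approx\tfrac{\step_t^2}{4}\sum_c(\barb[c]+\xi_{t,c})^2$, whose mean is $\step_t^2$ rather than $\step_t^2/2$. Hence the conditional variance of $\step_t^{-1/2}(\theta_t^\boot-\theta_t)$ tends in probability to $1/2$, and the convex distance stays bounded away from $0$.

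The paper's own proof has the same gap. When it assembles $M_t^\boot$, it identifies $\funcnoise[c]{Z}+\zfuncA[c]{Z}(\thetalim-\thetalim[c])$ with $\funcnoiseth[c]{\thetalim}{Z}$, although the two differ by exactly $\bA[c](\thetalim-\thetalim[c])$. Consequently $D_{t,1}^\boot,\dots,D_{t,5}^\boot$ do not add up to $D_t^\boot$. A correct statement needs either $\zeta_1=0$, or a modified bootstrap in which these means cancel, since they sum to zero over $c$ (for instance, weights shared by all agents within each local step).
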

\paragraph{Proof sketch.}
The proof relies on Gaussian approximations for both the original and bootstrap iterates, linked via a Gaussian comparison inequality. The main idea of the proof is illustrated with the diagram in \Cref{fig:diagram-proof}, following the general pipeline outlined in \cite{spokoiny2015,samsonov2024gaussian}:
\begin{figure}[H]
\centering
\begin{tikzcd}[column sep=40pt, row sep=30pt]
\label{eq:diagram_appendix}
    \text{Real world:} \;\; \step_t^{-1/2} (\theta_{t} - \thetalim) 
        \arrow[<->]{r}{\text{Theorem~\ref{thm:gauss_approx_last_iter}}}  
    & \mathcal{N}(0, \Sigma_t)  
        \arrow[<->]{d}[swap]{\text{Gaussian comparison}~}{~\text{\Cref{lemma:sigma_t_boot_norm_bound}}} 
    \\
    \text{Bootstrap:} \;\; \step_t^{-1/2} (\theta_{t}^\boot - \theta_t) 
        \arrow[<->]{r}{\text{Theorem~\ref{thm:norm_approx_boot_Yb}}} 
    & \mathcal{N}(0, \Sigma_t^\boot) \eqsp
\end{tikzcd}
\caption{Illustration of the multiplier bootstrap bounds: real-world estimates are close to $\mathcal{N}(0, \Sigma_t)$, which is itself close to $\mathcal{N}(0, \Sigma_t^{\boot})$.} %
\label{fig:diagram-proof}
\end{figure}

We define random vectors $Y^\boot \sim \mathcal{N}(0,\Sigma_t^\boot)$ under $\PP^{\boot}$-probability and $Y \sim \mathcal{N}(0,\Sigma_t)$. We define the convex distance under the conditional measure $\mathbb{P}^b$ as, for random vectors $X, Y \in \mathbb{R}^d$,
\begin{align*}
\textstyle 
    \kolmogorovboot(X, Y) = \sup_{B \in \text{Conv}(\mathbb{R}^d)} |\PP^\boot(X \in B) - \PP^\boot(Y \in B)| \eqsp.
\end{align*}
Then the convex distance is split using the triangle inequality
\begin{align*}
\sup_{B \in \text{Conv}(\mathbb{R}^d)} &\Big| \PPb(\step_t^{-1/2}(\theta_t^\boot - \theta_t) \in B) \\
&\quad\quad\quad\quad - \PP(\step_t^{-1/2}(\theta_t - \thetalim) \in B) \Big|\\
&\le 
T_1 + T_2 + T_3 ,
\end{align*}
where we have defined
\begin{align*}
T_1 &= 
\kolmogorovboot
\big(
\step_t^{-1/2}(\theta_t^\boot - \theta_t),
Y^\boot
\big), \\
T_2 &= 
\kolmogorov
\big(
\step_t^{-1/2}(\theta_t - \thetalim),
Y
\big), \quad\\% \\
T_3 &= 
\sup_{B \in \text{Conv}(\mathbb{R}^d)}\left| \PP(Y \in B) - \PPb(Y^\boot \in B) \right|\eqsp.
\end{align*}
The term $T_1$ relates to the Gaussian approximation in the bootstrap world from \Cref{thm:norm_approx_boot_Yb}. We estimate it using the decomposition \eqref{eq:error_decomp_boot_main}, 
moment bounds for $D_t^\boot$, and the result of 
\citet[Theorem 2.1]{shao2022berry}. The term $T_2$ is controlled by the Gaussian approximation result 
for the original iterate (\Cref{thm:gauss_approx_last_iter}). 
The term $T_3$ is handled using matrix concentration bounds 
for $\|\Sigma_t^\boot - \Sigma_t\|$ presented in ~\Cref{lemma:sigma_t_boot_norm_bound}. 
Combining the bounds implies the result.

\paragraph{Discussion.} The counterpart of \Cref{thm:bootstrap_validity} has been conjectured without the proof for the Polyak-Ruppert averaged federated SGD estimates in \cite{bonnerjee2025sharp}. To our knowledge, we provide the first rigorous proof on the accuracy of Gaussian approximation with multiplier bootstrap in the federated setting. Importantly, approximation rates in \Cref{thm:bootstrap_validity} are of order up to $1/\sqrt{t}$ (up to logarithmic factors). As already emphasized, this rate is due to the fact that one can bypass approximating the asymptotic covariance matrix $\Sigma_{\infty}$ as suggested earlier in \cite{li2022statistical}. Importantly, normal approximation with $\mathcal{N}(0,\Sigma_{\infty})$ is bypassed even in the proof of \Cref{thm:bootstrap_validity}, and is expected to yield slower approximation rates due to the lower bound \eqref{eq:limiting_cov_lower_bound}.

%% file: numerical_section.tex
\begin{table}[t]
\centering
\caption{Coverage for nominal confidence level $\alpha=0.95$ with $\gamma_\nlupdates = 0$. Standard deviations are shown as subscripts. }%Additional experiments are reported in Appendix~\ref{sec:experiments}.}
\label{tab:numerical_main_gamma0}
\small
\setlength{\tabcolsep}{4pt}
% \begin{tabular}{c|ccc}
\begin{tabularx}{\columnwidth}{c|YYY}
\toprule
$T$ & SDB (ours) & EQ (ours) & PE \\
\midrule
2000
& $0.773_{\scriptscriptstyle \pm 0.013}$
& $0.780_{\scriptscriptstyle \pm 0.013}$
& $0.754_{\scriptscriptstyle \pm 0.013}$ \\
6000
& $0.936_{\scriptscriptstyle \pm 0.008}$
& $0.941_{\scriptscriptstyle \pm 0.007}$
& $0.931_{\scriptscriptstyle \pm 0.008}$ \\
10000
& $0.938_{\scriptscriptstyle \pm 0.008}$
& $0.947_{\scriptscriptstyle \pm 0.007}$
& $0.938_{\scriptscriptstyle \pm 0.008}$ \\
14000
& $0.941_{\scriptscriptstyle \pm 0.007}$
& $0.946_{\scriptscriptstyle \pm 0.007}$
& $0.941_{\scriptscriptstyle \pm 0.007}$ \\
\bottomrule
\end{tabularx}
\end{table}

\paragraph{Numerical Illustration.}
We now numerically illustrate our normal approximation and bootstrap confidence intervals for FedLSA. To this end, we instantiate a Garnet environment \cite{archibald1995generation,geist2014off} with $n=30$ states, feature dimension $d=5$, $a=2$ actions, and branching factor $b=2$. We generate $\nagent=5$ heterogeneous agents by perturbing this common Garnet MDP, following \citet{mangold2024scafflsa}. We run $R = 1024$ trajectories and construct confidence intervals for a random one-dimensional projection $u^\top \theta_t$. For each trajectory, we then check whether $\thetalim$ lies within the interval, average this indicator over the $R$ trajectories, and report the result as coverage, with the standard error shown as a subscript. For each run of the bootstrap methods, we compute $256$ bootstrap trajectories.
 
Based on our results, we propose two methods (i) empirical bootstrap quantiles (EQ), and (ii) bootstrap standard deviation intervals (SDB). We compare these methods against a plug-in Gaussian approximation baseline, based on an estimated asymptotic covariance (PE). We provide full experimental details, additional confidence levels, % standard deviations,
and extended tables in Appendix~\ref{sec:experiments}.

% Table~\ref{tab:numerical_main} reports representative results for nominal confidence level $\alpha=0.95$. We consider two regimes: a constant number of local updates ($\gamma_\nlupdates=0$) and an increasing number of local updates ($\gamma_\nlupdates=0.2$). When the number of local updates is fixed, bootstrap-based methods achieve better finite-sample coverage than the plug-in estimator for moderate iteration budgets. In contrast, when the number of local updates increases, all methods become comparable for sufficiently large $T$, indicating that the plug-in covariance estimate improves as more local samples are collected.

We report results for nominal confidence level $\alpha=0.95$ using step size $\gamma_\eta=0.6$ and two different regimes of local updates: in Table~\ref{tab:numerical_main_gamma0} with a constant number of local updates ($\gamma_\nlupdates=0$), and in Table
~\ref{tab:numerical_main_gamma02} with an increasing number of local updates ($\gamma_\nlupdates=0.2$).
In both cases, our methods (SDB and EQ) achieve better coverage during early iterations ($T=2000$ and $T=6000$), highlighting the superiority of these methods in regimes that are far from the asymptotic behavior. 
% When the number of local updates is fixed, bootstrap-based methods achieve slightly better finite-sample coverage than the plug-in estimator for moderate iteration budgets. 
% In contrast, when the number of local updates increases, all methods become comparable for sufficiently large $T$, indicating that the plug-in covariance estimate improves as more local samples are collected.
In contrast, as the number of total samples grow, all methods reach comparable coverage. This indicates that the plug-in covariance estimate improves as more samples are collected, and the algorithm reaches its asymptotic behavior.
Finally, we note that using a constant number of local step sizes may allow for identifying confidence intervals with low precision earlier, while using an increasing number of local step sizes allows to reach good coverage with fewer communications.

\begin{table}[t]
\centering
\caption{Coverage for nominal confidence level $\alpha=0.95$ with $\gamma_\nlupdates = 0.2$. Standard deviations are shown as subscripts.}% Additional experiments are reported in Appendix~\ref{sec:experiments}.}
\label{tab:numerical_main_gamma02}
\small
\setlength{\tabcolsep}{4pt}
\begin{tabularx}{\columnwidth}{c|YYY}
\toprule
$T$ & SDB (ours) & EQ (ours) & PE \\
\midrule
2000
& $0.398_{\scriptscriptstyle \pm 0.015}$
& $0.404_{\scriptscriptstyle \pm 0.015}$
& $0.376_{\scriptscriptstyle \pm 0.015}$ \\
6000
& $0.959_{\scriptscriptstyle \pm 0.006}$
& $0.961_{\scriptscriptstyle \pm 0.006}$
& $0.959_{\scriptscriptstyle \pm 0.006}$ \\
10000
& $0.946_{\scriptscriptstyle \pm 0.007}$
& $0.949_{\scriptscriptstyle \pm 0.007}$
& $0.946_{\scriptscriptstyle \pm 0.007}$ \\
14000
& $0.950_{\scriptscriptstyle \pm 0.007}$
& $0.950_{\scriptscriptstyle \pm 0.007}$
& $0.952_{\scriptscriptstyle \pm 0.007}$ \\
\bottomrule
\end{tabularx}
\end{table}

% \begin{table}[ht]
% \centering
% \caption{Representative coverage results for nominal confidence level $\alpha=0.95$. Standard deviations are shown as subscripts. Additional experiments are reported in Appendix~\ref{sec:experiments}.}
% \label{tab:numerical_main}
% \small
% \setlength{\tabcolsep}{4pt}
% \begin{tabular}{c|c|ccc}
% \hline
% $\gamma_\nlupdates$ & $T$ & SDB & EQ & PE \\
% \hline
% 0 & 2000
% & $0.773_{\scriptscriptstyle \pm 0.013}$
% & $0.780_{\scriptscriptstyle \pm 0.013}$
% & $0.754_{\scriptscriptstyle \pm 0.013}$ \\
% 0 & 6000
% & $0.936_{\scriptscriptstyle \pm 0.008}$
% & $0.941_{\scriptscriptstyle \pm 0.007}$
% & $0.931_{\scriptscriptstyle \pm 0.008}$ \\
% \\
% \hline
% 0.2 & 2000
% & $0.398_{\scriptscriptstyle \pm 0.015}$
% & $0.404_{\scriptscriptstyle \pm 0.015}$
% & $0.376_{\scriptscriptstyle \pm 0.015}$ \\
% 0.2 & 6000
% & $0.959_{\scriptscriptstyle \pm 0.006}$
% & $0.961_{\scriptscriptstyle \pm 0.006}$
% & $0.959_{\scriptscriptstyle \pm 0.006}$ \\
% \hline
% \end{tabular}
% \end{table}

%% file: conclusion.tex
We derived non-asymptotic Berry-Esseen–type bounds for the last iterate of FedLSA, providing finite-sample Gaussian approximation guarantees under heterogeneous data and communication constraints. Our results quantify the joint impact of the number of agents, step-size decay, and the schedule of local updates on the normal approximation rate.
To derive these results, we established a new analysis of the convergence of FedLSA with decreasing step size and increasing number of local iterations, providing the first bounds on higher-order moments for this method. Based on this result, we established the validity of an online multiplier bootstrap procedure for FedLSA, which approximates the distribution of the normalized last iterate without requiring estimation of the asymptotic covariance matrix.
Our results enable practical construction of confidence intervals and uncertainty quantification in federated settings.

%% file: appendix/fedlsa_decreasing.tex
We are trying to solve the following  linear equation system across $N$ clients $c = 1 \ldots N$:
\begin{align*}
    \bA \thetalim = \barb
\end{align*}
where $\bA = \frac1N \sum \limits_{c = 1}^N \bA[c]$ and $\barb = \frac1N \sum \limits_{c = 1}^N \barb[c]$.

Let $\thetalim$ be the unique solution of $\bA \thetalim = \barb$ and $\thetalim[c]$ the unique solution of $\bA[c] \thetalim[c] = \barb[c]$. Both $\bA[c]$ and $\barb[c]$ are not observed directly in our setting. Instead, we could observe $A^c(Z_t^c)$ and $b^c(Z_t^c)$ during $t$-th iteration at $c$-th client. For $z \in \Zset$ we denote the stochastic part of the matrices and vectors $A^c(z)$ and $b^c(z)$
\begin{align}
    \zmfuncA[c]{z} = \zfuncA[c]{z} - \bA[c]
    \eqsp,
    \quad
    \zmfuncb[c]{z} = \zfuncb[c]{z} - \barb[c]
    \eqsp,
\end{align}
    as well as the noise representations
    \begin{align*}
    \funcnoise[c]{z} 
    =
    \zmfuncA[c]{z} \thetalim[c] - \zmfuncb[c]{z}
    \eqsp,
    \qquad \quad
    &\omega^c(z) = \zmfuncA[c]{z} \thetalim - \zmfuncb[c]{z} 
    \eqsp.
\end{align*}
The first corresponds to the intrinsic stochastic noise around the local solution,
while the second measures the stochastic perturbation evaluated at the global solution. To control stochastic fluctuations, we define uniform bounds
\begin{align*}
    \supconsteps = \max_{c \in [\nagent]} \sup_{z \in \Zset} \norm{\funcnoise[c]{z}} \eqsp, \quad \bConst{A} = \max_{c \in [\nagent]} \sup_{z\in \Zset} \norm{\zfuncA[c]{z}} \eqsp.
\end{align*}
and the two following quantities, which measure variance
\begin{align}
\label{eq:def-heter-and-noise}
    \varhet^2 = \mathbb{E}_{c}\!\left[ \left\| \Sigma_{\mathbf{A}}^{c} \right\| \, \|\theta_{\star}^{c} - \theta_{\star}\|^{2} \right]
    \eqsp, 
    \qquad 
    \varbound^2 = \mathbb{E}_{c}\!\left[ \mathrm{Tr}\!\left(\Sigma_{\varepsilon}^{c}\right) \right]
    \eqsp,
    \qquad
    \varboundA^2 = \frac{1}{\nagent} \sum \limits_{c = 1}^\nagent \PE[\norm{\Sigma_{\mathbf{A}}^c}^2]
\end{align}
Finally, we define the two following quantities, that define first and second order heterogeneity
\begin{align}
\label{eq:def-heterogeneity-measure}
    \firsthgty^2
    = 
    \frac{1}{\nagent} \sum_{c=1}^\nagent \norm{ \bA[c] (\thetalim[c] - \thetalim) }^2
    \eqsp,
    \qquad 
    \secondhgty^2
    = 
    \frac{1}{\nagent} \sum_{c=1}^\nagent \norm{ \bA[c] - \bA }^2
    \eqsp.
\end{align}

% During all this work, we make the following assumption:

% \begin{assumption}
% \label{ass: linear-decay}
%     There exist $a > 0, \eta_\infty > 0$, such that $\eta_\infty a \leq \frac{1}{2}$ and for any $0 < \eta < \eta_\infty$, $c = 1 \ldots N$ and $u \in \mathbb{R}^d$ it holds $\PE^{1/2} [ \| (\Id - \eta A^c (Z)) u \|^2 ] \leq (1 - \eta a) \| u \|$, for $Z \sim \pi_c$ with a distribution $\pi_c$ over $\mathsf{Z}$.
% \end{assumption}

\subsection{Expansion of the error of FedLSA}
\label{appendix: mse-bound-estimation}
We first express a single local update of \textsc{FedLSA} in terms of the local error 
$\theta_{t,h}^c - \thetalim[c]$. Using $\bA[c]\thetalim[c] = \barb[c]$, we obtain
% Let us write a single local iteration of FedLSA in terms of $\theta_{t, h}^c - \thetalim[c]$ error:
\begin{align}
\nonumber
    \theta_{t, h}^c - \thetalim[c] 
    & = (\Id - \eta_{t, h} \zfuncA[c]{Z_{t,h}^c}) (\theta_{t, h - 1}^c - \thetalim[c]) - \eta_{t, h} \left( \zfuncA[c]{Z_{t,h}^c} \thetalim[c] - \zfuncb[c]{Z_{t,h}^c} \right)
    \\
\label{eq:decomp_last_agent_rec}
    & = (\Id - \eta_{t, h} \zfuncA[c]{Z_{t,h}^c}) (\theta_{t, h - 1}^c - \thetalim[c]) - \eta_{t, h} \funcnoise[c]{Z_{t,h}^c}
    \eqsp.
\end{align}
Iterating the recursion over $h = 1,\dots,H_t$ yields
\begin{align}
\label{eq:decomp_last_iter_agent_t}
    \theta_{t, H_t}^c - \thetalim[c] 
    & = 
    \Gamma^c_{t, 1:H_t} (\theta_{t, 0}^c - \thetalim[c]) 
    - \sum \limits_{h = 1}^{H_t} \eta_{t, h} \Gamma^c_{t, h+1:H_t} \funcnoise[c]{Z_{t,h}^c}
    \eqsp,
\end{align}
where we defined the following matrix products
\begin{align}
\label{eq:def-gamma-prod}
    \Gamma^c_{t, l:r} = \prod \limits_{h = l}^r (\Id - \eta_{t, h} \zfuncA[c]{Z_{t,h}^c})
    \eqsp,
    \qquad
    \Gamma^c_{t} = \Gamma^c_{t,1:\nlupdates_t}
    \eqsp,
    \qquad
    \Gammaavg_t = \frac{1}{\nagent} \sum_{c=1}^\nagent \Gamma^c_t
    \eqsp.
\end{align}
We now derive the global recursion for the error
$\theta_t - \thetalim$. Since
$\theta_t = \frac{1}{\nagent}\sum_{c=1}^{\nagent}\theta_{t,H_t}^c$,
we write
\begin{align*}
\theta_t - \thetalim
&=
\frac{1}{\nagent}
\sum \limits_{c = 1}^{\nagent} \theta_{t, \nlupdates_t}^c - \thetalim
= 
\frac{1}{\nagent}
\sum_{c=1}^{\nagent}
(\theta_{t,H_t}^c - \thetalim[c])
+
\frac{1}{\nagent}
\sum_{c=1}^{\nagent}
(\thetalim[c] - \thetalim).
\end{align*}
Substituting \eqref{eq:decomp_last_iter_agent_t} and using
$\theta_{t,0}^c = \theta_{t-1}$ gives
\begin{align*}
\theta_t - \thetalim
&=
\frac{1}{\nagent}
\sum_{c=1}^{\nagent}
\Gamma_t^c
(\theta_{t-1} - \thetalim[c])
-
\frac{1}{\nagent}
\sum_{c=1}^{\nagent}
\sum_{h=1}^{H_t}
\eta_{t,h}
\Gamma^c_{t,h+1:H_t}
\funcnoise[c]{Z_{t,h}^c} +
\frac{1}{\nagent}
\sum_{c=1}^{\nagent}
(\thetalim[c] - \thetalim).
\end{align*}

Rearranging terms, we obtain the compact global error recursion
\begin{align*}
\theta_t - \thetalim
&=
\Gammaavg_t(\theta_{t-1} - \thetalim)
+
\frac{1}{\nagent}
\sum_{c=1}^{\nagent}
(\Id - \Gamma_t^c)
(\thetalim[c] - \thetalim)
\\
&\quad
-
\frac{1}{\nagent}
\sum_{c=1}^{\nagent}
\sum_{h=1}^{H_t}
\eta_{t,h}
\Gamma^c_{t,h+1:H_t}
\funcnoise[c]{Z_{t,h}^c}.
\end{align*}
We may decompose $\theta_t - \thetalim$ as
\begin{align}
\label{eq:decomp-theta-t-one-iter}
    \theta_t - \thetalim = \Gammaavg_t (\theta_{t - 1} - \thetalim) + \rhoavg_{t} + \tauavg_{t} + \varphiavg_{t}
    \eqsp,
\end{align}
where we introduced following notations for \textsf{FedLSA}'s deterministic bias $\rhoavg_t$ and its fluctuations $\tauavg_t$, as well as the fluctuations of the updates $\varphiavg_t$,
\begin{gather}
\label{eq:def-rhoavg-tauavg}
    \rhoavg_{t} = \frac1N \sum \limits_{c = 1}^N (\Id - G_t^c) (\thetalim[c] - \thetalim)
    \eqsp,
    \quad
    \tauavg_{t} = \frac1N \sum \limits_{c = 1}^N (G_t^c - \Gamma^c_{t, 1:H_t}) (\thetalim[c] - \thetalim)
    \eqsp,
    \\
\label{eq:def-varphiavg}
    \varphiavg_{t} = -\frac1N \sum \limits_{c = 1}^N \sum \limits_{h = 1}^{H_t} \eta_{t, h} \Gamma_{t, h + 1:H_t}^c \varepsilon^c (Z^c_{t, h})
    \eqsp, 
\end{gather}
as well as the following matrix notations, representing the expected value of the $\Gamma_{t,l:r}^c$, $\Gamma_t^c$, and $\Gamma_t^{\textrm{avg}}$ matrices
\begin{align}
\label{eq:def-G-deterministic-contract}
G_{t,l:r}^c = \prod_{h = l}^{r} (\Id - \eta_{t, h} \bA[c])
\eqsp,
\qquad
G_t^c = G_{t,1:\nlupdates_t}
\eqsp,
\qquad
\Gavg_t = \frac{1}{\nagent} \sum_{c=1}^\nagent G_t^c
\eqsp.
\end{align}
Note that
\begin{align*}
    \mathbb{E}[\Gamma_{t,l:r}^c] = G_{t,l:r}^c,
    \qquad
    \mathbb{E}[\Gamma_t^c] = G_t^c \eqsp.
\end{align*}
Unrolling \eqref{eq:decomp-theta-t-one-iter}, we obtain the following decomposition of the global iterate's error at step $t$,
\begin{align}
\nonumber
    \theta_t - \thetalim
    & =
    \prod_{s = 1}^t \Gammaavg_s (\theta_0 - \thetalim) 
    + \sum \limits_{s = 1}^t \prod_{j = s + 1}^{t} \Gammaavg_j (\rhoavg_{s} + \tauavg_{s} + \varphiavg_{s}) 
    \\
\label{eq: decomp_last_iter_t}
    & = 
    \tilde{\theta}_t^{\text{(tr)}} + \tilde{\theta}_t^{\text{(bi, bi)}} + \tilde{\theta}_t^{\text{(fl, bi)}} + \tilde{\theta}_t^{\text{(fl)}}
    \eqsp.
\end{align}
Where we introduced the notations
\begin{align}
% \label{eq:def-theta-tr}
    \tilde{\theta}_t^{(\mathrm{tr})}
    & = 
    \prod_{s = 1}^t \Gammaavg_s \left\{ \theta_0 - \thetalim \right\} 
    \\
% \label{eq:def-theta-bibi}
    \tilde{\theta}_t^{\text{(bi, bi)}} 
    & =
    \sum \limits_{s = 1}^t \mathbb{E} \Big[\prod_{i = s + 1}^t \Gammaavg_i \Big] \rhoavg_{s} 
    \\
% \label{eq:def-theta-flbi}
    \tilde{\theta}_t^{\text{(fl, bi)}}
    & = \sum \limits_{s = 1}^t \prod_{i = s + 1}^t \Gammaavg_i \tauavg_{s} + \Delta_{s, t} \rhoavg_{s} 
    \\
% \label{eq:def-theta-fl}
    \tilde{\theta}_t^{\text{(fl)}} 
    & = \sum \limits_{s = 1}^t \Big\{  \prod \limits_{i = s + 1}^t \Gammaavg_i \cdot \varphiavg_{s} \Big\}
    \eqsp,
\end{align}
where we also defined $\Delta_{s, t} = \left \{ \prod_{i = s + 1}^t \Gammaavg_i \right \} - \mathbb{E} \Big[\prod_{i = s + 1}^t \Gammaavg_i \Big]$, the fluctuations of the matrix product $\Gammaavg_i$.

Finally, we define the past and future filtrations
\begin{align}
\mathcal{F}_{s,h}^-
&=
\sigma\!\left(
Z_{t,k}^c:
t<s
\text{ or }
(t=s,\ k\le h)
\right),
\\
\mathcal{F}_{s,h}^+
&=
\sigma\!\left(
Z_{t,k}^c:
t>s
\text{ or }
(t=s,\ k\ge h)
\right).
\end{align}
Here, $\mathcal{F}_{s,h}^-$ represents the noise up to iteration $(s,h)$, 
while $\mathcal{F}_{s,h}^+$ represents the future noise starting from $(s,h)$.

% Finally, we define the effective sum of step size and local update, for $t \ge s$,
% \begin{align*}
% \sumHT{s:t} = \sum_{i=s}^t \step_{i} \nlupdates_i
% \eqsp.
% \end{align*}

\subsection{Proofs for the second moment bound $\PE^{1/2}[\norm{ \theta_t - \thetalim }^2]$}

\paragraph{Transient term.}
First, we give a bound on the transient term $\tilde{\theta}^{\sf{(tr)}}_t$, which quantifies the impact of the initialization.
\begin{lemma}
\label{lemma: theta-tr-estimation}
    Assume \Cref{ass: linear-decay}(2), \Cref{assum:noise-level-flsa}, and let $\step_s \le \step_{\infty,2}$. Then, it holds   
    \begin{align*}
        \PE^{1/2}[\|\tilde{\theta}^{\sf{(tr)}}_t\|^2] \leq \prod_{s = 1}^t (1 - \eta_{s} a)^{H_s} \| \theta_0 - \thetalim\|
        \eqsp.
    \end{align*}
\end{lemma}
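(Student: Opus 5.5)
We peel off the product $\prod_{s=1}^t \Gammaavg_s$ one factor at a time, working from the leftmost (latest) factor $\Gammaavg_t$ backwards and conditioning on the past. Set $u_{t-1} = \prod_{s=1}^{t-1}\Gammaavg_s(\theta_0-\thetalim)$; this vector is measurable with respect to $\mathcal{F}_{t-1,\nlupdates_{t-1}}^-$, while $\Gammaavg_t$ depends only on the round-$t$ samples $\{Z^c_{t,h}\}$, which are independent of that $\sigma$-field by \Cref{assum:noise-level-flsa}. It therefore suffices to prove the one-round contraction
\begin{equation*}
\PE^{1/2}\bigl[\norm{\Gammaavg_t u}^2\bigr] \le (1-\step_t a)^{\nlupdates_t}\norm{u}
\qquad \text{for every deterministic } u\in\rset^d.
\end{equation*}
Applying this conditionally on $\mathcal{F}_{t-1,\nlupdates_{t-1}}^-$ and then taking expectations gives $\PE[\norm{\Gammaavg_t u_{t-1}}^2] \le (1-\step_t a)^{2\nlupdates_t}\PE[\norm{u_{t-1}}^2]$. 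Induction on $t$ then yields the claim, with the base case $u_0=\theta_0-\thetalim$ deterministic.

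For the one-round bound, first use Minkowski's inequality in $L^2$ on the average over agents:
\begin{equation*}
\PE^{1/2}\bigl[\norm{\Gammaavg_t u}^2\bigr] \le \frac{1}{\nagent}\sum_{c=1}^{\nagent}\PE^{1/2}\bigl[\norm{\Gamma^c_t u}^2\bigr].
\end{equation*}
This reduces the problem to a single agent, and no cross-agent independence is needed.

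For each agent $c$, peel the local product $\Gamma^c_t=\prod_{h=1}^{\nlupdates_t}(\Id-\step_t\zfuncA[c]{Z^c_{t,h}})$ one factor at a time, again from the latest index $h=\nlupdates_t$ backwards. Write $v_{h-1}=\Gamma^c_{t,1:h-1}u$; it depends only on $Z^c_{t,1},\dots,Z^c_{t,h-1}$, and $Z^c_{t,h}\sim\pi_c$ is independent of these. Since $\step_t\le\step_{\infty,2}$, \Cref{ass: linear-decay}(2) applies conditionally and gives
\begin{equation*}
\PE\bigl[\norm{(\Id-\step_t\zfuncA[c]{Z^c_{t,h}})v_{h-1}}^2 \,\big|\, v_{h-1}\bigr] \le (1-\step_t a)^2\norm{v_{h-1}}^2.
\end{equation*}
Taking expectations and iterating over $h=1,\dots,\nlupdates_t$ gives $\PE^{1/2}[\norm{\Gamma^c_t u}^2]\le(1-\step_t a)^{\nlupdates_t}\norm{u}$. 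Averaging over $c$ then yields the one-round contraction.

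The only delicate point is justifying the conditional use of \Cref{ass: linear-decay}. The assumption is stated for a fixed $u$, so we apply it to the frozen value of the random vector through the independence, or freezing, lemma. This is legitimate because each new factor depends only on a fresh sample that is independent of everything generated before it. Beyond that, the argument is a routine tower-property induction.
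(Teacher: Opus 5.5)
Your proof is correct and follows the paper's argument: you apply Minkowski's inequality over the agents in the latest round, then apply \Cref{ass: linear-decay}(2) conditionally to each local factor and recurse over earlier rounds, making explicit the conditioning/freezing step that the paper compresses into ``using \Cref{ass: linear-decay} recursively.'' One minor point: that conditioning step requires agent $c$'s round-$t$ samples to be independent of the \emph{other} agents' past samples, which \Cref{assum:noise-level-flsa} does not literally state (it only gives i.i.d.\ within each agent); the paper likewise takes cross-agent independence for granted.
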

\begin{proof}
Expanding $\tilde{\theta}_t^{\mathrm{(tr)}}$ using \eqref{eq:def-theta-tr} gives
    \begin{align*}
        \PE^{1/2}[\| \tilde{\theta}_t^{\mathrm{(tr)}} \|^2]
        &=
        \PE^{1/2} \Big[ \Big \| \prod_{s = 1}^t \Gammaavg_s \cdot (\theta_0 - \thetalim) \Big \|^2 \Big]
        =
        \PE^{1/2} \Big[ \Big \| \frac{1}{N} \sum \limits_{c = 1}^N \Gamma_{t, 1:H_t}^c \prod_{s = 1}^{t - 1} \Gammaavg_s \cdot (\theta_0 - \thetalim) \Big \|^2 \Big] 
        \eqsp.
    \end{align*}
    Using \eqref{ass: linear-decay} recursively then allows to bound
    \begin{align*}
    \PE^{1/2}[\| \tilde{\theta}_t^{\mathrm{(tr)}} \|^2]
        & \leq
        \frac{1}{N} \sum \limits_{c = 1}^N \PE^{1/2}\Big[\bnorm{ \Gamma_{t, 1:H_t} \prod_{s = 1}^{t - 1} \Gammaavg_{s} \cdot (\theta_0 - \thetalim) }^2 \Big]
        % \leq
        % (1 - \eta_{t} a)^{H_t} \PE^{1/2} \Big[ \Big \| \prod_{s = 1}^{t - 1} \Gammaavg_s \cdot (\theta_0 - \thetalim) \Big \|^2 \Big]
        \leq 
        \prod_{s = 1}^t (1 - \eta_{s} a)^{H_s} \|\theta_0 - \thetalim\|
        \eqsp,
    \end{align*}
    which is the result of the lemma.
\end{proof}

\paragraph{Bound on heterogeneity.}
To assess the impact of heterogeneity, we first give a bound on the accumulated heterogeneity between two communications.
\begin{lemma}
\label{lemma: rho-t-estimation}
    Assume \Cref{assum:noise-level-flsa}, \Cref{ass: linear-decay}, and let $\step_s \le \step_\infty$ for all $s \ge 0$, then
    \begin{align*}
        \| \rhoavg_{t} \| \leq \frac{\eta_t^2 H_t^2}{2} \firsthgty \secondhgty
        \eqsp.
    \end{align*}
\end{lemma}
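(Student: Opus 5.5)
The plan is to use the identity $\frac{1}{\nagent}\sum_{c=1}^\nagent \bA[c](\thetalim[c]-\thetalim) = 0$ to remove the first-order term in $\rhoavg_t$, and then to bound the second-order remainder by the two heterogeneity measures. The identity holds because $\frac{1}{\nagent}\sum_c \bA[c]\thetalim[c] = \frac{1}{\nagent}\sum_c \barb[c] = \barb = \bA\thetalim = \frac{1}{\nagent}\sum_c \bA[c]\thetalim$.

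\textbf{Step 1: expand $\Id - G_t^c$.} Within round $t$ the step size is constant, $\step_{t,h}=\step_t$, so $G_t^c = (\Id-\step_t\bA[c])^{\nlupdates_t}$. Telescoping gives
\[
\Id - G_t^c = \step_t \sum_{h=1}^{\nlupdates_t} (\Id-\step_t\bA[c])^{h-1}\bA[c].
\]
Hence
\[
\rhoavg_t = \frac{\step_t}{\nagent}\sum_{c=1}^\nagent\sum_{h=1}^{\nlupdates_t}(\Id-\step_t\bA[c])^{h-1}\bA[c](\thetalim[c]-\thetalim).
\]

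\textbf{Step 2: replace local contractions by the averaged one.} Write $(\Id-\step_t\bA[c])^{h-1} = (\Id-\step_t\bA)^{h-1} + R_h^c$. The $(\Id-\step_t\bA)^{h-1}$ part factors out of the sum over $c$, and by the identity above its contribution vanishes. For the remainder, telescoping gives
\[
R_h^c = \sum_{k=0}^{h-2}(\Id-\step_t\bA[c])^{k}\,\step_t(\bA-\bA[c])\,(\Id-\step_t\bA)^{h-2-k}.
\]

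\textbf{Step 3: bound the contraction factors.} We need $\norm{\Id-\step_t\bA[c]}\le 1$ and $\norm{\Id-\step_t\bA}\le1$. By Jensen's inequality and \Cref{ass: linear-decay},
\[
\norm{(\Id-\step_t\bA[c])u} = \norm{\PE[(\Id-\step_t\zfuncA[c]{Z})u]} \le \PE^{1/2}\bigl[\norm{(\Id-\step_t\zfuncA[c]{Z})u}^2\bigr] \le (1-\step_t a)\norm{u}.
\]
Since $\Id-\step_t\bA$ is the average over $c$ of the matrices $\Id-\step_t\bA[c]$, the triangle inequality gives the same bound for it. Consequently $\norm{R_h^c}\le (h-1)\step_t\norm{\bA[c]-\bA}$.

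\textbf{Step 4: sum.} Combining Steps 1--3,
\[
\norm{\rhoavg_t} \le \frac{\step_t^2}{\nagent}\sum_{c=1}^\nagent\sum_{h=1}^{\nlupdates_t}(h-1)\,\norm{\bA[c]-\bA}\,\norm{\bA[c](\thetalim[c]-\thetalim)}.
\]
Using $\sum_{h=1}^{\nlupdates_t}(h-1)\le \nlupdates_t^2/2$ and applying Cauchy--Schwarz over $c$ bounds the right-hand side by $\frac{\step_t^2\nlupdates_t^2}{2}\firsthgty\secondhgty$, which is the claim.

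The only delicate point is Step 3, namely that the deterministic contractions have operator norm at most one. This follows from \Cref{ass: linear-decay} via Jensen's inequality as indicated, which is where the condition $\step_s\le\step_\infty$ is used.
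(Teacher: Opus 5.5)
Your proof is correct and follows the paper's argument. You write $\Id - G_t^c$ as a geometric sum, subtract the $(\Id-\step_t\bA)^{h-1}$ part using $\sum_{c}\bA[c](\thetalim[c]-\thetalim)=0$, telescope the difference of powers, bound all contraction factors by one, and conclude with $\sum_{h}(h-1)\le \nlupdates_t^2/2$ and Cauchy--Schwarz. Your Jensen argument for $\|\Id-\step_t\bA[c]\|\le 1$ and $\|\Id-\step_t\bA\|\le 1$ just makes explicit a step the paper uses without comment, and your indexing of the telescoped sum is cleaner than the paper's.
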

\begin{proof}
We start by recalling the definition \eqref{eq:def-rhoavg-tauavg} of $\rhoavg_t$, and expanding the difference of matrices using \Cref{lemma: difference-of-product}
    \begin{align*}
        \rhoavg_{t}
        & = \frac1N \sum \limits_{c = 1}^N (\Id- G_t^c ) (\thetalim[c] - \thetalim)
          = \frac1N \sum \limits_{c = 1}^N 
        \sum_{h=0}^{\nlupdates_t-1} \step_t (\Id - \step_t \bA[c] )^h \bA[c] (\thetalim[c] - \thetalim)
        \eqsp.
    \end{align*}
    where we used the fact that $G_t^c  = (\Id- \eta_t \bA[c])^{H_t}$ is defined in \eqref{eq:def-G-deterministic-contract}.
    Moreover, since $\sum_{c=1}^\nagent \bA[c](\thetalim[c] - \thetalim) = 0$, we can further write
    \begin{align*}
        \rhoavg_{t}
        & = \frac1N \sum \limits_{c = 1}^N 
        \sum_{h=0}^{\nlupdates_t-1} \step_t 
        \Big( (\Id - \step_t \bA[c] )^h 
        - (\Id - \step_t \bA )^h \Big)
        \bA[c] (\thetalim[c] - \thetalim)
        \\
        & = \frac{\step_t^2}{\nagent} \sum \limits_{c = 1}^N 
        \sum_{h=0}^{\nlupdates_t-1}
        \sum_{\ell=1}^{h-1} 
         (\Id - \step_t \bA[c] )^{\ell-1} ( \bA[c] - \bA )
     (\Id - \step_t \bA )^{h - \ell - 1}
        \bA[c] (\thetalim[c] - \thetalim)
        \eqsp.
    \end{align*}
    Taking the norm of this equality, using the triangle inequality, bounding matrices using $\norm{ \Id - \step_t \bA[c] } \le 1$, and computing the sum, we obtain
    \begin{align*}
        \norm{ \rhoavg_{t} }
        & \le \frac{\step_t^2 \nlupdates_t (\nlupdates_t - 1)}{2 \nagent} \sum \limits_{c = 1}^N \norm{ \bA[c] - \bA }
        \norm{ \bA[c] (\thetalim[c] - \thetalim) }
        \\
        & \le \frac{\step_t^2 \nlupdates_t (\nlupdates_t - 1)}{2} \Big( \frac{1}{\nagent} \sum \limits_{c = 1}^N \norm{ \bA[c] - \bA }^2 \Big)^{1/2}
        \Big( \frac{1}{\nagent} \sum_{c=1}^\nagent 
        \norm{ \bA[c] (\thetalim[c] - \thetalim) }^2 \Big)^{1/2}
        \eqsp,
    \end{align*}
    where we used the Cauchy-Schwarz inequality in the second inequality.
    The result follows from the definition of heterogeneity \eqref{eq:def-heterogeneity-measure}
\end{proof}
And we can track the propagation of this bias through successive communications.

\paragraph{Bound on the bias.} Next, we provide the bound on $\tilde{\theta}_t^{\sf{(bi, bi)}}$ second moment bounds.
\begin{lemma}
\label{lemma: theta-bi-bi-estimation}
    Under assumptions \Cref{ass: linear-decay}(2), \Cref{assum:noise-level-flsa} and \Cref{ass: lr-polynomial-decay} it holds that:
    % \begin{align*}
    %     \norm{\tilde{\theta}_t^{\sf{(bi, bi)}}} \leq \frac{4 \step^2 H^2}{1 - \mathrm{e}^{-a \step H}} \firsthgty \secondhgty \eqsp. %\frac{1}{1 - \mathrm{e}^{-a L_0}}
    % \end{align*}
    \begin{align*}
        \norm{\tilde{\theta}_t^{\sf{(bi, bi)}}}
        \leq
        \zeta_1 \zeta_2\sum_{s=1}^t \step_s^2 \nlupdates_s^2 \exp{\left( -a\sum_{i=s+1}^t \step_i \nlupdates_i \right)}
        %8a^{-1}\zeta_1 \zeta_2 \step\nlupdates (1+t)^{-\gamma}
        \eqsp.
    \end{align*}
\end{lemma}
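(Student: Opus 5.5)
The plan is to bound $\tilde{\theta}_t^{\sf{(bi, bi)}}$ term by term. By the triangle inequality,
\[
\norm{\tilde{\theta}_t^{\sf{(bi, bi)}}} \le \sum_{s=1}^t \bnorm{\PE\Big[\prod_{i=s+1}^t \Gammaavg_i\Big]} \norm{\rhoavg_s}.
\]
The factor $\norm{\rhoavg_s}$ is already controlled by \Cref{lemma: rho-t-estimation}: it is at most $\tfrac{1}{2}\step_s^2\nlupdates_s^2 \firsthgty\secondhgty$. This matches the claimed bound, with room to spare through a factor $1/2$. It therefore remains only to show that the deterministic matrix $\PE[\prod_{i=s+1}^t \Gammaavg_i]$ has operator norm at most $\exp(-a\sum_{i=s+1}^t \step_i\nlupdates_i)$.

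For that contraction bound, I will use independence across rounds. The matrices $\Gammaavg_i$ for distinct $i$ depend on disjoint blocks of the \iid\ samples, so the expectation of the product factorizes:
\[
\PE\Big[\prod_{i=s+1}^t \Gammaavg_i\Big] = \prod_{i=s+1}^t \Gavg_i .
\]
Rather than bounding each $\Gavg_i$ directly, I will argue through a fixed vector $u$ with $\norm{u}=1$. By Jensen,
\[
\Big\|\PE\Big[\prod_i \Gammaavg_i\Big] u\Big\| \le \PE^{1/2}\Big[\Big\|\prod_i \Gammaavg_i u\Big\|^2\Big].
\]
The right-hand side is bounded exactly as in the proof of \Cref{lemma: theta-tr-estimation}. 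I peel off one round at a time, conditioning on the earlier rounds. For each round, I use convexity over the agent average, and then apply \Cref{ass: linear-decay}(2) $\nlupdates_i$ times to the product of local factors. Here the step sizes satisfy $\step_i \le \step_{\infty,2}$ under \Cref{ass: lr-polynomial-decay}, since $\step\nlupdates<a^{-1}$ and $\step_i\le\step$. This yields the factor $\prod_{i=s+1}^t (1-\step_i a)^{\nlupdates_i}$.

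Finally, $1-x\le e^{-x}$ turns this product into $\exp(-a\sum_{i=s+1}^t\step_i\nlupdates_i)$. Summing over $s$ then gives the claimed bound.

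The only delicate point is the contraction step. \Cref{ass: linear-decay} bounds the random factor $\Id-\step\zfuncA[c]{Z}$ in mean square rather than bounding the deterministic matrix $\Id-\step\bA[c]$ directly. The Jensen detour above handles this, and it avoids any need for $\norm{\Id-\step\bA[c]}\le 1-\step a$ in the Euclidean norm. Nothing else in the argument should require new estimates.
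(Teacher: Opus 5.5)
Your route is the paper's own: the triangle inequality over $s$, \Cref{lemma: rho-t-estimation} for $\norm{\rhoavg_s}$, and the bound $\norm{\PE[\prod_{i=s+1}^t\Gammaavg_i]u}\le\PE^{1/2}[\norm{\prod_{i=s+1}^t\Gammaavg_i u}^2]\le\prod_{i=s+1}^t(1-\step_i a)^{\nlupdates_i}\norm{u}$, obtained by peeling off rounds with \Cref{ass: linear-decay}(2). The step that fails is your claim that \Cref{ass: lr-polynomial-decay} guarantees $\step_i\le\step_{\infty,2}$. \Cref{ass: linear-decay} only requires $\step_{\infty,2}a\le 1/2$, which is an \emph{upper} bound on $\step_{\infty,2}$. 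Meanwhile $\step\nlupdates<a^{-1}$ only gives $\step<a^{-1}$, and $\step_{\infty,2}$ can lie far below $a^{-1}$, since it is governed by how large the $\mathbf{A}^c(Z)$ are, not by $a$. Your contraction step needs this step-size condition. So does the proof of \Cref{lemma: rho-t-estimation}, through $\norm{\Id-\step_t\bA[c]}\le 1$.

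The condition is genuinely necessary, not just a missing remark. Take $\nagent=2$, $d=1$, deterministic $\mathbf{A}^1\equiv 1$, $\mathbf{A}^2\equiv 100$, and $\thetalim=0$ with $\bA[1](\thetalim[1]-\thetalim)=-\bA[2](\thetalim[2]-\thetalim)\neq 0$.
\begin{itemize}
\item \Cref{ass: linear-decay}(2) holds with $a=10^{-2}$ and $\step_{\infty,2}=0.019$. No admissible $a$ allows $\step_{\infty,2}>0.02$.
\item The choice $\step=1/2$, $\nlupdates=2$, $\gamma_\step=1/2$, $\gamma_\nlupdates=0$ satisfies \Cref{ass: lr-polynomial-decay}.
\item Here $\norm{\rhoavg_s}=\step_s^2\firsthgty\secondhgty$, and $\Gavg_2=\tfrac12[(1-\step_2)^2+(1-100\step_2)^2]\approx 388.5$.
\item At $t=2$ this gives $\norm{\tilde{\theta}_2^{\sf{(bi, bi)}}}\approx 48.6\,\firsthgty\secondhgty$, while the claimed right-hand side is about $0.83\,\firsthgty\secondhgty$.
\end{itemize}
So you must add $\step\le\step_{\infty,2}$ (hence $\step_s\le\step_{\infty,2}$ for all $s$) as a hypothesis rather than derive it. The paper's proof uses it tacitly when it applies \Cref{ass: linear-decay} recursively, and \Cref{theorem: mse-estimation-for-general-eta-main} states it explicitly. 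With that hypothesis added, your proof is complete.

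Separately, the Jensen detour buys nothing. The same Jensen step gives $\norm{\Id-\step\bA[c]}\le 1-\step a$, so bounding $\norm{\Gavg_i}\le(1-\step_i a)^{\nlupdates_i}$ directly after your factorization works equally well.
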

\begin{proof}
    By the definition of $\tilde{\theta}_t^{\text{(bi, bi)}}$ in \eqref{eq:def-theta-bibi}
    \begin{align*}
        \PE^{1/2}[\norm{ \tilde{\theta}_t^{\text{(bi, bi)}} }^2] &= \bnorm{ \sum \limits_{s = 1}^t \PE \Big[ \prod_{i = s + 1}^t \Gammaavg_i \Big] \rhoavg_s }% \\
       % &
       = \bnorm{ \sum \limits_{s = 1}^t \PE \Big[ \prod_{i = s + 1}^t \Gammaavg_i \, \rhoavg_s \Big]  }
       \eqsp.
    \end{align*}
    By applying \Cref{ass: linear-decay} recursively along with \Cref{lemma: rho-t-estimation}, we have % and \Cref{lemma: legendary-sum}, we have
    \begin{align*}
        \PE^{1/2}[\norm{ \tilde{\theta}_t^{\text{(bi, bi)}} }^2] &\leq \sum \limits_{s = 1}^t \prod_{i = s + 1}^t (1 - \step_i a)^{H_i} \, \frac{\step_s^2 H_s^2}{2} \firsthgty \secondhgty 
    %\\
        % &\leq 4\step^2 \nlupdates ^2 \firsthgty \secondhgty \sum \limits_{s = 1}^t \mathrm{e}^{-a \step \nlupdates (t - s)}
        %&
        \leq \zeta_1 \zeta_2\sum_{s=1}^t \step_s^2 \nlupdates_s^2 \exp{\left( -a\sum_{i=s+1}^t \step_i \nlupdates_i \right)} %\\
       % &\leq 8a^{-1}\zeta_1 \zeta_2 \step\nlupdates (1+t)^{-\gamma}
       \eqsp,
    \end{align*}
    which is the result of the lemma.
\end{proof}

\paragraph{Bound on fluctuations.}
We then bound the fluctuations of the algorithm, which decompose in a term $\tilde{\theta}_t^{\sf{(fl)}}$, corresponding to the fluctuations of the updates, and a term $\tilde{\theta}_t^{\text{(fl, bi)}}$ which quantifies the fluctuations of the bias. 
We start with $\tilde{\theta}_t^{\sf{(fl)}}$, showing that the algorithm's variance scales in $1/\nagent$.
\begin{lemma}
\label{lemma: theta-fl-estimation}
    Assume \Cref{ass: linear-decay}(2), \Cref{assum:noise-level-flsa} and let $\step_s \le \step_{\infty,2}$ for all $s \ge 0$. Then, it holds   
    \begin{align*}
        \PE [ \| \tilde{\theta}_t^{\sf{(fl)}} \|^2 ] \leq \sum \limits_{s = 1}^t 
        % \prod_{j = s + 1}^t (1 - \eta_j a)^{2H_j} 
        \exp(- 2a \sumHT{s+1:t})
        \cdot
        (H_s \eta_s^2 \wedge \frac{\eta_s }{a})
        \cdot 
        \frac{\varbound^2}{N}
        \eqsp.
    \end{align*}
\end{lemma}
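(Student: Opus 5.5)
We bound $\PE[\|\tilde{\theta}_t^{\sf{(fl)}}\|^2]$ by exploiting a martingale (orthogonality) structure across the noise increments. We write
\[
\tilde{\theta}_t^{\sf{(fl)}} = -\frac{1}{\nagent}\sum_{s=1}^t \sum_{c=1}^\nagent \sum_{h=1}^{\nlupdates_s} \step_s \Big(\prod_{i=s+1}^t \Gammaavg_i\Big)\Gamma^c_{s,h+1:\nlupdates_s}\funcnoise[c]{Z^c_{s,h}}
\]
as a sum of terms indexed by $(s,h,c)$. Each summand has the form $B_{s,h,c}\,\funcnoise[c]{Z^c_{s,h}}$. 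Here $B_{s,h,c}$ is a random matrix measurable with respect to the "future" noise: the variables $Z^{c}_{s,k}$ with $k>h$ for the same agent, and all rounds $i>s$. In particular, $B_{s,h,c}$ is independent of $Z^c_{s,h}$ and of all other noise variables from the same or earlier positions.

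The first step is to show that the cross terms vanish in $\PE[\|\cdot\|^2]$. For two distinct indices $(s,h,c)\neq(s',h',c')$, condition on everything except the variable that sits "earliest" in the ordering; the cleanest choice is the one with the smaller $(s,h)$. If the indices share the same $(s,h)$ but differ in agent, use independence across agents. That variable, $\funcnoise[c]{Z^c_{s,h}}$, appears linearly with zero mean and is independent of all other factors, including the other summand and both $B$-matrices, provided we order appropriately. This is the backward-martingale argument standard for LSA, using $\mathcal{F}^+_{s,h}$.

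The main obstacle is that the other summand's matrix $B_{s',h',c'}$ may contain $\Gamma$ factors depending on $Z^c_{s,h}$. I would handle this by peeling off the outermost round $t$ first. Define $\tilde{\theta}^{\sf{(fl)}}_t = \Gammaavg_t \tilde{\theta}^{\sf{(fl)}}_{t-1} + \varphiavg_t$. Then $\PE[\|\tilde\theta_t\|^2] = \PE[\|\Gammaavg_t\tilde\theta_{t-1}\|^2] + \PE[\|\varphiavg_t\|^2] + 2\PE[\langle \Gammaavg_t \tilde\theta_{t-1},\varphiavg_t\rangle]$. The cross term is not obviously zero, since $\Gammaavg_t$ and $\varphiavg_t$ share round-$t$ noise. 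It does vanish after conditioning on round-$t$ noise, because $\tilde\theta_{t-1}$ is independent of it and has mean zero: $\PE[\tilde\theta_{t-1}]=0$, since each $\varphiavg_s$ is mean zero given the future. So the recursion is clean.

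Next, by \Cref{ass: linear-decay}(2) and independence of $\Gammaavg_t$ from $\tilde\theta_{t-1}$, I would show $\PE[\|\Gammaavg_t v\|^2]\le (1-\step_t a)^{2\nlupdates_t}\|v\|^2$. For the average over agents I would use convexity of $\|\cdot\|^2$, which gives this bound (Jensen over $c$). For $\varphiavg_t$, independence across agents gives the factor $1/\nagent$:
\[
\PE\|\varphiavg_t\|^2=\nagent^{-2}\sum_c \PE\Big\|\sum_h \step_t\Gamma^c_{t,h+1:\nlupdates_t}\funcnoise[c]{Z^c_{t,h}}\Big\|^2 .
\]
Within an agent, the same backward-martingale argument over $h$ kills the cross terms. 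Each diagonal term is at most $\step_t^2(1-\step_t a)^{2(\nlupdates_t-h)}\PE\|\funcnoise[c]{Z}\|^2$, again using \Cref{ass: linear-decay}(2) with $\funcnoise[c]{Z^c_{t,h}}$ independent of the later factors. Summing over $h$ gives at most $\step_t^2\min(\nlupdates_t, 1/(\step_t a))\,\mathrm{Tr}\,\noisecov[c]$, using $\sum_{k\ge0}(1-\step a)^{2k}\le 1/(\step a)$. Averaging over $c$ yields $\varbound^2/\nagent$.

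Finally, unrolling the recursion and using $1-x\le \rme^{-x}$ gives the stated sum with factor $\exp(-2a\sumHT{s+1:t})$.
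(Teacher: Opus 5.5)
Your proof is correct and is essentially the paper's argument. The paper expands $\PE\|\sum_s X_s\|^2$ with $X_s=\prod_{i=s+1}^t\Gammaavg_i\,\varphiavg_s$ and kills the cross terms by conditioning on $\mathcal{F}^+_{s+1,1}$; you get the same orthogonality from the one-step recursion $\tilde{\theta}_t^{\sf{(fl)}}=\Gammaavg_t\tilde{\theta}_{t-1}^{\sf{(fl)}}+\varphiavg_t$, and the conditional contraction via \Cref{ass: linear-decay}(2) and the within-round bound on $\PE\|\varphiavg_s\|^2$ are identical. The ``obstacle'' you raise for the full $(s,h,c)$ expansion does not actually occur, since conditioning on the earliest of the two noise variables always works, so that detour is unnecessary but harmless.
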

\begin{proof}
Defining $X_s = \prod_{i = s + 1}^t \Gammaavg_i \cdot \varphiavg_{s}$ for $s \ge 0$, we can expand \eqref{eq:def-theta-fl} as
    \begin{align}
    \label{eq:expansion-theta-fl-XsXr}
        \PE [ \| \tilde{\theta}_t^{\text{(fl)}} \|^2 ] = \PE \Big[ \Big \| \sum \limits_{s = 1}^t \prod \limits_{i = s + 1}^t \Gammaavg_i \cdot \varphiavg_{s} \Big \|^2 \Big] =  \PE \Big[ \Big \| \sum \limits_{s = 1}^t X_s \Big \|^2  \Big] = \PE \Big[ \sum \limits_{s, r = 1}^t X_s^\top X_r \Big]
        \eqsp.
    \end{align}
    Assuming $r > s$ we get the following:
    \begin{align*}
        \PE \Big[ X_s^\top X_r \Big] 
        &=
        \PE \Big[ \PE \Big[ X_s^\top X_r \Big| \mathcal{F}_{s + 1, 1}^+ \Big] \Big]
        = 
        \PE \left[ \PE \Big[ \left( \varphiavg_{r} \right)^\top \cdot \prod_{i = r + 1}^t \Gammaavg_i \times \prod_{j = s + 1}^t \Gammaavg_j \cdot \varphiavg_{s} \Big| \mathcal{F}_{s + 1, 1}^+ \Big] \right] 
        \\
        & =
        \PE \left[ \left( \varphiavg_{r} \right)^\top \cdot \prod_{i = r + 1}^t \Gammaavg_i \times \prod_{j = s + 1}^t \Gammaavg_j \cdot \underbrace{\PE \Big[ \varphiavg_{s} \Big| \mathcal{F}_{s + 1, 1}^+ \Big]}_\text{0} \right] 
        = 0
        \eqsp.
    \end{align*}
    Plugging this in \eqref{eq:expansion-theta-fl-XsXr} gives
    \begin{align*}
        \PE [ \| \tilde{\theta}_t^{\text{(fl)}} \|^2 ] 
        =
        \sum \limits_{s = 1}^t 
        \PE \Big[ \bnorm{ \prod_{i = s + 1}^t \Gammaavg_i \cdot \varphiavg_{s} }^2 \Big]
        \eqsp.
    \end{align*}
    We now bound each term of this sum using \Cref{ass: linear-decay} conditionally, which gives
    \begin{align}
    \nonumber
    \PE^{1/2} \Big[ \bnorm{ \prod_{i = s + 1}^t \Gammaavg_i \cdot \varphiavg_{s} }^2 \Big]
    & \leq 
    \frac1N \sum \limits_{c = 1}^N 
    \PE^{1/2} \Big[ \bnorm{ \Gamma_{t, 1:H_t}^c \prod_{i = s + 1}^{t - 1} \Gammaavg_i \cdot \varphiavg_{s} }^2 \Big] 
    \\
    \nonumber
    & \leq
    (1 - \eta_{t} a)^{H_t} \PE^{1/2} \Big[\bnorm{ \prod_{i = s + 1}^{t - 1} \bar{\Gamma}_i \cdot \varphiavg_{s} }^2 \Big]  
    \\
    \label{eq:proof-thetafl-bound-prod-phi}
    & \leq
    \prod_{j = s + 1}^t (1 - \eta_{j} a)^{H_j} \PE^{1/2} [\norm{ \varphiavg_{s} }^2 ]
    \eqsp.
    \end{align}
    Finally, we bound $\PE^{1/2} [\| \varphiavg_{s} \|^2]$. As $\PE[\Gamma_{s, h + 1:H_s}^c \funcnoise[c]{Z_{s, h}^c} | \mathcal{F}_{s, h + 1}^+] = 0$, 
    \begin{align*}
        \PE [\| \varphiavg_{s} \|^2] 
        & =
        \frac{1}{N^2} \sum \limits_{c = 1}^N \sum \limits_{h = 1}^{H_s} \PE[\| \eta_{s} \Gamma_{s, h+1:H_s}^c \funcnoise[c]{Z_{s, h}^c} \|^2] 
        % \leq \{ \text{as in the previous case} \} \leq
        \\
        & \leq
        \frac{1}{N^2} \sum \limits_{c = 1}^N \sum \limits_{h = 1}^{H_s} \eta_{s}^2 (1 - \eta_{s} a)^{2 (H_s - h)} \PE[\| \funcnoise[c]{Z_{s, h}^c} \|^2] \\
        & =
        \frac{\eta_s^2}{N^2} \sum \limits_{c = 1}^N \PE[\| \funcnoise[c]{Z_{s, h}^c}\|^2]\cdot \sum \limits_{h = 1}^{H_s} (1 - \eta_s)^{2(H_s - h)}
        \eqsp.
    \end{align*}
    Since $\sum \limits_{h = 0}^{H_s - 1} (1 - \eta_s a)^{2h} \leq H_s \wedge \frac{1}{\eta_s a}$ and $\frac{1}{N} \sum \limits_{c = 1}^N \PE[\| \funcnoise[c]{Z_{s, h}^c} \|^2] = \PE_c[\mathrm{Tr}(\Sigma_\varepsilon^c)] = \varbound^2$, we have
    \begin{align}
    \label{eq:proof-thetafl-bound-phi}
        \PE[\| \varphiavg_{s} \|^2 ] \leq \frac{\varbound^2}{N} \cdot (H_s \eta_s^2 \wedge \frac{\eta_s }{a})
        \eqsp.
    \end{align}
    Plugging \eqref{eq:proof-thetafl-bound-phi} in \eqref{eq:proof-thetafl-bound-prod-phi} and using $\prod_{j=s+1}^t (1 - \step_j a)^{\nlupdates_j} \le \exp(-a \sumHT{s+1:t})$ gives the result.
    % Putting everything together we get
    % \begin{align*}
    %     \PE [ \| \tilde{\theta}_t^{\text{(fl)}} \|^2 ] \leq \sum \limits_{s = 1}^t \prod_{j = s + 1}^t (1 - \eta_j a)^{2H_j} \cdot \frac{\varbound^2}{N} \cdot (H_s \eta_s^2 \wedge \frac{\eta_s }{a})
    %     \eqsp,
    % \end{align*}
\end{proof}

\paragraph{Fluctuations of the bias}.  We now bound $\tilde{\theta}_t^{\text{(fl, bi)}}$, the fluctuations of the bias. 
We show that it also scales in $1/\nagent$.
\begin{lemma}
\label{lemma: lemma-theta-fl-bi}
    Assume \Cref{ass: linear-decay}(2), \Cref{assum:noise-level-flsa} and let $\step_s \le \step_{\infty,2}$, for all $s \ge 0$, be a decreasing sequence. Then, it holds   
    \begin{align*}
        \PE^{1/2} [\| \tilde{\theta}_t^{\sf{(fl, bi)}} \|^2]
        & \leq 
        \sqrt{\frac{4 \varhet^2}{N} }\sqrt{\sum \limits_{s = 1}^t
         \eta_s^2 H_s 
        \exp(-2a \sum_{r = s }^t \eta_r H_r)
         } 
         +
         \sqrt{ \frac{4 \varboundA^2 \firsthgty^2 \secondhgty^2 }{\nagent a e} }
         \sqrt{
    \sum_{s=1}^t \step_s^5 H_s^4  \exp \big( - a \sum_{r = s + 1}^t \eta_r H_r \big) }
        \eqsp.
    \end{align*}
    % \begin{align*}
    %     \PE^{1/2} [\| \tilde{\theta}_t^{\sf{(fl, bi)}} \|^2]
    %     & \leq 
    %     % \sqrt{ \frac{2 \tilde{v}_{\text{heter}} }{\mathrm{e} \cdot aN} } \sum \limits_{s = 1}^t \mathrm{e}^{-a \sqrt{\eta_s} \sum \limits_{i = s + 1}^t H_i \eta_i}
    %     \sqrt{\frac{2 \varhet^2}{N a \mathrm{e}}} \sum \limits_{s = 1}^t \sqrt{\eta_s} \exp{\Big( -a \sum \limits_{i = s + 1}^t H_i \eta_i \Big)}
    %     \\
    %     & \quad  
    %     + \sqrt{\frac{\PE_c[\| \noisecovA[c] \|]}{\nagent}}
    %     \sum \limits_{s = 1}^t \left( \| \rhoavg_s\| \mathrm{e}^{-a \sum \limits_{i = s + 1}^t H_i \eta_i} \sqrt{\sum \limits_{i = s + 1}^t 4 H_i \eta_i^2 } \right)
    %     \eqsp.
    %     % &+ \frac{\sqrt{\PE_c[\| \noisecovA[c] \|]}}{N^\frac{1}{2}} \sum \limits_{s = 1}^t \left( \| \rhoavg_s\| \mathrm{e}^{-a \sum \limits_{i = s + 1}^t H_i \eta_i} \sqrt{\sum \limits_{i = s + 1}^t \frac{H_i \eta_i^2}{(1 - \eta_i a)^2}} \right)
    % \end{align*}
\end{lemma}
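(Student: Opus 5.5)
We split $\tilde{\theta}_t^{\sf{(fl, bi)}}$ into its two parts,
\[
\textstyle
T_1 = \sum_{s=1}^t \prod_{i=s+1}^t \Gammaavg_i \, \tauavg_s \eqsp, \qquad T_2 = \sum_{s=1}^t \Delta_{s,t}\rhoavg_s \eqsp,
\]
and bound $\PE^{1/2}[\norm{T_1}^2]$ and $\PE^{1/2}[\norm{T_2}^2]$ separately using Minkowski's inequality. The two terms of the claimed bound correspond to $T_1$ and $T_2$.

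\textbf{Bound on $T_1$.} We argue as in \Cref{lemma: theta-fl-estimation}.
\begin{itemize}
\item The summands $X_s = \prod_{i=s+1}^t \Gammaavg_i \tauavg_s$ are orthogonal. Indeed, $\tauavg_s$ is centered, depends only on round $s$, and is independent of $\mathcal{F}_{s+1,1}^+$. Conditioning on $\mathcal{F}_{s+1,1}^+$ therefore kills the cross terms, exactly as before.
\item Applying \Cref{ass: linear-decay}(2) conditionally peels off the product, which gives a factor $\exp(-2a\sum_{r=s+1}^t \step_r\nlupdates_r)$ on $\PE[\norm{X_s}^2]$.
\item For $\PE[\norm{\tauavg_s}^2]$, the agents are independent and $\tauavg_s$ is centered, so it equals $\nagent^{-2}\sum_c \PE[\norm{(\Gamma_s^c - G_s^c)(\thetalim[c]-\thetalim)}^2]$.
\item Telescope $\Gamma_s^c - G_s^c = -\step_s\sum_{h}\Gamma^c_{s,h+1:\nlupdates_s}\zmfuncA[c]{Z^c_{s,h}}G^c_{s,1:h-1}$. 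This is a martingale-difference sum in $h$ (with respect to the reverse filtration). Its second moment is therefore at most $\step_s^2\sum_h (1-\step_s a)^{2(\nlupdates_s-h)}\norm{\Sigma_{\funcAw}^c}\norm{\thetalim[c]-\thetalim}^2 \le \step_s^2\nlupdates_s\norm{\Sigma_{\funcAw}^c}\norm{\thetalim[c]-\thetalim}^2$.
\item Averaging over $c$ yields $\step_s^2\nlupdates_s\varhet^2/\nagent$.
\end{itemize}
A factor $4$ absorbs the shift $s$ versus $s+1$ in the exponent, using $(1-\step_s a)^{-2\nlupdates_s}\le e^{4a\step_s\nlupdates_s}\le e^{2}$ since $\step a\le 1/2$, together with slack in the constants. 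This gives the first term.

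\textbf{Bound on $T_2$.} This is the main obstacle, because $\Delta_{s,t}$ is a fluctuation of a product of matrices.
\begin{itemize}
\item Telescope it as $\Delta_{s,t} = \sum_{j=s+1}^t \prod_{i=j+1}^t \Gammaavg_i(\Gammaavg_j - \Gavg_j)\prod_{i=s+1}^{j-1}\Gavg_i$. This uses $\PE[\prod\Gammaavg_i]=\prod\Gavg_i$, which holds by independence across rounds.
\item Each summand is centered given the future, so for fixed $s$ the terms indexed by $j$ are orthogonal.
\item $\PE[\norm{(\Gammaavg_j-\Gavg_j)u}^2]\le 4\step_j^2\nlupdates_j\varboundA^2\norm{u}^2/\nagent$, by the same martingale argument as in the $T_1$ bound.
\item Hence $\PE^{1/2}[\norm{\Delta_{s,t}\rhoavg_s}^2]\le \norm{\rhoavg_s} e^{-a\sum_{i=s+1}^t\step_i\nlupdates_i}\bigl(4\varboundA^2\nagent^{-1}\sum_{j>s}\step_j^2\nlupdates_j\bigr)^{1/2}$.
\item Insert $\norm{\rhoavg_s}\le \step_s^2\nlupdates_s^2\firsthgty\secondhgty/2$ from \Cref{lemma: rho-t-estimation}.
\end{itemize}
The sum over $j$ is controlled by $\step_j\le\step_s$ (monotonicity). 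We trade $\sum_{j>s}\step_j\nlupdates_j\, e^{-a\sum_{j>s}\step_j\nlupdates_j}\le 1/(ae)$, splitting the exponential in half. Cauchy–Schwarz over $s$, against the remaining $e^{-\frac a2\sum}$ weights, then produces $\sqrt{\sum_s \step_s^5\nlupdates_s^4 e^{-a\sum_{r>s}\step_r\nlupdates_r}}$ with prefactor $\sqrt{4\varboundA^2\firsthgty^2\secondhgty^2/(\nagent a e)}$.

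The delicate point is arranging this Cauchy–Schwarz so that the exponent has $a$ rather than $2a$, and so that the constant $1/(ae)$ comes out. I would first try to write $\sum_s$ as an inner product of $\step_s^{5/2}\nlupdates_s^2e^{-\frac a2 S_{s+1:t}}$ with a summable weight. If that loses too much, I would instead use orthogonality jointly in $(s,j)$ after swapping sums: the terms with the same $j$ are grouped as $\sum_{s<j}\prod_{i=s+1}^{j-1}\Gavg_i\rhoavg_s$.
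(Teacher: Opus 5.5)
\textbf{$T_1$.} Your argument for $T_1$ is the paper's: orthogonality of $\prod_{i>s}\Gammaavg_i\,\tauavg_s$ across $s$, conditional contraction, and the reverse-martingale expansion of $\Gamma^c_s-G^c_s$ in $h$. The last step is wrong as written, though. From $\eta_s a\le 1/2$ you do get $(1-\eta_s a)^{-2H_s}\le e^{4a\eta_sH_s}$, but nothing in the hypotheses bounds $a\eta_sH_s$, and $e^2>4$ in any case. The fix is to keep the factor you discarded: $\|G^c_{s,1:h-1}\|\le(1-\eta_s a)^{h-1}$, by Jensen applied to \Cref{ass: linear-decay}. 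Then every summand carries $(1-\eta_s a)^{2(H_s-1)}\le 4(1-\eta_s a)^{2H_s}\le 4e^{-2a\eta_sH_s}$, which is exactly where the paper's factor $4$ and the sum starting at $r=s$ come from. The same factor for round $j$ is needed in your bound on $(\Gammaavg_j-\Gavg_j)u$, to get $e^{-a\sum_{i=s+1}^t\eta_iH_i}$ in the $T_2$ estimate.

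\textbf{$T_2$.} The genuine gap is in combining your per-$s$ estimate over $s$; that estimate itself coincides with the paper's. Write $S_{j:t}=\sum_{r=j}^t\eta_rH_r$. Minkowski only yields the $\ell^1$ quantity $\sum_s\eta_s^{5/2}H_s^2e^{-aS_{s+1:t}/2}$, and no Cauchy--Schwarz with a $t$-independent constant turns this into the stated $\ell^2$ expression.

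The paper obtains the $\ell^2$ form by adding $\PE[\|\Delta_{s,t}\rhoavg_s\|^2]$ over $s$ ``by reverse martingale structure''. That orthogonality fails: $\rhoavg_s$ is deterministic, and all the $\Delta_{s,t}$ are functions of the same later rounds. For example, take $d=1$, constant $\eta,H$ with $H\ge 2$, $g=\PE[\Gammaavg_i]$ and $v=\mathrm{Var}(\Gammaavg_i)>0$. Then $\PE[\Delta_{s,t}\Delta_{s',t}]=g^{s'-s}\{(g^2+v)^{t-s'}-g^{2(t-s')}\}>0$ for $s<s'$.

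Your fallback is the correct decomposition, orthogonal in $j$:
\[
T_2=\sum_{j=2}^t\Big\{\prod_{i=j+1}^t\Gammaavg_i\Big\}(\Gammaavg_j-\Gavg_j)\,\tilde\theta^{(\mathrm{bi,bi})}_{j-1}.
\]
It also shows what actually multiplies the round-$j$ fluctuation. It is the accumulated bias $\tilde\theta^{(\mathrm{bi,bi})}_{j-1}$, of order $\eta H\zeta_1\zeta_2/a$, not a single $\rhoavg_s$ of order $\eta^2H^2\zeta_1\zeta_2$.

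In the scalar example with $\rhoavg_s\equiv\rho\neq 0$ and $t\to\infty$:
\begin{itemize}
\item $\PE[T_2^2]$ exceeds $\sum_s\PE[(\Delta_{s,t}\rho)^2]$ by the factor $(1+g)/(1-g)\asymp(\eta H)^{-1}$;
\item $\PE[T_2^2]$ is of order $\eta^3H^2$, while the square of the claimed second term is of order $\eta^4H^3$.
\end{itemize}
So neither your route nor the paper's can establish the stated second term as a bound on $T_2$. What your fallback does prove, keeping the contraction of round $j$, is
\[
\PE[\|T_2\|^2]\le\frac{4\varboundA^2}{\nagent}\sum_{j=2}^t\eta_j^2H_j\,e^{-2aS_{j:t}}\,\|\tilde\theta^{(\mathrm{bi,bi})}_{j-1}\|^2 .
\]
After inserting the bound on $\tilde\theta^{(\mathrm{bi,bi})}$, this is larger than the stated term by a factor of order $(a\eta H)^{-1/2}$ when $a\eta H$ is small.
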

\begin{proof}
    Starting from \eqref{eq:def-theta-flbi}, we have
    \begin{align}
    \label{eq:decomp-flbi-T1-T2}
        \tilde{\theta}_t^{\text{(fl, bi)}} 
        =
        \underbrace{\sum \limits_{s = 1}^t \prod_{i = s + 1}^t \Gammaavg_i \tauavg_{s}}_{\term{T_1}} 
        + 
        \underbrace{\sum \limits_{s = 1}^t \left \{ \prod_{i = s + 1}^t \Gammaavg_i - \mathbb{E} \Big[\prod_{i = s + 1}^t \Gamma_i^{\textrm{avg}} \Big]  \right \} \rhoavg_{s} }_{\term{T_2}}
        \eqsp.
    \end{align}

    \textbf{Bound on $\term{T_1}$.}
    We start by deriving a bound on $\tauavg_{s} = \frac{1}{N} \sum \limits_{c = 1}^N (G_s^c - \Gamma_{s}^c) (\thetalim[c] - \thetalim)$,
    \begin{align}
    \label{eq:bound-flbi-decomp-in-Sc}
        \PE[\| \tauavg_{s} \|^2] = \frac{1}{N^2} \sum \limits_{c = 1}^N \underbrace{\PE[\| (G_s^c - \Gamma_{s}^c) (\thetalim[c] - \thetalim) \|^2]}_{\term{U_c}}
        \eqsp.
    \end{align}
    Using \Cref{lemma: difference-of-product} to rewrite the matrix product difference, we have
    \begin{align*}
        \term{U_c}
        & = 
        \PE\Big[\bnorm{ \Big( \prod_{h = 1}^{H_t} (\Id - \eta_t \bA[c]) - \prod_{h = 1}^{H_t} (\Id - \eta_t \zfuncA[c]{Z_{t,h}^c}) \Big) (\thetalim[c] - \thetalim) }^2 \Big] 
        \\
        & =
        \eta_t^2 \PE\Big[ \bnorm{  \sum \limits_{h = 1}^{H_t} (\Id - \eta_t \bA[c])^{h - 1} (\bA[c] - \zfuncA[c]{Z_{t,h}^c}) \Gamma_{t, h + 1:H_t}^c (\thetalim[c] - \thetalim)  }^2 \Big]
        \eqsp.
    \end{align*}
    Noticing that $\PE[(\Id - \eta_t \bA[c])^{h - 1} (\zfuncA[c]{Z_{t,h}^c} - \bA[c]) \Gamma_{t, h + 1:H_t}^c | \mathcal{F}_{t, h + 1}^+] = 0$, we write
    \begin{align*}
        \term{U_c}
        & =
        \eta_t^2 \sum \limits_{h = 1}^{H_t} \PE[\| (\Id - \eta_t \bA[c])^{h - 1} \zmfuncA[c]{Z_{t, h}^c} \Gamma_{t, h+1:H_t}^c (\thetalim[c] - \thetalim) \|^2] 
        \\
        & \leq 
        \eta_t^2 \sum \limits_{h = 1}^{H_t} (1 - \eta_t a)^{2(h - 1)}  \PE \Big[ (\thetalim[c] - \thetalim)^\top \Gamma_{t, h+1:H_t}^{c \top} \tilde{A}^c(Z_{t, h}^c)^\top \tilde{A}^c(Z_{t, h}^c) \Gamma_{t, h+1:H_t}^{c} (\thetalim[c] - \thetalim) \Big]
        \eqsp.
    \end{align*}
    As $v := \Gamma_{t, h+1:H_t}^c (\thetalim[c] - \thetalim)$ is $\mathcal{F}_{t, h + 1}^+$-measurable, the law of total expectation gives
    \begin{align*}
        \PE[ v^\top \tilde{A}^c(Z_{t, h}^c)^\top \tilde{A}^c(Z_{t, h}^c) v ] %&= \PE \Big[ \PE[ v^\top \tilde{A}^c(Z_{t, h}^c)^\top \tilde{A}^c(Z_{t, h}^c) v | \mathcal{F}_{t, h + 1}^+ ] \Big] % = \\
        &= \PE \Big[ v^\top \cdot  \PE[ \tilde{A}^c(Z_{t, h}^c)^\top \tilde{A}^c(Z_{t, h}^c) | \mathcal{F}_{t, h + 1}^+ ]  \cdot v \Big] %= \\
        % &= \PE \Big[v^\top \cdot  \PE[ \tilde{A}^c(Z_{t, h}^c)^\top \tilde{A}^c(Z_{t, h}^c)]  \cdot v \Big] = \\
        = \PE \Big[ v^\top \noisecovA[c] v \Big]
        \eqsp.
    \end{align*}
    Then we can estimate $\PE[v^\top \noisecovA[c] v] $ by using the fact that $\noisecovA[c]$ is symmetric
    \begin{align*}
        %&\PE[ (\thetalim[c] - \thetalim)^\top \Gamma_{t, h+1:H_t}^{c \top} \tilde{A}^c(Z_{t, h}^c)^\top \tilde{A}^c(Z_{t, h}^c) \underbrace{\Gamma_{t, h+1:H_t}^{c} (\thetalim[c] - \thetalim)}_\text{$v$}] = 
        \PE[v^\top \noisecovA[c] v] 
        & =
        \PE[\| (\Sigma_{\tilde{A}}^{c})^{1/2} v \|^2] 
        \leq \| \noisecovA[c] \| \cdot \PE[\| v \|^2]
        % = \| \noisecovA[c] \| \cdot \PE[ \| \Gamma_{t, h+1:H_t}^{c} (\thetalim[c] - \thetalim) \|^2 ] 
        \leq (1 - \eta_t a)^{2(H_t - h)} \| \noisecovA[c] \| \cdot \| \thetalim[c] - \thetalim \|^2
        \eqsp.
    \end{align*}
    Hence, we obtain the bound
    \begin{align}
    \label{eq:bound-flbi-bound-Sc}
        \!\!\term{U_c}
        & \leq
        \eta_t^2 \sum \limits_{h = 1}^{H_t} (1 - \eta_t a)^{2(H_t - 1)} \| \noisecovA[c] \| \| \thetalim[c] - \thetalim \|^2 
        = H_t \eta_t^2 (1 - \eta_t a)^{2(H_t - 1)} \| \noisecovA[c] \| \| \thetalim[c] - \thetalim \|^2
        \eqsp.
    \end{align}
    Plugging \eqref{eq:bound-flbi-bound-Sc} in \eqref{eq:bound-flbi-decomp-in-Sc}, we obtain the following bound on $\tauavg_{s}$
    \begin{align}
    \label{eq:bound-flbi-tauavg}
        \PE[\| \tauavg_{s} \|^2] 
        &
        \leq
        \frac{H_s \eta_s^2 (1 \!-\! \eta_s a)^{2(H_s \!- 1)}}{N^2}
        \sum \limits_{c = 1}^N \| \noisecovA[c] \| \| \thetalim[c] \!-\! \thetalim \|^2
        \!=\!
        H_s \eta_s^2 (1 \!-\! \eta_s a)^{2(H_s \!- 1)}
        \frac{\varhet^2}{N}
        \eqsp.
    \end{align}
    Here, we recall that $\varhet^2 = \frac{1}{\nagent} \sum \limits_{c = 1}^N \| \noisecovA[c] \| \| \thetalim[c] - \thetalim \|^2$. 
    Now, using the definition of $\sumHT{s+1:t}$, we have the following bound
    \begin{align*}
        \PE[\| \term{T_1} \|^2] 
        & =
        \sum \limits_{s = 1}^t \PE[ \| \prod_{i = s + 1}^t \Gammaavg_i \cdot \tauavg_{s} \|^2 ] 
        \leq
        \sum \limits_{s = 1}^t 
        % \prod_{i = s + 1}^t (1 - \eta_i a)^{2 H_i} 
        \exp(-2a \sumHT{s+1:t})
        \PE[\| \tauavg_{s} \|^2] 
        \eqsp.
        % \\
        % &\leq \sum \limits_{s = 1}^t \frac{\prod_{i = s + 1}^t (1 - \eta_i a)^{2 H_i} \cdot H_s \eta_s^2 (1 - \eta_s a)^{2(H_s - 1)} \cdot \sum \limits_{c = 1}^N \| \noisecovA[c] \| \| \thetalim[c] - \thetalim \|^2}{N^2}
    \end{align*}
    Using \eqref{eq:bound-flbi-tauavg} and $\step_{s} a \le 1/2$, we get
    \begin{align} \nonumber
        \PE[\| \term{T_1} \|^2] 
        & \leq 
        \sum \limits_{s = 1}^t
        % \prod_{i = s + 1}^t (1 - \eta_i a)^{2H_i}          \cdot
        \exp(-2a \sumHT{s+1:t})
         (1 \!-\! \eta_s a)^{2(H_s \!- 1)}
         H_s \eta_s^2
        \frac{\varhet^2}{N}
        \\ 
     \label{eq:bound-flbi-T1}
        & \leq 
        4 \sum \limits_{s = 1}^t
        \exp(-2a \sumHT{s:t})
         H_s \eta_s^2
        \frac{\varhet^2}{N}
        % \\
        % &\leq 
        % \sum \limits_{s = 1}^t 
        % \exp{\Big( -2a \sum \limits_{i = s + 1}^t H_i \eta_i \Big)} 
        % \exp(-2 a (H_s-1) \eta_s)
        % H_s \eta_s^2  \frac{\varhet^2}{N}
        % \\
        % &\leq 4 \tilde{v}_{\text{heter}} \cdot \sum \limits_{s = 1}^t \frac{ \exp{\left( -2a \sum \limits_{i = s + 1}^t H_i \eta_i \right)} \cdot \mathrm{e}^{- 2 a H_s \eta_s} \cdot H_s a \eta_s \cdot \eta_s} {Na}
        \eqsp.
    \end{align}
    % As $x \cdot \mathrm{e}^{-2x} \leq \frac{1}{2\mathrm{e}}$:
    % \begin{align}
    % \label{eq:bound-flbi-T1}
    %     \PE[\|\term{T_1}\|^2]
    %     \leq 
    %     \frac{2 \varhet^2}{N a \mathrm{e}} \cdot \sum \limits_{s = 1}^t \eta_s \exp{\Big( -2a \sum \limits_{i = s + 1}^t H_i \eta_i \Big)}
    %     \eqsp.
    % \end{align}

    \textbf{Bound on $\term{T_2}$.}
    Now we need to bound $\term{T_2} = \sum_{s = 1}^t \Delta_{s,t} \rhoavg_{s}$, where we recall that $\Delta_{s, t} = \{ \prod_{i = s + 1}^t \Gammaavg_i \} - \mathbb{E} [\prod_{i = s + 1}^t \Gammaavg_i ]$. 
    Let's begin with a single summand $\Delta_{s, t} \rhoavg_{s}$, which can be rewritten as
    \begin{align*}
        \Delta_{s, t} \rhoavg_{s} 
        & =
        \Big( \prod_{i = s + 1}^t \Gammaavg_{i} - \prod_{i = s + 1}^t G_i^{\textrm{avg}} \Big) \rhoavg_{s}
        \\
        & = \Big( \sum \limits_{i = s + 1}^t
        \Big\{ \!\!\prod_{r = i + 1}^{t} \!\! \Gammaavg_r \Big\} \cdot \Big\{ \Gammaavg_i - G^{\textrm{avg}}_i \Big\} \cdot \Big\{ \!\! \prod_{r = s + 1}^{i - 1} \!\! G^{\textrm{avg}}_r \Big\} \Big) \rhoavg_{s}
        \eqsp.
    \end{align*}
    We first bound the norm of $(\Gammaavg_i - G^{\textrm{avg}}_i) u$ for an arbitrary vector $u \in \mathbb{R}^d$ and $i \ge 0$.
    By clients' independence, we have
    \begin{align*}
        \PE\big[\| (\Gammaavg_i - G^{\textrm{avg}}_i) u \|^2 \big] 
        & =
        \PE\Big[\bnorm{ \frac{1}{N} \sum \limits_{c = 1}^N (\Gamma_{i, 1:H_i}^c - G_{i, 1:H_i}^c) u }^2 \Big] 
        \\
        & = \frac{1}{N^2} \sum \limits_{c = 1}^n \PE\big[\| (\Gamma_{i, 1:H_i}^c - G_{i, 1:H_i}^c) u \|^2\big] 
        \eqsp.
    \end{align*}
    Then, \Cref{lemma: difference-of-product} gives
    \begin{align*}
    \PE[\| (\Gammaavg_i - G^{\textrm{avg}}_i) u \|^2] 
    %    &= \frac{1}{N^2} \sum \limits_{c = 1}^N \PE[\| \Big( \Gamma_{i, 1:H_i}^c - (\Id- \eta_i \bA[c])^{H_i} \Big) u \|^2] \\
    %    &= \frac{1}{N^2} \sum \limits_{c = 1}^N \PE[\| \sum \limits_{h = 1}^{H_i} (\Id- \eta_i \bA[c])^{h - 1} \cdot \Big( (\Id- \eta_i\bA[c]) - (\Id- \eta_i A^c(Z_{i, h}^c)) \Big) \cdot \Gamma_{i, h+1:H_i}^c u \|^2] \\
        &= 
        \frac{\eta_i^2}{N^2}  \sum \limits_{c = 1}^N \sum \limits_{h = 1}^{H_i} \PE[\| (\Id- \eta_i \bA[c])^{h - 1} \zmfuncA[c]{Z_{i, h}^c} \Gamma_{i, h+1:H_i}^c u \|^2]
        \eqsp.
        %\\
        %&\leq \frac{1}{N^2} \eta_i^2 \sum \limits_{c = 1}^N \sum \limits_{h = 1}^{H_i} (1 - \eta_i a)^{2(h - 1)} \PE[u^\top (\Gamma_{i, h+1:H_i}^c)^\top (\tilde{A}(Z_{i, h}^c))^\top  \tilde{A}(Z_{i, h}^c) \Gamma_{i, h+1:H_i}^c u]
    \end{align*}
    By applying conditional expectation $\PE[\cdot | \mathcal{F}_{i + 1, 1}^+]$, we can rewrite it as:
    \begin{align*}
        \PE[\| (\Gammaavg_i - G^{\textrm{avg}}_i) u \|^2] 
        &=
        \frac{\eta_i^2}{N^2}  \sum \limits_{c = 1}^N \sum \limits_{h = 1}^{H_i} (1 - \eta_i a)^{2(h - 1)} \PE[\| (\noisecovA[c])^\frac{1}{2} \Gamma_{i, h+1:H_i}^c u \|^2] 
        \\
        & \le
        % \\
        % &\leq \frac{1}{N^2} \eta_i^2 \sum \limits_{c = 1}^N \sum \limits_{h = 1}^{H_i} (1 - \eta_i a)^{2(h - 1)} \| \noisecovA[c] \|\cdot (1 - \eta_i a)^{2(H_i - h)} \| u \|^2
        % \\
        % &
        % \frac{\eta_i^2 H_i}{N^2} \sum \limits_{c = 1}^N \| \noisecovA[c] \| \cdot \| u \|^2
        % \\
        % &= 
        (1 - \step_i a)^{2(\nlupdates_i - 1)}
        \frac{\eta_i^2 H_i}{N} \varboundA^2  \| u \|^2
        \eqsp.
    \end{align*}
    Using this knowledge, we can finally estimate $\Delta_{s, t} \rhoavg_s$'s squared norm 
    \begin{align*}
        \PE[ \| \Delta_{s, t} \rhoavg_s \|^2 ]
        & =
        \PE \Big[ \Big \| \sum \limits_{i = s + 1}^t \{ \prod_{r = i + 1}^t \Gammaavg_r \} \cdot (\Gammaavg_i - G^{\textrm{avg}}_i) \cdot \{ \prod_{r = s + 1}^{i - 1} G^{\textrm{avg}}_r \} \rhoavg_{s} \Big \|^2 \Big]
        \\
        & \le
        \sum \limits_{i = s + 1}^t
        \frac{\eta_i^2 H_i}{N} \varboundA^2
        \prod_{r = i + 1}^t (1 - \eta_r a)^{2 H_r} 
        (1 - \eta_i a)^{2H_i - 2}
        \prod_{r = s + 1}^{i - 1} (1 - \eta_r a)^{2H_r} 
        \| \rhoavg_s \|^2
        \\
        & \le
        4 \Big\{ \prod_{r = s + 1}^t (1 - \eta_r a)^{2 H_r} \Big\}
        \sum \limits_{i = s + 1}^t \frac{\eta_i^2 H_i}{N} \varboundA^2
        \| \rhoavg_s \|^2
        % \\
        % &=
        % \PE_c[\| \noisecovA[c] \|]
        % \| \rhoavg_s \|^2 
        % \sum \limits_{i = s + 1}^t
        % \frac{\eta_i^2 H_i}{N} 
        % \frac{1}{(1 - \eta_i a)^2}
        % \prod_{r = s + 1}^t (1 - \eta_r a)^{2H_r} 
        \eqsp.
    \end{align*}
    where we used $1 - \eta_i a \ge 1/2$.
    Using the fact that the step size decreases, and the fact that for all $c > 0$ and $x \ge 0$, $x \exp(-2cx) \le 1/(ec) \exp(-cx)$, we obtain the following bound 
    \begin{align}
    \nonumber
        \PE[ \| \Delta_{s, t} \rhoavg_s \|^2 ]
        & \le
        \frac{4 \step_s \varboundA^2}{\nagent} \| \rhoavg_s \|^2
        \exp \big( - 2 a \sum_{r = s + 1}^t \eta_r H_r \big)
        \sum \limits_{r = s + 1}^t \eta_r H_r
        \\
    \label{eq:interm-bound-Delta-rho}
        & \le
        \frac{4 \step_s \varboundA^2}{\nagent a e} \| \rhoavg_s \|^2
        \exp \big( - a \sum_{r = s + 1}^t \eta_r H_r \big)
        \eqsp.
    \end{align}
    Using the reverse martingale structure, \eqref{eq:interm-bound-Delta-rho}, and \Cref{lemma: rho-t-estimation}, we obtain a bound on $\PE[\|T_2\|^2 ]$
    \begin{align}
    \nonumber
        \PE [\| \term{T_2} \|^2] 
       &  \leq \frac{4 \varboundA^2}{\nagent a e} \sum_{s=1}^t
     \step_s \| \rhoavg_s \|^2 \exp \big( - a \sum_{r = s + 1}^t \eta_r H_r \big)
       \\
   \label{eq:bound-flbi-T2}
    & \le \frac{4 \varboundA^2}{\nagent a e} \firsthgty^2 \secondhgty^2
    \sum_{s=1}^t \step_s^5 H_s^4  \exp \big( - a \sum_{r = s + 1}^t \eta_r H_r \big)
        \eqsp.
    \end{align}
    The result follows from taking the root of the expected squared norm of \eqref{eq:decomp-flbi-T1-T2}, decomposing it two using Minkowski's inequality, and bounding each term using \eqref{eq:bound-flbi-T1} and \eqref{eq:bound-flbi-T2}.
\end{proof}

% \begin{lemma}
%     Under assumption \Cref{ass: linear-decay}:
%     \begin{align*}
%         \| \tilde{\theta}_t^{(\text{bi, bi})} - (\Id- \PE[\Gammaavg_i])^{-1} \rhoavg_{t, H} \| \leq \ldots 
%     \end{align*}
% \end{lemma}
% \begin{proof}
%     \begin{align*}
%         \tilde{\theta}_t^{(\text{bi, bi})} = \sum \limits_{s = 1}^t \PE[\prod_{i = s + 1}^t \Gammaavg_i] \rhoavg_{s, H} = \sum \limits_{s = 1}^t \prod \limits_{i = s + 1}^t \PE[\Gammaavg_i] = \sum \limits_{s = 1}^t \prod_{i = s + 1}^t \left[ \frac{1}{N} \sum \limits_{c = 1}^N (\Id- \eta_i \bA[c])^H \right]
%     \end{align*}
% \end{proof}

\paragraph{Convergence results.}
Combining our results, we can now give an upper bound on the error and prove \Cref{theorem: mse-estimation-for-general-eta-main}.
\subsubsection{Proof of the \Cref{theorem: mse-estimation-for-general-eta-main}}
\begin{proof}
Using the decomposition \eqref{eq: decomp_last_iter_t} and applying Minkowski’s inequality, we obtain
\begin{align*}
\PE^{1/2}\big[\|\theta_t - \thetalim\|^2\big]
&\le
\PE^{1/2}\big[\|\tilde{\theta}_t^{(\mathrm{tr})}\|^2\big]
+
\PE^{1/2}\big[\|\tilde{\theta}_t^{(\mathrm{bi,bi})}\|^2\big]
\\
&\quad
+
\PE^{1/2}\big[\|\tilde{\theta}_t^{(\mathrm{fl,bi})}\|^2\big]
+
\PE^{1/2}\big[\|\tilde{\theta}_t^{(\mathrm{fl})}\|^2\big]
\end{align*}

The four terms are controlled respectively by
\Cref{lemma: theta-tr-estimation},
\Cref{lemma: theta-bi-bi-estimation},
\Cref{lemma: lemma-theta-fl-bi},
and \Cref{lemma: theta-fl-estimation}.
Combining these bounds yields the claimed result.
\end{proof}

When the step size and number of local updates are constant, we recover a slightly improved variant of the Theorem A.6 of \citet{mangold2024scafflsa}.
\begin{corollary}[Constant step size]
    Assume \Cref{ass: linear-decay}(2), \Cref{assum:noise-level-flsa} and \Cref{ass: lr-constant}. Then, it holds    
    \begin{align*}
        \PE^{1/2} [\| \theta_t %- \tilde{\theta}_t^{(\sf{bi, bi})} 
        - \thetalim \|^2]
        &
        \leq
        (1 - \step a)^{H t} \| \theta_0 - \thetalim\|
        +
        \frac{\step \nlupdates \zeta_1 \zeta_2 }{a}
        +        
        \sqrt{\frac{\step \varbound^2}{N a}}
         +
        \sqrt{\frac{\step \varhet^2}{N a} } 
          +
         \sqrt{ \frac{4 \step^4 H^3 \varboundA^2 \firsthgty^2 \secondhgty^2 }{\nagent a^2 e} }
    \eqsp,
    \end{align*}
\end{corollary}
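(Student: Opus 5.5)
The plan is to specialize \Cref{theorem: mse-estimation-for-general-eta-main} to $\step_s = \step$ and $\nlupdates_s = \nlupdates$, and then bound each term separately. Rather than using the final form of that theorem, I will use its component lemmas, namely \Cref{lemma: theta-tr-estimation}, \Cref{lemma: theta-bi-bi-estimation}, \Cref{lemma: theta-fl-estimation} and \Cref{lemma: lemma-theta-fl-bi}. This lets me use the sharper factor $(H_s\eta_s^2 \wedge \eta_s/a)$ from \Cref{lemma: theta-fl-estimation}. The exponential sums then become geometric series in $q = \exp(-a\step\nlupdates)$.

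\textbf{Transient and bias terms.} \Cref{lemma: theta-tr-estimation} directly gives $(1-\step a)^{\nlupdates t}\norm{\theta_0-\thetalim}$. For the bias term I use \Cref{lemma: theta-bi-bi-estimation}, keeping the factor $1/2$ from \Cref{lemma: rho-t-estimation}. This gives
\[
\frac{\step^2\nlupdates^2\zeta_1\zeta_2}{2}\sum_{k\ge0} e^{-ak\step\nlupdates} = \frac{\step^2\nlupdates^2\zeta_1\zeta_2}{2(1-e^{-a\step\nlupdates})}.
\]
Since $a\step\nlupdates$ may be large or small, I use the elementary inequality $1-e^{-x}\ge x/(1+x)$. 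This yields $\frac{\step\nlupdates\zeta_1\zeta_2}{2a}(1+a\step\nlupdates)$. I would like this to be at most $\step\nlupdates\zeta_1\zeta_2/a$, which needs $a\step\nlupdates\le1$. This is the one delicate point.

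If $a\step\nlupdates>1$, I instead use $\norm{\rhoavg_t}\le \norm{\frac1N\sum_c(\Id-G^c)(\thetalim[c]-\thetalim)}$ directly, or I absorb the term via the geometric factor. I would try bounding $\frac{x^2}{1-e^{-x}}\le 2x$ for $x\le 1$. Failing that, I would state the mild condition $\step\nlupdates a\le 1$, which is implicit in the corollary's intended regime.

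\textbf{Fluctuation terms.} In \Cref{lemma: theta-fl-estimation} I take the $\step/a$ branch of the minimum. The sum becomes
\[
\frac{\step\varbound^2}{Na}\sum_{k\ge0}e^{-2ak\step\nlupdates}.
\]
This is again geometric. The factor $(1-e^{-2a\step\nlupdates})^{-1}$ is harmless only if it can be paired with the $H\step^2$ branch. The precise tool is $\min(H\step^2,\step/a)/(1-e^{-2a\step H})\le \step/a\cdot\frac{\min(x,1)}{1-e^{-2x}}$ with $x=a\step H$, and this ratio is bounded by $1$ up to a constant.

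The same computation handles the $\varhet$ part of \Cref{lemma: lemma-theta-fl-bi}, which has the form $4\sum e^{-2a(t-s+1)\step H}H\step^2\varhet^2/N$. Here the extra factor $e^{-2a\step H}$ gives $\frac{4x e^{-2x}}{1-e^{-2x}}\le 2$.

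For the last term, \Cref{lemma: lemma-theta-fl-bi} gives
\[
\frac{4\varboundA^2\zeta_1^2\zeta_2^2}{Nae}\,\step^5H^4\sum_k e^{-ak\step H}.
\]
Bounding $\sum_k e^{-kx}\le (1+x)/x$ produces $\step^4H^3(1+a\step H)/a$. Under $a\step H\lesssim1$, this gives the stated $\sqrt{4\step^4H^3\varboundA^2\zeta_1^2\zeta_2^2/(Na^2e)}$ up to a constant. I would check whether the constants close exactly; if they do not, I would absorb them.

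\textbf{Main obstacle.} The main obstacle is getting constants exactly as displayed uniformly in $a\step\nlupdates$, since the geometric-series denominators $1-e^{-a\step H}$ only behave like $a\step H$ when $a\step H$ is bounded. I will handle this by proving the two scalar inequalities $\frac{x}{1-e^{-x}}\le 1+x$ and $\frac{\min(x,1)}{1-e^{-2x}}\le 1$ (the latter holds up to a constant). I will then invoke $\step a\le1/2$ together with $\step\nlupdates a$ bounded to finish. Finally, I sum the five contributions by Minkowski's inequality exactly as in the proof of \Cref{theorem: mse-estimation-for-general-eta-main}.
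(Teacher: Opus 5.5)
Your plan does not prove the displayed inequality. It proves a weaker version with inflated constants and an extra hypothesis, and the losses are concrete:
\begin{itemize}
\item \textbf{Bias term.} $\frac{\step\nlupdates\zeta_1\zeta_2}{2a}(1+a\step\nlupdates)\le \step\nlupdates\zeta_1\zeta_2/a$ requires $a\step\nlupdates\le 1$. \Cref{ass: lr-constant} places no restriction on $\step\nlupdates$, and your fallback bound on $\|\rhoavg_t\|$ is just its definition.
\item \textbf{Noise term.} $\min(x,1)/(1-e^{-2x})$ is not at most $1$; at $x=1$ it equals $(1-e^{-2})^{-1}\approx 1.16$.
\item \textbf{$\varhet$ term.} $4x/(e^{2x}-1)\le 2$ only yields $\sqrt{2\step\varhet^2/(\nagent a)}$, a factor $\sqrt2$ above the statement, and you do not flag this.
\item \textbf{Last term.} It keeps an extra factor $(1+a\step\nlupdates)$.
\end{itemize}
The corollary has explicit constants and no condition on $a\step\nlupdates$, so ``absorbing constants'' or assuming $a\step\nlupdates\le 1$ is not allowed.

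For comparison, the paper's one-line proof gets these constants by plugging into \Cref{theorem: mse-estimation-for-general-eta-main} and asserting $\sum_{s}e^{-u(t-s)}\le(1-e^{-u})^{-1}\le u^{-1}$. The second inequality is reversed, since $1-e^{-u}\le u$. You were right to avoid it. But it means the simplified final forms of the lemmas are genuinely too lossy, however you sum them.

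The missing idea is to work inside the lemma proofs with exact geometric sums in $q=1-\step a$, instead of the $\min$, the exponentials and the factor $4$.
\begin{itemize}
\item \textbf{Noise term.} The sums in the proof of \Cref{lemma: theta-fl-estimation} telescope: $\sum_{s=1}^t q^{2\nlupdates(t-s)}\sum_{m=0}^{\nlupdates-1}q^{2m}=\sum_{j=0}^{\nlupdates t-1}q^{2j}\le \frac{1}{\step a(2-\step a)}\le \frac{1}{\step a}$.
\item \textbf{$\varhet$ term.} Start from \eqref{eq:bound-flbi-tauavg}; you need $\nlupdates\step a\, q^{2\nlupdates-2}\le 1-q^{2\nlupdates}$. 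This holds because $\nlupdates\mapsto \nlupdates q^{2\nlupdates}/(1-q^{2\nlupdates})$ is nonincreasing, and at $\nlupdates=1$ the ratio of the two sides is $1/(2-\step a)\le 1$.
\item \textbf{Bias term.} Use contraction in the expansion of $\rhoavg$: $\|(\Id-\step\bA[c])^h-(\Id-\step\bA)^h\|\le \step h q^{h-1}\|\bA[c]-\bA\|$, so $\|\rhoavg\|\le \step^2\zeta_1\zeta_2\sum_{h<\nlupdates}hq^{h-1}$. Since $(1-q)\sum_{h<\nlupdates}hq^{h-1}=\sum_{k<\nlupdates}q^k-\nlupdates q^{\nlupdates-1}$, $\sum_{k<\nlupdates}q^k\le \nlupdates$, and Bernoulli gives $1-\nlupdates(1-q)\le q^{\nlupdates-1}$, you get $\step a\sum_{h<\nlupdates}hq^{h-1}\le \nlupdates(1-q^{\nlupdates})$. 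Dividing by $1-q^{\nlupdates}$ (the sum $\sum_k q^{\nlupdates k}$) gives exactly $\step\nlupdates\zeta_1\zeta_2/a$ for every $\nlupdates$.
\item \textbf{Last term.} Insert $\|\rhoavg_s\|\le \zeta_1\zeta_2\min(\step^2\nlupdates^2/2,\,a^{-2})$ in the first line of \eqref{eq:bound-flbi-T2}; the factor $1/4$ in $\|\rhoavg_s\|^2$ is dropped in its second line. What remains is the scalar inequality $\min(u^4/4,1)\le u^3(1-e^{-u})$ with $u=a\step\nlupdates$, which holds for all $u>0$.
\end{itemize}
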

\begin{proof}
By \Cref{theorem: mse-estimation-for-general-eta-main} with fixed step size and number of local updates, we have   
\begin{align*}
        & \PE^{1/2} [\| \theta_t% - \tilde{\theta}_t^{(\sf{bi, bi})} 
        - \thetalim \|^2]
        \\
        &
        \leq
        (1 - \step a)^{H t} \| \theta_0 - \thetalim\|
        +
        \zeta_1 \zeta_2\sum_{s=1}^t \step^2 \nlupdates^2 \exp{\Big( -a \step \nlupdates (t-s) \Big)} 
        +
        \sqrt{\frac{\step^2 \nlupdates \varbound^2}{N}}
        \sqrt{\sum_{s=1}^t \exp(- 2a \eta \nlupdates t)}
        \\
        & \quad +
        \sqrt{\frac{4 \varhet^2}{N} }\sqrt{\sum \limits_{s = 1}^t
        \eta^2 H \exp(-2a \eta^2 \nlupdates (t-s) ) } 
          +
         \sqrt{ \frac{4 \varboundA^2 \firsthgty^2 \secondhgty^2 }{\nagent a e} }
         \sqrt{ 
    \sum_{s=1}^t \step^5 H^4  \exp \big( - a \eta H (t-s) \big) } 
    \eqsp.
    \end{align*}
    Bounding the sums, for $u > 0$, of $\sum_{s=1}^t \exp(- u (t-s)) \le (1 - \exp(-u))^{-1} \le 1/u $ gives the result.
    % \begin{align*}
    %     \PE^{1/2} [\| \theta_t - \tilde{\theta}_t^{(\sf{bi, bi})} - \thetalim \|^2]
    %     &
    %     \leq
    %     (1 - \step a)^{H t} \| \theta_0 - \thetalim\|
    %     +
    %     \frac{\step \nlupdates \zeta_1 \zeta_2 }{a}
    %     +
    %     \sqrt{\frac{\step \varbound^2}{N a}}
    %      +
    %     \sqrt{\frac{\step \varhet^2}{N a} } 
    %       +
    %      \sqrt{ \frac{4 \step^4 H^3 \varboundA^2 \firsthgty^2 \secondhgty^2 }{\nagent a^2 e} }
    % \eqsp,
    % \end{align*}
    % and the result follows.
\end{proof}

Finally, with decreasing step sizes and number of local updates, we obtain
\begin{corollary}
Assume \Cref{ass: linear-decay}(2), \Cref{assum:noise-level-flsa} and \Cref{ass: lr-polynomial-decay}, meaning that the step sizes and local updates satisfy, $\step_t = \frac{\initstep}{(t + 1)^{\gamma_\step}}$, $\nlupdates_t = \lceil {\initnlupdates}{(t+1)^{\gamma_\nlupdates}} \rceil$,
for some $\initstep, \initnlupdates > 0$ and $0 < \gamma_\step < 1$ and $0 < \gamma_\nlupdates < 1$, such that $\gamma_\step \ge \gamma_\nlupdates$ and $\gamma = \gamma_\step - \gamma_\nlupdates \neq 1$.
Then, for all $t \ge 1$, the following bound holds
\begin{align*}
\PE^{1/2} \big[\|\theta_t %- \tilde{\theta}_t^{(\sf{bi, bi})} 
- \thetalim\|^2\big] 
& \le
\exp\Big(- \frac{a \initstep \initnlupdates}{1 - \gamma} (t + 1)^{1 - \gamma} \Big) \|\theta_0 - \thetalim\|
+ \frac{8\zeta_1 \zeta_2 \step\nlupdates}{a} \frac{1}{(t+1)^{\gamma}}
+ \sqrt{\frac{8 \initstep \varbound^2}{N a}} 
\frac{1}{(t+1)^{\gamma_\step / 2}}
\\
& \quad + 
\sqrt{\frac{32 \initstep \varhet^2}{N a}} 
\frac{1}{(t+1)^{\gamma_\step / 2}}
+ \sqrt{\frac{32 \initstep^4 \initnlupdates^3 \varboundA^2 \firsthgty^2 \secondhgty^2}{\nagent a e}} \frac{1}{(t+1)^{(4 \gamma_\step - 3 \gamma_\nlupdates)/2}}
\eqsp.
\end{align*}
\end{corollary}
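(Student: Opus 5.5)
The plan is to specialize \Cref{theorem: mse-estimation-for-general-eta-main} to the schedule of \Cref{ass: lr-polynomial-decay} and bound each of the five terms separately. The basic quantity is the effective step $\step_i\nlupdates_i$. Since $\nlupdates(1+i)^{\gamma_\nlupdates}\le \nlupdates_i\le 2\nlupdates(1+i)^{\gamma_\nlupdates}$ (using $\nlupdates\ge 1$, or absorbing the ceiling into constants otherwise), we have
\[
\step_i\nlupdates_i \ge \step\nlupdates(1+i)^{-\gamma}.
\]
Comparing the sum with an integral then gives
\[
\textstyle\sum_{i=s+1}^t \step_i\nlupdates_i \ge \frac{\step\nlupdates}{1-\gamma}\big((t+1)^{1-\gamma}-(s+1)^{1-\gamma}\big).
\]
For the transient term I use $(1-\step_s a)^{\nlupdates_s}\le \exp(-a\step_s\nlupdates_s)$ and the same integral comparison starting at $s=1$. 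This yields the first term, up to adjusting $(t+1)^{1-\gamma}$ by a constant at the lower endpoint, which is absorbed in the exponent or the leading constant.

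All remaining terms have the common form
\[
\textstyle S_t(\alpha,\beta) = \sum_{s=1}^t (1+s)^{-\alpha}\exp\big(-\beta\sum_{i=s+1}^t\step_i\nlupdates_i\big),
\]
with polynomial weights $\step_s^{k}\nlupdates_s^{m}\asymp (1+s)^{-(k\gamma_\step-m\gamma_\nlupdates)}$. The key auxiliary estimate, presumably the ``legendary sum'' lemma in the technical appendix, is
\[
\textstyle \sum_{s=1}^t \step_s\nlupdates_s\, g(s)\exp\big(-\beta\sum_{i=s+1}^t\step_i\nlupdates_i\big)\le \frac{C}{\beta}\,g(t)
\]
for a slowly varying polynomial factor $g$, provided the decay of $g$ is slower than the exponential contraction. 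I would prove it by splitting the sum at $s=t/2$. On $s\le t/2$ the exponential factor is at most $\exp(-c\,t^{1-\gamma})$, which is dominated by any polynomial. On $s> t/2$ we have $g(s)\le 2^{\alpha}g(t)$, and the sum $\sum \step_s\nlupdates_s e^{-\beta\sum}$ is a Riemann sum of $\int e^{-\beta u}du\le 1/\beta$; the factor $e^{\beta\step_s\nlupdates_s}\le e^{\beta\step\nlupdates}$ from the missing $i=s$ term is bounded using $\step\nlupdates<a^{-1}$.

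I apply this estimate term by term. For the bias term, $\step_s^2\nlupdates_s^2=\step_s\nlupdates_s\cdot\step_s\nlupdates_s$ with $g(s)=\step\nlupdates(1+s)^{-\gamma}$ and $\beta=a$, which gives $C\zeta_1\zeta_2\step\nlupdates a^{-1}(1+t)^{-\gamma}$. For the variance term, the intended exponent is $\exp(-2a\sum_{r=s+1}^t)$; the statement's $\sum_{r=1}^t$ is evidently a typo, and I would use the sharper lemma form. Here $\step_s^2\nlupdates_s=\step_s\nlupdates_s\cdot\step_s$ with $g=\step_s$, which gives $C\step(2a)^{-1}(1+t)^{-\gamma_\step}$, and after the square root this becomes $\sqrt{\step/(Na)}(t+1)^{-\gamma_\step/2}$. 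The heterogeneity-fluctuation term is handled identically. For the last term, $\step_s^5\nlupdates_s^4=\step_s\nlupdates_s\cdot\step_s^4\nlupdates_s^3$, and the lemma gives $\step^4\nlupdates^3 a^{-1}(1+t)^{-(4\gamma_\step-3\gamma_\nlupdates)}$; its square root produces the stated rate, with the factor $a^{-1}$ combined with the prefactor.

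The main obstacle is making the split-sum lemma rigorous with explicit constants. This requires handling the ceiling in $\nlupdates_t$ (so that $\nlupdates_s\le 2\nlupdates(1+s)^{\gamma_\nlupdates}$ and constants like $8$ and $32$ appear), checking that $g(s)/g(t)$ stays bounded on $[t/2,t]$, and verifying that the early part $s\le t/2$ is polynomially negligible uniformly in $t\ge1$. That last step uses $\sup_x x^{\alpha/(1-\gamma)}e^{-cx}<\infty$, which is exactly where $\gamma<1$ is needed. Since the statement uses $\lesssim$, I would not track the precise constants beyond this.
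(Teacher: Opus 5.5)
Your route is the paper's own: its proof plugs the schedule into \Cref{theorem: mse-estimation-for-general-eta-main} and invokes \Cref{lemma: legendary-sum}, which is proved by exactly your split at $t/2$. Reading the variance exponent as $\sum_{r=s+1}^t$, as in \Cref{lemma: theta-fl-estimation}, is also correct.

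What needs correcting is your description of the target. This version of the corollary is stated with $\le$ and explicit constants ($8$, $\sqrt{8}$, $\sqrt{32}$, and the exact exponent $\frac{a\eta H}{1-\gamma}(t+1)^{1-\gamma}$), not with $\lesssim$. Your argument does not deliver those constants, for three reasons:
\begin{itemize}
\item The block $s\le t/2$ costs $\sup_x x^{\alpha/(1-\gamma)}e^{-cx}\asymp c^{-\alpha/(1-\gamma)}$ with $c\propto a\eta H$. This is the factor $u^{-1}(1+u^{-\alpha/(1-\beta)})$ in \Cref{lemma: legendary-sum}.
\item The lower endpoint in $\sum_{s=1}^t(1+s)^{-\gamma}\ge\frac{(t+2)^{1-\gamma}-2^{1-\gamma}}{1-\gamma}$ forces a prefactor $\exp\bigl(\frac{a\eta H\,2^{1-\gamma}}{1-\gamma}\bigr)$ on the transient term.
\item Your last term comes out with $a^{-2}$, as in \Cref{coro:second-moment-cst}, not $a^{-1}$.
\end{itemize}

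The paper's one-line proof does not justify these constants either, and in fact the displayed inequality is false. Take $d=1$ and identical agents with $\mathbf{A}^c(z)\equiv a$, $\mathbf{b}^c(z)\equiv 0$. All noise and heterogeneity constants then vanish, and $\theta_t-\theta_\star=\prod_{s=1}^t(1-\eta_s a)^{H_s}\theta_0$. Choose $a=1$, $\eta=10^{-3}$, $H=100$, $\gamma_\eta=0.6$, $\gamma_H=0.1$ and $t=1$; all hypotheses hold, in particular $\eta H=0.1<a^{-1}$. Then $\eta_1\approx 6.6\cdot 10^{-4}$ and $H_1=108$. The left-hand side is about $e^{-0.071}|\theta_0|\approx 0.93\,|\theta_0|$, while the right-hand side is $e^{-0.2\sqrt{2}}|\theta_0|\approx 0.75\,|\theta_0|$.

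So what is provable, and what your proposal does prove, is the $\lesssim$ form of \Cref{thm:moment_bounds_main}. Its constants depend on $a\eta H$, $\gamma$, $\gamma_\eta$, and on $H$ through the ceiling when $H<1$. You should state your conclusion in that form rather than assert the displayed constants.
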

\begin{proof}
    We apply \Cref{lemma: legendary-sum} to the result of \Cref{theorem: mse-estimation-for-general-eta-main}, thus obtain the bound.
\end{proof}

\subsection{Proofs for high-order moment bounds $\PE^{1/p}[\norm{\theta_t - \thetalim}^p]$.}
\label{sec:moment-bound-appendix}

To prove \Cref{theorem: last-iterate-moment-bound-estimation}, 
we establish moment bounds for each component in the decomposition
$\tilde{\theta}_t^{(\mathrm{tr})}$,
$\tilde{\theta}_t^{(\mathrm{fl})}$,
$\tilde{\theta}_t^{(\mathrm{fl,bi})}$,
and
$\tilde{\theta}_t^{(\mathrm{bi,bi})}$.

\paragraph{Bound on the transient term.}
We begin with a bound on the $p$-th moment of the transient component.

\begin{lemma}
\label{lemma: high-order-moment-bound-theta-tr}
    Assume \Cref{ass: linear-decay}(p), \Cref{assum:noise-level-flsa}, and let $\step_s \le \step_{\infty, p}$. Then, it holds
    \begin{align*}
        \PE^{1/p}[\norm{ \tilde{\theta}_t^{\sf{(tr)}} }^p] \leq \exp{\Big( -a \sum \limits_{s = 1}^{t} \step_s \nlupdates_s \Big)} \, \norm{\theta_0 - \thetalim} \eqsp.
    \end{align*}
\end{lemma}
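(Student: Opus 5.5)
The plan mirrors the proof of \Cref{lemma: theta-tr-estimation}, replacing the $L^2$ norm by the $L^p$ norm and converting the product of contraction factors into an exponential at the end. Write $u_0 = \theta_0 - \thetalim$ and $v_{t-1} = \prod_{s=1}^{t-1}\Gammaavg_s u_0$. Then $\tilde{\theta}_t^{(\mathrm{tr})} = \Gammaavg_t v_{t-1} = \nagent^{-1}\sum_{c=1}^\nagent \Gamma^c_{t,1:\nlupdates_t} v_{t-1}$. Minkowski's inequality in $L^p$ gives
\[
\PE^{1/p}\bigl[\norm{\tilde{\theta}_t^{(\mathrm{tr})}}^p\bigr] \le \frac{1}{\nagent}\sum_{c=1}^\nagent \PE^{1/p}\bigl[\norm{\Gamma^c_{t,1:\nlupdates_t} v_{t-1}}^p\bigr].
\]
It therefore suffices to show that each agent's block contracts the $L^p$ norm by the factor $(1-\step_t a)^{\nlupdates_t}$.

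For a fixed agent $c$, peel off one factor $(\Id - \step_t \zfuncA[c]{Z^c_{t,h}})$ at a time, starting from the last one ($h = \nlupdates_t$). The variable $Z^c_{t,h}$ is independent of the vector it multiplies, namely $\Gamma^c_{t,1:h-1}v_{t-1}$, which depends only on $Z^c_{t,1},\dots,Z^c_{t,h-1}$ and on the data of rounds $s<t$. We condition on this $\sigma$-field and apply \Cref{ass: linear-decay}(p) with the deterministic (conditionally frozen) vector $u$; this is valid because $\step_t \le \step_{\infty,p}$. This yields
\[
\PE\bigl[\norm{(\Id-\step_t\zfuncA[c]{Z^c_{t,h}})u}^p \mid \cdot\bigr] \le (1-\step_t a)^p\norm{u}^p.
\]
Taking the full expectation and the $1/p$-th root, then iterating over $h$, gives $\PE^{1/p}[\norm{\Gamma^c_{t,1:\nlupdates_t}v_{t-1}}^p] \le (1-\step_t a)^{\nlupdates_t}\PE^{1/p}[\norm{v_{t-1}}^p]$. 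The bound no longer depends on $c$, so averaging over agents is harmless.

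Inducting over rounds $s=t,t-1,\dots,1$ then yields $\prod_{s=1}^t(1-\step_s a)^{\nlupdates_s}\norm{u_0}$. The inequality $1-x\le e^{-x}$ converts this product into $\exp(-a\sum_{s=1}^t\step_s\nlupdates_s)$, which is the claimed bound.

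The only delicate point is justifying the conditional use of \Cref{ass: linear-decay}(p). The assumption is stated for fixed $u$, so we need independence of $Z^c_{t,h}$ from the past, which is provided by \Cref{assum:noise-level-flsa}. We also need the ordering of the product so that the newest factor acts last; this matches the definition $\Gamma^c_{t,l:r}$ used in \eqref{eq:decomp_last_iter_agent_t}. No other obstacle is expected.
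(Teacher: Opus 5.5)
Your proof is correct and follows the paper's argument: Minkowski over agents, a conditional application of \Cref{ass: linear-decay}(p) to one factor at a time, induction over rounds, and finally $1-x\le e^{-x}$. One ingredient you attribute to \Cref{assum:noise-level-flsa} is not literally there: independence of $Z^c_{t,h}$ from the other agents' past data. \Cref{assum:noise-level-flsa} only states independence within each agent, and the paper likewise assumes cross-agent independence tacitly.
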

\begin{proof}
    We start with bounding the transient term $\PE^{1/p}[\norm{\tilde{\theta}_t^{\sf{(tr)}} }^p]$. It holds that
    \begin{align*}
    \PE^{1/p}\Big[\norm{\tilde{\theta}_t^{\sf{(tr)}} }^p\Big] 
    = \PE^{1/p}\Big[\bnorm{ \prod_{s = 1}^t \Gammaavg_s \, \{ \theta_0 - \thetalim \} }^p\Big] 
    = \PE^{1/p}\Big[\bnorm{ \Big( \frac{1}{\nagent} \sum \limits_{c = 1}^{\nagent} \Gamma_t^c \Big) \prod_{s = 1}^{t - 1} \Gammaavg_s \{ \theta_0 - \thetalim \} }^p\Big]
    \eqsp.
\end{align*}
Applying Minkowski's inequality, we have
\begin{align*}
    \PE^{1/p}[\norm{\tilde{\theta}_t^{\sf{(tr)}} }^p]
    \leq
    \frac{1}{\nagent} 
    \sum \limits_{c = 1}^{\nagent} \PE^{1/p}\Big[\bnorm{ \Gamma_t^c \cdot \prod_{s = 1}^{t - 1} \Gammaavg_s \{ \theta_0 - \thetalim \} }^p\Big] \eqsp. 
\end{align*}
By using \Cref{ass: linear-decay} conditionally, we get
\begin{align*}
    \PE^{1/p}[\norm{\tilde{\theta}_t^{\sf{(tr)}} }^p] 
    & \leq 
    (1 - \step_t a)^{\nlupdates_s} \PE^{1/p}\Big[\bnorm{ \prod_{s = 1}^{t - 1} \Gammaavg_s \{ \theta_0 - \thetalim }^p\Big]
   % \\
    %&
    \leq 
    \prod_{s = 1}^{t} (1 - \step_s a)^{\nlupdates_s} \norm{\theta_0 - \thetalim} \eqsp,
\end{align*}
which gives the following bound on the transient term
\begin{align*}
\PE^{1/p}[\norm{\tilde{\theta}_t^{\sf{(tr)}} }^p] 
\leq \exp{\Big( -a \sum \limits_{s = 1}^{t} \step_s \nlupdates_s \Big)} \, \norm{\theta_0 - \thetalim}
\eqsp.
\end{align*}
\end{proof}

\paragraph{Bound on the fluctuation term.}

\begin{lemma}
\label{lemma: higher-order-moment-bounds-theta-fl}
    Assume \Cref{ass: linear-decay}(p), \Cref{assum:noise-level-flsa}, and let $\step_s \le \step_{\infty, p}$. Then, it holds
    \begin{align*}
        \PE^{1/p}[\norm{ \tilde{\theta}_t^{\sf{(fl)}} }^p] 
        &\leq p \left( \sum \limits_{s = 1}^{t} \exp{\Big(-2a \sum \limits_{i = s + 1}^{t} \step_i \nlupdates_i \Big) } \,
        \nagent^{-1}  p^2 \supconsteps^2 (\step_s^2 \nlupdates_s \wedge \frac{\step_s}{a}) \right)^{1/2}
    \eqsp.
    \end{align*}    
\end{lemma}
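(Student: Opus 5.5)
We will mirror the proof of \Cref{lemma: theta-fl-estimation}, replacing the orthogonality argument by a Burkholder--Rosenthal type inequality for martingale differences. The reason is that $\tilde{\theta}_t^{\sf{(fl)}} = \sum_{s=1}^t X_s$ with $X_s = \prod_{i=s+1}^t \Gammaavg_i \,\varphiavg_s$ is a reverse martingale. Indeed, $\PE[X_s \mid \mathcal{F}_{s+1,1}^+] = 0$, because $\varphiavg_s$ has zero conditional mean given the future noise. So we reorder the summation backward in time, from $s=t$ down to $s=1$. Under the filtration $(\mathcal{F}^+_{s,1})$, this turns the sum into a forward martingale.

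\textbf{Step 1.} We go one level finer and write $\tilde{\theta}_t^{\sf{(fl)}}$ as a sum over all triples $(s,h,c)$:
\[
\xi_{s,h,c} = -\nagent^{-1}\step_s \prod_{i=s+1}^t \Gammaavg_i \, \Gamma^c_{s,h+1:\nlupdates_s}\funcnoise[c]{Z^c_{s,h}} .
\]
Each $\xi_{s,h,c}$ has zero mean conditionally on $\mathcal{F}^+_{s,h+1}$ and on the $Z^{c'}_{s,\cdot}$ with $c'\neq c$. Ordering the triples backward in $(s,h)$ and arbitrarily in $c$ therefore gives a martingale difference sequence.

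We then apply the Burkholder inequality in its Pinelis form, $\PE^{1/p}\|\sum \xi\|^p \le C p\, \PE^{1/p}\big[(\sum \|\xi\|^2)^{p/2}\big]$. By Minkowski in $L^{p/2}$, the right-hand side is at most $C p\,(\sum_{s,h,c} \PE^{2/p}\|\xi_{s,h,c}\|^p)^{1/2}$. This is the source of the outer factor $p$.

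\textbf{Step 2.} We bound each increment. By \Cref{assum:noise-level-flsa}, $\|\funcnoise[c]{Z^c_{s,h}}\|\le \supconsteps$. We apply \Cref{ass: linear-decay}($p$) conditionally, exactly as in \Cref{lemma: high-order-moment-bound-theta-tr}, peeling off one factor $\Gamma^{c'}_i$ at a time through Minkowski over agents. Each block $\Gammaavg_i$ then contributes $(1-\step_i a)^{\nlupdates_i}$, and the partial product $\Gamma^c_{s,h+1:\nlupdates_s}$ contributes $(1-\step_s a)^{\nlupdates_s-h}$. This gives
\[
\PE^{1/p}\|\xi_{s,h,c}\|^p \le \nagent^{-1}\step_s(1-\step_s a)^{\nlupdates_s-h}\exp\Big(-a\sum_{i=s+1}^t\step_i\nlupdates_i\Big)\supconsteps .
\]
Squaring and summing over $c$ produces $\nagent^{-1}\supconsteps^2\step_s^2$. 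Summing over $h$ uses $\sum_{h}(1-\step_s a)^{2(\nlupdates_s-h)}\le \nlupdates_s\wedge (\step_s a)^{-1}$, as in \Cref{lemma: theta-fl-estimation}, which yields the factor $(\step_s^2\nlupdates_s\wedge \step_s/a)$.

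\textbf{Main obstacle.} The delicate part is tracking the $p$-dependence. The extra $p^2$ inside the square root (that is, $p^2\supconsteps^2$ in place of $\supconsteps^2$) leaves room for a cruder tool. If the conditional contraction cannot be transferred cleanly into the Burkholder quadratic variation, we would instead use Rosenthal/Pinelis with a maximal term bounded by the almost-sure bound $\supconsteps$. Alternatively, we would apply the vector martingale inequality blockwise: first Burkholder over rounds $s$, then a second Burkholder for $\varphiavg_s$ over $(h,c)$. Each application costs a factor $\sqrt{p}$ or $p$, and together they give the stated $p\cdot p$ scaling. The constant $C$ from the Burkholder inequality is absorbed, since $p\ge2$ and the statement already carries generous powers of $p$.
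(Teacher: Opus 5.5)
Your proof is correct, but your main route is not the paper's. The paper's proof is exactly your fallback ``blockwise'' plan.

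\textbf{What the paper does.} It applies Burkholder's inequality to the round-level reverse martingale $X_s=\prod_{i=s+1}^t\Gammaavg_i\,\varphiavg_s$ and strips the contraction with \Cref{ass: linear-decay}($p$). It then applies Burkholder a second time to $\varphiavg_s$, viewed as a martingale over $(h,c)$. This gives the standalone one-round estimate $\PE^{1/p}[\norm{\varphiavg_s}^p]\le p\,\nagent^{-1/2}\step_s\supconsteps(\nlupdates_s\wedge(\step_s a)^{-1})^{1/2}$. Each of the two applications costs a factor $p$, which is where the $p\cdot p$ in the statement comes from.

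\textbf{What you do instead.} Your Step 1 observes that the fine increments $\xi_{s,h,c}$, taken in backward order, already form a single martingale difference sequence. Each $\xi_{s,h,c}$ depends on round $s$ only through $Z^c_{s,h},\dots,Z^c_{s,\nlupdates_s}$, and $Z^c_{s,h}$ is independent of everything revealed before it. So you need one Burkholder step, with the same constant $p$ the paper uses, followed by Minkowski in $L^{p/2}$ (valid since $p\ge 2$) and your Step 2 increment bound. This gives
\[
\PE^{1/p}\bigl[\norm{\tilde{\theta}_t^{\sf{(fl)}}}^p\bigr]\le p\Bigl(\sum_{s=1}^t \exp\Bigl(-2a\sum_{i=s+1}^t\step_i\nlupdates_i\Bigr)\nagent^{-1}\supconsteps^2\Bigl(\step_s^2\nlupdates_s\wedge\frac{\step_s}{a}\Bigr)\Bigr)^{1/2},
\]
which is smaller than the claimed right-hand side by a factor $p$ and therefore implies it.

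\textbf{Comparison.} Your route is sharper and uses only one martingale inequality. The paper's nested route is more modular, since the one-round bound on $\varphiavg_s$ can be reused elsewhere, but it pays an extra factor $p$.

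\textbf{Two small remarks.} The obstacle you anticipate does not arise: after Minkowski, only unconditional $L^p$ norms of the increments enter, and your Step 2 supplies exactly these. Also, the Burkholder constant needs no ``absorption''. With the explicit constant $p$ (or $p-1$), your bound already sits a factor $p$ below the statement, so the vague appeal to ``generous powers of $p$'' is unnecessary.
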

\begin{proof}
    We now bound the fluctuation term $\PE^{1/p}[\norm{ \tilde{\theta}_t^{\text{(fl)}} }^p]$.
By definition of $\tilde{\theta}_t^{\text{(fl)}}$, we have
\begin{align*}
    \PE^{1/p}[\norm{ \tilde{\theta}_t^{\text{(fl)}} }^p]
    = \PE^{1/p}\Big[\bnorm{ \sum \limits_{s = 1}^t \Big \{ \prod_{i = s + 1}^t \Gammaavg_i \cdot \varphiavg_s \Big \} }^p\Big]
    \eqsp.
\end{align*}
Notice that the $\varphiavg_t$ are of zero expectation, and independent of each other. Thus, $\{ \prod_{i = s + 1}^t \Gammaavg_i \cdot \varphiavg_s \}$ forms a martingale difference w.r.t the filtration $\mathcal{F}_{i + 1, 1}^{+}$.
Using Burkholder's inequality (\citealp{osekowski:2012}, Theorem 8.6), we then obtain the following bound
\begin{align*}
    \PE^{1/p}[\norm{ \tilde{\theta}_t^{\text{(fl)}} }^p]
    \leq p \left( \PE^{2/p} \Big[ \Big( \sum \limits_{s = 1}^t \bnorm{ \prod_{i = s + 1}^{t} \Gammaavg_i \cdot \varphiavg_s }^2 \Big)^{p/2} \Big] \right)^{1/2}
\end{align*}
Using Minkowski's inequality along with \Cref{ass: linear-decay}, we have
\begin{align*}
    \PE^{1/p}[\norm{ \tilde{\theta}_t^{\text{(fl)}} }^p] 
    &
    \leq
    p \left( \sum \limits_{s = 1}^{t} \PE^{2/p}\Big[\bnorm{ \prod_{i = s + 1}^t \Gammaavg_i \cdot \varphiavg_s }^{p}\Big] \right)^{1/2} 
    \leq 
    p \left( \sum \limits_{s = 1}^{t} \exp{\Big( -2a \sum \limits_{i = s + 1}^{t} \step_i \nlupdates_i \Big) \, \PE^{2/p}[\norm{ \varphiavg_s }^p]} \right)^{1/2} 
    \eqsp.
%    \\
    % &\leq p \sum \limits_{s = 1}^{t} \exp{\Big( -a \sum \limits_{i = s + 1}^{t} \step_i \nlupdates_i \Big)} \PE^{1/p}[\norm{ \varphiavg_s }^p] \eqsp .
\end{align*}
Now we have to estimate $\PE^{1/p}[\norm{ \varphiavg_s }^p]$. 
Recall that $\varphiavg_s = \frac{1}{\nagent} \sum_{c = 1}^{\nagent} \sum_{h = 1}^{\nlupdates_s} \step_{s} \Gamma_{s, h + 1:\nlupdates_s}^c \varepsilon^c (Z^c_{s, h})$.
Therefore, applying Burkholder's inequality and Minkowski's inequality again gives
\begin{align*}
    \PE^{1/p}[\norm{ \varphiavg_s }^p] 
   % &=
%    \PE^{1/p}[\norm{ \frac{1}{\nagent} \sum \limits_{c = 1}^{\nagent} \sum \limits_{h = 1}^{\nlupdates_s} \step_{s} \Gamma_{s, h + 1:\nlupdates_s}^c \varepsilon^c (Z^c_{s, h}) }^p] 
    & \leq 
    \frac{p}{\nagent}  \left( \PE^{2/p} \Big[ \Big( \sum \limits_{c = 1}^{\nagent} \sum \limits_{h = 1}^{\nlupdates_s} \step_{s}^2 \norm{ \Gamma_{s, h + 1:\nlupdates_s}^c \funcnoise[c]{Z_{s, h}^c} }^2 \Big)^{p/2} \Big] \right)^{1/2} 
    \\
    &\leq \frac{p}{\nagent} \left( \sum \limits_{c = 1}^{\nagent} \sum \limits_{h = 1}^{\nlupdates_s} \step_{s}^2 \PE^{2/p}[\norm{ \Gamma_{s, h + 1:\nlupdates_s} \funcnoise[c]{Z_{s, h}^c} }^p]  \right)^{1/2} \eqsp.
\end{align*}
Using \Cref{ass: linear-decay} and the definition of $\supconsteps$
\begin{align*}
    \PE^{1/p}[\norm{ \varphiavg_s }^p] \leq \frac{p \step_s}{\nagent} \left( \sum \limits_{c = 1}^{\nagent} \sum \limits_{h = 1}^{\nlupdates_s} (1 - \step_s)^{2 (\nlupdates_s - h)} \supconsteps^2 \right)^{1/2} \eqsp \leq p \nagent^{-1/2}\step_s \supconsteps  \sqrt{(\nlupdates_s \wedge \frac{1}{\step_s a})} %= p \nagent^{-1/2} \supconsteps \sqrt{(\step_s^2 \nlupdates_s \wedge \frac{\step_s}{a})}
    \eqsp,
\end{align*}
where $\sum_{h = 1}^{\nlupdates_s} (1 - \step_s)^{2(\nlupdates_s - h)}$ is bounded by $\nlupdates_s \wedge \frac{1}{\step_s a}$ exactly as in \Cref{lemma: theta-fl-estimation}. 
Finally, we bound
\begin{align*}
    \PE^{1/p}[\norm{ \tilde{\theta}_t^{\text{(fl)}} }^p] 
    &\leq p \left( \sum \limits_{s = 1}^{t} \exp{\Big(-2a \sum \limits_{i = s + 1}^{t} \step_i \nlupdates_i \Big) } \,
    \nagent^{-1}  p^2 \supconsteps^2 (\step_s^2 \nlupdates_s \wedge \frac{\step_s}{a}) \right)^{1/2}
    \eqsp.
    % \\
    % &\leq \sqrt{ \frac{p^4 \supconsteps^2}{\nagent} \sum \limits_{s = 1}^{t} \exp{\left( -2 a S_{s + 1:t} \right) (\step_s^2 \nlupdates_s \wedge \frac{\step_s}{a})} }
\end{align*}
\end{proof}

\paragraph{Bound on the bias.}
The bias term, $\PE^{1/p}[\norm{\tilde{\theta}_t^{\text{(bi, bi)}}}^p]$, is deterministic, it's $p$-th moment coincides with $\norm{\tilde{\theta}_t^{\text{(bi, bi)}}}$, and it follows from \Cref{lemma: theta-bi-bi-estimation}, which states
\begin{align*}
    \PE^{1/p}[\norm{\tilde{\theta}_t^{\text{(bi, bi)}}}^p] = \norm{ \tilde{\theta}_t^{\text{(bi, bi)}} } \leq \zeta_1 \zeta_2\sum_{s=1}^t \step_s^2 \nlupdates_s^2 \exp{\Big( -a\sum_{i=s+1}^t \step_i \nlupdates_i \Big)}
    \eqsp.
\end{align*}

\paragraph{Bound on the fluctuations of the bias.}

\begin{lemma}
\label{lemma: higher-order-moment-bound-for-theta-fl-bi}
    Assume \Cref{ass: linear-decay}(p), \Cref{assum:noise-level-flsa}, and let $\step_s \le \step_{\infty, p}$. Then, it holds
    \begin{align*}
        \PE^{1/p}[\norm{\tilde{\theta}_t^{\sf{(fl, bi)}}}^p] &\leq 2 \bConst{A} \zeta_3 p^2 \nagent^{-1/2} \left( \sum \limits_{s = 1}^{t} \step_s^2 \nlupdates_s \exp{\Big( -2a \sum \limits_{i = s}^{t} \step_i \nlupdates_i \Big)} \right)^{1/2} \\
        &+ \frac{\bConst{A} p^3 \zeta_1 \zeta_2}{\sqrt{\nagent a \rme}} \sqrt{ \sum \limits_{s = 1}^{t} \step_s^5 \nlupdates_s^4 \exp{\Big( -a \sum \limits_{i = s + 1}^{t} \step_i \nlupdates_i \Big)}}
        \eqsp.
    \end{align*}
\end{lemma}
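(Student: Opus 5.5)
We mirror the second-moment proof of \Cref{lemma: lemma-theta-fl-bi}, replacing orthogonality of martingale increments by Burkholder's inequality (\citealp{osekowski:2012}, Theorem 8.6) and $L^2$ bounds by $L^p$ bounds obtained from \Cref{ass: linear-decay}(p) applied conditionally. Split $\tilde{\theta}_t^{\sf{(fl, bi)}} = \term{T_1} + \term{T_2}$ as in \eqref{eq:decomp-flbi-T1-T2} and bound each term by Minkowski's inequality.

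\textbf{Bound on $\term{T_1}$.} The summands $\prod_{i=s+1}^t \Gammaavg_i \tauavg_s$ are martingale increments for the reverse filtration $\mathcal{F}^+_{s,1}$, since $\PE[\tauavg_s \mid \mathcal{F}^+_{s+1,1}] = 0$. Burkholder and Minkowski give
\[
\PE^{1/p}[\norm{\term{T_1}}^p] \le p \Big( \sum_{s=1}^t \exp\big(-2a \textstyle\sum_{i=s+1}^t \step_i \nlupdates_i\big) \PE^{2/p}[\norm{\tauavg_s}^p] \Big)^{1/2}.
\]
Write $\tauavg_s$ via \Cref{lemma: difference-of-product} as $-\step_s \nagent^{-1}\sum_c\sum_h (\Id-\step_s\bA[c])^{h-1}\zmfuncA[c]{Z^c_{s,h}}\Gamma^c_{s,h+1:\nlupdates_s}(\thetalim[c]-\thetalim)$. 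This is again a reverse martingale sum over the pairs $(c,h)$. A second application of Burkholder, together with $\norm{\zmfuncA[c]{z}} \le \bConst{A}$ and the contraction $(1-\step_s a)^{\nlupdates_s-1}$ from \Cref{ass: linear-decay}(p), gives
\[
\PE^{1/p}[\norm{\tauavg_s}^p] \le 2 p \bConst{A}\step_s \nlupdates_s^{1/2}\nagent^{-1}\Big(\textstyle\sum_c \norm{\thetalim[c]-\thetalim}^2\Big)^{1/2} e^{-a\step_s\nlupdates_s}.
\]
The factor $2$ comes from $1-\step_s a\ge 1/2$, which absorbs the missing power. Bounding $(\nagent^{-1}\sum_c \norm{\thetalim[c]-\thetalim}^2)^{1/2}$ by $\zeta_3$ yields the first term with factor $p^2\nagent^{-1/2}$, and the exponent becomes $\sum_{i=s}^t$.

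\textbf{Bound on $\term{T_2}$.} Expand $\Delta_{s,t}\rhoavg_s$ telescopically, as in the proof of \Cref{lemma: lemma-theta-fl-bi}, into $\sum_{i=s+1}^t \prod_{r>i}\Gammaavg_r(\Gammaavg_i - \Gavg_i)\prod_{s<r<i}\Gavg_r\rhoavg_s$. For fixed $s$, this is a reverse martingale in $i$. Moreover, $(\Gammaavg_i-\Gavg_i)u$ is itself an average over independent clients of martingale sums, so Burkholder yields
\[
\PE^{1/p}[\norm{(\Gammaavg_i-\Gavg_i)u}^p]\lesssim p\,\bConst{A}\step_i\nlupdates_i^{1/2}\nagent^{-1/2}\norm{u}.
\]
Combining this with a Burkholder step over $i$ and the decay inequality $x e^{-2cx}\le (ec)^{-1}e^{-cx}$ gives
\[
\PE^{2/p}[\norm{\Delta_{s,t}\rhoavg_s}^p]\lesssim p^4\bConst{A}^2\step_s\norm{\rhoavg_s}^2 e^{-a\sum_{r>s}\step_r\nlupdates_r}/(\nagent a e).
\]
Summing over $s$ requires a third martingale structure, and this is where the extra factor of $p$ appears: rather than a crude triangle inequality in $s$, we treat $\term{T_2}$ through the reverse filtration in $i$ after exchanging the order of summation. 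Plugging in \Cref{lemma: rho-t-estimation}, which gives $\norm{\rhoavg_s}\le \step_s^2\nlupdates_s^2\zeta_1\zeta_2/2$, yields the stated $\step_s^5\nlupdates_s^4$ sum with factor $p^3$.

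The main obstacle is $\term{T_2}$, because $\Delta_{s,t}$ is a non-linear product fluctuation and the martingale structures over $s$ and $i$ are nested. I would first exchange the sums so that $\term{T_2}=\sum_i \prod_{r>i}\Gammaavg_r(\Gammaavg_i-\Gavg_i)v_i$ with deterministic $v_i=\sum_{s<i}\prod_{s<r<i}\Gavg_r\rhoavg_s$. This turns it into a single reverse martingale and keeps the $p$-dependence polynomial. If the exponents do not match exactly, I would accept the looser $p^3$ factor, as in the statement.
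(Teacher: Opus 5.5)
Your bound on $\term{T_1}$ is correct and is essentially the paper's argument. Treating $\tauavg_s$ as one reverse martingale over the pairs $(c,h)$ is, if anything, a cleaner way to land on exactly $p^2$.

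\textbf{The gap is in $\term{T_2}$.} It sits at the step where you pass from the per-$s$ bound on $\PE^{2/p}[\norm{\Delta_{s,t}\rhoavg_s}^p]$ to the stated second term via a ``third martingale structure'' in $s$. That structure does not exist. This is also precisely how the paper's proof produces the $\step_s^5\nlupdates_s^4$ term: it writes $\PE^{1/p}[\norm{\term{T_2}}^p]\le p\bigl(\sum_s\PE^{2/p}[\norm{\Delta_{s,t}\rhoavg_s}^p]\bigr)^{1/2}$ without justification. For $s<s'$ one has
\[
\Delta_{s,t}=\Delta_{s',t}\prod_{r=s+1}^{s'}\Gavg_r+\Bigl(\prod_{r=s'+1}^{t}\Gammaavg_r\Bigr)\Bigl(\prod_{r=s+1}^{s'}\Gammaavg_r-\prod_{r=s+1}^{s'}\Gavg_r\Bigr),
\]
hence $\PE\langle\Delta_{s,t}\rhoavg_s,\Delta_{s',t}\rhoavg_{s'}\rangle=\PE\langle\Delta_{s',t}\prod_{r=s+1}^{s'}\Gavg_r\,\rhoavg_s,\Delta_{s',t}\rhoavg_{s'}\rangle$, which is nonzero in general. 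The summands share the fluctuations of $\Gammaavg_i$ for $i>s'$ and add coherently over roughly $(a\step\nlupdates)^{-1}$ rounds.

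\textbf{Why the exchange of sums does not rescue it.} Your fallback gives the correct martingale representation $\term{T_2}=\sum_{i}\bigl(\prod_{r>i}\Gammaavg_r\bigr)(\Gammaavg_i-\Gavg_i)v_i$. Here $v_i=\sum_{s<i}\prod_{r=s+1}^{i-1}\Gavg_r\,\rhoavg_s=\tilde{\theta}_{i-1}^{\sf{(bi, bi)}}$ is deterministic. With your $(c,h)$-Burkholder bound on $(\Gammaavg_i-\Gavg_i)u$ it yields
\[
\PE^{1/p}[\norm{\term{T_2}}^p]\le\frac{2p^2\bConst{A}}{\sqrt{\nagent}}\Bigl(\sum_{i}\step_i^2\nlupdates_i\,e^{-2a\sum_{r=i}^{t}\step_r\nlupdates_r}\norm{v_i}^2\Bigr)^{1/2}.
\]
What enters now is the accumulated bias $\norm{v_i}$. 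By \Cref{lemma: theta-bi-bi-estimation} it is of order $\zeta_1\zeta_2\step_i\nlupdates_i/a$, not $\zeta_1\zeta_2\step_i^2\nlupdates_i^2$. Carrying this through with Cauchy--Schwarz in $s$, decreasing steps and $a\step_s\nlupdates_s\le 1$ gives, up to an absolute constant, $\frac{p^2\bConst{A}\zeta_1\zeta_2}{a\sqrt{\nagent}}\bigl(\sum_s\step_s^4\nlupdates_s^3e^{-a\sum_{i>s}\step_i\nlupdates_i}\bigr)^{1/2}$. This exceeds the stated second term by a factor of order $(a\step_t\nlupdates_t)^{-1/2}/p$.

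So the mismatch is in the powers of $\step\nlupdates$, not of $p$, and accepting a looser $p^3$ does not close it. Your claim that plugging in \Cref{lemma: rho-t-estimation} ``yields the stated $\step_s^5\nlupdates_s^4$ sum'' is the step that fails.

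\textbf{No other argument works either.} For constant $\step,\nlupdates$ this order is attained. Take $d=1$, two agents with $\bA[c]=c$, nonzero heterogeneity, $\nlupdates=2$, Rademacher perturbations of $\zfuncA[c]{\cdot}$, and $\step\to0$. Then $\PE^{1/2}[\norm{\term{T_2}}^2]\gtrsim\step^{3/2}$, whereas the stated second term is $O(\step^2)$. A correct statement has to carry the exchange-of-sums term in place of the second term. Alternatively, one would have to show it is dominated by the first term, which requires additional conditions.
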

\begin{proof}
Recall the definition of $\tilde{\theta}_t^{\text{(fl, bi)}}$ from \eqref{eq:def-theta-flbi}:
\begin{align*}
    \tilde{\theta}_t^{\text{(fl, bi)}} 
    =
    \underbrace{\sum \limits_{s = 1}^t \prod_{i = s + 1}^t \Gammaavg_i \tauavg_{s}}_{\term{T_1}} 
    + 
    \underbrace{\sum \limits_{s = 1}^t \left \{ \prod_{i = s + 1}^t \Gammaavg_i - \mathbb{E} \Big[\prod_{i = s + 1}^t \Gamma_i^{\textrm{avg}} \Big]  \right \} \rhoavg_{s} }_{\term{T_2}}
    \eqsp.
\end{align*}

\textit{Bound on $\term{T_1}$.}
We start by deriving a $p-$th moment of $\tauavg_{s} = \frac{1}{\nagent} \sum_{c = 1}^N (G_s^c - \Gamma_{s}^c) (\thetalim[c] - \thetalim)$. 
Since the clients have independent noise, we can apply Burkholder's inequality to bound
\begin{align*}
    \PE^{1/p}[\norm{ \tauavg_s }^p] 
    \leq \frac{1}{\nagent} \Big( \sum \limits_{c = 1}^{\nagent} \underbrace{\PE^{2/p}[\norm{(G_s^c - \Gamma_s^c) (\thetalim[c] - \thetalim)}^p]}_{ \term{U_{c, p}^2} } \Big)^{1/2} \eqsp.
\end{align*}
Now we use \Cref{lemma: difference-of-product} to estimate $\term{U_{c, p}}$:
\begin{align*}
    \term{U_{c, p}}
    & = 
    \PE^{1/p} \Big[\bnorm{ \Big( \prod_{h = 1}^{\nlupdates_s} (\Id - \step_s \bA[c]) - \prod_{h = 1}^{\nlupdates_s} (\Id - \step_s \zfuncA[c]{Z_{s,h}^c}) \Big) (\thetalim[c] - \thetalim) }^p \Big] 
    \\
    & =
    \step_s \PE^{1/p}\Big[ \bnorm{  \sum \limits_{h = 1}^{\nlupdates_s} (\Id - \step_s \bA[c])^{h - 1} (\bA[c] - \zfuncA[c]{Z_{s,h}^c}) \Gamma_{s, h + 1:H_s}^c (\thetalim[c] - \thetalim)  }^p \Big]
    \eqsp.
\end{align*}
As $\PE[(\Id - \step_s \bA[c])^{h - 1} (\zfuncA[c]{Z_{s, h}^c} - \bA[c]) \Gamma_{s, h + 1:H_s}^c | \mathcal{F}_{s, h + 1}^+] = 0$, we can apply Burkholder's inequality here
\begin{align*}
    \term{U_{c, p}} &\leq p \step_s \left( \sum \limits_{h = 1}^{\nlupdates_s} \PE^{2/p} \Big[ \bnorm{ (\Id - \step_s \bA[c])^{h - 1} (\bA[c] - \zfuncA[c]{Z_{s,h}^c}) \Gamma_{s, h + 1:H_s}^c (\thetalim[c] - \thetalim) }^p \Big] \right)^{1/2} \\
    &\leq p \step_s \left( \sum \limits_{h = 1}^{\nlupdates_s} (1 - \step_s a)^{2(h - 1)}  \bConst{A}^2 (1 - \step_s a)^{2 (\nlupdates_s - h)} \norm{\thetalim[c] - \thetalim}^2 \right)^{1/2} \\
    &= p \step_s \bConst{A} \left( \sum \limits_{h = 1}^{\nlupdates_s} (1 - \step_s a)^{2 (\nlupdates_s - 1)} \right)^{1/2} \norm{\thetalim[c] - \thetalim} = p \step_s \bConst{A} (1 - \step_s a)^{H_s - 1} \nlupdates_s^{1/2} \norm{\thetalim[c] - \thetalim}
\end{align*}
Therefore, 
\begin{align*}
    \PE^{1/p}[\norm{\tauavg_s}^p] &\leq \bConst{A} p \nagent^{-1/2} \step_s \nlupdates_s^{1/2} (1 - \step_s a)^{\nlupdates_s - 1} \frac{\sum \limits_{c = 1}^{\nagent} \norm{\thetalim[c] - \thetalim}^2}{\nagent}\\
    &= \bConst{A} \zeta_3 p \nagent^{-1/2} \step_s \nlupdates_s^{1/2} (1 - \step_s a)^{\nlupdates_s - 1} \eqsp.
\end{align*}
Now we are ready to estimate $\term{T_1}$. We use here that $\{ \prod_{i = s + 1}^{t} \Gammaavg_i \tau_s \}_{s = 1}^{t}$ forms a martingale difference, therefore we apply Burkholder's inequality
\begin{align*}
    \PE^{1/p}[\norm{\term{T_1}}^p] &= \PE^{1/p}\Big[\bnorm{ \sum \limits_{s = 1}^{t} \prod_{i = s + 1}^{t} \Gammaavg_i \, \tauavg_s }^p \Big] \leq p \left( \sum \limits_{s = 1}^{t} \PE^{2/p}\Big[ \bnorm{ \prod_{i = s + 1}^{t} \Gammaavg_i \tauavg_s }^p \Big] \right)^{1/2} \\
    &\leq p \left( \sum \limits_{s = 1}^{t} \prod_{i = s + 1}^{t} (1 - \step_i a)^{2 \nlupdates_i} \, \PE^{2/p}[\norm{\tauavg_s}^p] \right)^{1/2} = 2 \bConst{A} \zeta_3 p^2 \nagent^{-1/2} \left( \sum \limits_{s = 1}^{t} \prod_{i = s}^{t} (1 - \step_i a)^{2 \nlupdates_i} \, \step_s^2 \nlupdates_s \right)^{1/2}
\end{align*}
We can rewrite this estimate as
\begin{align*}
    \PE^{1/p}[\norm{\term{T_1}}^p] \leq 2 \bConst{A} \zeta_3 p^2 \nagent^{-1/2} %\sum \limits_{c = 1}^\nagent
    \left( \sum \limits_{s = 1}^{t} \step_s^2 \nlupdates_s \exp{\Big( -2a \sum \limits_{i = s}^{t} \step_i \nlupdates_i \Big)} \right)^{1/2}
\end{align*}

\textit{Bound on \term{$T_2$}.}
Now we estimate $p-$th moment of $\term{T_2} = \sum_{s = 1}^t \Delta_{s,t} \rhoavg_{s}$, where we recall that $\Delta_{s, t} = \{ \prod_{i = s + 1}^t \Gammaavg_i \} - \mathbb{E} [\prod_{i = s + 1}^t \Gammaavg_i ]$. We start from single summand $\Delta_{s, t} \rhoavg_s$
    \begin{align*}
        \Delta_{s, t} \rhoavg_{s} 
        & =
        \Big( \prod_{i = s + 1}^t \Gammaavg_{i} - \prod_{i = s + 1}^t G_i^{\textrm{avg}} \Big) \rhoavg_{s}
        = \Big( \sum \limits_{i = s + 1}^t
        \Big\{ \!\!\prod_{r = i + 1}^{t} \!\! \Gammaavg_r \Big\} \cdot \Big\{ \Gammaavg_i - G^{\textrm{avg}}_i \Big\} \cdot \Big\{ \!\! \prod_{r = s + 1}^{i - 1} \!\! G^{\textrm{avg}}_r \Big\} \Big) \rhoavg_{s}
        \eqsp.
    \end{align*}
    We first bound moment of $(\Gammaavg_i - G^{\textrm{avg}}_i) u$ for an arbitrary vector $u \in \mathbb{R}^d$ and $i \ge 0$.
    By applying Burkholder's inequality
    \begin{align*}
        \PE\big[\| (\Gammaavg_i - G^{\textrm{avg}}_i) u \|^p \big] 
        & =
        \PE\Big[\bnorm{ \frac{1}{N} \sum \limits_{c = 1}^N (\Gamma_{i, 1:H_i}^c - G_{i, 1:H_i}^c) u }^p \Big] 
        \\
        &\leq \Big( \frac{p}{N} \Big)^p \left( \PE^{2/p} \Big[ \Big( \sum \limits_{c = 1}^{\nagent} \norm{(\Gamma_{i, 1:\nlupdates_i}^{c} - G_{i, 1:\nlupdates_i}^c) u}^2 \Big)^{p/2} \Big] \right)^{p/2} \\
        &\leq \Big( \frac{p}{\nagent} \Big)^p \left( \sum \limits_{c = 1}^{\nagent} \PE^{2/p}[\norm{ (\Gamma_{i, 1:\nlupdates_i}^c - G_{i, 1:\nlupdates_i}^c) u }^{p}] \right)^{p / 2}
        \eqsp.
    \end{align*}
    Using \Cref{lemma: difference-of-product}, we get
    \begin{align*}
        \PE\big[\| (\Gammaavg_i - G^{\textrm{avg}}_i) u \|^p \big] &\leq \Big( \frac{p \step_i}{\nagent} \Big)^p \left( \sum \limits_{c = 1}^{\nagent} \PE^{2/p} \Big[ \bnorm{\sum \limits_{h = 1}^{\nlupdates_i} (\Id - \step_i \bA[c])^{h - 1} \zmfuncA[c]{Z_{i, h}^c} \Gamma_{i, h+1:\nlupdates_i}^c u}^{p} \Big]  \right)^{p/2} \\
        &\leq \Big( \frac{p \step_i}{\nagent} \Big)^p \left( \sum \limits_{c = 1}^{\nagent} \sum \limits_{h = 1}^{\nlupdates_i} \PE^{2/p} \Big[\bnorm{ (I - \step_i \bA[c])^{h - 1} \zmfuncA[c]{Z_{i, h}^c} \Gamma_{i, h+1:H_i}^c u }^p \Big] \right)^{p/2} \\
        &\leq \Big( \frac{p \step_i}{\nagent} \Big)^p \left( \nagent \nlupdates_i (1 - \step_i a)^{2(h - 1)} \bConst{A}^2 (1 - \step_i a)^{2(\nlupdates_i - h)} \right)^{p/2}
        \eqsp,
    \end{align*}
    or, in other words,
    \begin{align*}
        \PE^{1/p}[\norm{ (\Gammaavg_i - \Gavg_i) u }^p] \leq \frac{2 \bConst{A} p \step_i \sqrt{\nlupdates_i} (1 - \step_i a)^{\nlupdates_i} \norm{u}}{\sqrt{\nagent}}
        \eqsp.
    \end{align*}
    Now we can bound $\Delta_{s, t} \rhoavg_{s}$
    % new approach
    \begin{align*}
        \PE^{1/p}[\norm{ \Delta_{s, t} \rhoavg_s }^p] &= \PE^{1/p}\Big[ \Big \| \sum \limits_{i = s + 1}^t \{ \prod_{r = i + 1}^t \Gammaavg_r \} \cdot (\Gammaavg_i - G^{\textrm{avg}}_i) \cdot \{ \prod_{r = s + 1}^{i - 1} G^{\textrm{avg}}_r \} \rhoavg_{s} \Big \|^p \Big] \\
        &\leq p \left( \sum \limits_{i = s + 1}^{t} \PE^{2/p} \Big[ \bnorm{ \{ \prod_{r = i + 1}^t \Gammaavg_r \} \cdot (\Gammaavg_i - G^{\textrm{avg}}_i) \cdot \{ \prod_{r = s + 1}^{i - 1} G^{\textrm{avg}}_r \} \rhoavg_{s} }^p \Big] \right)^{1/2} \\
        &\leq p \left( \sum \limits_{i = s + 1}^{t} \prod_{r = s + 1, r \neq i}^{t} (1 - \step_r a)^{2 \nlupdates_r} \frac{4 \bConst{A}^2 p^2 \step_i^2 \nlupdates_i (1 - \step_i a)^{2 \nlupdates_i} \norm{ \rhoavg_s }^2}{\nagent} \right)^{1/2} \\
        &\leq \frac{2 \bConst{A} p^2 \norm{\rhoavg_s}}{\sqrt{\nagent}} 
        \left( \prod_{r = s + 1}^{t} (1 - \step_r a)^{2 \nlupdates_r} \right)^{1/2} \left( \sum \limits_{i = s + 1}^{t} \step_i^2 \nlupdates_i \right)^{1/2}
    \end{align*}
    As $\step_i \geq \step_s$ for all $i > s$, we can rewrite the bound as
    \begin{align*}
        \PE^{1/p}[\norm{ \Delta_{s, t} \rhoavg }^p] \leq \frac{2 \bConst{A} p^2 \norm{\rhoavg_s}}{\sqrt{\nagent}} \step_s^{1/2} \exp{\Big( -a \sum \limits_{i = s + 1}^{t} \step_i \nlupdates_i \Big)} \sqrt{\sum \limits_{i = s + 1}^{t} \step_i \nlupdates_i }
    \end{align*}
    Now, finally, we can bound $\term{T_2}$.
    Remark that $\PE^{1/p}[\norm{\term{T_2}}^p] \leq p \left( \sum_{s = 1}^{t} \PE^{2/p}[\norm{ \Delta_{s, t} \rhoavg_s }^p]  \right)^{1/2}$, and thus
    \begin{align*}
        \PE^{1/p}[\norm{\term{T_2}}^p]
        & \leq p \left( \sum \limits_{s = 1}^{t} \frac{4 \bConst{A}^2 p^4 \norm{\rhoavg_s}^2 }{\nagent} \step_s \exp{\Big( -2a \sum \limits_{i = s + 1}^{t} \step_i \nlupdates_i \Big)} \sum \limits_{i = s + 1}^{t} \step_i \nlupdates_i \right)^{1/2} \\
        &\leq \frac{2 \bConst{A} p^3}{\sqrt{\nagent}} \sqrt{ \sum \limits_{s = 1}^{t} \norm{\rhoavg_s}^2 \step_s \exp{\Big( -2a \sum \limits_{i = s + 1}^{t} \step_i \nlupdates_i \Big)} \sum \limits_{i = s + 1}^{t} \step_i \nlupdates_i } \\
        &\leq \frac{\bConst{A} p^3 \zeta_1 \zeta_2}{\sqrt{\nagent}} \sqrt{ \sum \limits_{s = 1}^{t} \step_s^5 \nlupdates_s^4 \exp{\Big( -2a \sum \limits_{i = s + 1}^{t} \step_i \nlupdates_i \Big)} \sum \limits_{i = s + 1}^{t} \step_i \nlupdates_i }
        \eqsp.
    \end{align*}
    Then we use the fact that $x \exp{(-2cx)} \leq 1 / (ec) \exp{(-cx)}$ for $c > 0$ and $x \geq 0$ and get:
    \begin{align*}
        \PE^{1/p}[\norm{\term{T_2}}^p] \leq \frac{\bConst{A} p^3 \zeta_1 \zeta_2}{\sqrt{\nagent a \rme}} \sqrt{ \sum \limits_{s = 1}^{t} \step_s^5 \nlupdates_s^4 \exp{\Big( -a \sum \limits_{i = s + 1}^{t} \step_i \nlupdates_i \Big)}}
        \eqsp.
    \end{align*}
    Combining $\term{T_1}$ and $\term{T_2}$ finishes the proof.
\end{proof}

\paragraph{Convergence result.} Now, we are ready to state the main result for the moment bounds $\PE^{1/p}[\norm{\theta_t - \thetalim}^p]$.

\subsubsection{Proof of the \Cref{theorem: last-iterate-moment-bound-estimation}}
\begin{proof}
    We start from the decomposition of the last iterate (\ref{eq: decomp_last_iter_t}). Then, by applying Minkowski's inequality, we have
    \begin{align*}
        \PE^{1/p}[\norm{ \theta_t - \thetalim }^p] \leq \PE^{1/p}[\norm{\tilde{\theta}_t^{\text{tr}} }^p] +
        \PE^{1/p}[\norm{ \tilde{\theta}_t^{\text{(bi, bi)}} }^p] + \PE^{1/p}[\norm{ \tilde{\theta}_t^{\text{(fl, bi)}} }^p] + \PE^{1/p}[\norm{ \tilde{\theta}_t^{\text{(fl)}} }^p]
    \end{align*}
    By controlling each of the terms by using \Cref{lemma: theta-bi-bi-estimation}, \Cref{lemma: high-order-moment-bound-theta-tr}, \Cref{lemma: higher-order-moment-bounds-theta-fl} and \Cref{lemma: higher-order-moment-bound-for-theta-fl-bi} we get the desired result.
\end{proof}

Note that the bound in \Cref{theorem: last-iterate-moment-bound-estimation} can be rewritten in a slightly simpler form. 
Indeed, the last two terms in the bound can be controlled by $(1+t)^{-\gamma}$ up to a multiplicative constant. 
This simplified representation will be useful in the Gaussian approximation analysis for the multiplier bootstrap.
\begin{corollary}
\label{corr:last_iter_pth_moment}
    Under the assumptions of \Cref{theorem: last-iterate-moment-bound-estimation} and \Cref{ass: lr-polynomial-decay}, for any $t \geq 1$, we have
    \begin{align}
    \label{eq:last_iter_pth_bound}
        \PE^{1/p}[\norm{\theta_t - \thetalim}^p] \leq \exp{\left( -a \sum \limits_{s = 1}^{t} \step_s \nlupdates_s \right)} \, \norm{\theta_0 - \thetalim} + \Auxconst_{\text{last},1} p^3 \nagent^{-1/2} (1+t)^{-\gamma_\step/2} + \Auxconst_{\text{last},2} (1+t)^{-\gamma} \eqsp,
    \end{align}
    where $\Auxconst_{\text{last},1}$ and $\Auxconst_{\text{last},2}$ are defined in \eqref{eq:auxconst_last_iter_bound}.
\end{corollary}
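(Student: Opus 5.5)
The plan is to start from \Cref{theorem: last-iterate-moment-bound-estimation}, keep the transient term $\exp(-a\sum_{s=1}^t \step_s\nlupdates_s)\norm{\theta_0-\thetalim}$ unchanged, and bound each of the four remaining sums under \Cref{ass: lr-polynomial-decay} with the summation estimate \Cref{lemma: legendary-sum}. That lemma was already used to pass from \Cref{theorem: mse-estimation-for-general-eta-main} to \Cref{thm:moment_bounds_main}. Concretely, I will use it in the form
\[
\sum_{s=1}^t \step_s^{k}\nlupdates_s^{m}\exp\Big(-\kappa a\sum_{i=s+1}^t\step_i\nlupdates_i\Big)\lesssim \frac{\step_t^{k-1}\nlupdates_t^{m-1}}{\kappa a},
\]
which holds whenever the summand weight $\step_s^{k-1}\nlupdates_s^{m-1}$ decays slowly enough compared with the effective step $\step_s\nlupdates_s\asymp (1+s)^{-\gamma}$. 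This is guaranteed because $\gamma<1$.

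The steps are then as follows.
\begin{enumerate}
\item The noise term $\sqrt{p^3\supconsteps^2\nagent^{-2}\sum_s\step_s^2\nlupdates_s e^{-2a S}}$ uses $k=2$, $m=1$. The sum gives order $\step_t/a\asymp (1+t)^{-\gamma_\step}$, so after the square root this term is $\lesssim p^{3/2}\nagent^{-1}(1+t)^{-\gamma_\step/2}$. I would in fact use the sharper Lemma~\ref{lemma: higher-order-moment-bounds-theta-fl} form, which carries $p^2\nagent^{-1/2}$.
\item The heterogeneity-fluctuation term $2\bConst{A}p^2\zeta_*\sqrt{\sum_s\step_s^2\nlupdates_s e^{-2aS_{s:t}}}$ has the same structure. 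It contributes $p^2\nagent^{-1/2}(1+t)^{-\gamma_\step/2}$, taking the $\nagent^{-1/2}$ from \Cref{lemma: higher-order-moment-bound-for-theta-fl-bi}.
\item Together, the first two items give the term $\Auxconst_{\text{last},1}p^3\nagent^{-1/2}(1+t)^{-\gamma_\step/2}$, after bounding $p^2\le p^3$ and $\nagent^{-1}\le\nagent^{-1/2}$.
\item The bias term $\zeta_1\zeta_2\sum_s\step_s^2\nlupdates_s^2e^{-aS}$ uses $k=m=2$. It is of order $\step_t\nlupdates_t/a\asymp(1+t)^{-\gamma}$, exactly as in the $8\zeta_1\zeta_2\step\nlupdates/a$ term of \Cref{thm:moment_bounds_main}.
\item The last term is $\sqrt{\sum\step_s^5\nlupdates_s^4e^{-aS}}$. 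It gives $\sqrt{\step_t^4\nlupdates_t^3}\asymp(1+t)^{-(4\gamma_\step-3\gamma_\nlupdates)/2}$, and $(4\gamma_\step-3\gamma_\nlupdates)/2\ge\gamma$ because $\gamma_\step\ge\gamma_\nlupdates$. Its prefactor contains $p^3$. To match the stated form, I would either absorb it into $\Auxconst_{\text{last},2}$, since $p$ is fixed for this corollary, or note that its rate is at least $(1+t)^{-\gamma_\step/2}$ times something and fold it into the first group. I would define $\Auxconst_{\text{last},2}$ to contain it, together with $8\zeta_1\zeta_2\step\nlupdates/a$.
\end{enumerate}
Finally, the constants $\Auxconst_{\text{last},1},\Auxconst_{\text{last},2}$ are collected into \eqref{eq:auxconst_last_iter_bound}. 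They depend on $a,\step,\nlupdates,\gamma_\step,\gamma_\nlupdates,\supconsteps,\bConst{A},\zeta_*,\zeta_1,\zeta_2$.

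The main obstacle is making the summation lemma valid uniformly in $t\ge1$, not only asymptotically. The ratio between consecutive factors $\step_s^{k-1}\nlupdates_s^{m-1}$ must be dominated by $e^{-\kappa a\step_s\nlupdates_s/2}$ from some index on. Handling the small-$s$ regime and the ceiling in $\nlupdates_t=\lceil\nlupdates(1+t)^{\gamma_\nlupdates}\rceil$ (use $\nlupdates(1+t)^{\gamma_\nlupdates}\le\nlupdates_t\le 2\nlupdates(1+t)^{\gamma_\nlupdates}$ when $\nlupdates\ge1$) only costs constants. I would proceed as in the proof of \Cref{thm:moment_bounds_main}: split the sum at $s=t/2$, bound the early half by $t\cdot\max(\cdot)\exp(-c\,t^{1-\gamma})$, which is $\lesssim(1+t)^{-\gamma}$ times a constant, and bound the late half by comparison with an integral.

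A secondary point is the $p$-power bookkeeping in the $p^3$ term of the $\step_s^5\nlupdates_s^4$ piece. If this cannot be absorbed into $\Auxconst_{\text{last},2}$ without $p$-dependence, I will instead use $(1+t)^{-(4\gamma_\step-3\gamma_\nlupdates)/2}\le(1+t)^{-\gamma_\step/2}$, which holds because $3\gamma_\step-3\gamma_\nlupdates\ge0$, and move it into the $p^3\nagent^{-1/2}(1+t)^{-\gamma_\step/2}$ term. That is consistent since it carries $\nagent^{-1/2}$.
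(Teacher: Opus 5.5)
Your proposal is correct and follows the paper's route. You keep the transient term, bound every remaining sum with \Cref{lemma: legendary-sum}, and take the noise and heterogeneity-fluctuation terms in the forms proved in \Cref{lemma: higher-order-moment-bounds-theta-fl} and \Cref{lemma: higher-order-moment-bound-for-theta-fl-bi}, which supply the $\nagent^{-1/2}$ factor. The $\step_s^5\nlupdates_s^4$ term then goes into the $p^3\nagent^{-1/2}(1+t)^{-\gamma_\step/2}$ group, which is exactly why the paper's $\Auxconst_{\text{last},1}$ contains $4(a\rme)^{-1/2}\bConst{A}\zeta_1\zeta_2$ while $\Auxconst_{\text{last},2}$ carries only the bias contribution.

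That fallback is the option you must take, not an alternative. Absorbing the $p^3$-weighted term into $\Auxconst_{\text{last},2}$ would make that constant $p$-dependent, which contradicts \eqref{eq:auxconst_last_iter_bound}. It would also break the later use of the corollary with $p$ of order $\log t$ in the bootstrap analysis.
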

\begin{proof}
    We introduce the constants
    \begin{align}
    \label{eq:auxconst_last_iter_bound}
        \nonumber \Auxconst_{\text{last},1} &= (2\supconsteps + 2\bConst{A}\zeta_3 + 4(a\rme)^{-1/2}\bConst{A} \zeta_1 \zeta_2)((\step\nlupdates)^{1/2} + a^{-1/2}L^{-1/2}(1 + (a \step \nlupdates)^{\gamma + \gamma_\step \over 2(1-\gamma)})) \eqsp,\\
        \Auxconst_{\text{last},2} &= \zeta_1\zeta_2 (\step^{1/2}\nlupdates + a^{-1/2} L^{1/2}(1 + (a\step \nlupdates)^{\gamma \over 1 - \gamma})) \eqsp.
    \end{align}
    Applying \Cref{lemma: legendary-sum}, we obtain the bound  \eqref{eq:last_iter_pth_bound}.
\end{proof}

\section{Normal approximation for FedLSA}

In this section, we provide the proof of 
\Cref{thm:gauss_approx_last_iter}, which establishes a Berry–Esseen type bound 
for the last iterate of \textsc{FedLSA}. 

For notational simplicity, we slightly abuse notation and write
$Z_{i,j,c} := Z_{i,j}^c$ and $\theta_{i,j,c} := \theta_{i,j}^c$.

\subsection{Error decomposition and linear statistics for Gaussian approximation}

Firstly, for any $t \in \{1, \dots, T\}$ and $h \in \{1, \dots, \nlupdates\}$, we unroll the recursion in \eqref{eq:decomp_last_agent_rec}, as follows
\begin{align}
\label{eq:decomp_theta_th}
    \theta_{t,h}^c - \thetalim[c] &= (\Id - \step_{t} \zfuncA[c]{Z_{t,h,c}})(\theta_{t,h-1}^c - \thetalim[c]) - \step_{t}\funcnoise[c]{Z_{t,h,c}}\\
    &= \Gamma_{t,h}^c(\theta_{t,0}^c-\thetalim[c]) - \step_t \sum_{j=1}^h \Gamma_{t,j+1:h}^c \funcnoise[c]{Z_{t,j,c}}\eqsp,
\end{align}
where for any $j \geq 1$ and $\theta \in \rset^d, z \in \Zset$, we recall the definition of the following values
\begin{align*}
    &\funcnoiseth[c]{\theta}{z} = \zmfuncA[c]{z}\theta - \zmfuncb[c]{z} \eqsp,\\
    &G_{t,\ell:r}^c = (\Id - \step_{t}\bA[c])^{r-\ell+1} , \quad \ell \leq r \eqsp,\\
    &G_{t,\ell:r}^c = \Id, \quad \ell > r \eqsp,\\
    &G_{t,j}^c = G_{t,1:j}^c \eqsp.
\end{align*}
Also, we set the following quantities
\begin{align*}
    &\Gammaavg_{t,h} = \nagent^{-1}\sum_{c=1}^{\nagent}\Gamma_{t,1:h}^c \eqsp, \quad \Gavg_{t,h} = \nagent^{-1}\sum\limits_{c=1}^{\nagent} G_{t,h}^c \eqsp. 
\end{align*}
Now, we note that
\begin{align*}
    \nagent^{-1}\sum_{c=1}^{\nagent}\{\theta_{t,h}^c - \thetalim\}
    &= \Gammaavg_{t,h} (\theta_{t-1} - \thetalim) + \Deltaavg_{t,h} - \step_t \Epsavg_{t,h}\\
    &= \Gammaavg_{t,h}\prod_{i=1}^{t-1}\Gammaavg_i (\theta_0 -\thetalim) + \Gammaavg_{t,h}\sum_{s=1}^{t-1}\prod_{i=s+1}^{t-1}\Gammaavg_i (\Deltaavg_{s,\nlupdates_s} - \step_s\Epsavg_{s,\nlupdates_s}) + \Deltaavg_{t,h} - \step_t\Epsavg_{t,h} \eqsp,
\end{align*}
where we have set, for any $h \ge 0$,
\begin{align*}
    \Deltaavg_{t,h} &= \nagent^{-1}\sum_{c=1}^{\nagent} (\Id - \Gamma_{t,h}^c)(\thetalim[c]-\thetalim) \eqsp,
    \quad
    \text{and } \quad 
    \Epsavg_{t,h} = \nagent^{-1} \sum_{c=1}^{\nagent}\sum_{s=1}^h \Gamma_{t,s+1:h}^c \funcnoise[c]{Z_{t,s,c}} \eqsp.
\end{align*}
Particularly, for $h=\nlupdates_t$, we obtain the last iterate decomposition, that is,
\begin{align*}
    \theta_t - \thetalim = \prod_{i=1}^{t}\Gammaavg_i (\theta_0 -\thetalim) + \sum_{s=1}^{t}\prod_{i=s+1}^{t}\Gammaavg_i (\Deltaavg_{s,\nlupdates_s} - \step_s\Epsavg_{s,\nlupdates_s}) \eqsp.
\end{align*}
% Extracting a leading term we have
In this decomposition, the dominant term is the sum of the noise variables $\Epsavg_{s,\nlupdates_s}$.
We isolate it in the decomposition
\begin{align}
\nonumber
    \theta_t - \thetalim &= \underbrace{-N^{-1} \sum \limits_{s = 1}^t \step_s \prod_{i = s + 1}^t \Gavg_i \sum \limits_{c = 1}^N \sum \limits_{h = 1}^{\nlupdates_s} G_{s, h + 1:\nlupdates_s}^{c} \funcnoise[c]{Z_{s, h, c}}}_\text{Linear statistics}%  \\
    %&
    + \prod_{i = 1}^t \Gammaavg_i (\theta_0 - \thetalim) + \sum \limits_{s = 1}^t \prod_{i = s + 1}^t \Gammaavg_i \Deltaavg_{s, H_s}
    \\
    \label{eq:decomposition-theta-leading-term}
    &- N^{-1}\sum \limits_{s = 1}^{t} \step_s \left( \prod_{i = s + 1}^t \Gammaavg_i - \prod_{i = s + 1}^t \Gavg_i \right) \sum \limits_{c = 1}^N \sum \limits_{h = 1}^{\nlupdates_s} \Gamma_{s, h+1:\nlupdates_{s}} \funcnoise[c]{Z_{s, h, c}} \\
\nonumber
    &- N^{-1} \sum \limits_{s = 1}^t \step_s \prod_{i = s + 1}^t \Gavg_i \sum \limits_{c = 1}^N \sum \limits_{h = 1}^{\nlupdates_s} (\Gamma_{s, h+1:\nlupdates_s} - G_{s, h+1:\nlupdates_s}) \funcnoise[c]{Z_{s, h, c}}
\end{align}
However, the underbraced term is not the complete leading linear term. To get the correct leading term we have to extract the hidden from the first glance linear statistics from the summand $\sum \limits_{s = 1}^{t} \prod_{i = s + 1}^{t} \Gammaavg_i \Deltaavg_{s, \nlupdates_s}$. Recalling the definitions from \eqref{eq:def-rhoavg-tauavg}, we have
\begin{align*}
    \Deltaavg_{s,\nlupdates_s} = \rhoavg_s + \tauavg_s \eqsp.
\end{align*}
Here, $\rhoavg_{t} = \frac{1}{\nagent} \sum \limits_{c = 1}^{\nagent} (\Id - G_s^c) (\thetalim[c] - \thetalim)$ represents the deterministic bias and does not contain any random noise, while $\tauavg_{s} = \frac{1}{\nagent} \sum \limits_{c = 1}^N (G_s^c - \Gamma^c_{t, 1:H_s}) (\thetalim[c] - \thetalim)$ contains linear noise from the $(G_s^c - \Gamma_{t, 1:\nlupdates_s}^{c})$. Therefore, to extract full linear statistics from $\theta_t - \thetalim$ we need to linearize $\tauavg_{s}$.

    \paragraph{Linearization of $\boldsymbol{\tauavg_s}$.}
    We need the following linearization to establish the correct leading term. Recalling the definitions from \eqref{eq:def-rhoavg-tauavg}, we have
    \begin{align*}
        \Deltaavg_{s,\nlupdates_s} = \rhoavg_s + \tauavg_s \eqsp.
    \end{align*}
    We aim to linearize the $\tauavg_s$ term in order to extract the leading linear statistic. 
    To this end, we linearize terms of the form
    \begin{align}
        \label{eq:linear_tau_s_decomp}
        \nonumber
        G_s^c \!-\! \Gamma_s^c &= \step_s \sum_{i=1}^{\nlupdates_s}G_{s,1:i-1}^c \zmfuncA[c]{Z_{s,i,c}} \Gamma_{s,i+1:\nlupdates_s}^c \\
        &
        = \underbrace{\step_s \sum_{i=1}^{\nlupdates_s}G_{s,1:i-1}^c \zmfuncA[c]{Z_{s,i,c}}}_{U_s^c} + \underbrace{\step_s \sum_{i=1}^{\nlupdates_s}G_{s,1:i-1}^c \zmfuncA[c]{Z_{s,i,c}} (\Gamma_{s,i+1:\nlupdates_s}^c - \Id)}_{R_s^c}
        %\\
        %&= U_s^c + R_s^c 
        \eqsp.
    \end{align}
    First, we bound the term $U_s^c$ as
    \begin{align*}
        \PE[\norm{U_s^c}^2] &\leq \step_s^2 \sum_{i=1}^{\nlupdates_s}(1-\step_s a)^{2(i-1)}{\mathrm{Tr}(\noisecovA[c])} \leq \mathrm{Tr}(\noisecovA[c])\step_s^2 \nlupdates_s \eqsp.%\\
        % \PE[\norm{R_s^c}^2] &\leq \step_s^2 \sum_{i=1}^{\nlupdates_s}(1-\step_s a)^{2(i-1)} \mathrm{Tr}(\noisecovA[c]) \PE[\norm{\Id - \Gamma_{s,i+1:\nlupdates_s}^c}^2] \eqsp.
    \end{align*}
    Then, we write
    \begin{align*}
        % \PE[\norm{U_s^c}^2] &\leq \step_s^2 \sum_{i=1}^{\nlupdates_s}(1-\step_s a)^{2(i-1)}\mathrm{Tr}(\noisecovA[c]) \leq \mathrm{Tr}(\noisecovA[c])\step_s^2 \nlupdates_s \eqsp,\\
        \PE[\norm{R_s^c}^2] &\leq \step_s^2 \sum_{i=1}^{\nlupdates_s}(1-\step_s a)^{2(i-1)} \mathrm{Tr}(\noisecovA[c]) \PE[\norm{\Id - \Gamma_{s,i+1:\nlupdates_s}^c}^2] \eqsp.
    \end{align*}
   % Now, we can bound the term $\PE[\norm{\Id - \Gamma_{s,i+1:\nlupdates_s}^c}^2]$, as
    %\begin{align*}
    Next, we bound $\PE^{1/2}[\norm{\Id - \Gamma_{s,i+1:\nlupdates_s}^c}^2] \leq \sum_{j=i+1}^{\nlupdates_s} \step_s \bConst{A} (1-\step_s a)^{j-i-1} \leq \step_s \nlupdates_s \bConst{A}$.% \eqsp.
   % \end{align*}
    Thus, we obtain
    \begin{align}
        \label{eq:bound_Ri}
        \PE[\norm{R_s^c}^2] \leq \bConst{A}^2 \step_s^4 \nlupdates_s^3 \mathrm{Tr}(\noisecovA[c]) \eqsp.
    \end{align}
    % \maks{Do we really need it here?}

    \paragraph{Leading linear statistics. }
    
    The correct leading linear statistics defined in \eqref{eq:decomposition-theta-leading-term} consists of two terms
    \begin{align}
        M_{t} &=
        -\nagent^{-1} \step_t^{-1/2}\sum_{s=1}^t \step_s \prod_{i=s+1}^t \Gavg_{i}\sum_{c=1}^\nagent \sum_{h=1}^{\nlupdates_s} G_{s,h+1:\nlupdates_s}^c \funcnoise[c]{Z_{s,h,c}} \label{eq:M_t_noise}
        %\eqsp,
        \\
       % M_{t,2}
       & \quad + \nagent^{-1} \step_t^{-1/2}\sum_{s=1}^t \step_s \prod_{i=s+1}^t \Gavg_i \sum_{c=1}^{\nagent}\sum_{h=1}^{\nlupdates_s}G_{s,1:h-1}^c \zmfuncA[c]{Z_{s,h,c}}(\thetalim[c] - \thetalim) \label{eq:M_t_heter}
        %= M_{t,1} + M_{t,2} 
        \eqsp.
        %\\
        %\Sigma_t &= \PE[M_t (M_t)^T] \eqsp.
    \end{align}
    We can decompose $M_t$ in four terms,
    \begin{align*}
        \nonumber M_t &= 
        M_{t,1} + M_{t,2} + 
        M_{t,3} + M_{t,4}
        \eqsp,
    \end{align*}
    where
    \begin{align*}
        M_{t,1} 
        & =
        -\nagent^{-1}\step_t^{-1/2}\sum_{s=1}^t \step_s \prod_{i=s+1}^t \Gavg_{i}\sum_{c=1}^\nagent \sum_{h=1}^{\nlupdates_s} \funcnoise[c]{Z_{s,h,c}} 
        \eqsp,
        \\
        M_{t,2} 
        & = 
        \nagent^{-1} \step_t^{-1/2}\sum_{s=1}^t \step_s \prod_{i=s+1}^t \Gavg_i \sum_{c=1}^{\nagent}\sum_{h=1}^{\nlupdates_s} \zmfuncA[c]{Z_{s,h,c}}(\thetalim[c] - \thetalim) 
        \eqsp,
        \\
        M_{t,3} 
        & = 
        -\nagent^{-1} \step_t^{-1/2}\sum_{s=1}^t \step_s \prod_{i=s+1}^t \Gavg_{i}\sum_{c=1}^\nagent \sum_{h=1}^{\nlupdates_s} (G_{s,h+1:\nlupdates_s}^c - \Id) \funcnoise[c]{Z_{s,h,c}} 
        \eqsp,
        \\
        M_{t,4}
        & = 
        \nagent^{-1} \step_t^{-1/2}\sum_{s=1}^t \step_s \prod_{i=s+1}^t \Gavg_i \sum_{c=1}^{\nagent}\sum_{h=1}^{\nlupdates_s}(G_{s,1:h-1}^c - \Id) \zmfuncA[c]{Z_{s,h,c}}(\thetalim[c] - \thetalim) %= M_{t,1} + M_{t,2} + M_{t,3} + M_{t,4} 
        \eqsp.
    \end{align*}
    From the definition \eqref{eq:def-tilde-Ab-epsilon} of $\varepsilon^c$, we see that the first and second terms can be combined into
    \begin{align*}
        \hat{M}_t = M_{t,1} + M_{t,2} = -\nagent^{-1}\step_t^{-1/2}\sum_{s=1}^t \step_s \prod_{i=s+1}^t \Gavg_i \sum_{c=1}^\nagent \sum_{h=1}^{\nlupdates_s} \funcnoiseth[c]{\thetalim}{Z_{s,h,c}} \eqsp.
    \end{align*}
    Let us denote
    \begin{align}
    \label{eq:Sigma_star_def}
        \noisecovst[c] = \PE[\funcnoiseth[c]{\thetalim}{Z} (\funcnoiseth[c]{\thetalim}{Z})^T] \eqsp, \quad \noisecovavgst = \nagent^{-1}\sum_{c=1}^\nagent \noisecovst[c] \eqsp.
    \end{align}
    Then, we define the covariance matrix, corresponding to the term $M_{t,1} + M_{t,2}$, as
    \begin{align}
    \label{eq:hat_Sigma_t_def}
        \hat{\Sigma}_t = \PE[\hat{M}_t \hat{M}_t^T] = \nagent^{-1}\step_t^{-1}\sum_{s=1}^t \step_s^2 \nlupdates_s \prod_{i=s+1}^t \Gavg_i \Sigma_* \left(\prod_{i=s+1}^t \Gavg_i \right)^T \eqsp,
    \end{align}
    which is the leading component of the FedLSA covariance. According to \Cref{lemma:bound_Mt}, the terms $M_{t,3}$ and $M_{t,4}$ are residual. 

    We may notice that in the case when $\nlupdates = \text{const}$, the leading component of FedLSA covariance $\hat{\Sigma}_t$ coincides with that of~\citet[Theorem 2.2]{bonnerjee2025sharp}.

    \paragraph{Final error decomposition for Gaussian approximation.} We now define
    \begin{align}
    \label{eq:sigma_t_def}
        \Sigma_t &= \PE[M_t M_t^T] \eqsp,
    \end{align}
    which allows us to write the central error decomposition for the normalized statistic $\step_t^{-1/2} \Sigma_t^{-1/2} (\theta_t - \thetalim)$
    \begin{align}
        \label{eq:decomp_clt_last_iter}
        \step_t^{-1/2} \Sigma_t^{-1/2}(\theta_t - \thetalim) = W_t + D_t \eqsp,
    \end{align}
    where we define
    \begin{align}
    \label{eq:last-iterate-error-decomposition}
        W_t &= \Sigma_t^{-1/2} M_t = \sum_{s=1}^t \sum_{h=1}^{\nlupdates_s}\sum_{c=1}^\nagent \upsilon_{s,h,c} \eqsp,\\
        D_t &= D_{t,1} + D_{t,2} + D_{t,3} + D_{t,4} + D_{t,5} + D_{t,6}\eqsp,
    \end{align}
    with
    \begin{align*}
        &\upsilon_{s,h,c} = \nagent^{-1}\step_t^{-1/2} \step_s \Sigma_t^{-1/2}\prod_{i=s+1}^t \Gavg_i \left\{G_{s,1:h-1}^c \zmfuncA[c]{Z_{s,h,c}}(\thetalim[c]-\thetalim) - G_{s,h+1:\nlupdates_s}^c \funcnoise[c]{Z_{s,h,c}} \right\} \eqsp,\\
        &D_{t,1} = \step_t^{-1/2} \Sigma_t^{-1/2}\prod_{i=1}^{t}\Gammaavg_i (\theta_0 -\thetalim) \eqsp, \\
        &D_{t,2} = \nagent^{-1}\step_t^{-1/2}\Sigma_t^{-1/2}\sum_{s=1}^{t} \prod_{i=s+1}^{t}\Gavg_i \sum_{c=1}^\nagent R_s^c(\thetalim[c]-\thetalim) \eqsp, \\
        &D_{t,3} = \step_t^{-1/2}\Sigma_t^{-1/2} \sum_{s=1}^t \Big( \prod_{i=s+1}^t \Gammaavg_i - \prod_{i=s+1}^t \Gavg_i \Big) \Deltaavg_s \eqsp,\\
        &D_{t,4} =  \step_t^{-1/2}\Sigma_t^{-1/2} \sum_{s=1}^t \prod_{i=s+1}^t \Gavg_i \rhoavg_s \eqsp,\\
        &D_{t,5} = \nagent^{-1} \step_t^{-1/2}\Sigma_t^{-1/2}\sum_{s=1}^{t} \step_s \Big( \prod_{i=s+1}^t \Gavg_i - \prod_{i=s+1}^t \Gammaavg_i \Big) \sum_{c=1}^{\nagent}\sum_{h=1}^{\nlupdates_s} \Gamma_{s,h+1:\nlupdates_s}^c \funcnoise[c]{Z_{s,h,c}} \eqsp, \\
        &D_{t,6} = \nagent^{-1} \step_t^{-1/2}\Sigma_t^{-1/2}\sum_{s=1}^{t} \step_s \prod_{i=s+1}^t \Gavg_i \sum_{c=1}^{\nagent}\sum_{h=1}^{\nlupdates_s} (G_{s,h+1:\nlupdates_s}^c - \Gamma_{s,h+1:\nlupdates_s}^c) \funcnoise[c]{Z_{s,h,c}} \eqsp.
    \end{align*}

Theorem~\ref{thm:gauss_approx_last_iter}
Now we are ready to proof \Cref{thm:gauss_approx_last_iter}~-- the main result of this section.

\subsection{Proof of the Theorem~\ref{thm:gauss_approx_last_iter}}

    Applying \citet[Theorem 2.1]{shao2022berry} to the decomposition \eqref{eq:decomp_clt_last_iter}, we have
    \begin{align}
        \label{eq:shao_zheng_decomp_last_iter}
        \kolmogorov (\step_t^{-1/2} \Sigma_t^{-1/2} (\theta_t - \thetalim), Y) \leq \underbrace{259 d^{1/2} \Upsilon_t}_\text{$R_1$} + \underbrace{2 \PE[\norm{W_t} \norm{D_t}]}_\text{$R_2$} + \underbrace{2 \sum \limits_{s = 1}^t \sum \limits_{h = 1}^{H_s} \sum \limits_{c = 1}^N \PE[\norm{\upsilon_{s,h,c}} \norm{D_t - D_t^{(s, h, c)}}]}_\text{$R_3$} \eqsp,
    \end{align}
    where $\Upsilon_t = \sum \limits_{s = 1}^t \sum \limits_{h = 1}^{H_s} \sum \limits_{c = 1}^\nagent \PE[\norm{\upsilon_{s,h,c}}^3]$ and $D_t^{(s, h, c)}$ is a counterpart of $D_t$, in which we have replaced $Z_{s, h, c}$ by its independent copy $Z_{s, h, c}'$. In the same way, we define $D_{t, i}^{(s, h, c)}$ for $i \in \{ 1, 2, 3, 4 \}$.

    Thus, to obtain the desired result we have to bound $R_1$, $R_2$ and $R_3$.

    \textbf{Bound on $R_1$}

    At first, we note that
    \begin{align*}
        \norm{\upsilon_{s,h,c}} &\leq \bConst{\Sigma} \nagent^{-1/2} \step_t^{-1/2} \step_s  \prod_{i = s + 1}^t (1 - \step_i a)^{H_i}( (1 - \step_s a)^{H_s - h} \| \varepsilon \|_\infty + (1-\step_s a)^{h-1}\bConst{A} \norm{\thetalim[c] - \thetalim})\\
        &\leq \bConst{\Sigma} \nagent^{-1/2} \step_t^{-1/2} \step_s  \exp{\left(-a\sum_{\ell=s+1}^t \step_\ell \nlupdates_\ell \right)} ((1 - \step_s a)^{H_s - h} \supconsteps + (1-\step_s a)^{h-1}\bConst{A}\norm{\thetalim[c] - \thetalim})\\
        &\leq \bConst{\Sigma} \nagent^{-1/2} \step_t^{-1/2} \step_s  \exp{\left(-a\sum_{\ell=s+1}^t \step_\ell \nlupdates_\ell \right)} ( \supconsteps + \bConst{A}\norm{\thetalim[c] - \thetalim}) \eqsp.
    \end{align*}
    Thus, applying \Cref{lemma: legendary-sum}, we get
    \begin{align}
        \nonumber \Upsilon_t &= \sum\limits_{s = 1}^t \sum \limits_{h = 1}^{H_s} \sum \limits_{c = 1}^\nagent \PE[\norm{\upsilon_{s,h,c}}^3] \leq \bConst{\Sigma}^3  \nagent^{-1/2} \step_t^{-3/2} \sum_{s=1}^t \step_s^3 \nlupdates_s \exp{\left( -3a\sum_{\ell=s+1}^t \step_\ell \nlupdates_\ell \right)}(\supconsteps + \bConst{A} \zeta_3)^3\\
        &\leq 2(\step^2 \nlupdates + La^{-1}(1 + (3a\step\nlupdates)^{2\gamma_\step + \gamma \over 1-\gamma})) \bConst{\Sigma}^3 \nagent^{-1/2} \step^{1/2} (1+t)^{-\gamma_\step/2} (\supconsteps + \bConst{A} \zeta_3)^3 \eqsp, \label{eq:bound_Upsilon_t}
    \end{align}
    where we have set
    \begin{align*}
        \zeta_3^2 = \nagent^{-1}\sum_{c=1}^\nagent \norm{\thetalim[c] - \thetalim}^2 \eqsp.
    \end{align*}
    % Now we have to bound $S$. Let's rewrite it in the following way,
    % \begin{align*}
    %     S = (1 + t)^{3 \gamma / 2} \left( \underbrace{\sum \limits_{k = 0}^{[t / 2]} (1 + t - k)^{-2 \gamma} \mathrm{e}^{-3 a \step H k}}_\text{$S_1$} + \underbrace{\sum \limits_{k = [t/2] + 1}^{t - 1} (1 + t - k)^{-2 \gamma} \mathrm{e}^{-3 a \step H k}}_\text{$S_2$}\right)
    % \end{align*}
    % For $k \leq t / 2$ $(1 + t - k) \geq \frac{1 + t}{2}$. Therefore,
    % \begin{align*}
    %     S_1 \leq \sum \limits_{k = 0}^{[t / 2]} \Big( \frac{1 + t}{2} \Big)^{-2 \gamma} \mathrm{e}^{-3 a \step H k} \leq 2^{2 \gamma} (1 + t)^{-2 \gamma} \frac{1}{1 - \mathrm{e}^{- 3 a \step H}}
    % \end{align*}
    % For $k > [t / 2]$, we can just bound $(1 + t - k)^{-2 \gamma} \leq 1$. Then,
    % \begin{align*}
    %     S_2 \leq \sum \limits_{k = [t / 2] = 1}^{t - 1} \mathrm{e}^{-3 a \step H k} \leq \frac{\mathrm{e}^{-3 \step H t / 2}}{1 - \mathrm{e}^{-3 a \step H}}
    % \end{align*}
    % Thus, we get the bound on $S = (1 + t)^{3 \gamma / 2}(S_1 + S_2)$
    % \begin{align*}
    %     &S \leq \frac{1}{1 - \mathrm{e}^{-3 a \step H}} \left( 2^{2 \gamma} (1 + t)^{-\gamma / 2} + (1 + t)^{3 \gamma / 2} \mathrm{e}^{- 3 a \step H t / 2}\right)
    % \end{align*}
    % As $1 / (1 - \mathrm{e}^{-3 a \step H}) \leq 1 / (a \step H)$, we get

    \textbf{Bound on $R_2$}

    Firstly, we establish the bound on $\Deltaavg_{j}$ for any $1 \leq j \leq t$. By the definition of $\Deltaavg_{j}$, we have
    \begin{align*}
        \Deltaavg_{j} = N^{-1} \sum \limits_{c = 1}^N (\Id - \Gamma_{j, H_j}^c) (\thetalim[c] - \thetalim) = \tauavg_j + \rhoavg_j \eqsp.
    \end{align*}
    Then, using \Cref{lemma: rho-t-estimation} and \eqref{eq:bound-flbi-tauavg}, we obtain
    \begin{align}
    \label{eq:bound_Deltaavg_j}
    \nonumber \PE[\norm{\Deltaavg_j}^2] \leq 2\PE[\norm{\tauavg_j}^2] +  2 \PE[\norm{\rhoavg_j}^2] &\leq 2 \varhet^2 \nagent^{-1} \step_j^2 \nlupdates_j + \zeta_1^2 \zeta_2^2 \step_j^4 \nlupdates_j^4 \\
    &\eqsp.
    \end{align}
    Now, Applying H\"olders inequality, we obtain
    \begin{align}
    \label{eq:bound_R2_Holder}
        \PE[\norm{W_t} \norm{D_t}] \leq \PE^{1/2}[\norm{W_t}^2] \PE^{1/2}[\norm{D_t}^2] \eqsp.
    \end{align}
    Firstly, we note that $\PE[\norm{W_t}^2] = d$. Therefore, we can proceed with the $D_t$. For $D_{t,1}$, we have
    \begin{align}
        \nonumber \PE^{1/2}[\norm{D_{t,1}}^2] &\leq \rme^{(1-\gamma)^{-1}}\bConst{\Sigma}\step_t^{-1/2}\exp{(-a(1-\gamma)^{-1} \step \nlupdates (1+t)^{1-\gamma})} \norm{\theta_0 - \thetalim}\\
        &\leq \Auxconst_{D,1} \bConst{\Sigma} \exp{\left(-{a \over 2(1-\gamma)}\step\nlupdates (1+t)^{1-\gamma}\right)}\norm{\theta_0 - \thetalim} \eqsp, \label{eq:bound_D_1}
    \end{align}
    where we have used that $x^{1/2}e^{-cx^{1-\gamma}} \leq (c\rme (1-\gamma))^{-{1 \over 2(1-\gamma)}} e^{-cx^{1-\gamma}/2}$ for any $x,c > 0$ and have set $\Auxconst_{D,1} = 2 \rme^{(1-\gamma)^{-1}}(a\rme \step \nlupdates)^{-{1 \over 2(1-\gamma)}}\step^{1/2}$.
    Using the martingale-difference structure of the sequence $\{R_s^c, 1 \leq s \leq t\}$ for any $c$, \eqref{eq:bound_Ri} and applying \Cref{lemma: legendary-sum}, we obtain
    \begin{align}
        \nonumber \PE[\norm{D_{t,2}}^2] &\leq \bConst{A}^2 \bConst{\Sigma}^2\nagent^{-1}\step_t^{-1} \sum_{s=1}^t \step_s^4 \nlupdates_s^3 \exp{\left(-2a \step \nlupdates \sum_{\ell=s+1}^t (1+\ell)^{-\gamma} \right)} \sum_{c=1}^\nagent \mathrm{Tr}(\noisecovA[c])\norm{\thetalim[c] - \thetalim}^2\\
        &\leq \Auxconst_{D,2}^2 \bConst{\Sigma}^2 (1+t)^{-2\gamma} \eqsp, \label{eq:bound_D_2}
    \end{align}
    where
    \begin{equation*}
        \Auxconst_{D,2}^2 = 8 a^{-1} \bConst{A}^2 \zeta_4^2 ((\step \nlupdates)^3 + La^{-1}(\step \nlupdates)^2(1+(2a\step \nlupdates)^{3\gamma+\gamma_\step \over 1-\gamma})) \eqsp
    \end{equation*}
    and we have set
    \begin{align*}
        \zeta_4^2 = {1 \over \nagent}\sum_{c=1}^\nagent \mathrm{Tr}(\noisecovA[c])\norm{\thetalim[c] - \thetalim}^2 \eqsp.
    \end{align*}
    Then, again using the martingale-difference structure of the sequence $\{\Big( \prod_{i=s+1}^t \Gammaavg_i - \prod_{i=s+1}^t \Gavg_i \Big) \tauavg_s, 1 \leq s \leq t \}$, \Cref{lemma: bound_diff_avg_matrices_prod} and \eqref{eq:bound_Deltaavg_j}, we obtain
    \begin{align}
        \nonumber \PE[\norm{D_{t,3}}^2] &\leq \bConst{\Sigma}^2 \nagent \step_t^{-1}  \sum_{s=1}^t \PE[\norm{\prod_{i=s+1}^t \Gammaavg_i - \prod_{i=s+1}^t \Gavg_i}^2] \PE[\norm{\Deltaavg_s}^2] \\
        \nonumber &\leq 2a^{-1}\varboundA^2\bConst{\Sigma}^2 \step_t^{-1} \sum_{s=1}^t \step_s \PE[\norm{\Deltaavg_s}^2] \exp{\left(-2 a\sum_{\ell=s+1}^t \step_\ell \nlupdates_\ell \right)}\\
        \nonumber &\leq 4a^{-1}\varboundA^2 \zeta_3^2\bConst{\Sigma}^2 \nagent^{-1} \step_t^{-1} \sum_{s=1}^t \step_s^3 \nlupdates_s \exp{\left(- 2a\sum_{\ell=s+1}^t \step_\ell \nlupdates_\ell \right)}\\
        \nonumber &+ 2a^{-1}\varboundA^2 \zeta_1^2 \zeta_2^2 \bConst{\Sigma}^2 \step_t^{-1} \sum_{s=1}^t \step_s^5 \nlupdates_s^4 \exp{\left(- 2a\sum_{\ell=s+1}^t \step_\ell \nlupdates_\ell \right)} \\
        &\leq \Auxconst_{D,3,1}^2\bConst{\Sigma}^2 \nagent^{-1}  (1+t)^{-\gamma_\step} + \Auxconst_{D,3,2}^2 \bConst{\Sigma}^2 (1+t)^{-3 \gamma}\eqsp. \label{eq:bound_D_3}
        % &\leq 4 (a \rme)^{-1} \nagent^{-2} \varboundA^2 \varbound^2 \step_t^{-1} \step \nlupdates \bConst{\Sigma}^2 \sum_{s=1}^t \step_s^2 \exp{(- 2a\step \nlupdates (t-s))} \\
        % &\leq 8 (a^2 \rme)^{-1} \nagent^{-2} \varboundA^2 \varbound^2 \step_t \bConst{\Sigma}^2 \eqsp.
    \end{align}
    where
    \begin{align*}
        \Auxconst_{D,3,1}^2 &= 64 a^{-2}\varboundA^2 \zeta_3^2(\step^2\nlupdates + La^{-1}\step(1+ (2a\step\nlupdates)^{\gamma + 2\gamma_\step \over 1-\gamma}))\eqsp, \\
        \Auxconst_{D,3,2}^2 &= 256 a^{-2} \varboundA^2 \zeta_1^2 \zeta_2^2((\step \nlupdates)^4 + La^{-1}(\step \nlupdates)^3(1+ (2a\step \nlupdates)^{4\gamma + \gamma_\step \over 1-\gamma})) \eqsp.
    \end{align*}
    For $D_{t,4}$, we have
    % \begin{align*}
    %     \PE[\norm{D_{t,4}}^2] \leq 4\bConst{\Sigma}^2 \step^2 \nlupdates^2 \step_t^{-1} \zeta_1^2 \zeta_2^2 \eqsp.
    % \end{align*}
    \begin{align}
        \PE^{1/2}[\norm{D_{t,4}}^2] \leq \Auxconst_{D,4} \bConst{\Sigma} \nagent^{1/2} (1+t)^{-\gamma} \eqsp, \label{eq:bound_D_4}
    \end{align}
    where $\Auxconst_{D,4} = 8a^{-1}\step \nlupdates$.
    For the term $D_{t,5}$, we note that the sequence $\{(\prod\limits_{i=s+1}^t \Gavg_i - \prod\limits_{i=s+1}^t \Gammaavg_i)\sum\limits_{h=1}^{\nlupdates_s} \Gamma_{s,h+1:\nlupdates_s}^c \funcnoise[c]{Z_{s,h,c}}, 1 \leq s \leq t\}$ is martingale-difference w.r.t the filtration $\mathcal{F}_{s}^{-}$. Then, applying \Cref{lemma: bound_diff_avg_matrices_prod} and using \eqref{eq:interm-bound-Delta-rho}, we get
    \begin{align}
        \nonumber
        \PE[\norm{D_{t,5}}^2] &\leq \bConst{\Sigma}^2\nagent^{-1} \step_t^{-1}\sum_{s=1}^t\sum_{c=1}^{\nagent} \step_s^2 \PE[\norm{(\prod\limits_{i=s+1}^t \Gavg_i - \prod\limits_{i=s+1}^t \Gammaavg_i)\sum\limits_{h=1}^{\nlupdates_s} \Gamma_{s,h+1:\nlupdates_s}^c \funcnoise[c]{Z_{s,h,c}}}^2]\\
        \nonumber
        &\leq \bConst{\Sigma}^2\nagent^{-1} \step_t^{-1}\sum_{s=1}^t \sum_{c=1}^{\nagent} \step_s^2 \PE[\norm{\prod\limits_{i=s+1}^t \Gavg_i - \prod\limits_{i=s+1}^t \Gammaavg_i}^2] \PE[\norm{\sum_{h=1}^{\nlupdates_s}\Gamma_{s,h+1:\nlupdates_s}^c \funcnoise[c]{Z_{s,h,c}}}^2] \\
        \label{eq:bound_D_3_mart_diff}
        &\leq 2a^{-1}\varboundA^2\bConst{\Sigma}^2 \nagent^{-2} \step_t^{-1} \sum_{s=1}^t \sum_{c=1}^{\nagent} \step_s^3\PE[\norm{\sum_{h=1}^{\nlupdates_s}\Gamma_{s,h+1:\nlupdates_s}^c \funcnoise[c]{Z_{s,h,c}}}^2] \exp{\left( - 2a \sum_{\ell = s + 1}^t \eta_\ell H_\ell \right)} \eqsp.
    \end{align}
    Now, using the fact that for any $s \in \{1,\dots, t\}$ the sequence $\{\Gamma_{s,h+1:\nlupdates_s}^c \funcnoise[c]{Z_{s,h,c}}, 1 \leq h \leq \nlupdates_s\}$ is martingale-difference w.r.t filtration $\mathcal{F}_{s,h}^{+}$, we get
    \begin{align}
    \label{eq:bound_sum_gamma_eps}
        \PE[\norm{\sum_{h=1}^{\nlupdates_s}\Gamma_{s,h+1:\nlupdates_s}^c \funcnoise[c]{Z_{s,h,c}}}^2] = \sum_{h=1}^{\nlupdates_s} \PE[\norm{\Gamma_{s,h+1:\nlupdates_s}^c \funcnoise[c]{Z_{s,h,c}}}^2] \leq \mathrm{Tr}(\Sigma_\varepsilon^c) \nlupdates_s \eqsp.
    \end{align}
    Combining \eqref{eq:bound_D_3_mart_diff}, \eqref{eq:bound_sum_gamma_eps}, and using \Cref{lemma: legendary-sum}, we obtain
    \begin{align}
        \nonumber \PE[\norm{D_{t,5}}^2] &\leq 2a^{-1} \varboundA^2 \varbound^2\bConst{\Sigma}^2\nagent^{-1} \step_t^{-1} \sum_{s=1}^t \step_s^3 \nlupdates_s \exp{\left( - 2a \sum_{\ell = s + 1}^t \eta_\ell H_\ell \right)}\\
        &\leq \Auxconst_{D,5}^2\bConst{\Sigma}^2\nagent^{-1} (1+t)^{-\gamma_\step} \eqsp, \label{eq:bound_D_5}
    \end{align}
    where
    \begin{align*}
        \Auxconst_{D,5}^2 = 4 a^{-2} \varboundA^2 \varbound^2 (\step^2 \nlupdates + La^{-1}\step(2a\step \nlupdates)^{\gamma + 2\gamma_\step \over 1-\gamma}) \eqsp.
    \end{align*}
    For the last term $D_{t,6}$, we note that again for any $s \in \{1, \dots, t\}$ the sequence $\{(G_{s,h+1:\nlupdates_s}^c - \Gamma_{s,h+1:\nlupdates_s}^c) \funcnoise[c]{Z_{s,h,c}}, 1 \leq h \leq \nlupdates_s\}$ is martingale-difference w.r.t filtration $\mathcal{F}_{s,h}^+$. Thus, applying \Cref{lemma: legendary-sum}, we get
    \begin{align}
        \nonumber \PE[\norm{D_{t,6}}^2] &= \bConst{\Sigma}^2\nagent^{-1} \step_t^{-1} \sum_{s=1}^t \sum_{c=1}^\nagent \step_s^2 \exp{\left(-2a \sum_{\ell=s+1}^t \step_\ell \nlupdates_\ell \right)} \sum_{h=1}^{\nlupdates_s} \PE[\norm{(G_{s,h+1:\nlupdates_s}^c - \Gamma_{s,h+1:\nlupdates_s}^c)\funcnoise[c]{Z_{s,h,c}}}^2]\\
        \nonumber &\leq \varnoise^2\bConst{\Sigma}^2 \step_t^{-1} \sum_{s=1}^t  \step_s^4 \nlupdates_s^2 \exp{\left(-2a \sum_{\ell=s+1}^t \step_\ell \nlupdates_\ell \right)} \\
        &\leq \Auxconst_{D,6}^2\bConst{\Sigma}^2 (1+t)^{-(\gamma+\gamma_\step)}  \eqsp, \label{eq:bound_D_6}
    \end{align}
    where we have set
    \begin{align*}
        \varnoise^2 &= \nagent^{-1}\sum_{c=1}^{\nagent} \mathrm{Tr}(\Sigma_{\varepsilon}^c)\mathrm{Tr}(\noisecovA[c]) \eqsp,\\
        \Auxconst_{D,6}^2 &= 4 a^{-1}\varnoise^2 (\step^3\nlupdates^2 + La^{-1}\step^2\nlupdates(2a\step \nlupdates)^{2(\gamma + \gamma_\step) \over 1 - \gamma}) \eqsp.
    \end{align*}
    and used that
    \begin{align*}
        \PE[\norm{(G_{s,h+1:\nlupdates_s}^c - \Gamma_{s,h+1:\nlupdates_s}^c)u}^2] &= \step_s^2 \sum_{\ell=h+1}^{\nlupdates_s}\PE[\norm{(\Id-\step_s \bA[c])^{\ell-1} \zmfuncA[c]{Z_{s,\ell,c}} \Gamma_{s,\ell+1:\nlupdates_s}^c u}^2]\\
        &\leq \step_s^2 \sum_{\ell=h+1}^{\nlupdates_s} (1-\step_s a)^{2(\ell -1)} \mathrm{Tr}(\noisecovA[c]) (1-\step_s a)^{2(\nlupdates_s - \ell)} \norm{u}^2\\
        &\leq \step_s^2 \nlupdates_s (1-\step_s a)^{2(\nlupdates_s -1)} \mathrm{Tr}(\noisecovA[c])\norm{u}^2 \eqsp.
    \end{align*}

    \textbf{Bound on $R_3$}

    To bound each term in $R_3$, we can again use H\"olders inequality, that is,
    \begin{align*}
        \PE[\norm{\upsilon_{s,h,c}}\norm{D_t - D_t^{(s,h,c)}}] \leq \PE^{1/2}[\norm{\upsilon_{s,h,c}}^2] \PE^{1/2}[\norm{D_{t} - D_{t}^{(s,h,c)}}^2] \eqsp.
    \end{align*}
    Applying Minkowski's inequality, we get
    \begin{align*}
        \PE^{1/2}[\norm{D_{t} - D_{t}^{(s,h,c)}}^2] &\leq \PE^{1/2}[\norm{D_{t,1} - D_{t,1}^{(s,h,c)}}^2] + \PE^{1/2}[\norm{D_{t,2} - D_{t,2}^{(s,h,c)}}^2]\\
        &+ \PE^{1/2}[\norm{D_{t,3} - D_{t,3}^{(s,h,c)}}^2] + \PE^{1/2}[\norm{D_{t,4} - D_{t,4}^{(s,h,c)}}^2]\\
        &+ \PE^{1/2}[\norm{D_{t,5} - D_{t,5}^{(s,h,c)}}^2] + \PE^{1/2}[\norm{D_{t,6} - D_{t,6}^{(s,h,c)}}^2]\eqsp.
    \end{align*}
    Also, we set
    \begin{align*}
        R_{3, i} = \sum \limits_{s = 1}^t \sum \limits_{h = 1}^{H_s} \sum \limits_{c = 1}^N \PE^{1/2}[\norm{\upsilon_{s,h,c}}^2] \, \PE^{1/2}[\norm{D_{t, i} - D_{t, i}^{(s, h, c)}}^2]
    \end{align*}

    \textbf{Perturbations of $D_{t, 1}$.}
    
    Now, we should bound each perturbation term separately. For the first term, applying \Cref{lemma: difference-of-product}, Minkowski's inequality, we obtain
    \begin{align*}
        \PE^{1/2}[\norm{D_{t, 1} - D_{t, 1}^{(s, h, c)}}^2] &= \PE^{1/2}[\norm{\step_t^{-1/2} \Sigma_t^{-1/2} \left(\prod_{i = 1}^t  \Gammaavg_i - \prod_{i = 1}^t \Gammaavgpert{s, h, c}_i \right) (\theta_0 - \thetalim)}^2] \\
        &\leq \bConst{\Sigma} \step_t^{-1/2} \PE^{1/2}[\norm{\left( \prod_{i = 1}^t \Gammaavg_i - \prod_{i = 1}^t \Gammaavgpert{s, h, c}_i \right) (\theta_0 - \thetalim)}^2]\\
        &\leq \bConst{\Sigma} \step_t^{-1/2} \PE^{1/2}[\norm{ \sum \limits_{\ell = 1}^{t} \prod_{i = 1}^{\ell - 1} \Gammaavg_i \cdot (\Gammaavg_\ell - \Gammaavgpert{s, h, c}_\ell) \prod_{i = \ell + 1}^{t} \Gammaavgpert{s, h, c}_i  (\theta_0 - \thetalim)}^2] \\
        &\leq \rme\bConst{\Sigma}\step_t^{-1/2} \exp{\left( -a \sum \limits_{i = 1}^{t} \step_i \nlupdates_i  \right)} \sum \limits_{\ell = 1}^{t} \PE^{1/2}[\norm{ \left( \Gammaavg_{\ell} - \Gammaavgpert{s, h, c}_\ell \right) (\theta_0 - \thetalim) }^2]\\
        &= \rme\bConst{\Sigma}\nagent^{-2}\step_t^{-1/2} \exp{\left( -a \sum \limits_{i = 1}^{t} \step_i \nlupdates_i  \right)} \PE^{1/2}[\norm{ \left( \Gamma_{s,\nlupdates_s}^c - \Gamma_{s,\nlupdates_s}^{c,(s, h, c)} \right) (\theta_0 - \thetalim) }^2] \eqsp.
    \end{align*}

According to \Cref{lemma: gamma-diff-with-copy}, we have
    \begin{align*}
        \PE^{1/2}[\norm{D_{t, 1} - D_{t, 1}^{(s, h, c)}}^2] &\leq 4\rme\bConst{\Sigma} \nagent^{-1/2}\step_t^{-1/2}\step_s \exp{\left( -a \sum_{\ell=1}^t\step_\ell \nlupdates_\ell \right)} \sqrt{\mathrm{Tr}(\noisecovA[c])} \norm{\theta_0 - \thetalim} \\
        &\leq 4\rme\bConst{\Sigma} \nagent^{-1/2} \step_t^{-1/2}\step_s \exp{\left( -{a \step\nlupdates \over 1-\gamma} (1+t)^{1-\gamma} \right)} \sqrt{\mathrm{Tr}(\noisecovA[c])} \norm{\theta_0 - \thetalim} \eqsp.
    \end{align*}
    Thus, for $R_{3,1}$, we obtain
    \begin{align}
        \nonumber R_{3,1} &\leq 4\rme\bConst{\Sigma}^2 \step_t^{-1}\exp{\left( -{a \step\nlupdates \over 1-\gamma} (1+t)^{1-\gamma} \right)} \sum_{s=1}^t \step_s^2\nlupdates_s \exp{\left( -a\sum_{\ell=s+1}^t \step_\ell \nlupdates_\ell \right)}(\varboundA \supconsteps + \zeta_4\bConst{A})\norm{\theta_0 - \thetalim}\\
        \nonumber &\leq 4\rme(2\step \nlupdates + L a^{-1}(1 + (a\step\nlupdates)^{3\gamma_\step \over 2(1 -\gamma)}))\bConst{\Sigma}^2 \exp{\left( -{a \step\nlupdates \over 1-\gamma} (1+t)^{1-\gamma} \right)} (\varboundA \supconsteps + \zeta_4\bConst{A})\norm{\theta_0 - \thetalim}\\
        &= \Auxconst_1\bConst{\Sigma}^2 \exp{\left( -{a \step\nlupdates \over 1-\gamma} (1+t)^{1-\gamma} \right)} \eqsp, \label{eq:bound_R_3_1}
    \end{align}
    where we have set
    \begin{align*}
        \Auxconst_{1} = 8\rme(\step \nlupdates + L a^{-1}(1 + (a\step\nlupdates)^{3\gamma_\step \over 2(1 -\gamma)}))(\varboundA \supconsteps + \zeta_4\bConst{A})\norm{\theta_0 - \thetalim} \eqsp.
    \end{align*}

    \textbf{Perturbations of $D_{t, 2}$.}
    Now, we can proceed with the perturbations. For the perturbations of $D_{t,2}$, we have
    \begin{align*}
        \PE[\norm{D_{t,2} - D_{t,2}^{(s,h,c)}}^2] \leq 2 \bConst{A}^2 \bConst{\Sigma}^2 \nagent^{-1} \step_t^{-1} \step_s^4\nlupdates_s^3 \exp{\left( -2a\sum_{\ell=s+1}^t \step_\ell \nlupdates_\ell \right)} \mathrm{Tr}(\noisecovA[c])\norm{\thetalim[c] -\thetalim}^2 \eqsp,
    \end{align*}
    since $R_{s'}^{c'} - R_{s'}^{c', (s,h,c)} = 0$ for $c'\neq c$ and $s' \neq s$. This leads to the following estimation in the final bound
    \begin{align}
        \nonumber R_{3,2} &\leq 2\bConst{A}\bConst{\Sigma}^2 \step_t^{-1} \sum_{s=1}^t \step_s^3 \nlupdates_s^{5/2} \exp{\left( -2a\sum_{\ell=s+1}^t \step_\ell \nlupdates_\ell \right)} \zeta_4 (\| \varepsilon \|_\infty + \bConst{A} \zeta_3)\\
        &\leq \Auxconst_2 \bConst{\Sigma}^2 (1+t)^{-(3\gamma - \gamma_\step)/2}\eqsp, \label{eq:bound_R_3_2}
    \end{align}
    where we also used the Cauchy-Schwartz inequality to bound $\nagent^{-1}\sum\limits_{c=1}^\nagent \sqrt{\mathrm{Tr}(\noisecovA[c])}\norm{\thetalim[c] - \thetalim}(\| \varepsilon \|_\infty + \bConst{A}\norm{\thetalim[c] -\thetalim})$ and defined 
    \begin{align*}
        \Auxconst_2 = 8\bConst{A}(\step^2 \nlupdates^{5/2} + La^{-1}(1+ (2a\step\nlupdates)^{{5\over 2}+{\gamma_\step \over 2\gamma}}))\zeta_4(\supconsteps + \bConst{A} \zeta_3) \eqsp.
    \end{align*}

    \textbf{Perturbations of $D_{t, 3}$.}
    
    For the perturbations of $D_{t,3}$, we have the decomposition $D_{t,3} - D_{t,3}^{(s,h,c)} = D_{t,3}^1 + D_{t,3}^2$, where we define
    \begin{align*}
        D_{t,3}^1 &= \step_t^{-1/2}\Sigma_t^{-1/2}\sum_{s'=1}^t (\prod_{i=s'+1}^t \Gammaavg_i - \prod_{i=s'+1}^t \Gavg_i) (\Deltaavg_{s'} - \Deltaavgpert{s,h,c}_{s'})\\
        D_{t,3}^2 &= \step_t^{-1/2}\Sigma_t^{-1/2} \sum_{s'=1}^t(\prod_{i=s'+1}^t\Gammaavg_i - \prod_{i=s'+1}^t \Gammaavgpert{s,h,c}_i) \Deltaavgpert{s,h,c}_{s'} \eqsp.
    \end{align*}
    We start with estimating $\PE^{1/2}[\norm{D_{t, 3}^{1}}^2]$. As $\Deltaavg_{s'} = \Deltaavgpert{s, h, c}_{s'}$ for $s' \neq s$, we have
    \begin{align*}
        \PE[\norm{ D_{t, 3}^1}^2] &\leq \bConst{\Sigma}^2\step_t^{-1}  \PE[\norm{\left( \prod_{i = s + 1}^t \Gammaavg_i - \prod_{i = s + 1}^t \Gavg_i \right) (\Deltaavg_s - \Deltaavgpert{s, h, c}_s)}^2] \\
        &\leq \bConst{\Sigma}^2 \step_{t}^{-1} \PE[\norm{\prod_{i=s+1}^t\Gammaavg_i - \prod_{i=s+1}^t \Gavg_i}^2] \PE[\norm{\Deltaavg_s - \Deltaavgpert{s,h,c}_s}^2] \eqsp,
    \end{align*}
    where we have noticed that $\prod\limits_{i=s+1}^t\Gammaavg_\ell - \prod\limits_{i=s+1}^t\Gavg_\ell$ and $\Deltaavg_s - \Deltaavgpert{s, h, c}_s$ are independent. Now, applying \Cref{lemma: bound_diff_avg_matrices_prod} and \Cref{lemma: bound_diff_delta_avg}, we obtain
    \begin{align*}
        \PE[\norm{ D_{t, 3}^1}^2] &\leq 2a^{-1} \varboundA^2 \bConst{\Sigma}^2  \step_t^{-1} \step_s \exp{\left( -a\sum_{\ell=s+1}^t \step_\ell \nlupdates_\ell \right)}\PE[\norm{\Deltaavg_s - \Deltaavgpert{s,h,c}_s}^2] \\
        &\leq \Auxconst_{3,1}^2\bConst{\Sigma}^2\nagent^{-2} \step_t^{-1} \step_s^3 \exp{\left( -a\sum_{\ell=s+1}^t \step_\ell \nlupdates_\ell \right)} \mathrm{Tr}(\noisecovA[c])\norm{\thetalim[c] - \thetalim}^2\eqsp.
    \end{align*}
    where we have set 
    \begin{equation*}
        \Auxconst_{3,1}^2 = 32a^{-1}\varboundA^2  \eqsp.
    \end{equation*}
    For the second term, applying \Cref{lemma: bound_diff_avg_gamma_prod}, using the martingale-difference structure of the sequence $\{ \Big(\prod_{i=s+1}^t \Gammaavg_i - \prod_{i=s+1}^t \Gammaavgpert{s,h,c}_i \Big) \Deltaavg_{s'}, 1 \leq s' \leq t\}$, \eqref{eq:bound_Deltaavg_j} and \Cref{lemma: legendary-sum} together with \eqref{eq:bound_Deltaavg_j}, we get
    \begin{align*}
        \PE[\norm{D_{t,3}^2}^2] &\leq \bConst{\Sigma}^2 \step_t^{-1} \sum_{s'=1}^{s-1}\PE[\norm{\prod_{i=s'+1}^t \Gammaavg_i - \prod_{i=s'+1}^t \Gammaavgpert{s,h,c}_i}^2] \PE[\norm{\Deltaavgpert{s,h,c}_{s'}}^2]\\
        &\leq 4\rme \bConst{\Sigma}^2 \nagent^{-1} \step_t^{-1} \step_{s}^2 \sum_{s'=1}^{s-1} \PE[\norm{\Deltaavg_{s'}}^2] \exp{\left( -2a \sum_{\ell=s'+1}^t \step_\ell \nlupdates_\ell \right)} \mathrm{Tr}(\noisecovA[c])\\
        &\leq 8\rme \varhet^2 \bConst{\Sigma}^2 \nagent^{-2} \step_t^{-1} \step_{s}^2 \sum_{s'=1}^{s-1} \step_{s'}^2\nlupdates_{s'} \exp{\left( -2a \sum_{\ell=s'+1}^t \step_\ell \nlupdates_\ell \right)}\mathrm{Tr}(\noisecovA[c]) \\
        &+ 4\rme \bConst{\Sigma}^2 \zeta_1^2 \zeta_2^2 \nagent^{-1} \step_t^{-1} \step_{s}^2 \sum_{s'=1}^{s-1} \step_{s'}^4 \nlupdates_{s'}^4 \exp{\left( -2a \sum_{\ell=s'+1}^t \step_\ell \nlupdates_\ell \right)} \mathrm{Tr}(\noisecovA[c])\\
        &\leq 8\rme\bConst{A}^2 \bConst{\Sigma}^2 \nagent^{-2} \step_t^{-1} \step_{s}^2 \exp{\left( -2a \sum_{\ell=s+1}^t \step_\ell \nlupdates_\ell \right)} \mathrm{Tr}(\noisecovA[c]) \sum_{s'=1}^{s-1} \step_{s'}^2\nlupdates_{s'} \exp{\left( -2a \sum_{\ell=s'+1}^s \step_\ell \nlupdates_\ell \right)}\\
        &+ 4\rme \zeta_1^2 \zeta_2^2 \bConst{\Sigma}^2 \nagent^{-1} \step_t^{-1} \step_{s}^2 \exp{\left( -2a \sum_{\ell=s+1}^t \step_\ell \nlupdates_\ell \right)} \mathrm{Tr}(\noisecovA[c]) \sum_{s'=1}^{s-1} \step_{s'}^4\nlupdates_{s'}^4 \exp{\left( -2a \sum_{\ell=s'+1}^s \step_\ell \nlupdates_\ell \right)} \\
        &\leq 32 \rme a^{-1}\bConst{A}^2 \bConst{\Sigma}^2 \nagent^{-2} \step_t^{-1} \step_{s}^3 \exp{\left( -2a \sum_{\ell=s+1}^t \step_\ell \nlupdates_\ell \right)} \mathrm{Tr}(\noisecovA[c])\\
        &+ 256\rme a^{-1}\zeta_1^2 \zeta_2^2 \bConst{\Sigma}^2  \nagent^{-1} (\step\nlupdates)^3 \step_t^{-1} \step_{s}^2 (1+s)^{-3\gamma} \exp{\left( -2a \sum_{\ell=s+1}^t \step_\ell \nlupdates_\ell \right)} \mathrm{Tr}(\noisecovA[c]) \eqsp,
        % &\leq \Auxconst_{3,2}^2 \bConst{\Sigma}^2\nagent^{-2} \step^2 \nlupdates^2 \step_t^{-1} \step_s^2 \exp{\left( -2a \sum_{\ell=s+1}^t \step_\ell \nlupdates_\ell \right)} \mathrm{Tr}(\noisecovA[c]) \eqsp,
    \end{align*}
    where we used that $\gamma > 1/2$ for the second term. Therefore, we have
    \begin{align*}
        \PE^{1/2}[\norm{D_{t,3} - D_{t,3}^{(s,h,c)}}^2] &\leq (\Auxconst_{3,2} + \Auxconst_{3,1}\norm{\thetalim[c] - \thetalim})\bConst{\Sigma} \nagent^{-1} \step_t^{-1/2} \step_s^{3/2} \exp{\left( -a\sum_{\ell=s+1}^t \step_\ell \nlupdates_\ell \right)} \sqrt{\mathrm{Tr}(\noisecovA[c])}\\
        &+ \Auxconst_{3,3}\bConst{\Sigma} \nagent^{-1/2} (\step \nlupdates)^{3/2}\step_t^{-1/2} \step_s (1+s)^{-3\gamma/2}\exp{\left( -a \sum_{\ell=s+1}^t \step_\ell \nlupdates_\ell \right)}\sqrt{\mathrm{Tr}(\noisecovA[c])} \eqsp,
    \end{align*}
    where $\Auxconst_{3,2}^2 = 32\rme a^{-1} \bConst{A}^2$ and $\Auxconst_{3,3}^2 = 256\rme a^{-1}\zeta_1^2 \zeta_2^2$.
    Now, using this bound and applying \Cref{lemma: legendary-sum}, we get
    \begin{align}
        \nonumber R_{3,3} &\leq 4(\Auxconst_{3,2}^{1/2}\varbound + \Auxconst_{3,1}^{1/2}\zeta_4) \bConst{\Sigma}^2\nagent^{-1/2}\step_t^{-1} \sum_{s=1}^t \step_s^{5/2}\nlupdates_s \exp{\left( -2a\sum_{\ell=s+1}^t \step_\ell \nlupdates_\ell \right)} (\supconsteps + \bConst{A} \zeta_3)\\
        \nonumber &+ \Auxconst_{3,3}  \bConst{\Sigma}^2 (\step \nlupdates)^{3/2}\step_t^{-1}\sum_{s=1}^t \step_s^2 \nlupdates_s (1+s)^{-3\gamma/2} \exp{\left( -2a\sum_{\ell=s+1}^t \step_\ell \nlupdates_\ell \right)} (\varbound \supconsteps + \bConst{A} \zeta_4) \\
        &\leq \Auxconst_{3,4}\bConst{\Sigma}^2\nagent^{-1/2} (1+t)^{-\gamma_\step/2} + \Auxconst_{3,5} \bConst{\Sigma}^2\nagent^{-1} (1+t)^{-3\gamma/2} \eqsp, \label{eq:bound_R_3_3}
    \end{align}
    where we have set
    \begin{align*}
        \Auxconst_{3,4} &= 2(\Auxconst_{3,2}^{1/2}\varbound + \Auxconst_{3,1}^{1/2}\zeta_4)(\step^{3/2}\nlupdates + L a^{-1}\step^{1/2}(1+(2a\step\nlupdates)^{1+{3\gamma_\step \over 2\gamma}})(\supconsteps + \bConst{A} \zeta_3) \eqsp,\\
        \Auxconst_{3,5} &= 2\Auxconst_{3,3} ((\step\nlupdates)^{5/2} + La^{-1}(\step\nlupdates)^{3/2}(1+(2a\step\nlupdates)^{5/2+\gamma_\step/\gamma})(\varbound\supconsteps + \bConst{A} \zeta_4) \eqsp.
    \end{align*}

    % \textbf{Update}. BUT, if we insead of using \Cref{lemma: legendary-sum}, will bound $\mathrm{e}^{-2 a \step \nlupdates (t - s)} \leq 1$, we get:
    % \begin{align*}
    %     R_{3, 2} \leq 16 \bConst{\Sigma}^2 \bConst{het} (1 + t)^\gamma \step \nlupdates \| \varepsilon \|_\infty \cdot (1 + t)^{1 - \gamma}
    % \end{align*}
    % \begin{remark}
    %     If we have $\step \nlupdates << (1 + t)^{\gamma - 1}$, then $R_{3, 2} << 1$.
    % \end{remark}

    \textbf{Perturbations of $D_{t, 5}$.}

    Note that for $D_{t,5}$, we have the following decomposition
    \begin{align}
        \label{eq:decomp_Dt5}
        \nonumber D_{t,5} &= \nagent^{-1} \step_t^{-1/2}\Sigma_t^{-1/2}\sum_{s',h',c'} \step_{s'} \Big(\prod_{i=s'+1}^t \Gammaavgpert{s,h,c}_i - \prod_{i=s'+1}^t \Gammaavg_i \Big)\Gamma_{s',h'+1:\nlupdates_{s'}}^{c'} \funcnoise[c']{Z_{s',h',c'}}\\
        \nonumber &+ \nagent^{-1} \step_t^{-1/2}\Sigma_t^{-1/2}\sum_{s', h', c'} \step_{s'} \Big(\prod_{i=s'+1}^t \Gavg_i - \prod_{i=s'+1}^t \Gammaavgpert{s,h,c}_i \Big) (\Gamma_{s',h'+1:\nlupdates_{s'}}^{c'} - \Gamma_{s',h'+1:\nlupdates_{s'}}^{c', (s,h,c)}) \funcnoise[c']{Z_{s',h',c'}}\\
        \nonumber &+ \nagent^{-1} \step_t^{-1/2}\step_s \Sigma_t^{-1/2} (\prod_{i=s+1}^t \Gavg_i - \prod_{i=s+1}^t \Gammaavgpert{s,h,c}_i) \Gamma_{s,h+1:\nlupdates_s}^{c, (s,h,c)} (\funcnoise[c]{Z_{s,h,c}} - \funcnoise[c]{Z_{s,h,c}'})\\
        &+ \sum_{s',h',c'} \step_{s'}(\prod_{i=s'+1}^t \Gavg_i - \prod_{i=s'+1}^t \Gammaavgpert{s,h,c}_i) \Gamma_{s',h'+1:\nlupdates_{s'}}^{c',(s,h,c)}\funcnoise[c']{Z_{s',h',c'}^{s,h,c}} = T_{5,1} + T_{5,2} + T_{5,3} + T_{5,4} \eqsp.
    \end{align}
    % \begin{align}
    % % \label{eq:decomp_Dt5}
    %     \nonumber D_{t,5} &= \nagent^{-1} \step_t^{-1/2}\Sigma_t^{-1/2}\sum_{(s',h',c')\neq(s,h,c)} \step_{s'} \Big(\prod_{i=s'+1}^t \Gammaavgpert{s,h,c}_i - \prod_{i=s'+1}^t \Gammaavg_i \Big)\Gamma_{s',h'+1:\nlupdates_{s'}}^{c'} \funcnoise[c']{Z_{s',h',c'}}\\
    %     \nonumber &+ \nagent^{-1} \step_t^{-1/2}\Sigma_t^{-1/2}\sum_{(s',h',c')\neq(s,h,c)} \step_{s'} \Big(\prod_{i=s'+1}^t \Gavg_i - \prod_{i=s'+1}^t \Gammaavgpert{s,h,c}_i \Big) (\Gamma_{s',h'+1:\nlupdates_{s'}}^{c'} - \Gamma_{s',h'+1:\nlupdates_{s'}}^{c', (s,h,c)}) \funcnoise[c']{Z_{s',h',c'}}\\
    %     \nonumber &+ \nagent^{-1} \step_t^{-1/2}\Sigma_t^{-1/2}\sum_{(s',h',c')\neq(s,h,c)} \step_{s'} \Big(\prod_{i=s'+1}^t \Gavg_i - \prod_{i=s'+1}^t \Gammaavgpert{s,h,c}_i \Big) \Gamma_{s',h'+1:\nlupdates_{s'}}^{c', (s,h,c)} \funcnoise[c']{Z_{s',h',c'}}\\
    %     \nonumber &+ \nagent^{-1} \step_t^{-1/2}\step_s \Sigma_t^{-1/2} (\prod_{i=s+1}^t \Gavg_i - \prod_{i=s+1}^t \Gammaavg_i) \Gamma_{s,h+1:\nlupdates_s}^c \{ \funcnoise[c]{Z_{s,h,c}} - \funcnoise[c]{Z_{s,h,c}'} \} \\
    %     \nonumber &+ \nagent^{-1} \step_t^{-1/2}\step_s \Sigma_t^{-1/2} (\prod_{i=s+1}^t \Gavg_i - \prod_{i=s+1}^t \Gammaavg_i) \Gamma_{s,h+1:\nlupdates_s}^c \funcnoise[c]{Z_{s,h,c}'} \\
    %     &= T_{5,1} + T_{5,2} + T_{5,3} + T_{5,4} + T_{5,5} \eqsp.
    % \end{align}
    We can see that in the perturbation $D_{t,5} - D_{t,5}^{(s,h,c)}$ there are only three terms $T_1, T_2$ and $T_3$. Thus, we have
    \begin{align*}
        \PE[\norm{D_{t,5} - D_{t,5}^{(s,h,c)}}^2] \leq 4\PE[\norm{T_{5,1}}^2] + 4\PE[\norm{T_{5,2}}^2] + 4\PE[\norm{T_{5,3}}^2] \eqsp.
    \end{align*}
    Further, we bound these terms separately. For $T_{5,1}$, using the martingale-difference structure of the sequences $\{\Big( \prod_{i=s+1}^t \Gammaavgpert{s,h,c}_i - \prod_{i=s+1}^t \Gammaavg_i \Big)\Gamma_{s,h+1:\nlupdates_s}^c \funcnoise[c]{Z_{s,h,c}}, s \in [t]\}$ and $\{\Gamma_{s,h+1:\nlupdates_s}^c \funcnoise[c]{Z_{s,h,c}}, h \in [\nlupdates_s]\}$, applying \Cref{lemma: bound_diff_avg_gamma_prod} and \Cref{lemma: legendary-sum}, we get
    \begin{align*}
        \PE[&\norm{T_{5,1}}^2] \\
        &\leq \bConst{\Sigma}^2 \nagent^{-1}\step_t^{-1} \sum_{s'=1}^t\sum_{c'=1}^\nagent \step_{s'}^2 \PE[\norm{\sum_{h'=1}^{\nlupdates_{s'}} \Big(\prod_{i=s'+1}^t \Gammaavgpert{s,h,c}_i - \prod_{i=s'+1}^t \Gammaavg_i \Big) \Gamma_{s',h'+1:\nlupdates_{s'}}^{c'} \funcnoise[c']{Z_{s',h',c'}}}^2] \\
        &\leq \bConst{\Sigma}^2 \nagent^{-1}\step_t^{-1} \sum_{s'=1}^t\sum_{c'=1}^\nagent \step_{s'}^2 \PE[\norm{\prod_{i=s'+1}^t \Gammaavgpert{s,h,c}_i - \prod_{i=s'+1}^t \Gammaavg_i}^2] \sum_{h'=1}^{\nlupdates_{s'}} \PE[\norm{\Gamma_{s',h'+1:\nlupdates_{s'}}^{c'}\funcnoise[c']{Z_{s',h',c'}}}^2]\\
        &\leq 4 \rme \varbound^2  \bConst{\Sigma}^2 \nagent^{-2}\step_t^{-1}\step_s^2 \exp{\left( -2a\sum_{\ell=s+1}^t \step_\ell \nlupdates_\ell \right)}\mathrm{Tr}(\noisecovA[c]) \sum_{s'=1}^{s-1}  \step_{s'}^2\nlupdates_{s'} \exp{\left(-2a\sum_{\ell=s'+1}^s \step_\ell \nlupdates_\ell \right)}\\
        &\leq \Auxconst_{5,1}^2 \bConst{\Sigma}^2 \nagent^{-2} \step_t^{-1} \step_s^{3} \exp{\left( -2a\sum_{\ell=s+1}^t \step_\ell \nlupdates_\ell \right)}\mathrm{Tr}(\noisecovA[c]) \eqsp,
    \end{align*}
    where
    \begin{align*}
        \Auxconst_{5,1}^2 = 32 \rme a^{-1} L \varbound^2 (1 + (2a\step \nlupdates)^{\gamma_\step + \gamma \over 1 -\gamma}) \eqsp.
    \end{align*}
    For the term $T_{5,2}$, we note that $\Gamma_{s',h'+1:\nlupdates_{s'}}^{c'} - \Gamma_{s',h'+1:\nlupdates_{s'}}^{c', (s,h,c)} = 0$ for $s' \neq s$. Hence, applying \Cref{lemma: bound_diff_avg_matrices_prod} and \Cref{lemma: gamma-diff-with-copy}, we get
    \begin{align*}
        \PE[\norm{T_{5,2}}^2] &\leq \bConst{\Sigma}^2 \nagent^{-1} \step_t^{-1} \step_s^2 \PE[\norm{\prod_{i=s+1}^t \Gavg_i - \prod_{i=s+1}^t \Gammaavg_i}^2] \sum_{h'=1}^{h-1} \PE[\norm{(\Gamma_{s,h'+1:\nlupdates_{s'}}^{c} - \Gamma_{s,h'+1:\nlupdates_{s}}^{c, (s,h,c)}) \funcnoise[c]{Z_{s,h',c}}}^2]\\
        &\leq 2a^{-1}\varboundA^2 \bConst{\Sigma}^2 \nagent^{-2} \step_t^{-1} \step_s^3 \exp{\left( -2a\sum_{\ell=s+1}^t \step_\ell \nlupdates_\ell \right)}\sum_{h'=1}^{h-1} \PE[\norm{\Gamma_{s,h'+1:\nlupdates_{s'}}^{c} - \Gamma_{s,h'+1:\nlupdates_{s}}^{c, (s,h,c)}}^2] \mathrm{Tr}(\noisecov[c])\\
        &\leq \Auxconst_{5,2}^2 \bConst{\Sigma}^2 \nagent^{-2}\step_t^{-1} \step_s^5 \nlupdates_s \exp{\left( -2a\sum_{\ell=s+1}^t \step_\ell \nlupdates_\ell \right)} \mathrm{Tr}(\noisecovA[c])\mathrm{Tr}(\noisecov[c]) \eqsp,
    \end{align*}
    where we have set $\Auxconst_{5,2}^2 = 32a^{-1}\varboundA^2$.
    % \begin{align*}
    %     \PE[\norm{T_{3,2}}^2] &\leq \nagent^{-2} \step_t^{-1} \bConst{\Sigma}^2 \sum_{(s',h',c')\neq (s,h,c)} \step_{s'}^2 \PE[\norm{\prod_{i=s'+1}^t \Gavg_i - \prod_{i=s'+1}^t \Gammaavgpert{s,h,c}_i}^2]\PE[\norm{(\Gamma_{s',h'+1:\nlupdates_{s'}}^{c'} - \Gamma_{s',h'+1:\nlupdates_{s'}}^{c',(s,h,c)})\funcnoise[c']{Z_{s',h',c'}}}^2] \\
    %     &\leq \nagent^{-3}\step^2 \nlupdates \step_t^{-1} (1+t)^{1-\gamma} \bConst{\Sigma}^2\varboundA^2\sum_{s'=1}^{t} \step_{s'}^2 \exp{(-2a\step \nlupdates (t-s'-1))} \PE[\norm{(\Gamma_{s',h'+1:\nlupdates_{s'}}^{c'} - \Gamma_{s',h'+1:\nlupdates_{s'}}^{c',(s,h,c)})\funcnoise[c']{Z_{s',h',c'}}}^2] \\
    %     &\leq  \eqsp.
    % \end{align*}
    For the term $T_{5,3}$, applying \Cref{lemma: bound_diff_avg_matrices_prod}, we obtain
    \begin{align*}
        \PE[\norm{T_{5,3}}^2] &\leq 2\bConst{\Sigma}^2 \nagent^{-1}\step_t^{-1} \step_s^2  \PE[\norm{\prod_{i=s+1}^t \Gavg_i - \prod_{i=s+1}^t \Gammaavg_i}^2] \mathrm{Tr}(\noisecov[c])\\
        &\leq \Auxconst_{5,3}^2 \bConst{\Sigma}^2 \nagent^{-2} \step_t^{-1}\step_s^3 \exp{\left( -2a \sum_{\ell=s+1}^t \step_\ell \nlupdates_\ell \right)}\mathrm{Tr}(\noisecov[c])\eqsp,
    \end{align*}
    where we define $\Auxconst_{5,3}^2 = 4 a^{-1} \varboundA^2$. Thus, combining the above bounds, we get
    \begin{align*}
        \PE[\norm{D_{t,5} - D_{t,5}^{(s,h,c)}}^2] \leq \Auxconst_{5,4}^2\bConst{\Sigma}^2 \nagent^{-2} \step_t^{-1} \step_s^3 \exp{\left( -2a\sum_{\ell=s+1}^t \step_\ell \nlupdates_\ell \right)}(1+\mathrm{Tr}(\noisecovA[c]))\mathrm{Tr}(\noisecov[c]) \eqsp,
    \end{align*}
    where we have set
    \begin{align*}
        \Auxconst_{5,4}^2 = 4\Auxconst_{5,1}^2 + 4\Auxconst_{5,2}^2 + 4\Auxconst_{5,3}^2 \eqsp.
    \end{align*}
   Thus, again applying \Cref{lemma: legendary-sum}, we get
    \begin{align}
        \nonumber R_{3,5} &\leq 2\Auxconst_{5,4}\bConst{\Sigma}^2 \nagent^{-1/2}\step_t^{-1} \sum_{s=1}^t \step_s^{5/2} \nlupdates_s \exp{\left(-2a\sum_{\ell=s+1}^t \step_\ell \nlupdates_\ell\right)} (\varbound + \varnoise)(\supconsteps + \bConst{A} \zeta_3)\\
        &\leq \Auxconst_{5,5}\bConst{\Sigma}^2\nagent^{-1/2}(1+t)^{-\gamma_\step/2} \eqsp, \label{eq:bound_R_3_5}
        % R_{3,5} &\lesssim \nagent^{-3} \step^{5/2}\nlupdates^{1/2} \step_t^{-1/2} (1 + t)^{1/2}\supconsteps \sum_{s,h,c} \step_s (1 + s)^{-\gamma} \exp{( -a \step H (t - s))} \\
        % &\lesssim \nagent^{-2} \step^{5/2}\nlupdates^{1/2} \step_t^{-1/2} (1 + t)^{1/2-\gamma}\supconsteps \eqsp.
    \end{align}
    where
    \begin{align*}
        \Auxconst_{5,5} = 4\Auxconst_{5,4} (\step^{3/2}\nlupdates + L\step^{1/2}(1+(2a\step\nlupdates)^{2\gamma+3\gamma_\step \over 1-\gamma}))(\varbound + \varnoise)(\supconsteps + \bConst{A} \zeta_3)\eqsp.
    \end{align*}

    \textbf{Perturbations of $D_{t, 6}$.}

    Firstly, we note that
    \begin{align}
        \label{eq:decomp_Dt6}
        \nonumber D_{t,6} - D_{t,6}^{(s,h,c)} &= \nagent^{-1}\step_t^{-1/2}\Sigma_t^{-1/2} \sum_{s',h',c'} \step_{s'}\prod_{i=s'+1}^t \Gavg_i (\Gamma_{s',h'+1:\nlupdates_{s'}}^{c',(s,h,c)} - \Gamma_{s',h'+1:\nlupdates_{s'}}^{c'})\funcnoise[c]{Z_{s',h',c'}}\\
        \nonumber &+ \nagent^{-1}\step_t^{-1/2}\Sigma_t^{-1/2}\step_s \prod_{i=s+1}^t \Gavg_i  (G_{s,h+1:\nlupdates_s}^c - \Gamma_{s,h+1:\nlupdates_s}^{c,(s,h,c)})(\funcnoise[c]{Z_{s,h,c}} - \funcnoise[c]{Z_{s,h,c}'}) \\
        &= T_{6,1} + T_{6,2} \eqsp.
    \end{align}
    % Using the martingale-difference structure of the sequence $\{((\Gamma_{s',h'+1:\nlupdates_{s'}}^{c',(s,h,c)} - \Gamma_{s',h'+1:\nlupdates_{s'}}^{c'})\funcnoise[c]{Z_{s',h',c'}}), h' \in [\nlupdates_{s'}]\}$ for any $s' \in [t]$ w.r.t filtration $\mathcal{F}_{s',h'}^+$, we can bound the first term, as
    % \begin{align*}
    %     \PE[\norm{T_{4,1}}^2] &= \nagent^{-2} \step_t^{-1}\bConst{\Sigma}^2 \sum_{(s',h',c')\neq (s,h,c)} \step_{s'}^2 \PE[\norm{\prod_{i=s'+1}^t \Gavg_i (\Gamma_{s',h'+1:\nlupdates_{s'}}^{c',(s,h,c)} - \Gamma_{s',h'+1:\nlupdates_{s'}}^{c'})\funcnoise[c]{Z_{s',h',c'}}}^2]\\
    %     &\leq \nagent^{-2} \step_t^{-1} \step_s^2 \exp{(-2a\step \nlupdates (t-s))} \bConst{\Sigma}^2 \mathrm{Tr}(\Sigma_{\varepsilon}^c) \sum_{h'=1}^{h-1} \PE[\norm{\Gamma_{s,h'+1:\nlupdates_{s}}^{c,(s,h,c)} - \Gamma_{s,h'+1:\nlupdates_{s}}^{c}}^2]\\
    %     &\leq 8 \nagent^{-2} \step \nlupdates \step_t^{-1} \step_s^3 \exp{(-2a\step \nlupdates (t-s))} \bConst{\Sigma}^2 \mathrm{Tr}(\Sigma_{\varepsilon}^c) \mathrm{Tr}(\noisecovA[c]) \eqsp.
    % \end{align*}
    We can see that the first term is zero for $s' \neq s$, $c' \neq c$ and $h' \geq h$. Thus, applying \Cref{lemma: gamma-diff-with-copy}, we get
    \begin{align*}
        \PE[\norm{T_{6,1}}^2] &\leq \bConst{\Sigma}^2 \nagent^{-1} \step_t^{-1} \step_s^2 \exp{\left( -2a \sum_{\ell=s+1}^t \step_\ell \nlupdates_\ell \right)} \sum_{h'=1}^{h-1}\PE[\norm{\Gamma_{s,h'+1:\nlupdates_{s}}^{c,(s,h,c)} - \Gamma_{s,h'+1:\nlupdates_{s}}^{c}}^2] \mathrm{Tr}(\noisecov[c])\\
        &\leq 16 \bConst{\Sigma}^2 \nagent^{-1} \step_t^{-1}\step_s^4 \nlupdates_s \exp{\left( -2a \sum_{\ell=s+1}^t \step_\ell \nlupdates_\ell \right)} \mathrm{Tr}(\noisecovA[c])\mathrm{Tr}(\noisecov[c]) \eqsp.
    \end{align*}
    The second term $T_{6,2}$ ca be bounded, as
    \begin{align*}
        \PE[\norm{T_{6,2}}^2] &\leq \bConst{\Sigma}^2\nagent^{-1}\step_t^{-1} \step_s^2   \exp{\left(-2a\sum_{\ell=s+1}^t \step_\ell \nlupdates_\ell\right)}\PE[\norm{G_{s,h+1:\nlupdates_s}^c - \Gamma_{s,h+1:\nlupdates_s}^c}^2] \mathrm{Tr}(\noisecov[c]) \\
        &\leq \bConst{\Sigma}^2 \nagent^{-1}\step_t^{-1} \step_s^4 \nlupdates_s  \exp{\left(-2a\sum_{\ell=s+1}^t \step_\ell \nlupdates_\ell\right)}  \mathrm{Tr}(\noisecovA[c]) \mathrm{Tr}(\noisecov[c]) \eqsp.
    \end{align*}
    Combining together the last two inequalities, we obtain
    \begin{align*}
        \PE[\norm{D_{t,6} - D_{t,6}^{(s,h,c)}}^2] \leq 34 \bConst{\Sigma}^2 \nagent^{-1}\step_t^{-1} \step_s^4 \nlupdates_s \exp{\left(-2a\sum_{\ell=s+1}^t \step_\ell \nlupdates_\ell\right)}\mathrm{Tr}(\noisecovA[c])\mathrm{Tr}(\noisecov[c]) \eqsp.
    \end{align*}
    Thus, we have
    \begin{align}
        \nonumber R_{3,6} &\leq 17\varnoise\bConst{\Sigma}^2 \step_t^{-1} \sum_{s=1}^t \step_s^3 \nlupdates_s^{3/2} \exp{\left(-2a\sum_{\ell=s+1}^t \step_\ell \nlupdates_\ell\right)} \\
        &\leq \Auxconst_6 (1+t)^{-(\gamma+\gamma_\step)/2} \eqsp, \label{eq:bound_R_3_6}
        % \nagent^{-3} \step \nlupdates^{1/2} \step_t^{-1} (1 + t)^{\gamma/2}\supconsteps \sum_{s,h,c} \step_s^{3/2}(1 + s)^{-\gamma} \exp{(- a \step H (t - s))} \\
        % &\lesssim \nagent^{-2} \step^{3/2} \nlupdates^{1/2} \step_t^{-1/2} (1+t)^{-2\gamma} \supconsteps \eqsp.
    \end{align}
    where we have set
    \begin{align*}
        \Auxconst_6 = 68\varnoise(\step^2 \nlupdates^{3/2} + La^{-1}\step \nlupdates^{1/2}(1+(2a\step\nlupdates)^{3(\gamma+\gamma_\step) \over 2(1-\gamma)})) \eqsp.
    \end{align*}

    \paragraph{Finalized Berry-Essen bound}
    We return to the decomposition \eqref{eq:shao_zheng_decomp_last_iter}. We get
    \begin{align*}
        R_1 \leq \Auxconst_{R,1} \bConst{\Sigma}^3 \nagent^{-1/2} \step^{1/2}(1+t)^{-\gamma_\step/2} \eqsp.
    \end{align*}
    where $\Auxconst_{R,1} = 259 d^{1/2}8a^{-1} \step^{1/2}$. For $R_2$, we apply \eqref{eq:bound_R2_Holder} and combine together \eqref{eq:bound_D_1}, \eqref{eq:bound_D_2}, \eqref{eq:bound_D_3}, \eqref{eq:bound_D_4}, \eqref{eq:bound_D_5}, \eqref{eq:bound_D_6}, thus we obtain
    \begin{align*}
            R_2 \leq \Auxconst_{D,1}d^{1/2}\nagent^{1/2} \exp{\left(-a\step\nlupdates t/2\right)}\norm{\theta_0 - \thetalim} &+ (\Auxconst_{D,2} + \Auxconst_{D,3,2} + \Auxconst_{D,4})d^{1/2}\nagent^{1/2}(1+t)^{-\gamma} \\
            &+ (\Auxconst_{D,3,1} + \Auxconst_{D,5} + \Auxconst_{D,6})d^{1/2}\nagent^{-1/2}(1+t)^{-\gamma_\step/2} \eqsp.
    \end{align*}
    For $R_3$, we use the bounds \eqref{eq:bound_R_3_1}, \eqref{eq:bound_R_3_2}, \eqref{eq:bound_R_3_3}, \eqref{eq:bound_R_3_5}, \eqref{eq:bound_R_3_6}, and get
    \begin{align*}
        R_3 &\leq \Auxconst_1 \exp{\left(-{a\step \nlupdates \over 1-\gamma} (1+t)^{1-\gamma}\right)}\norm{\theta_0 - \thetalim} + \Auxconst_2 (1+t)^{-(3\gamma - \gamma_\step)/2} \\
        &+ \Auxconst_{3,5}(1+t)^{-3\gamma/2} + (\Auxconst_{3,4} + \Auxconst_{5,5} + \Auxconst_6)\nagent^{-1/2}(1+t)^{-\gamma_\step/2} \eqsp,
    \end{align*}
    where $\gamma = \gamma_\step - \gamma_\nlupdates$.

\subsection{Lower bound for Gaussian approximation}
\label{appendix: lower-bound-for-gaussian-approximation}

To illustrate the inherent limitations of Gaussian approximation, 
we consider a simple $1$-dimensional linear stochastic approximation with a single agent 
and constant local updates ($\nlupdates_t = 1$ for all $t \ge 1$):
\[
\bA \thetalim = \barb, \qquad \bA = 1, \quad \barb = 0.
\]
Comparing with the general setting, here we have $\gamma_\nlupdates = 0$ and $\gamma_\step = \gamma \in (0, 1)$. The unbiased stochastic estimates are given by
\[
\mathbf{A}(Z_j) = 1 + \xi_j, \qquad \mathbf{b}(Z_j) = \barb,
\]
where $\xi_j \sim \mathcal{N}(0,1)$. Then, \textsc{FedLSA} reduces to
\begin{equation}
\theta_t = \theta_{t-1} - \step_t (\mathbf{A}(Z_t) \theta_{t-1} - \mathbf{b}(Z_t)), \quad \theta_0 = 0.
\end{equation}

Unrolling the recursion gives
\[
\theta_t = -\sum_{j=1}^t \step_j \prod_{\ell=j+1}^t (1-\step_\ell) \xi_j,
\]
and its variance under step-size scaling satisfies
\begin{align}
\label{eq: vt-definition}
\eta_t^{-1/2} \theta_t \sim \mathcal{N}(0,v_t), \quad v_t := \Var[\step_t^{-1/2} \theta_t] = \eta_t^{-1} \sum_{j=1}^t \eta_j^2 \prod_{\ell=j+1}^t (1-\eta_\ell)^2.
\end{align}

Under this setting, we can state the following results
\begin{proposition}
\label{proposition: limiting-variance-lower-bound}
    Assume that $\step_1 = 0$ and $\step_{\ell} = (1+\ell)^{-\gamma}$ with $0 < \gamma < 1$. For $1$-dimensional single agent \textsc{FedLSA} with $\bA \thetalim = \barb, \bA = 1, \barb = 0$ and $\mathbf{A}(Z_j) = 1 + \xi_j, \mathbf{b}(Z_j) = \barb$ with $\xi_j \sim \mathcal{N}(0,1)$ it holds that
    \begin{align*}
        \lim_{t \to \infty} v_t := \lim_{t \to \infty} \Var[\step_t^{-1/2} \theta_t] = \frac{1}{2} \eqsp,
    \end{align*}
    where $v_t$ is defined in \eqref{eq: vt-definition}. Moreover, for any $t \geq 4$, it can be shown that
    \begin{align*}
        \Big| v_t - \frac{1}{2} \Big| \geq \bConst{v} (t^{\gamma - 1} + t^{-\gamma}) \eqsp,
    \end{align*}
    for some positive constant $\bConst{v} > 0$ that depends only on $\gamma$.
\end{proposition}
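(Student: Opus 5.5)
The plan is to turn the closed form \eqref{eq: vt-definition} into a scalar recursion and compare it with an explicit second-order expansion of its fixed point. Set $S_t = \sum_{j=1}^t \step_j^2 \prod_{\ell=j+1}^t (1-\step_\ell)^2$, so that $S_t = (1-\step_t)^2 S_{t-1} + \step_t^2$ and $v_t = S_t/\step_t$. Introduce $\delta_t := \step_{t-1}/\step_t - 1 = ((1+t)/t)^{\gamma}-1$, which satisfies $\delta_t = \gamma t^{-1} + O(t^{-2})$. The recursion for $v_t$ is then
\[
v_t = (1-\step_t)^2 (1+\delta_t)\, v_{t-1} + \step_t \eqsp.
\]

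\emph{Step 1 (heuristic fixed point).} Freeze $\step_t,\delta_t$ and solve $v = (1-\step)^2(1+\delta)v + \step$. This gives $v = \step/(2\step - \step^2 - \delta + O(\step\delta))$, and expanding yields the candidate
\[
w_t := \tfrac12 + \tfrac{\step_t}{4} + \tfrac{\delta_t}{4\step_t} \eqsp.
\]
Both correction terms are positive. They are of exact order $t^{-\gamma}$ and $\gamma t^{\gamma-1}/4$, respectively. Hence $w_t - \tfrac12 \ge c_\gamma (t^{-\gamma} + t^{\gamma-1})$, and in particular $w_t \to 1/2$, which will give $\sigma_\infty^2 = 1/2$ once Step 2 is done.

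\emph{Step 2 (error recursion).} Let $e_t = v_t - w_t$. Then
\[
e_t = (1-\step_t)^2(1+\delta_t) e_{t-1} + r_t \eqsp,
\]
where $r_t$ collects two contributions. The first is the defect of $w$ as an exact fixed point, of order $\step_t^3 + \step_t\delta_t + \delta_t^2/\step_t$. The second is the drift $w_{t-1}-w_t$, of order $t^{-\gamma-1} + t^{\gamma-2}$. The contraction factor satisfies $(1-\step_t)^2(1+\delta_t) \le 1 - \step_t$ for $t$ large, because $\delta_t = o(\step_t)$ when $\gamma<1$. A standard discrete Gronwall argument, of the same flavour as \Cref{lemma: legendary-sum}, then gives
\[
|e_t| \lesssim \sup_{s \le t} r_s/\step_s + \text{(exponentially small transient)} \eqsp.
\]
This supremum is of order $t^{-2\gamma} + t^{-1} + t^{2\gamma - 2} + t^{-1}$, which is $o(t^{-\gamma} + t^{\gamma-1})$. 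Consequently, for $t \ge t_\gamma$, we get $v_t - \tfrac12 \ge \tfrac{c_\gamma}{2}(t^{-\gamma}+t^{\gamma-1})$, and $v_t\to 1/2$.

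\emph{Step 3 (all $t\ge 4$).} The remaining finitely many $4 \le t < t_\gamma$ only require $v_t \neq 1/2$, after which $\bConst{v}$ is shrunk by taking a minimum. I would show $v_t > 1/2$ by induction. If $v_{t-1}\ge 1/2$, then since $\delta_t \ge 0$,
\[
v_t \ge \tfrac12 (1-\step_t)^2 + \step_t = \tfrac12 + \tfrac{\step_t^2}{2} > \tfrac12 \eqsp.
\]
The base case (using $\step_1=0$, so $v_2 = \step_2$, then iterating to $t=4$) has to be checked by hand. If it fails for some $\gamma$, I would instead start the induction from the first index where $v_t\ge1/2$ and control the earlier indices directly.

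I expect the main obstacle to be Step 2. The difficulty is bookkeeping all remainder terms uniformly in $\gamma\in(0,1)$: near $\gamma=1/2$ the two main terms balance, near $\gamma\to 0$ or $1$ the remainder exponents ($2\gamma-2$ versus $\gamma-1$, and $-2\gamma$ versus $-\gamma$) approach the main ones, and this is why the threshold $t_\gamma$ and the constant $\bConst{v}$ must depend on $\gamma$.
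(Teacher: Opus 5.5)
Your argument is correct, and it takes a genuinely different route from the paper.

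The paper never passes to the recursion for $v_t$. It uses the exact identity $\eta_j^2=\tfrac12\eta_j\bigl(1-(1-\eta_j)^2\bigr)+\tfrac12\eta_j^3$ and Abel summation to write
$v_t-\tfrac12=\tfrac{1}{2\eta_t}\bigl[\sum_{j<t}(\eta_j-\eta_{j+1})P_{j,t}+\sum_{j\le t}\eta_j^3P_{j,t}\bigr]$,
up to exponentially small boundary terms, with $P_{j,t}=\prod_{\ell=j+1}^t(1-\eta_\ell)^2$. It then lower-bounds the two sums by integrals and Laplace's method, obtaining orders $t^{-1}$ and $t^{-2\gamma}$. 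After division by $2\eta_t$, these are to leading order exactly your two corrections $\delta_t/(4\eta_t)$ and $\eta_t/4$.

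What each route buys:
\begin{itemize}
\item The paper's route makes the sign of $v_t-\tfrac12$ visible term by term and needs only one-sided estimates. For the same reason it does not by itself prove $v_t\to\tfrac12$; that requires matching upper bounds on the two sums, which the paper does not write out.
\item Your fixed-point ansatz $w_t$ with a contracting error recursion is two-sided. It gives the limit and the sharp constants $\tfrac14$ and $\tfrac{\gamma}{4}$ at once. The price is a threshold $t_\gamma$ and a separate positivity argument for small $t$.
\end{itemize}

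Two points to tighten.

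First, the Gronwall bound as you wrote it, with a supremum over all $s\le t$, does not decay: the early indices give an $O(1)$ contribution. What you need is $|e_t|\lesssim\max_{t/2\le s\le t}|r_s|/\eta_s+O(e^{-ct^{1-\gamma}})$. The indices $s<t/2$ are killed by $\prod_{t/2<\ell\le t}\kappa_\ell\le e^{-ct^{1-\gamma}}$. The indices $s\ge t/2$ are handled by $\kappa_\ell\le 1-\eta_\ell$ together with the telescoping bound $\sum_s\eta_s\prod_{\ell>s}(1-\eta_\ell)\le 1$, or equivalently by \Cref{lemma: legendary-sum}.

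Second, your base case does hold, but not at $t=2$: $v_2=\eta_2=3^{-\gamma}<\tfrac12$ as soon as $\gamma>\log 2/\log 3$. Start instead at $t=3$, where $v_3=(1-4^{-\gamma})^2(4/9)^\gamma+4^{-\gamma}$.
\begin{itemize}
\item With $u=4^{-\gamma}\in(\tfrac14,1)$ and $c=\log_4(9/4)\approx 0.585$, this reads $f(u)=(1-u)^2u^{c}+u$.
\item At the endpoint, $f(\tfrac14)=\tfrac{9}{16}\cdot\tfrac49+\tfrac14=\tfrac12$.
\item On $[0,1]$ one has $f'(u)\ge 1-2(1-u)u^{c}$, and $(1-u)u^{c}\le 0.36$, so $f'>0$.
\end{itemize}
Hence $v_3>\tfrac12$ for every $\gamma\in(0,1)$, with equality only in the excluded case $\gamma=1$. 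Your induction, which uses $\delta_t\ge 0$ for $t\ge 3$, then gives $v_t>\tfrac12$ for all $t\ge 3$. That is exactly what you need to shrink $\bConst{v}$ over the finitely many $4\le t<t_\gamma$.
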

From \Cref{proposition: limiting-variance-lower-bound} it follows that the limiting variance of $v_t$ is $\sigma_\infty^2 = \frac{1}{2}$. Moreover, \Cref{proposition: limiting-variance-lower-bound} gives us the lower bound for $|v_t - \sigma_\infty^2| = |v_t - \frac{1}{2}| \geq \bConst{v} (t^{\gamma - 1} + t^{-\gamma})$. Consequently, by \citet[Theorem 1.1]{Devroye2018}, the convex distance between $\step_t^{-1/2}(\theta_t - \thetalim)$ and limiting distribution $\mathcal{N}(0, \sigma_\infty^2)$ satisfies
\begin{align*}
    \kolmogorov\big(\step_t^{-1/2} (\theta_t - \thetalim), \mathcal{N}(0, \sigma_\infty^2)\big) \gtrsim t^{\gamma - 1} + t^{-\gamma},
\end{align*}
and the statement follows.

\begin{proof}[Proof of \Cref{proposition: limiting-variance-lower-bound}]
    We begin by noting that
    \begin{align*}
        \step_j^2 = {1 \over 2}\step_j(1-(1-\step_j)^2) + {1\over 2}\step_j^3 \eqsp.
    \end{align*}
    Denoting $P_{j,t} := \prod_{\ell=j+1}^t (1-\step_\ell)^2$, we can write
    \begin{align*}
        \step_t^{-1} v_t &= \frac{1}{2}\sum_{j=1}^t \step_j (P_{j,t} - P_{j-1,t}) + \frac{1}{2} \sum_{j=1}^t \step_j^3 P_{j,t} \\
        &= \frac{1}{2} S_{t,1} + \frac{1}{2} S_{t,2} \eqsp.
    \end{align*}
    We first consider the term $S_{t,1}$. By applying Abel's summation by parts, we obtain
    \begin{align*}
        \sum_{j=1}^t \step_j (P_{j,t} - P_{j-1,t}) 
        &= \step_t P_{t,t} - \step_1 P_{0,t} + \sum_{j=1}^{t-1} (\step_j - \step_{j+1}) P_{j,t} \\
        &= \step_t + \sum_{j=1}^{t-1} (\step_j - \step_{j+1}) P_{j,t} - \step_1 P_{0,t} \eqsp.
    \end{align*}
    Using the inequalities $(j+1)^{-\gamma} - (j+2)^{-\gamma} \geq \frac{\gamma}{2} (j+1)^{-\gamma-1}$ and $\log(1-x) \geq -4x$ for $x \in [0,1/2]$, we have
    \begin{align*}
        \sum_{j=1}^{t-1} (\step_j - \step_{j+1}) P_{j,t} 
        &\geq \frac{\gamma}{2} \sum_{j=1}^{t-1} (j+1)^{-\gamma-1} \exp\Big(-4 \sum_{\ell=j+1}^t \step_\ell\Big) \\
        &\geq \frac{\gamma}{2} \sum_{j=1}^{t-1} (j+1)^{-\gamma-1} \exp\Big( -\frac{4}{1-\gamma} \big( t^{1-\gamma} - (j+1)^{1-\gamma} \big) \Big) \\
        &= \frac{\gamma}{2} \exp\Big( -\frac{4}{1-\gamma} t^{1-\gamma} \Big) \sum_{j=1}^{t-1} (j+1)^{-\gamma-1} \exp\Big( \frac{4}{1-\gamma} (j+1)^{1-\gamma} \Big) \eqsp.
    \end{align*}
    Since the function $f(x) = (x+1)^{-\gamma-1} \exp\big(\frac{4}{1-\gamma}(x+1)^{1-\gamma}\big)$ is strictly increasing for $x \geq 1$, we can bound the sum by the corresponding integral:
    \begin{align*}
        \sum_{j=1}^{t-1} (\step_j - \step_{j+1})P_{j,t} &\geq \frac{\gamma}{2} \exp{\left( -\frac{4}{1 - \gamma} t^{1 - \gamma} \right)} \int_{1}^{t - 1} (x + 1)^{-\gamma - 1} \exp{\left( \frac{4}{1 - \gamma}(x + 1)^{1 - \gamma} \right)} \rmd x \\
        &= \frac{\gamma}{2} t^{-\gamma} \exp{\left( -\frac{4}{1 - \gamma} t^{1 - \gamma} \right)} \int_{2/t}^{1} u^{-(1 + \gamma)} \exp{\left( \frac{4}{1 - \gamma} u^{1 - \gamma} t^{1 - \gamma} \right)} \rmd u\\
        &\geq \frac{\gamma}{2} t^{-\gamma} \exp{\left( -\frac{4}{1 - \gamma} t^{1 - \gamma} \right)} \int_{1/2}^{1} u^{-(1 + \gamma)} \exp{\left( \frac{4}{1 - \gamma} u^{1 - \gamma} t^{1 - \gamma} \right)} \rmd u \eqsp.
    \end{align*}
    For $t \geq 4$ we may estimate asymptotic behaviour of the integral using Laplace approximation introduced in, for instance, \citet[Theorem 1.2]{fedoryuk1977metod} and \cite{olver1997asymptotics}:
    \begin{align*}
        \frac{\gamma}{2} t^{-\gamma} \int_{1/2}^{1} u^{-(1 + \gamma)} \exp{\left( \frac{4}{1 - \gamma} u^{1 - \gamma} t^{1 - \gamma} \right)} \rmd u = \frac{\gamma}{8} t^{-1} \left( 1 + \mathcal{O}(t^{\gamma - 1}) \right) \eqsp.
    \end{align*}
    Therefore, as $\step_1 P_{0, t} \leq \prod_{\ell = 1}^{t} (1 - \step_\ell)^{2}$ is exponentially small, we may write
    \begin{align*}
        \begin{cases}
            S_{t, 1} = \step_{t} + R_{t, 1} \eqsp, \\
            R_{t, 1} \geq \bConst{v, 1} t^{-1}
        \end{cases}
    \end{align*}
    for some positive constant $\bConst{v,1}$ depending only on $\gamma$.

    Next, we consider $S_{t,2}$. Following a similar approach as for $S_{t,1}$ and approximating the sum with an integral, we have
    \begin{align*}
        S_{t, 2} &= \sum \limits_{j = 1}^{t} \step_{j}^3 P_{j, t} \geq \sum \limits_{j = 1}^{t} (1 + j)^{-3 \gamma} \exp{\left( -4 \sum \limits_{\ell = j + 1}^{t} \step_\ell \right)} \\
        &\geq \exp{\left( -\frac{4}{1 - \gamma} t^{1 - \gamma} \right)} \sum \limits_{j = 1}^{t} (j + 1)^{-3 \gamma} \exp{\left( \frac{4}{1 - \gamma} (j + 1)^{1 - \gamma} \right)} \\
        &\geq \exp{\left( -\frac{4}{1 - \gamma} t^{1 - \gamma} \right)} \int_1^t (x + 1)^{-3 \gamma} \exp{\left( \frac{4}{1 - \gamma} (x + 1)^{1 - \gamma} \right)} \\
        &= \exp{\left( -\frac{4}{1 - \gamma} t^{1 - \gamma} \right)} (t + 1)^{1 - 3 \gamma} \int_{2/(t + 1)}^1 u^{-3 \gamma} \exp{\left( \frac{4}{1 - \gamma} u^{1 - \gamma} (t + 1)^{1 - \gamma} \right)} \rmd u  \\
        &\geq \exp{\left( -\frac{4}{1 - \gamma} t^{1 - \gamma} \right)} (t + 1)^{1 - 3 \gamma} \int_{1/2}^1 u^{-3 \gamma} \exp{\left( \frac{4}{1 - \gamma} u^{1 - \gamma} (t + 1)^{1 - \gamma} \right)} \rmd u \eqsp.
    \end{align*}
    Here we also use the fact that function $f(x) = (x + 1)^{-3 \gamma} \exp{\left( \frac{4}{1 - \gamma} \right)}$ is strictly increasing on $[1, \infty)$. Now we determine the asymptotic behaviour of the following integral using Laplace approximation
    \begin{align*}
        \exp{\left( -\frac{4}{1 - \gamma} t^{1 - \gamma} \right)} (t + 1)^{1 - 3 \gamma} \int_{1/2}^1 u^{-3 \gamma} \exp{\left( \frac{4}{1 - \gamma} u^{1 - \gamma} (t + 1)^{1 - \gamma} \right)} \rmd u = \frac{1}{4} t^{-2 \gamma} (1 + \mathcal{O}(t^{\gamma - 1})) \eqsp.
    \end{align*}
    Finally, taking $\bConst{v} := \min(\bConst{v,1}, \bConst{v,2})$ yields the desired lower bound
    \begin{align*}
        \Big| v_t - \frac{1}{2} \Big| \geq \bConst{v} \big(t^{\gamma-1} + t^{-\gamma}\big) \eqsp.
    \end{align*}
\end{proof}

%% file: appendix/fedlsa_boot.tex
\section{Multiplier bootstrap validity}

We start with the expression for $t_0$ in \Cref{assum:sample_size}.
\setcounter{assumprime}{3}
\begin{assumprime}
\label{assum:sample_size_prime}
We assume that
\begin{align*}
    t \geq t_0 = \max\left(\nagent, \rme^3, \left({4 C_{\infty,1} \over 2\lambda_{\min}(\bA)\lambda_{\min}(\Sigma_\infty)}\right)^{1/(1-\gamma)}, \left({4C_{\infty,2}\over 2\lambda_{\min}(\bA) \lambda_{\min}(\Sigma_\infty)} \right)^{1/\gamma}, n_0 \right)
\end{align*}
where $n_0$ satisfies
\begin{align*}
\begin{cases}
\left({2\Auxconst_U \step \over \nagent^3}\right)^{1/2}(1+n_0)^{-\gamma_\step/2} \sqrt{\log{(2d n_0)}} + {U_{\max}\step \over 3\nagent^2} (1+n_0)^{-\gamma_\step} \log{(2d n_0)} &\leq \nagent^{-1}\lambda_{\min}(\noisecovavgst)/2 \eqsp, \\
\bar{\Auxconst}_{M,2}^\boot \nagent^{-2} \log^6(n_0) (1+n_0)^{-\gamma_\step} + \bar{\Auxconst}_{M,3}^\boot \nagent^{-1}\log^3(n_0) (1+n_0)^{-2\gamma} &\leq \nagent^{-1}\lambda_{\min}(\noisecovavgst)/2 \eqsp,
\end{cases}
\end{align*}
for constants $\bar{\Auxconst}_{M,2}^\boot, \bar{\Auxconst}_{M,3}^\boot$ defined in \Cref{lemma:bound_Mtb} and $\Auxconst_U, U_{\max}$ from \Cref{lemma: sigma-n-norm-estimation}.
\end{assumprime}

\par 
We set the events under which we will consider the normal approximation in bootstrap world
\begin{align}
    \nonumber \Omega_1 &= \{Z\in \mathcal{Z} : \forall c\in [\nagent] \eqsp, \norm{(\Id - \step\zfuncA[c]{Z})u} \leq \rme(1-\step a)\norm{u}\} \eqsp,\\
    \nonumber \Omega_2 &= \{(\PEb[\norm{D_t^\boot}^p])^{1/p} \leq \bar{\Auxconst}_1^\boot p^6 \bConst{\Sigma^\boot} \nagent^{-1}(1+t)^{1/p-\gamma_\step/2} + \bar{\Auxconst}_2^\boot p^4 \bConst{\Sigma^\boot} \nagent^{-1/2}(1+t)^{1/p-\gamma}\}\eqsp, \\
    \nonumber \Omega_3 &= \{\norm{\Sigma_t - \Sigma_t^\boot} \lesssim_{\log_t} \nagent^{-3/2} t^{-\gamma_\step/2} + \nagent^{-1}t^{-2\gamma}\}\\
    \Omega_0 &= \Omega_1 \cap \Omega_2 \cap \Omega_3 \eqsp. \label{eq:Omega_def}
\end{align}
In addition, throughout this section we use the notation
\begin{align*}
    \mathcal{F}_{s,h}^{\boot,+} = \sigma(w_{t,k,c}, Z_{t,k,c} \mid  t > s \text{ or } t=s \text{ and } k\geq h)\eqsp, \quad \mathcal{F}_{s,h}^{\boot,-} = \sigma(w_{t,k,c}, Z_{t,k,c} \mid t < s \text{ or } t = s \text{ and } k \leq h) \eqsp.
\end{align*}
We start with the definition of the bootstrap procedure. We consider a set of random variables $\mathcal{W}^t = \{w_{s,h,c}, s \in [t], h \in [\nlupdates_s], c \in [\nagent]\}$, where $w_{s,h,c}$ are i.i.d and independent of $\mathcal{Z}^t = \{Z_{s,h,c}, s \in [t], h\in [\nlupdates_s], c \in [\nagent]\}$, with $\PE[w_{1,1,1}] = 1$, $\Var[w_{1,1,1}] = 1$, and for any $p> 2$, we set $\PE^{1/p}[|w_{s,h,c} - 1|^p] = m_p < \infty$. Then, we denote $\PPb= \PP(\cdot | \mathcal{Z}^t)$ and $\PEb = \PE(\cdot | \mathcal{Z}^t)$. According to multiplier bootstrap procedure, we slightly modify the iterations \eqref{eq:fedlsa_iter_def_main} introducing multiplicative noise, that is,
\begin{align}
    \label{eq:fedlsa_boot_iter_def}
    \theta_{t,h}^{\boot,c} = \theta_{t,h-1}^{\boot,c} - \step_t w_{t,h,c}(\zfuncA[c]{Z_{t,h,c}}\theta_{t,h-1}^{\boot,c} - \zfuncb[c]{Z_{t,h,c}}) \eqsp.
\end{align}

\subsection{Proof of the Theorem~\ref{thm:bootstrap_validity}}
\label{appendix: multiplier-bootstrap-analysis}

\begin{proof}
The goal of the \Cref{thm:bootstrap_validity} is to estimate the quantity
\begin{align*}
    \sup_{B \in \text{Conv}(\mathbb{R}^d)} \left| \PPb(\step_t^{-1/2}(\theta_t^\boot - \theta_t) \in B) - \PP(\step_t^{-1/2}(\theta_t - \thetalim) \in B) \right| \eqsp.
\end{align*}
From now, we restrict ourselves to the event $\Omega_0$. Restricting to this
event, we obtain that, with triangle inequality
\begin{align*}
    \sup_{B \in \text{Conv}(\mathbb{R}^d)} &\left| \PPb(\step_t^{-1/2}(\theta_t^\boot - \theta_t) \in B) - \PP(\step_t^{-1/2}(\theta_t - \thetalim) \in B) \right|\\
    &\leq \kolmogorovboot \big( \step_t^{-1/2} (\theta_t^\boot - \theta_t), Y^\boot \big)
    +
    \kolmogorov \big( \step_t^{-1/2}(\theta_t - \thetalim), Y \big)+
    \sup_{B \in \text{Conv}(\mathbb{R}^d)}\left| \PP(Y \in B) - \PPb(Y^\boot \in B) \right|
    \eqsp.
\end{align*}
Now we control the first term using \Cref{thm:norm_approx_boot_Yb}, second one using \Cref{thm:gauss_approx_last_iter} and the third one using \Cref{lemma:gauss_comparison} together with
\begin{align*}
        \norm{\Sigma_t^{-1/2}\Sigma_t^\boot \Sigma_t^{-1/2} - \Id} \leq \norm{\Sigma_t^{-1}} \norm{\Sigma_t - \Sigma_t^\boot} \lesssim_{\log_t} \bConst{\Sigma}^2 (\nagent^{-1/2} t^{-\gamma_\step/2} + t^{-2\gamma}) \eqsp.
    \end{align*}

We show that $\PP(\Omega_0) \geq 1 - \frac{4}{t}$ in \Cref{lemma: probability-of-theta-0}.
\end{proof}

\begin{lemma}
\label{lemma: probability-of-theta-0}
Let $\Omega_0 = \Omega_1 \cap \Omega_2 \cap \Omega_3$ be the event defined in \eqref{eq:Omega_def}. Then
\begin{align*}
    \PP(\Omega_0) \geq 1 - \frac{4}{t},
\end{align*}
where $t \ge 1$ denotes the number of global iterations of \texttt{FedLSA}.
\end{lemma}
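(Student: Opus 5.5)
The plan is a union bound: $\PP(\Omega_0^c) \le \PP(\Omega_1^c) + \PP(\Omega_2^c) + \PP(\Omega_3^c)$. I will show that each of the three complements has probability at most of order $1/t$, with constants adding up to $4/t$. The natural allocation is $1/t$ for $\Omega_1$, $1/t$ for $\Omega_2$ and $2/t$ for $\Omega_3$. The factor two for $\Omega_3$ arises because the bound on $\norm{\Sigma_t - \Sigma_t^\boot}$ needs two separate concentration statements.

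\textbf{Event $\Omega_1$.} I will read this event as the almost-sure contraction property needed by the bootstrap recursion. Under \Cref{assum:noise-level-flsa} the matrices $\zfuncA[c]{Z}$ are uniformly bounded by $\bConst{A}$. By \Cref{ass: linear-decay}$(p)$ with $p = \log(\nagent t)$, the $p$-th moment of $\norm{(\Id - \step \zfuncA[c]{Z})u}$ is at most $(1-\step a)\norm{u}$. Markov's inequality at level $\rme$ then bounds the probability of violating the inequality by $\rme^{-p} = (\nagent t)^{-1}$. This holds for a single agent and a single sample; one needs a finite covering, or simply the union over the sample indices actually used. A union bound over the $\nagent$ agents yields $\PP(\Omega_1^c) \le 1/t$. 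If the event is instead meant pointwise in $u$, I will apply the argument on an $\varepsilon$-net of the sphere and absorb the $d$-dependent cardinality into the logarithm.

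\textbf{Event $\Omega_2$.} Here I will bound the unconditional moment $\PE[\norm{D_t^\boot}^p]$, obtained by taking full expectation over both data and weights. This follows from the high-order moment bounds of \Cref{theorem: last-iterate-moment-bound-estimation} and \Cref{corr:last_iter_pth_moment}, combined with Burkholder's inequality applied to the weight-driven martingale structure of $D_t^\boot$, exactly as in the $D_{t,i}$ analysis. Call the resulting bound $B_p = \bar{\Auxconst}_1^\boot p^6 \bConst{\Sigma^\boot}\nagent^{-1}(1+t)^{-\gamma_\step/2} + \dots$. Markov's inequality gives
\begin{align*}
\PP\big(\PEb[\norm{D_t^\boot}^p] > t\, B_p^p\big) \le 1/t,
\end{align*}
and taking $p$-th roots produces precisely the extra factor $(1+t)^{1/p}$ that appears in the definition of $\Omega_2$. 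Hence $\PP(\Omega_2^c)\le 1/t$.

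\textbf{Event $\Omega_3$ (main obstacle).} I will write $\Sigma_t^\boot - \Sigma_t$ as the sum of two pieces. The first is a centered sum of independent rank-one terms $\step_s^2\prod\Gavg_i\,(\funcnoise[c]{Z}\funcnoise[c]{Z}^\top - \noisecovst[c])\prod\Gavg_i^\top$, coming from the fact that, conditionally on the data, the bootstrap covariance is an empirical version of $\hat\Sigma_t$. The second piece collects the replacement of the random products $\Gamma$ by their means and the nonlinear residual terms. For the first piece I apply matrix Bernstein (as in \Cref{lemma: sigma-n-norm-estimation}, with the constants $\Auxconst_U, U_{\max}$) at confidence level $1/t$, which gives the $\nagent^{-3/2}t^{-\gamma_\step/2}\sqrt{\log(dt)}$ term. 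For the second piece I use Markov's inequality together with high-moment bounds at $p\asymp\log t$ (as in \Cref{lemma:bound_Mtb}), which yields the $\nagent^{-1}t^{-2\gamma}$ term up to polylogarithmic factors, again at level $1/t$. Together these give $\PP(\Omega_3^c)\le 2/t$. The delicate point is verifying that the variance proxy in Bernstein's inequality carries the correct powers of $\nagent$ and $\step_s$ after the $\step_t^{-1}$ normalization; here the sum estimates of \Cref{lemma: legendary-sum} will be used. Summing the three contributions gives $\PP(\Omega_0)\ge 1-4/t$.
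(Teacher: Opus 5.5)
Your proposal is essentially the paper's proof. It uses the same union bound with budget $1/t+1/t+2/t$, and each event is handled as in the paper:
\begin{itemize}
\item for $\Omega_1$, the $p$-th moment plus Markov step at $p=\log(\nagent t)$ with a union over agents (the paper invokes \Cref{lemma:high_prob_bound});
\item for $\Omega_2$, Markov applied to $\PEb[\norm{D_t^\boot}^p]$ via the unconditional bound of \Cref{prop:moment_bound_Db};
\item for $\Omega_3$, the two concentration steps (matrix Bernstein, and Markov through \Cref{lemma:bound_Mtb}) that the paper packages as \Cref{lemma:sigma_t_boot_norm_bound}.
\end{itemize}

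The one thing to drop is your aside on $\Omega_1$. A union over all sample indices (at least $\nagent t$ agent--sample pairs, each failing with probability up to $(\nagent t)^{-1}$), or over an $\varepsilon$-net in $u$, would exhaust or exceed the $1/t$ budget at $p=\log(\nagent t)$. So, as the paper does, treat $\Omega_1$ as the event for a single sample and fixed $u$, and take the union only over the $\nagent$ agents.
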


\begin{proof}
We first bound the probability of $\Omega_2^c$. By Markov's inequality,
\begin{align*}
\PP(\Omega_2^c)
&\le
\frac{\PE[\norm{D_t^\boot}^p]}
{\left(
\bar{\Auxconst}_1^\boot p^6 \bConst{\Sigma^\boot} \nagent^{-1/2}(1+t)^{1/p-\gamma_\step/2}
+
\bar{\Auxconst}_2^\boot p^4 \bConst{\Sigma^\boot}(1+t)^{1/p-\gamma}
\right)^p}
\\
&\le
\frac{\PE[\norm{D_t^\boot}^p]}
{t\left(
\bar{\Auxconst}_1^\boot p^6 \bConst{\Sigma^\boot} \nagent^{-1/2}(1+t)^{-\gamma_\step/2}
+
\bar{\Auxconst}_2^\boot p^4 \bConst{\Sigma^\boot}(1+t)^{-\gamma}
\right)^p}
\le
\frac{1}{t}.
\end{align*}

Thus, $\PP(\Omega_2) \ge 1 - \frac{1}{t}$.

Next, applying \Cref{lemma:high_prob_bound} with $p = \log(\nagent t)$ and using a union bound over $c \in [\nagent]$, we obtain that on the event $\Omega_1$,
\begin{align*}
\norm{(\Id - \step \zfuncA[c]{z})u}
\le
\rme (1 - \step a)\norm{u},
\end{align*}
and $\PP(\Omega_1) \ge 1 - \frac{1}{t}$.

Further, by \Cref{lemma:sigma_t_boot_norm_bound}, we have
\[
\PP(\Omega_3) \ge 1 - \frac{2}{t}.
\]

Finally, applying the union bound to the events $\Omega_1$, $\Omega_2$, and $\Omega_3$ yields
\[
\PP(\Omega_0)
=
\PP(\Omega_1 \cap \Omega_2 \cap \Omega_3)
\ge
1 - \frac{4}{t},
\]
which concludes the proof.
\end{proof}

\subsection{Rate of Gaussian approximation in the bootstrap world}

Now, our aim is to quantify the convergence rate of the convex distance in the bootstrap world
\begin{align*}
    \kolmogorovboot(\step_t^{-1/2}(\theta_t^\boot - \theta_t), Y^\boot) = \sup_{B\in\Conv(\rset^d)}\left| \PPb(\step_t^{-1/2}(\theta_t^\boot-\theta_t) \in B) - \PPb(Y^\boot \in B) \right| \eqsp,
\end{align*}
where $Y^\boot \sim \gauss(0, \Sigma_t^\boot)$ and $\Sigma_t^\boot$ is defined in \eqref{eq:Sigma_t_boot_def}. We start with the decomposition
\begin{align*}
    \theta_{t,h}^{\boot, c} - \theta_{t,h}^c &= (\Id - \step_t \zfuncA[c]{Z_{t,h,c}})(\theta_{t,h-1}^{\boot, c} - \theta_{t,h-1}^c) \\
    &+ \step_t(\funcnoiseth[c]{\theta_{t,h-1}^{\boot,c}}{Z_{t,h,c}} + \bA[c]\theta_{t,h-1}^{\boot,c} - \barb[c]) - \step_t w_{t,h,c}(\funcnoiseth[c]{\theta_{t,h-1}^{\boot,c}}{Z_{t,h,c}} + \bA[c]\theta_{t,h-1}^{\boot,c} - \barb[c]) \\
    &= (\Id - \step_t \zfuncA[c]{Z_{t,h,c}})(\theta_{t,h-1}^{\boot,c} - \theta_{t,h-1}^c) - \step_t (w_{t,h,c} - 1)(\funcnoise[c]{Z_{t,h,c}} + \zfuncA[c]{Z_{t,h,c}}(\theta_{t,h-1}^{\boot, c} - \thetalim[c])) \\
    &= (\Id - \step_t w_{t,h,c} \zfuncA[c]{Z_{t,h,c}})(\theta_{t,h-1}^{\boot,c} - \theta_{t,h-1}^c) - \step_t (w_{t,h,c} - 1)\funcnoiset[c]_{t,h} \eqsp.
\end{align*}
Here, we define
\begin{equation*}
    \funcnoiset[c]_{t,h} = \funcnoise[c]{Z_{t,h,c}} + \zfuncA[c]{Z_{t,h,c}}(\theta_{t,h-1}^c - \thetalim[c]) \eqsp.
\end{equation*}
We also define the bootstrap counterparts of random matrices product
\begin{align*}
    \Gamma_{t,\ell:r}^{\boot,c} = \prod_{h=l}^r (\Id - \step_t w_{t,h,c}\zfuncA[c]{Z_{t,h,c}})\eqsp, \quad \Gamma_t^{\boot,c} = \Gamma_{t,1:\nlupdates_t}^{\boot,c} \eqsp, \quad \Gammaavgboot_t = \nagent^{-1}\sum_{c=1}^\nagent \Gamma_t^{\boot, c} \eqsp.
\end{align*}
Setting $h = \nlupdates_t$ and unrolling the recursion, we get
\begin{align*}
    \theta_{t,\nlupdates_t}^{\boot, c} - \theta_{t,\nlupdates_t}^c = \Gamma_{t,\nlupdates_t}^{\boot,c} (\theta_{t,0}^{\boot,c} - \theta_{t,0}^c) - \step_t\sum_{h=1}^{\nlupdates_t} (w_{t,h,c}-1)\Gamma_{t,h+1:\nlupdates_t}^{\boot,c} \funcnoiset[c]_{t,h} \eqsp,
\end{align*}
Averaging over clients, we obtain
\begin{align*}
    \theta_t^\boot - \theta_t &= \nagent^{-1}\sum_{c=1}^\nagent \{\theta_{t,\nlupdates_t}^{\boot,c} - \theta_{t,\nlupdates_t}^c\}\\
    &= \Gammaavgboot_t (\theta_{t-1}^{\boot} - \theta_{t-1}) - \step_t \underbrace{\nagent^{-1} \sum_{c=1}^\nagent \sum_{h=1}^{\nlupdates_t} (w_{t,h,c} -1)\Gamma_{t,h+1:\nlupdates_t}^{\boot,c}\funcnoiset[c]_{t,h}}_{\Epsavgboot_t}\\
    &= \prod_{i=1}^t \Gammaavgboot_i(\theta_0^\boot - \theta_0) - \sum_{s=1}^t \step_s \prod_{i=s+1}^t \Gammaavgboot_i \Epsavgboot_s \eqsp.
\end{align*}
Again, in this decomposition we isolate the leading term with $\funcnoise[c]{Z_{t,h,c}}$, that is,
\begin{align*}
    \theta_t^\boot - \theta_t &= \underbrace{-\nagent^{-1}\sum_{s=1}^t \step_s \prod_{i=s+1}^t \Gammaavg_i\sum_{c=1}^\nagent \sum_{h=1}^{\nlupdates_s} (w_{s,h,c}-1)\Gamma_{s,h+1:\nlupdates_s}^c \funcnoise[c]{Z_{s,h,c}}}_{\text{Leading term}} + \prod_{i=1}^t \Gammaavgboot_i (\theta_0^\boot - \theta_0)\\
    &+ \nagent^{-1}\sum_{s=1}^t \step_s (\prod_{i=s+1}^t \Gammaavg_i - \prod_{i=s+1}^t \Gammaavgboot_i)\sum_{c=1}^\nagent \sum_{h=1}^{\nlupdates_s} (w_{s,h,c}-1)\Gamma_{s,h+1:\nlupdates_s}^c \funcnoise[c]{Z_{s,h,c}}\\
    &+ \nagent^{-1}\sum_{s=1}^t \step_s \prod_{i=s+1}^t \Gammaavgboot_i \sum_{c=1}^\nagent \sum_{h=1}^{\nlupdates_s} (w_{s,h,c}-1)(\Gamma_{s,h+1:\nlupdates_s}^c - \Gamma_{s,h+1:\nlupdates_s}^{\boot,c}) \funcnoise[c]{Z_{s,h,c}}\\
    &- \nagent^{-1}\sum_{s=1}^t \step_s \prod_{i=s+1}^t\Gammaavgboot_i \sum_{c=1}^\nagent \sum_{h=1}^{\nlupdates_s} (w_{s,h,c}-1)\Gamma_{s,h+1:\nlupdates_s}^{\boot,c} \zfuncA[c]{Z_{s,h,c}}(\theta_{s,h-1}^c - \thetalim[c])  \eqsp.
\end{align*}
The second term here vanish since $\theta_0^\boot = \theta_0$. Further, we can extract the heterogeneous bias from the last term, and get the completed leading term
\begin{align}
    \nonumber M_t^\boot &= -\nagent^{-1}\step_t^{-1/2}\sum_{s=1}^t \step_s \prod_{i=s+1}^t \Gammaavg_i\sum_{c=1}^\nagent \sum_{h=1}^{\nlupdates_s} (w_{s,h,c}-1)\Gamma_{s,h+1:\nlupdates_s}^c \funcnoise[c]{Z_{s,h,c}}\\
    \nonumber &- \nagent^{-1}\step_t^{-1/2}\sum_{s=1}^t \step_s \prod_{i=s+1}^t\Gammaavg_i \sum_{c=1}^\nagent \sum_{h=1}^{\nlupdates_s} (w_{s,h,c}-1)\Gamma_{s,h+1:\nlupdates_s}^{c} \zfuncA[c]{Z_{s,h,c}}(\thetalim - \thetalim[c]) \\
    \nonumber &= - \nagent^{-1}\step_t^{-1/2}\sum_{s=1}^t \step_s \prod_{i=s+1}^t\Gammaavg_i \sum_{c=1}^\nagent \sum_{h=1}^{\nlupdates_s}(w_{s,h,c}-1)\Gamma_{s,h+1:\nlupdates_s}^{c} \funcnoiseth[c]{\thetalim}{Z_{s,h,c}} \eqsp,\\
    \Sigma_t^\boot &= \PEb[M_t^\boot (M_t^\boot)^T] \eqsp. \label{eq:Sigma_t_boot_def}
\end{align}
Analogously to the case of $M_t$, we can further specify the leading term of $M_t^\boot$, as,
\begin{align*}
    M_t^\boot &= - \nagent^{-1}\step_t^{-1/2}\sum_{s=1}^t \step_s \prod_{i=s+1}^t\Gavg_i \sum_{c=1}^\nagent \sum_{h=1}^{\nlupdates_s}(w_{s,h,c}-1)\funcnoiseth[c]{\thetalim}{Z_{s,h,c}}\\
    &+ \nagent^{-1}\step_t^{-1/2}\sum_{s=1}^t \step_s \left(\prod_{i=s+1}^t\Gavg_i  - \prod_{i=s+1}^t\Gammaavg_i \right) \sum_{c=1}^\nagent \sum_{h=1}^{\nlupdates_s}(w_{s,h,c}-1)\funcnoiseth[c]{\thetalim}{Z_{s,h,c}}\\
    &+ \nagent^{-1}\step_t^{-1/2}\sum_{s=1}^t \step_s \prod_{i=s+1}^t\Gammaavg_i \sum_{c=1}^\nagent \sum_{h=1}^{\nlupdates_s}(w_{s,h,c}-1)(\Id - \Gamma_{s,h+1:\nlupdates_s}^{c}) \funcnoiseth[c]{\thetalim}{Z_{s,h,c}}\\
    &= M_{t,1}^\boot + M_{t,2}^\boot + M_{t,3}^\boot \eqsp.
\end{align*}
Therefore, we can state our final decomposition of self-normalized statistic in the boostrap wolrd $\step_t^{-1/2}(\Sigma_t^\boot)^{-1/2}(\theta_t^\boot - \theta_t)$ in the following form
\begin{align}
\label{eq:error_decomp_boot}
    \step_t^{-1/2}(\Sigma_t^\boot)^{-1/2}(\theta_t^\boot - \theta_t) = W_t^\boot + D_t^\boot \eqsp,
\end{align}
where we have set 
\begin{align*}
    W_t^\boot &= (\Sigma_t^\boot)^{-1/2}M_t^\boot = \sum_{s=1}^t\sum_{h=1}^{\nlupdates_s}\sum_{c=1}^\nagent \upsilon_{s,h,c}^\boot \eqsp,\\
    \upsilon_{s,h,c}^\boot &= -\nagent^{-1}\step_t^{-1/2}\step_s (\Sigma_t^\boot)^{-1/2}\prod_{i=s+1}^t \Gammaavg_i (w_{s,h,c} - 1)\Gamma_{s,h+1:\nlupdates_s}^c \funcnoiseth[c]{\thetalim}{Z_{s,h,c}} \eqsp,
\end{align*}
and $D_t^\boot = D_{t,1}^\boot + D_{t,2}^\boot + D_{t,3}^\boot +D_{t,4}^\boot + D_{t,5}^\boot$, with
\begin{align*}
    D_{t,1}^\boot &= \step_t^{-1/2}\nagent^{-1} (\Sigma_t^\boot)^{-1/2}\sum_{s=1}^t \step_s (\prod_{i=s+1}^t \Gammaavg_i - \prod_{i=s+1}^t \Gammaavgboot_i)\sum_{c=1}^\nagent \sum_{h=1}^{\nlupdates_s} (w_{s,h,c}-1)\Gamma_{s,h+1:\nlupdates_s}^c \funcnoise[c]{Z_{s,h,c}} \eqsp,\\
    D_{t,2}^\boot &= \step_t^{-1/2}\nagent^{-1}(\Sigma_t^\boot)^{-1/2}\sum_{s=1}^t \step_s \prod_{i=s+1}^t \Gammaavgboot_i \sum_{c=1}^\nagent \sum_{h=1}^{\nlupdates_s} (w_{s,h,c}-1)(\Gamma_{s,h+1:\nlupdates_s}^c - \Gamma_{s,h+1:\nlupdates_s}^{\boot,c}) \funcnoise[c]{Z_{s,h,c}} \eqsp, \\
    D_{t,3}^\boot &= -\step_t^{-1/2}\nagent^{-1}(\Sigma_t^\boot)^{-1/2}\sum_{s=1}^t \step_s \prod_{i=s+1}^t\Gammaavgboot_i \sum_{c=1}^\nagent \sum_{h=1}^{\nlupdates_s} (w_{s,h,c}-1)\Gamma_{s,h+1:\nlupdates_s}^{\boot,c} \zfuncA[c]{Z_{s,h,c}}(\theta_{s,h-1}^c - \thetalim) \eqsp,\\
    D_{t,4}^\boot &= \step_t^{-1/2}\nagent^{-1}(\Sigma_t^\boot)^{-1/2}\sum_{s=1}^t \step_s (\prod_{i=s+1}^t\Gammaavg_i - \prod_{i=s+1}^t\Gammaavgboot_i) \sum_{c=1}^\nagent \sum_{h=1}^{\nlupdates_s} (w_{s,h,c}-1)\Gamma_{s,h+1:\nlupdates_s}^{\boot,c} \zfuncA[c]{Z_{s,h,c}}(\thetalim - \thetalim[c]) \eqsp,\\
    D_{t,5}^\boot &= \step_t^{-1/2}\nagent^{-1}(\Sigma_t^\boot)^{-1/2}\sum_{s=1}^t \step_s \prod_{i=s+1}^t\Gammaavg_i \sum_{c=1}^\nagent \sum_{h=1}^{\nlupdates_s} (w_{s,h,c}-1)(\Gamma_{s,h+1:\nlupdates_s}^{c} - \Gamma_{s,h+1:\nlupdates_s}^{\boot,c}) \zfuncA[c]{Z_{s,h,c}}(\thetalim - \thetalim[c]) \eqsp.
\end{align*}
Using the decomposition \eqref{eq:error_decomp_boot} together with 
\citet[Theorem 2.1]{shao2022berry}, we are now ready to state the main result of this section.

\begin{theorem} (Gaussian approximation in the "boostrap world")
\label{thm:norm_approx_boot_Yb}
Let $Y^\boot \sim \gauss(0, \Sigma_t^\boot)$ and $\step \nlupdates < \beta_\infty$. Under the assumptions \Cref{ass: linear-decay}($\log(\nagent t)$), \Cref{assum:noise-level-flsa}, \Cref{ass: lr-polynomial-decay}, \Cref{ass: boot_weights} and \Cref{assum:sample_size}, on the event $\Omega_0$, we have
    \begin{align}
    \label{eq:norm_approx_boot_Yb}
        \kolmogorovboot(\step_t^{-1/2}(\theta_t^\boot - \theta_t), Y^\boot) \leq M_1^\boot \nagent^{-1/2} \log^6(t) (1+t)^{-\gamma_\step/2} + M_2^\boot \log^4(t)(1+t)^{-\gamma} \eqsp,
    \end{align}
    where the event $\Omega_0$ i defined in \eqref{eq:Omega_def} and $\PP(\Omega_0) \geq 1 - {4\over t}$.
\end{theorem}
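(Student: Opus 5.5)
Conditionally on $\mathcal{Z}^t$, the quantity $\step_t^{-1/2}(\Sigma_t^\boot)^{-1/2}(\theta_t^\boot-\theta_t)=W_t^\boot+D_t^\boot$ from \eqref{eq:error_decomp_boot} is a nonlinear statistic of the independent weights $\{w_{s,h,c}\}$. The linear part $W_t^\boot=\sum_{s,h,c}\upsilon^\boot_{s,h,c}$ has conditionally independent, centered summands, and $\PEb[W_t^\boot (W_t^\boot)^\top]=\Id_d$ by the definition of $\Sigma_t^\boot$. Since convex distance is invariant under invertible affine maps, it suffices to bound $\kolmogorovboot(W_t^\boot+D_t^\boot,\mathcal{N}(0,\Id_d))$. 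Here I would use the first route of \Cref{sec:gaussian-approx-framework} rather than the randomized concentration inequality. I would apply \Cref{prop:nonlinearapprox} under $\PPb$, which gives
\[
\kolmogorovboot \le \kolmogorovboot(W_t^\boot,\mathcal{N}(0,\Id_d)) + c\, d^{1/2}\,(\PEb[\norm{D_t^\boot}^p])^{1/(p+1)}.
\]
The first term is handled by the Berry--Esseen bound for sums of independent vectors (e.g.\ \citet[Theorem 2.1]{shao2022berry} with $D=0$), i.e.\ by $\Upsilon_t^\boot=\sum\PEb\norm{\upsilon^\boot_{s,h,c}}^3$.

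\textbf{Bounding $\Upsilon^\boot_t$.} On $\Omega_1$ the deterministic (given data) products $\prod_{i=s+1}^t\Gammaavg_i$ and $\Gamma^c_{s,h+1:\nlupdates_s}$ contract like $\rme\,(1-\step a)$ per step. On $\Omega_3$, together with \Cref{assum:sample_size} and \Cref{proposition: covariance-matrix-comparison-main}, $\lambda_{\min}(\Sigma_t^\boot)\ge\lambda_{\min}(\Sigma_\infty)/4$, so $\norm{(\Sigma_t^\boot)^{-1/2}}\le\bConst{\Sigma^\boot}$. Using $|w-1|\le W_{\max}+1$ and $\norm{\funcnoiseth[c]{\thetalim}{z}}\le\supconsteps+\bConst{A}\norm{\thetalim[c]-\thetalim}$, the computation is identical to \eqref{eq:bound_Upsilon_t}. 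Via \Cref{lemma: legendary-sum} this gives $\Upsilon_t^\boot\lesssim\nagent^{-1/2}(1+t)^{-\gamma_\step/2}$.

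\textbf{Bounding $D_t^\boot$.} This is exactly what $\Omega_2$ encodes: on $\Omega_2$,
\[
(\PEb\norm{D_t^\boot}^p)^{1/p}\lesssim p^6\nagent^{-1}(1+t)^{1/p-\gamma_\step/2}+p^4\nagent^{-1/2}(1+t)^{1/p-\gamma}.
\]
Choosing $p=\log t$ makes $(1+t)^{1/p}\le \rme^2$. The power $1/(p+1)$ costs only a further bounded factor $(1+t)^{\gamma_\step/(2(p+1))}\lesssim 1$, giving the $\log^6 t$ and $\log^4 t$ factors in \eqref{eq:norm_approx_boot_Yb}. The $\PP(\Omega_0)\ge 1-4/t$ claim is \Cref{lemma: probability-of-theta-0}.

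\textbf{Main obstacle.} The substantive work is proving the unconditional moment bound behind $\Omega_2$, namely $\PE[\norm{D_t^\boot}^p]$ with polynomial-in-$p$ constants, for each of $D^\boot_{t,1},\dots,D^\boot_{t,5}$. Terms $D_{t,1}^\boot$ and $D_{t,4}^\boot$ involve $\prod\Gammaavg_i-\prod\Gammaavgboot_i$. I would telescope these as in \Cref{lemma: difference-of-product}, then apply Burkholder's inequality twice, once in the weights and once in the data filtration $\mathcal{F}^{\boot,\pm}$, with \Cref{ass: linear-decay}$(p)$ and $\step<\step_{\infty,p}/W_{\max}$ to contract the bootstrap products. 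This yields the $p^6$ and the extra $\nagent^{-1/2}$. For $D_{t,3}^\boot$, which contains $\theta^c_{s,h-1}-\thetalim$, I would insert \Cref{corr:last_iter_pth_moment}. This costs $p^3\nagent^{-1/2}(1+s)^{-\gamma_\step/2}+(1+s)^{-\gamma}$ and produces the $(1+t)^{-\gamma}$ term after summation via \Cref{lemma: legendary-sum}. The heterogeneity remnant $D_{t,5}^\boot$ gives the $\nagent^{-1/2}(1+t)^{-\gamma}$ contribution. The final step is Markov's inequality, converting these moment bounds into $\Omega_2$.
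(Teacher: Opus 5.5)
Your strategy is the paper's proof step for step: \Cref{prop:nonlinearapprox} under $\PPb$, then \citet[Theorem 2.1]{shao2022berry} for the conditionally independent $\upsilon^\boot_{s,h,c}$, \Cref{lemma: legendary-sum} for $\Upsilon_t^\boot$, and $\Omega_2$ with $p\approx\log t$ for $D_t^\boot$. However, two steps do not hold as you state them; both originate in the definitions \eqref{eq:Omega_def} rather than in your plan.

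First, the powers of $\nagent$. Since $\Sigma_\infty$, and hence $\Sigma_t^\boot$ on $\Omega_3$, is of order $\nagent^{-1}$, one has $\norm{(\Sigma_t^\boot)^{-1/2}}\le\nagent^{1/2}\bConst{\Sigma^\boot}$ (\Cref{lemma:sigma_t_boot_norm_bound}), not $\bConst{\Sigma^\boot}$. Your rate $\Upsilon_t^\boot\lesssim\nagent^{-1/2}(1+t)^{-\gamma_\step/2}$ already requires this factor. With it, the moment analysis (\Cref{prop:moment_bound_Db}) gives $\PE^{1/p}[\norm{D_t^\boot}^p]\lesssim p^6\nagent^{-1/2}(1+t)^{-\gamma_\step/2}+p^4(1+t)^{-\gamma}$. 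For example, $D_{t,5}^\boot$ is of order $p^4(1+t)^{-(\gamma+\gamma_\step)/2}$ with no factor $\nagent^{-1/2}$. Hence the thresholds $\nagent^{-1}$ and $\nagent^{-1/2}$ in the $\Omega_2$ you use are a factor $\nagent^{-1/2}$ too small for the Markov step to give $\PP(\Omega_2)\ge 1-1/t$. Use the thresholds $\nagent^{-1/2}$ and $\nagent^{0}$ instead, which is what the paper's proofs of this theorem and of \Cref{lemma: probability-of-theta-0} actually use. They yield exactly \eqref{eq:norm_approx_boot_Yb}, because the extra factor $\nagent^{1/(2(p+1))}$ produced by the exponent $p/(p+1)$ is at most $\rme^{1/2}$ when $t\ge\nagent$.

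Second, $\Omega_1$. A per-step bound $\rme(1-\step a)$ compounds to $\rme^{k}(1-\step a)^{k}$ over $k$ steps. It therefore gives no decay and cannot produce the factor $\exp(-3a\sum_{\ell=s+1}^t\step_\ell\nlupdates_\ell)$ that you feed into \Cref{lemma: legendary-sum}. What you need is a bound on $\norm{\prod_{i=s+1}^t\Gammaavg_i\,\Gamma^c_{s,h+1:\nlupdates_s}\funcnoiseth[c]{\thetalim}{Z_{s,h,c}}}$ that holds simultaneously for all $(s,h,c)$ and carries a single factor $\rme$. Because the vector is independent of the matrix product, this follows from \Cref{ass: linear-decay}$(p)$ with $p$ of order $\log(\nagent t)$, \Cref{lemma:high_prob_bound}, and a union bound over the polynomially many indices. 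Alternatively, bound the $q$-th moment of the data functional dominating $\Upsilon_t^\boot$ on $\Omega_3$ via Minkowski's inequality and \Cref{ass: linear-decay}$(3q)$, then apply Markov's inequality with $q=\log t$. The paper's proof uses $\Omega_1$ in the same way, so this repairs the common argument rather than changing it.
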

\begin{proof}
We start by applying \Cref{prop:nonlinearapprox} to the statistics $X = W_t^\boot$ and $Y = D_t^\boot$, that is,
\begin{align}
\label{eq:decomp_kolmog_linear_pth_moment}
    \kolmogorovboot(\step_t^{-1/2}(\Sigma_t^\boot)^{-1/2}(\theta_t^\boot - \theta_t), Z^\boot) \leq \kolmogorovboot(W_t^\boot, Z^\boot) + 2 c_d^{p/(p+1)}(\PEb[\norm{D_t^\boot}^p])^{1/(1+p)} \eqsp.
\end{align}
where $Z^\boot \sim \gauss(0,\Id)$. Therefore, applying \citet[Theorem 2.1]{shao2022berry}, we estimate
\begin{align*}
    \kolmogorovboot(W_t^\boot, Y^\boot) \leq 259 d^{1/2} \Upsilon_t^\boot \eqsp,
\end{align*}
% Using the decomposition \eqref{eq:error_decomp_boot} and applying \citet[Theorem 2.1]{shao2022berry}, we get
% \begin{align*}
%     \kolmogorovboot(\step_t^{-1/2}(\Sigma_t^\boot)^{-1/2}(\theta_t^\boot - \theta_t), Y^\boot) \leq \underbrace{259 d^{1/2} \Upsilon_t^\boot}_\text{$R_1^\boot$} + \underbrace{2 \PEb[\norm{M_t^\boot} \norm{D_t^\boot}]}_\text{$R_2^\boot$} + \underbrace{2 \sum \limits_{s = 1}^t \sum \limits_{h = 1}^{H_s} \sum \limits_{c = 1}^N \PEb[\norm{\upsilon_{s,h,c}^\boot} \norm{D_t^\boot - D_t^{(s, h, c)}}]}_\text{$R_3^\boot$} \eqsp,
% \end{align*}
where $\Upsilon_t^\boot = \sum\limits_{s=1}^t\sum\limits_{h=1}^{\nlupdates_s}\sum\limits_{c=1}^\nagent \PEb[\norm{\upsilon_{s,h,c}}^3]$. We note that for any $s \in [t], h \in [\nlupdates_s]$ and $c\in [\nagent]$, using \Cref{lemma:sigma_t_boot_norm_bound}, we have
\begin{align*}
    \norm{\upsilon_{s,h,c}} \leq \bConst{\Sigma^\boot} \nagent^{-1/2}\step_t^{-1/2}\step_s |w_{s,h,c} - 1|\supconsteps \eqsp.
\end{align*}
Thus, applying \Cref{lemma: legendary-sum}, on the event $\Omega_1$, we get
\begin{align*}
    \Upsilon_t^\boot &\leq \bConst{\Sigma^\boot}^3 \nagent^{-3/2} \step_t^{-3/2}\sum_{s=1}^t \step_s^3 \sum_{h=1}^{\nlupdates_s} \sum_{c=1}^\nagent \PE[|w_{s,h,c} - 1|^3] \norm{\prod_{i=1}^t \Gammaavg_i}^3 \norm{\Gamma_{s,h+1:\nlupdates_s}^c}^3 \supconsteps^3\\
    &\leq W_{\max}^3 \bConst{\Sigma^\boot}^3 \nagent^{-1/2} \step_t^{-3/2} \sum_{s=1}^t \step_s^3 \nlupdates_s \exp{\left(-3a\sum_{i=s+1}^t \step_i \nlupdates_i\right)} \supconsteps^3 \\
    &\leq 2W_{\max}^3 (\step^2 \nlupdates + La^{-1}\step(1+(3a\step\nlupdates)^{2\gamma_\step + \gamma \over 1-\gamma}))\bConst{\Sigma^\boot}^3 \nagent^{-1/2} (1+t)^{-\gamma_\step/2} \supconsteps^3 \\
    &= \Auxconst_{\Upsilon}^\boot \bConst{\Sigma^\boot}^3 \nagent^{-1/2}(1+t)^{-\gamma_\step/2} \supconsteps^3 \eqsp.
\end{align*}
Then, on the event $\Omega_2$ with $p=\log(t) - 1$, applying \Cref{prop:moment_bound_Db}, we bound the second term of \eqref{eq:decomp_kolmog_linear_pth_moment}, as
\begin{align*}
    (\PEb[\norm{D_t^\boot}^p])^{1/(p+1)} &\leq (\bar{\Auxconst}_1^\boot p^6 \bConst{\Sigma^\boot} \nagent^{-1/2}(1+t)^{1/p-\gamma_\step/2} + \bar{\Auxconst}_2^\boot p^4 \bConst{\Sigma^\boot}(1+t)^{1/p-\gamma})^{p/(p+1)}\\
    &\leq (1 + \bar{\Auxconst}_1^\boot \bConst{\Sigma^\boot})^{p/(p+1)}p^6 \nagent^{-1/2} (1+t)^{1/(p+1)-\gamma_\step/2} (1+t)^{\gamma_\step \over 2(p+1)} \\
    &+ (1 + \bar{\Auxconst}_2^\boot \bConst{\Sigma^\boot})^{p/(p+1)} p^4 (1+t)^{1/(p+1)-\gamma + \gamma/(p+1)}\\
    &\leq \rme^{2+\gamma_\step/2}(1 + \bar{\Auxconst}_1^\boot \bConst{\Sigma^\boot}) \nagent^{-1/2}\log^6(t) (1+t)^{-\gamma_\step/2}\\
    &+ \rme^{3/2+\gamma}(1 + \bar{\Auxconst}_2^\boot \bConst{\Sigma^\boot}) \log^4(t)(1+t)^{-\gamma} \eqsp.
\end{align*}
Setting the constants 
\begin{align*}
   M_1^\boot &=\Auxconst_{\Upsilon}^\boot \bConst{\Sigma^\boot}^3 \supconsteps^3 + \rme^{2+\gamma_\step/2}(1 + \bar{\Auxconst}_1^\boot \bConst{\Sigma^\boot}) \eqsp,\\
   M_2^\boot &= \rme^{3/2+\gamma}(1 + \bar{\Auxconst}_2^\boot \bConst{\Sigma^\boot}) \eqsp,
\end{align*}
we obtain the result \eqref{eq:norm_approx_boot_Yb}. 
\end{proof}

% \begin{corollary}
%     Under the assumptions of \Cref{thm:gauss_approx_last_iter}, on the event $\Omega_0$, we have
%     \begin{align*}
%         \kolmogorovboot(\step_t^{-1/2}(\theta_t^\boot - \theta_t), \step_t^{-1/2}(\theta_t - \thetalim)) \lesssim_{\log_t} \nagent^{-1/2}t^{-\gamma_\step/2} + \nagent^{1/2}t^{-\gamma} + t^{-(3\gamma-\gamma_\step)/2} \eqsp.
%     \end{align*}
% \end{corollary}
% \begin{proof}
%     Using the decomposition and combining together \Cref{thm:norm_approx_boot_Yb}, \Cref{thm:norm_approx_last_iterate_appendix} and \Cref{lemma:gauss_comparison} with
    
%     we complete the proof.
% \end{proof}

In the following proposition, we assume that the expectation is taken over the probability space generated by $\mathcal{W}^t$ and $\mathcal{Z}^t$.
\begin{proposition}
\label{prop:moment_bound_Db}
    Under Assumptions \Cref{ass: linear-decay}, ~\Cref{assum:noise-level-flsa}($p$), 
~\Cref{ass: lr-polynomial-decay}, 
~\Cref{ass: boot_weights}, and ~\Cref{assum:sample_size}, we get
    \begin{align*}
        \PE^{1/p}[\norm{D_t^\boot}^p] \leq \bar{\Auxconst}_1^\boot p^6 \nagent^{-1/2}(1+t)^{-\gamma_\step/2}  + \bar{\Auxconst}_2^\boot p^4  (1+t)^{-\gamma} \eqsp,
    \end{align*}
    where the constants $\bar{\Auxconst}_1$, $\bar{\Auxconst}_2$ are defined in \eqref{eq:constants_Db_pth_moment}.
\end{proposition}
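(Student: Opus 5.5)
The plan is to bound each of the five terms $D_{t,1}^\boot,\dots,D_{t,5}^\boot$ separately in $\PE^{1/p}[\cdot^p]$ (expectation over both $\mathcal{W}^t$ and $\mathcal{Z}^t$), and then combine them by Minkowski's inequality. Throughout we use the bound $\norm{(\Sigma_t^\boot)^{-1/2}} \le \bConst{\Sigma^\boot}$ from \Cref{lemma:sigma_t_boot_norm_bound} (on the relevant event, or absorbed into the constant).

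For stability of the bootstrap products we work under \Cref{ass: boot_weights}. Since $\step_t w_{t,h,c}\le \step W_{\max}<\step_{\infty,p}$, \Cref{ass: linear-decay}($p$) applies to each factor $\Id-\step_t w\,\zfuncA[c]{Z}$ with contraction rate $1-\step_t W_{\min} a$. Hence $\Gammaavgboot_i$ contracts in $L^p$ exactly as $\Gammaavg_i$ does, with $a$ replaced by $aW_{\min}$. The key tools are:
\begin{itemize}
\item Burkholder's inequality, applied to the martingale-difference structure in the index $(s,h,c)$. This is justified because $(w_{s,h,c}-1)$ is centred and independent of everything else, and the inner sums over $h$ are martingales with respect to $\mathcal{F}^{\boot,+}_{s,h}$.
\item \Cref{lemma: difference-of-product}, used to expand the differences of matrix products.
\item \Cref{lemma: legendary-sum}, used to convert the weighted sums $\step_t^{-1}\sum_s \step_s^{k}\nlupdates_s^{m}\exp(-2a\sum_{\ell>s}\step_\ell\nlupdates_\ell)$ into powers of $(1+t)$.
\end{itemize}
Each application of Burkholder's inequality costs a factor $p$, which is where the $p^4$ and $p^6$ come from.

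\textbf{Terms $D_{t,1}^\boot$ and $D_{t,2}^\boot$.} These are products of two centred fluctuations.
\begin{itemize}
\item For $D_{t,1}^\boot$, expand $\prod\Gammaavg_i-\prod\Gammaavgboot_i$ telescopically. Each factor $\Gammaavg_i-\Gammaavgboot_i$ is a sum over clients and local steps of terms $\step_i(w-1)\zfuncA[c]{Z}$ sandwiched between contractions. Burkholder over clients gives $\PE^{1/p}$ of order $p\,\step_i\nlupdates_i^{1/2}\nagent^{-1/2}$, mirroring the bound on $(\Gammaavg_i-\Gavg_i)u$ in the proof of \Cref{lemma: higher-order-moment-bound-for-theta-fl-bi}.
\item For $D_{t,2}^\boot$, use $\Gamma^c-\Gamma^{\boot,c}$ inside the block in the same way. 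Its size is of order $p\,\step_s\nlupdates_s^{1/2}$.
\item In both terms, the outer sum over $(s,h,c)$ of $(w-1)\Gamma\funcnoise[c]{\cdot}$ contributes $p\,\nagent^{-1/2}\step_s\nlupdates_s^{1/2}\supconsteps$ per block.
\item Multiplying these and summing via \Cref{lemma: legendary-sum} should give roughly $p^{3}\nagent^{-1/2}\step_t^{1/2}$, i.e. order $p^{3}\nagent^{-1/2}(1+t)^{-\gamma_\step/2}$. An extra factor $\nagent^{-1/2}$ is available but is not needed.
\end{itemize}

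\textbf{Terms $D_{t,3}^\boot$, $D_{t,4}^\boot$ and $D_{t,5}^\boot$.} These involve $\theta^c_{s,h-1}-\thetalim[c]$ and the heterogeneity $\thetalim-\thetalim[c]$.
\begin{itemize}
\item For $D_{t,3}^\boot$, write $\theta^c_{s,h-1}-\thetalim[c]=(\theta_{s-1}-\thetalim)+(\theta^c_{s,h-1}-\theta_{s-1})+(\thetalim-\thetalim[c])$.
\item Control $\theta_{s-1}-\thetalim$ by \Cref{corr:last_iter_pth_moment}. It contributes $p^3\nagent^{-1/2}(1+s)^{-\gamma_\step/2}+(1+s)^{-\gamma}$. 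Multiplied by the Burkholder factor $p\,\step_t^{-1/2}(\sum_s \step_s^2\nlupdates_s e^{-2a\cdots})^{1/2}\asymp p$, this yields $p^4\nagent^{-1/2}(1+t)^{-\gamma_\step/2}+p\,(1+t)^{-\gamma}$.
\item The local drift $\theta^c_{s,h-1}-\theta_{s-1}$ is of order $\step_s\nlupdates_s$ times a bounded quantity, giving $(1+t)^{-\gamma}$ after summation.
\item The heterogeneity part $\thetalim-\thetalim[c]$ is not small on its own. However, it combines with $D_{t,4}^\boot$ and $D_{t,5}^\boot$: in their sum, the leading piece $\Gammaavg,\Gamma^c$ was already moved into $M_t^\boot$, so only the differences $\Gammaavg-\Gammaavgboot$ and $\Gamma^c-\Gamma^{\boot,c}$ remain. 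These gain an extra factor $p\,\step\nlupdates^{1/2}$ or $p\,\step_s\nlupdates_s$ and give $p^2\nagent^{1/2}\cdot(1+t)^{-\gamma}$-type or $p^3\nagent^{-1/2}(1+t)^{-\gamma_\step/2}$-type contributions.
\end{itemize}

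\textbf{Main obstacle.} The bookkeeping of $D_{t,3}^\boot$ is the difficult step. One must make sure the heterogeneity part $\zfuncA[c]{Z}(\thetalim-\thetalim[c])$ appears only through differences of products, which it must do so that no $O(1)$ term survives; this is why $M_t^\boot$ was defined with $\funcnoiseth[c]{\thetalim}{\cdot}$. One must also nest the moment bound of $\theta_{s-1}$ inside a Burkholder sum. Because that factor is not independent of the block, I will condition on $\mathcal{F}^{\boot,-}_{s-1,\nlupdates_{s-1}}$ and use Hölder with the $2p$-moment version of \Cref{corr:last_iter_pth_moment}. This costs additional factors of $p$, accounting for the $p^6$.

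Finally, collect the constants into $\bar{\Auxconst}_1^\boot,\bar{\Auxconst}_2^\boot$ in \eqref{eq:constants_Db_pth_moment}.
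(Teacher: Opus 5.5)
Your plan is the paper's proof. It uses Minkowski over $D_{t,1}^\boot,\dots,D_{t,5}^\boot$, Burkholder along a reverse filtration driven by the centred weights, telescoping via \Cref{lemma: difference-of-product}, \Cref{lemma: legendary-sum} for the weighted sums, and \Cref{corr:last_iter_pth_moment} for $\theta_{s-1}-\thetalim$ inside $D_{t,3}^\boot$. Two of your choices are more careful than the written proof. One is the contraction rate $1-\step_t W_{\min} a$ for the weighted factors. The other is the rate $\step_t^{1/2}$ in $D_{t,1}^\boot$, which needs the factor $\step_s^{1/2}$ from $\sum_{i>s}\step_i^2\nlupdates_i\le\step_s\sum_{i>s}\step_i\nlupdates_i$; \Cref{lemma:prod_gammaavg_gammaavgboot_diff} as stated drops this factor.

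The real gap is the $\nagent$-bookkeeping, which as written does not give the stated bound. \Cref{lemma:sigma_t_boot_norm_bound} gives $\norm{(\Sigma_t^\boot)^{-1/2}}\le\nagent^{1/2}\bConst{\Sigma^\boot}$, not $\bConst{\Sigma^\boot}$, because $M_t^\boot$ has size $\nagent^{-1/2}$. So every term carries the prefactor $\nagent^{1/2}\cdot\nagent^{-1}$. The client sum in each round must then be handled by Burkholder in $c$ (the weights are independent across clients), which returns exactly $\nagent^{1/2}$. A factor $\nagent^{-1/2}$ therefore survives only where a second client average enters (the product differences in $D_{t,1}^\boot$ and $D_{t,4}^\boot$) or through \Cref{corr:last_iter_pth_moment}. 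Concretely:
the ``extra'' $\nagent^{-1/2}$ you find in $D_{t,1}^\boot$ is exactly what the normalisation consumes.
$D_{t,2}^\boot$, where $\Gamma^c-\Gamma^{\boot,c}$ is per client, is $\nagent$-free of order $p^4\step_t\nlupdates_t^{1/2}\asymp p^4(1+t)^{-(\gamma_\step+\gamma)/2}$, so it belongs to the $(1+t)^{-\gamma}$ term.
The heterogeneity terms give $D_{t,4}^\boot\lesssim p^6\nagent^{-1/2}(1+t)^{-\gamma_\step/2}$, $D_{t,5}^\boot\lesssim p^4(1+t)^{-(\gamma_\step+\gamma)/2}$, and $p^3(1+t)^{-\gamma}$ for the local heterogeneity drift inside $D_{t,3}^\boot$.
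The $p^2\nagent^{1/2}(1+t)^{-\gamma}$-type contribution you predict (what the triangle inequality over clients would give) cannot be absorbed into $\bar{\Auxconst}_2^\boot p^4(1+t)^{-\gamma}$, whose constant does not depend on $\nagent$.

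Smaller corrections follow. $D_{t,3}^\boot$ as defined already contains $\theta^c_{s,h-1}-\thetalim$; the shift $\thetalim-\thetalim[c]$ was moved into $M_t^\boot$, $D_{t,4}^\boot$, $D_{t,5}^\boot$ beforehand. The paper splits $\theta^c_{s,h-1}-\thetalim=\Gamma^c_{s,h-1}(\theta_{s-1}-\thetalim)-\step_s\sum_{j<h}\Gamma^c_{s,j+1:h-1}\funcnoise[c]{Z_{s,j,c}}+(\Id-\Gamma^c_{s,h-1})(\thetalim[c]-\thetalim)$, which is equivalent to your split.

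Also, $\theta_{s-1}$ is independent of round $s$ and of all weights. The only difficulty is that the round-$s$ summand depends on past data and on future randomness. Handle this by conditioning on $\mathcal{Z}^t$ and running the martingale argument in the weights alone. Then $\PE^{2/p}[\norm{\theta_{s-1}-\thetalim}^p]$ factors out directly; keep the transient term of \Cref{corr:last_iter_pth_moment}. The $p^6$ comes from the Burkholder layers times the $p^3$ of that corollary, not from H\"older.

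Finally, your reason why no $O(1)$ term survives is not right: $\funcnoise[c]{z}+\zfuncA[c]{z}(\thetalim-\thetalim[c])=\funcnoiseth[c]{\thetalim}{z}+\bA[c](\thetalim-\thetalim[c])$. So the piece built from $(w_{s,h,c}-1)\Gamma^c_{s,h+1:\nlupdates_s}\bA[c](\thetalim-\thetalim[c])$ lies neither in $M_t^\boot$ nor in $D_{t,1}^\boot,\dots,D_{t,5}^\boot$. After normalisation it is of the same order as $W_t^\boot$ when $\firsthgty>0$. The paper makes the same slip when it merges the two lines of $M_t^\boot$. This does not affect the bound on the five listed terms, which is what both arguments actually establish.
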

\begin{proof}
    Firstly, we apply the Minkowski's inequality, that is,
    \begin{align*}
        \PE^{1/p}[\norm{D_t^\boot}^p] \leq \PE^{1/p}[\norm{D_{t,1}^\boot}^p] + \PE^{1/p}[\norm{D_{t,2}^\boot}^p] + \PE^{1/p}[\norm{D_{t,3}^\boot}^p] + \PE^{1/p}[\norm{D_{t,4}^\boot}^p] +\PE^{1/p}[\norm{D_{t,5}^\boot}^p] \eqsp.
    \end{align*}
    To bound $D_{t,1}^\boot$, using that the sequence $\{\left( \prod_{i=s+1}^t \Gammaavg_i - \prod_{i=s+1}^t \Gammaavgboot_i \right)(w_{s,h,c} - 1)\Gamma_{s,h+1:\nlupdates_s}^c \funcnoise[c]{Z_{s,h,c}}\}$ is martingale-difference w.r.t filtration $\mathcal{F}_{s,h}^{\boot,+}$ and applying Burkholder's inequality \cite[Theorem 8.6]{osekowski:2012}, together with \Cref{lemma: legendary-sum} and \Cref{lemma:prod_gammaavg_gammaavgboot_diff}, we get
    \begin{align*}
        \PE^{1/p}&[\norm{D_{t,1}^\boot}^p] \\
        &\leq p \bConst{\Sigma^\boot}\nagent^{-1/2}\step_t^{-1/2} \left( \sum_{s=1}^t \step_s^2 \PE^{2/p}[\norm{\prod_{i=s+1}^t\Gammaavg_i - \prod_{i=s+1}^t\Gammaavgboot_i}^p]\PE^{2/p}[\norm{\sum_{c=1}^\nagent \sum_{h=1}^{\nlupdates_s}(w_{s,h,c} - 1)\Gamma_{s,h+1:\nlupdates}^c \funcnoise[c]{Z_{s,h,c}}}^p] \right)^{1/2}\\
        &\leq W_{\max} p^3 \bConst{\Sigma^b} \step_t^{-1/2} \supconsteps\left(\sum_{s=1}^t \step_s^2 \nlupdates_s \PE^{2/p}[\norm{\prod_{i=s+1}^t\Gammaavg_i - \prod_{i=s+1}^t\Gammaavgboot_i}^p]\right)^{1/2} \\
        &\leq \rme^{1/2}a^{-1/2} \bConst{A} W_{\max} p^6 \bConst{\Sigma^b}\nagent^{-1/2}\step_t^{-1/2} \supconsteps \left(\sum_{s=1}^t \step_s^2 \nlupdates_s \exp{\left(-a\sum_{\ell=s+1}^t \step_\ell \nlupdates_\ell\right)}\right)^{1/2} \\
        &\leq \Auxconst_{1}^\boot p^6\nagent^{-1/2} (1+t)^{-\gamma_\step/2} \eqsp,
    \end{align*}
    where, again applying Burkholder's inqequality, we used the bound
    \begin{align*}
        \PE^{1/p}[\norm{\sum_{c=1}^\nagent \sum_{h=1}^{\nlupdates_s} (w_{s,h,c} - 1)\Gamma_{s,h+1:\nlupdates}^c \funcnoise[c]{Z_{s,h,c}}}^p] &\leq p\left(\sum_{c=1}^\nagent \PE^{2/p}[\norm{\sum_{h=1}^{\nlupdates_s}(w_{s,h,c} - 1)\Gamma_{s,h+1:\nlupdates_s}^c \funcnoise[c]{Z_{s,h,c}}}^p]\right)^{1/2}\\
        &\leq W_{\max} p^2 \nagent^{1/2} \nlupdates_s^{1/2} \supconsteps \eqsp,
    \end{align*}
    and have set
    \begin{align*}
        \Auxconst_{1}^\boot = 2\rme^{1/2}a^{-1/2} \bConst{A}W_{\max} \bConst{\Sigma^\boot} (\step^{1/2}\nlupdates^{1/2} + a^{-1/2}L^{1/2}(1+(a\step\nlupdates)^{\gamma + \gamma_\step \over 2(1-\gamma)}))\supconsteps\, \eqsp.
    \end{align*}
    Further, using the martingale-difference structure of the sequence $\{\prod_{i=s+1}^t \Gammaavgboot_i (w_{s,h,c}-1)(\Gamma_{s,h+1:\nlupdates_s}^c - \Gamma_{s,h+1:\nlupdates_s}^{\boot, c})\funcnoise[c]{Z_{s,h,c}}, 1 \leq s \leq t\}$ w.r.t filtration $\mathcal{F}_{s,h}^{\boot,+}$, applying Burkholder's inequality and \Cref{lemma: legendary-sum}, we bound $D_{t,2}$, as
    \begin{align*}
        \PE^{1/p}&[\norm{D_{t,2}^\boot}^p] \\
        &\leq p\bConst{\Sigma^\boot}\nagent^{-1/2} \step_t^{-1/2}\left( \sum_{s=1}^t \step_s^2 \PE^{2/p}[\norm{\prod_{i=s+1}^t \Gammaavgboot_i}^p]\PE^{2/p}[\norm{\sum_{c=1}^\nagent\sum_{h=1}^{\nlupdates_s} (w_{s,h,c}-1)(\Gamma_{s,h+1:\nlupdates_s}^c - \Gamma_{s,h+1:\nlupdates_s}^{\boot,c})\funcnoise[c]{Z_{s,h,c}}}^p] \right)^{1/2} \\
        &\leq \bConst{A} W_{\max} p^4 \bConst{\Sigma^\boot} \step_t^{-1/2}\left( \sum_{s=1}^t \step_s^4 \nlupdates_s^2 \exp{\left(-2a \sum_{\ell=s+1}^t \step_\ell \nlupdates_\ell \right)} \right)^{1/2} \supconsteps\\
        &\leq \Auxconst_{2}^\boot p^4 (1+t)^{-(\gamma_\step+\gamma)/2} \eqsp,
    \end{align*}
    where, applying Burkholder's inequality and the fact that the sequence $\{(w_{s,h,c} - 1)(\Gamma_{s,h+1:\nlupdates_s}^c - \Gamma_{s,h+1:\nlupdates_s}^{\boot,c})\funcnoise[c]{Z_{s,h,c}}, 1 \leq h \leq \nlupdates_s\}$is martingale-difference w.r.t filtration $\mathcal{F}_{s,h}^{\boot,+}$, we used the bound
    \begin{align*}
        \PE^{1/p}[\norm{\sum_{c=1}^\nagent\sum_{h=1}^{\nlupdates_s} &(w_{s,h,c}-1)(\Gamma_{s,h+1:\nlupdates_s}^c - \Gamma_{s,h+1:\nlupdates_s}^{\boot,c})\funcnoise[c]{Z_{s,h,c}}}^p] \\
        &\leq p \left(\sum_{c=1}^\nagent \PE^{2/p}[\norm{\sum_{h=1}^{\nlupdates_s} (w_{s,h,c} - 1)(\Gamma_{s,h+1:\nlupdates_s}^c - \Gamma_{s,h+1:\nlupdates_s}^{\boot,c})\funcnoise[c]{Z_{s,h,c}}}^p]\right)^{1/2}\\
        &\leq \bConst{A} W_{\max} p^3 \nagent^{1/2} \step_s \nlupdates_s \supconsteps \eqsp,
    \end{align*}
    and
    \begin{align*}
        \PE^{1/p}[\norm{\sum_{h=1}^{\nlupdates_s} &(w_{s,h,c} - 1)(\Gamma_{s,h+1:\nlupdates_s}^c - \Gamma_{s,h+1:\nlupdates_s}^{\boot,c})\funcnoise[c]{Z_{s,h,c}}}^p]\\
        &\leq p \supconsteps\left( \sum_{h=1}^{\nlupdates_s} \PE^{2/p}[|w_{s,h,c}-1|^p]\PE^{2/p}[\norm{\Gamma_{s,h+1:\nlupdates_s}^c - \Gamma_{s,h+1:\nlupdates_s}^{\boot,c}}^p]\right)^{1/2}\\
        &\leq \bConst{A}W_{\max} p^2 \step_s \nlupdates_s \supconsteps\eqsp.
    \end{align*}
    Here, we also have set
    \begin{align*}
        \Auxconst_{2}^\boot = 4\bConst{A} W_{\max}\bConst{\Sigma^\boot}(\step \nlupdates + a^{-1/2}L^{1/2}(\step\nlupdates)^{1/2}(1+(2a\step\nlupdates)^{\gamma+\gamma_\step \over 1-\gamma}))\supconsteps\, \eqsp.
    \end{align*}
    Now, in order to bound $D_{t,4}^\boot$, we note that the sequence $\{\left( \prod_{i=s+1}^t \Gammaavg_i - \prod_{i=s+1}^t \Gammaavgboot_i \right)(w_{s,h,c} - 1)\Gamma_{s,h+1:\nlupdates_s}^{\boot,c} \funcnoise[c]{Z_{s,h,c}}, 1 \leq h \leq \nlupdates_s\}$ is martingale-difference w.r.t the filtration $\mathcal{F}_{s,h}^{\boot,+}$. Then, applying Burkholder's inequality together with \Cref{lemma: legendary-sum} and \Cref{lemma:prod_gammaavg_gammaavgboot_diff}, defining
    \begin{align*}
        S_{s,4}^\boot = \PE^{2/p}[\norm{\sum_{c=1}^\nagent \sum_{h=1}^{\nlupdates_s}(w_{s,h,c} - 1)\Gamma_{s,h+1:\nlupdates}^{\boot,c} \zfuncA[c]{Z_{s,h,c}}(\thetalim - \thetalim[c])}^p] \eqsp,    
    \end{align*}
    we obtain
    \begin{align*}
        \PE^{1/p}&[\norm{D_{t,4}^\boot}^p] \leq p \bConst{\Sigma^\boot}\nagent^{-1/2}\step_t^{-1/2} \left( \sum_{s=1}^t \step_s^2 \PE^{2/p}[\norm{\prod_{i=s+1}^t\Gammaavg_i - \prod_{i=s+1}^t\Gammaavgboot_i}^p] S_{s,4}^\boot \right)^{1/2} \\
        &\leq \bConst{A} \zeta_3 W_{\max} p^3 \bConst{\Sigma^\boot }\step_t^{-1/2} \left( \sum_{s=1}^t \step_s^2 \nlupdates_s \PE^{2/p}[\norm{\prod_{i=s+1}^t\Gammaavg_i - \prod_{i=s+1}^t\Gammaavgboot_i}^p] \right)^{1/2}\\
        &\leq e^{1/2}a^{-1/2}\bConst{A}^2\zeta_3 W_{\max} p^6 \bConst{\Sigma^\boot}\nagent^{-1/2} \left(\sum_{s=1}^t \step_s^2 \nlupdates_s \exp{\left(-a\sum_{\ell=s+1}^t \step_\ell \nlupdates_\ell\right)}\right)^{1/2}\\
        &\leq \Auxconst_{4}^\boot p^6 \nagent^{-1/2} (1+t)^{-\gamma_\step/2} \eqsp,
    \end{align*}
    where we have used that
    \begin{align*}
        \PE^{2/p}[\norm{\sum_{c=1}^\nagent \sum_{h=1}^{\nlupdates_s}&(w_{s,h,c} - 1)\Gamma_{s,h+1:\nlupdates}^{\boot,c} \zfuncA[c]{Z_{s,h,c}}(\thetalim - \thetalim[c])}^p]\\
        &\leq p\left( \sum_{c=1}^\nagent \PE^{2/p}[\norm{\sum_{h=1}^{\nlupdates_s}(w_{s,h,c} - 1)\Gamma_{s,h+1:\nlupdates_s}^{\boot,c} \zfuncA[c]{Z_{s,h,c}} (\thetalim - \thetalim[c])}^p] \right)^{1/2}\\
        &\leq \bConst{A} \zeta_3 W_{\max} p^2 \nagent^{1/2}\nlupdates_s^{1/2} \eqsp,
    \end{align*}
    and have set
    \begin{align*}
        \Auxconst_{4}^\boot = 2\rme^{1/2}a^{-1/2}\zeta_3\bConst{A}^2  W_{\max} \bConst{\Sigma^\boot} (\step^{1/2}\nlupdates^{1/2} + a^{-1/2}L^{1/2}(1+(a\step\nlupdates)^{\gamma + \gamma_\step \over 2(1-\gamma)})) \, \eqsp.
    \end{align*}
    Finally, to bound $D_{t,5}$, we again use the martingale-difference structure of the underlying sequence $\{\prod_{i=s+1}^t \Gammaavgboot_i (w_{s,h,c}-1)(\Gamma_{s,h+1:\nlupdates_s}^c - \Gamma_{s,h+1:\nlupdates_s}^{\boot, c})\funcnoise[c]{Z_{s,h,c}}, 1 \leq s \leq t\}$ w.r.t the filtration $\mathcal{F}_{s,h}^{\boot, +}$ and apply Burkholder's inequality together with \Cref{lemma: legendary-sum}, thus, denoting
    \begin{align*}
        S_{s,5}^\boot = \PE^{2/p}[\norm{\sum_{c=1}^\nagent\sum_{h=1}^{\nlupdates_s} (w_{s,h,c}-1)(\Gamma_{s,h+1:\nlupdates_s}^c - \Gamma_{s,h+1:\nlupdates_s}^{\boot,c})\zfuncA[c]{Z_{s,h,c}}(\thetalim-\thetalim[c])}^p] \eqsp,
    \end{align*}
    we obtain
    \begin{align*}
        \PE^{1/p}&[\norm{D_{t,5}^\boot}^p] \leq p \bConst{\Sigma^\boot} \nagent^{-1/2} \step_t^{-1/2}\left( \sum_{s=1}^t \step_s^2 \PE^{2/p}[\norm{\prod_{i=s+1}^t \Gammaavg_i}^p] S_{s,5}^\boot \right)^{1/2}\\
        &\leq \bConst{A}^2 \zeta_3 W_{\max} p^4 \bConst{\Sigma^\boot}\step_t^{-1/2} \left( \sum_{s=1}^t \step_s^4 \nlupdates_s^2 \exp{\left(-2a \sum_{\ell=s+1}^t \step_\ell \nlupdates_\ell \right)} \right)^{1/2} \\
        &\leq \Auxconst_{5}^\boot p^4(1+t)^{-(\gamma_\step + \gamma)/2} \eqsp,
    \end{align*}
    where we have used the bound
    \begin{align*}
        \PE^{2/p}[\norm{\sum_{c=1}^\nagent\sum_{h=1}^{\nlupdates_s} &(w_{s,h,c}-1)(\Gamma_{s,h+1:\nlupdates_s}^c - \Gamma_{s,h+1:\nlupdates_s}^{\boot,c})\zfuncA[c]{Z_{s,h,c}}(\thetalim-\thetalim[c])}^p]\\
        &\leq p\left(\sum_{c=1}^\nagent \PE^{2/p}[\norm{\sum_{h=1}^{\nlupdates_s} (w_{s,h,c} - 1)(\Gamma_{s,h+1:\nlupdates_s}^c - \Gamma_{s,h+1:\nlupdates_s}^{\boot,c})\zfuncA[c]{Z_{s,h,c}}(\thetalim-\thetalim[c])}^p]\right)^{1/2}\\
        &\leq W_{\max}\zeta_3\bConst{A}^2  p^3 \nagent^{1/2}\step_s \nlupdates_s \eqsp,
    \end{align*}
    and have set
    \begin{align*}
        \Auxconst_{5}^\boot = 4W_{\max} \bConst{\Sigma^\boot}\zeta_3\bConst{A}^2 (\step \nlupdates + a^{-1/2}L^{1/2}(\step\nlupdates)^{1/2}(1+(2a\step\nlupdates)^{\gamma+\gamma_\step \over 1-\gamma}))\,\eqsp.
    \end{align*}
    For the term $D_{t,3}^\boot$, we first note that
    \begin{align*}
        \theta_{s,h-1}^c - \thetalim = \Gamma_{s,h-1}^c (\theta_{s-1} - \thetalim) - \step_s \sum_{j=1}^{h-1} \Gamma_{s,j+1:h-1}^c \funcnoise[c]{Z_{s,h,c}} + (\Id - \Gamma_{s,h-1}^c)(\thetalim[c]-\thetalim) \eqsp.
    \end{align*}
    This decomposition separates $D_{t,3}^\boot$ into three terms, that is, $D_{t,3}^\boot = D_{t,3,1}^\boot + D_{t,3,2}^\boot + D_{t,3,3}^\boot$. For the term $D_{t,3,1}$, using the martingale-difference structure of the sequence $\{\prod_{i=s+1}^t \Gammaavgboot_i (w_{s,h,c}-1)\Gamma_{s,h+1:\nlupdates_s}^{\boot,c} \zfuncA[c]{Z_{s,h,c}}\Gamma_{s,h-1}^c (\theta_{s-1} - \thetalim)\}$ w.r.t the filtration $\mathcal{F}_{s,h}^{\boot,+}$ and applying Burkholder's inequality together with \Cref{corr:last_iter_pth_moment}, setting
    \begin{align*}
        S_{s,3,1}^\boot = \sum_{c=1}^\nagent \sum_{h=1}^{\nlupdates_s} \PE^{2/p}[\norm{ (w_{s,h,c}-1)\Gamma_{s,h+1:\nlupdates_s}^{\boot,c} \zfuncA[c]{Z_{s,h,c}}\Gamma_{s,h-1}^c (\theta_{s-1} - \thetalim) }^p]\eqsp,
    \end{align*}
    we get
    \begin{align*}
        \PE^{1/p}&[\norm{D_{t,3,1}^\boot}^p]\\
        &\leq p^3\bConst{\Sigma^\boot} \nagent^{-1/2}\step_t^{-1/2} \left( \sum_{s=1}^t \step_s^2 S_{s,3,1}^\boot \exp{\left(-2a\sum_{\ell=s+1}^t \step_\ell \nlupdates_\ell\right)} \right)^{1/2}\\
        &\leq \bConst{A} W_{\max} p^3\bConst{\Sigma^\boot}\step_t^{-1/2} \left( \sum_{s=1}^t \step_s^2 \nlupdates_s \PE^{2/p}[\norm{\theta_{s-1} - \thetalim }^p]\exp{\left(-2a\sum_{\ell=s+1}^t \step_\ell \nlupdates_\ell\right)} \right)^{1/2}\\
        &\leq 4\rme\bConst{A} W_{\max} p^3\bConst{\Sigma^\boot} \step_t^{-1/2} \left( \exp{\left(-2a\sum_{\ell=1}^t \step_\ell \nlupdates_\ell\right)} \sum_{s=1}^t \step_s^2 \nlupdates_s \right)^{1/2} \\
        &+ 4\Auxconst_{\text{last}, 1}\bConst{A} W_{\max} p^6\bConst{\Sigma^\boot} \nagent^{-1/2}\step_t^{-1/2} \left( \sum_{s=1}^t \step_s^2 \nlupdates_s (1+s)^{-\gamma_\step}\exp{\left(-2a\sum_{\ell=s+1}^t \step_\ell \nlupdates_\ell\right)} \right)^{1/2}\\
        &+ 4\Auxconst_{\text{last}, 2}\bConst{A} W_{\max} p^3\bConst{\Sigma^\boot} \step_t^{-1/2} \left( \sum_{s=1}^t \step_s^2 \nlupdates_s (1+s)^{-2\gamma}\exp{\left(-2a\sum_{\ell=s+1}^t \step_\ell \nlupdates_\ell\right)} \right)^{1/2}\\
        &\leq \Auxconst_{3,1,2}^\boot p^6 \nagent^{-1/2} (1+t)^{-\gamma_\step/2} + (\Auxconst_{3,1,1}^\boot+\Auxconst_{t,3,1,3}^\boot) p^3 (1+t)^{-\gamma}\eqsp.
    \end{align*}
    where we have used that $x^{1/2}e^{-cx} \leq (c\rme)^{-1/2}e^{-cx/2}$ together with the bound
    \begin{align*}
        \step_t^{-1/2}\exp{\left(-{a\over 2}\sum_{\ell=1}^t \step_\ell\nlupdates_\ell\right)} \leq  e^{a/(1-\gamma)}\left( {2(\gamma_\step + \gamma) \over a \rme \step \nlupdates} \right)^{\gamma_\step + \gamma \over 1-\gamma}\step^{-1/2} (1+t)^{-\gamma} \eqsp,
    \end{align*}
    and have set
    \begin{align*}
        \Auxconst_{3,1,1}^\boot &= 4a^{-1/2}e^{1/2+ a/(1-\gamma)}\bConst{A}\, W_{\max}\left( {2(\gamma_\step + \gamma) \over a \rme} \right)^{\gamma_\step + \gamma \over 1-\gamma}\step^{-1/2}\norm{\theta_0 - \thetalim} \bConst{\Sigma^\boot} \eqsp,\\
        \Auxconst_{3,1,2}^\boot &= 8\Auxconst_{\text{last}, 1}\bConst{A} W_{\max} \bConst{\Sigma^\boot}((\step\nlupdates)^{1/2} + a^{-1/2}L^{1/2}(1 + (2a\step \nlupdates)^{\gamma + 2\gamma_\step \over 2(1-\gamma)}))\, \eqsp,\\
        \Auxconst_{3,1,3}^\boot &= 8\Auxconst_{\text{last}, 2}\bConst{A}W_{\max} \bConst{\Sigma^\boot} ((\step\nlupdates)^{1/2} + a^{-1/2}L^{1/2}(1 + (2a\step \nlupdates)^{3\gamma + \gamma_\step \over 2(1-\gamma)}))\,\eqsp.
    \end{align*}
    Similarly, for the term $D_{t,3,2}^\boot$, applying Burkholder's inequality together with \Cref{lemma: legendary-sum}, setting
    \begin{align*}
        S_{s,3,2}^\boot = \sum_{c=1}^\nagent \sum_{h=1}^{\nlupdates_s} \PE^{2/p}[\norm{(w_{s,h,c}-1)\Gamma_{s,h+1:\nlupdates_s}^{\boot,c} \zfuncA[c]{Z_{s,h,c}}\sum_{j=1}^{h-1}\Gamma_{s,j+1:h-1}^c\funcnoise[c]{Z_{s,h,c}}}^p] \eqsp,
    \end{align*}
    we obtain
    \begin{align*}
        \PE^{1/p}&[\norm{D_{t,3,2}^\boot}^p] \leq p^3 \bConst{\Sigma^\boot} \nagent^{-1/2}\step_t^{-1/2} \left(\sum_{s=1}^t\step_s^4 \PE^{2/p}[\norm{\prod_{i=s+1}^t \Gammaavgboot_i}^p] S_{s,3,2}^\boot\right)^{1/2}\\
        &\leq \bConst{A} W_{\max} p^4 \bConst{\Sigma^\boot}\step_t^{-1/2} \left( \sum_{s=1}^t \step_s^4 \nlupdates_s^2 \exp{\left(-2a\sum_{\ell=s+1}^t \step_\ell \nlupdates_\ell\right)}\right)^{1/2} \supconsteps \\
        &\leq \Auxconst_{3,2}^\boot p^4 \bConst{\Sigma^\boot} (1+t)^{-(\gamma_\step + \gamma)/2} \eqsp,
    \end{align*}
    where we have used that
    \begin{align*}
        \PE^{1/p}[\norm{\sum_{j=1}^{h-1}\Gamma_{s,j+1:h-1}^c \funcnoise[c]{Z_{s,h,c}}}^p] \leq p\left( \sum_{j=1}^{h-1} \PE^{2/p}[\norm{\Gamma_{s,j+1:h-1}^c \funcnoise[c]{Z_{s,h,c}}}^p]\right)^{1/2} \leq p \nlupdates_s^{1/2} \supconsteps \eqsp,
    \end{align*}
    and have set
    \begin{align*}
        \Auxconst_{3,2}^\boot = 4 W_{\max}\bConst{A} \bConst{\Sigma^\boot} (\step\nlupdates)^{1/2}(1+(2a\step\nlupdates)^{\gamma+\gamma_\step \over 1-\gamma})) \supconsteps\,  \eqsp.
    \end{align*}
    Finally, to bound $D_{t,3,3}^\boot$, we again apply Burkholder's inequality and \Cref{lemma: legendary-sum}, thus, setting
    \begin{align*}
        S_{s,3,3}^\boot = \sum_{c=1}^\nagent \sum_{h=1}^{\nlupdates_s} \PE^{2/p}[\norm{(w_{s,h,c}-1)\Gamma_{s,h+1:\nlupdates_s}^{\boot,c} \zfuncA[c]{Z_{s,h,c}}(\Id - \Gamma_{s,h-1}^c)(\thetalim[c] - \thetalim)}^p]\eqsp,
    \end{align*}
    we get
    \begin{align*}
        \PE^{1/p}&[\norm{D_{t,3,3}^\boot}^p] \leq p^3\bConst{\Sigma^\boot} \nagent^{-1/2}\step_t^{-1/2} \left( \sum_{s=1}^t \step_s^2 \PE^{2/p}[\norm{\prod_{i=s+1}^t \Gammaavgboot_i}^p]S_{s,3,3}^\boot\right)^{1/2}\\
        &\leq W_{\max}\zeta_3\bConst{A}^2  p^3 \bConst{\Sigma^\boot} \left(\sum_{s=1}^t \step_s^4 \nlupdates_s^3\exp{\left(-2a \sum_{\ell=s+1}^t \step_\ell \nlupdates_\ell\right)} \right)^{1/2}\\
        &\leq \Auxconst_{3,3}^\boot p^3 \bConst{\Sigma^\boot} (1+t)^{-\gamma}  \eqsp,
    \end{align*}
    where we defined
    \begin{align*}
        \Auxconst_{3,3}^\boot = 8W_{\max} \zeta_3\bConst{A}^2 \bConst{\Sigma^\boot} ((\step \nlupdates)^{3/2} + a^{-1/2}L^{1/2} \step\nlupdates(1 + (2a\step\nlupdates)^{3\gamma+\gamma_\step \over 2(1-\gamma)}))\, \eqsp.
    \end{align*}
    Finally, setting
    \begin{align}
        \label{eq:constants_Db_pth_moment}
        \bar{\Auxconst}_1^\boot &= \Auxconst_{1}^\boot + \Auxconst_{3,1,3}^\boot + \Auxconst_{4}^\boot\\
        \nonumber\bar{\Auxconst}_2^\boot &= \Auxconst_{3,1,1}^\boot +\Auxconst_{2}^\boot + \Auxconst_{3,1,3}^\boot + \Auxconst_{3,2}^\boot + \Auxconst_{3,3}^\boot + \Auxconst_{5}^\boot \eqsp,
    \end{align}
    we complete the proof.
\end{proof}

%% file: appendix/experiments.tex
 
In this section, we provide the experimental results for the normal approximation of FedLSA. Code is available in \url{https://github.com/levensons/FedLSA_NormApprox}. We consider the Garnet environment \cite{archibald1995generation,geist2014off} with $n = 30$ states, dimension $d=5$, number of actions $a=2$, and branching number $b=2$. We then generate $\nagent = 5$ heterogeneous Garnet environments with small perturbations. The process for generating heterogeneous environments follows that described in \cite{mangold2024scafflsa}. In our experiment, we first sample a unit vector $u \in \mathbb{S}_{d-1}$ onto which we project the obtained confidence sets. Then, we sample $R=1024$ trajectories $\{Z_{s,h,c}^r, s \in [t], h \in [\nlupdates_s], c \in [\nagent]\}_{r=1}^{1024}$ and, for each of them, additionally sample $N_\boot=256$ bootstrap trajectories. The bootstrap weights $\{w_{s,h,c}, s \in [t], h\in [\nlupdates_s], c\in [\nagent]\}$ are i.i.d.\ and defined as
\begin{align*}
    w_{s,h,c} = {\tilde{w}_{s,h,c} - \PE[\tilde{w}_{s,h,c}] \over \sqrt{\Var[\tilde{w}_{s,h,c}]}} \eqsp, \quad \tilde{w}_{s,h,c} \sim \text{Beta}(0.5, 2) \eqsp.
\end{align*}
We therefore consider several algorithms for constructing confidence sets for the last iterate $\theta_t$. Then, based on $R$ samples, we estimate the probability that $\thetalim$ belongs to the confidence sets and report it as coverage.

\paragraph{Plug-in Estimate (PE).}
As a baseline, we consider the plug-in estimate in a way similar to \cite{li2022statistical}. First, we approximate the noise covariance and design matrix at step $t$ as
\begin{align*}
    \widehat{\Sigma}_{*,t}^{\sf avg} = \nagent^{-1}\bar{\nlupdates}_t^{-1}\sum_{s=1}^{t} \sum_{h=1}^{\nlupdates_s}\sum_{c=1}^\nagent \funcnoiseth[c]{\theta_s}{Z_{s,h,c}}(\funcnoiseth[c]{\theta_s}{Z_{s,h,c}})^T \eqsp, \quad \bA_t = \nagent^{-1}\bar{\nlupdates}_t^{-1} \sum_{s=1}^t \sum_{h=1}^{\nlupdates_s} \sum_{c=1}^\nagent \zfuncA[c]{Z_{s,h,c}} \eqsp,
\end{align*}
where $\bar{\nlupdates}_t = \sum_{s=1}^{\nlupdates_t} \nlupdates_s$. Afterwards, we solve the Poisson equation with $\noisecovavgst$ replaced by $\widehat{\Sigma}_{*,t}^{\sf avg}$; that is, we find $\widehat{\Sigma}_{\infty,t}$ such that
\begin{align*}
    \bA_t \widehat{\Sigma}_\infty + \widehat{\Sigma}_\infty \bA[T]_t = \nagent^{-1} \widehat{\Sigma}_{*,t}^{\sf avg} \eqsp.
\end{align*}
Finally, we consider the confidence interval $u^T \theta_t \pm z_{\alpha/2}\step_t^{1/2} \sqrt{u^T \hat{\Sigma}_{\infty,t} u}$, where $z_{\alpha/2}$ is the $\alpha/2$-quantile of $\mathcal{N}(0,1)$.

\paragraph{Empirical Quantiles (EQ).}
As described in \Cref{sec:bootstrap}, we construct the confidence interval using the empirical quantiles of the bootstrap samples.

\paragraph{Standard Deviation-Based Confidence Intervals (SDB).}
We estimate the standard deviation $\hat{\sigma}_t^\boot(u)$ of the bootstrap samples $\{\theta_t^b, b\in [N_\boot]\}$ and consider the confidence interval $u^T \theta_t \pm z_{\alpha/2}\step_t^{1/2}\hat{\sigma}_t^\boot(u)$.

\begin{table}[t]
\centering
    \centering
    \caption{Coverage comparison for different methods with confidence levels $\alpha \in \{0.95, 0.9, 0.8\}$ and $\gamma_\nlupdates = 0$. Standard deviations are shown as subscripts.}
    \label{tab:coverage_gammaH=0}
\footnotesize
\setlength{\tabcolsep}{3pt}
\begin{tabularx}{\linewidth}{c|YYY|YYY|YYY}
\toprule
& \multicolumn{3}{c|}{$\alpha = 0.95$} & \multicolumn{3}{c|}{$\alpha = 0.9$} & \multicolumn{3}{c}{$\alpha = 0.8$} \\
\midrule
$T$ & SDB & EQ & PE & SDB & EQ & PE & SDB & EQ & PE \\
\midrule
2000  & $0.773_{\scriptscriptstyle \pm 0.013}$ & $0.780_{\scriptscriptstyle \pm 0.013}$ & $0.754_{\scriptscriptstyle \pm 0.013}$
      & $0.676_{\scriptscriptstyle \pm 0.015}$ & $0.683_{\scriptscriptstyle \pm 0.015}$ & $0.652_{\scriptscriptstyle \pm 0.015}$
      & $0.547_{\scriptscriptstyle \pm 0.016}$ & $0.544_{\scriptscriptstyle \pm 0.016}$ & $0.517_{\scriptscriptstyle \pm 0.016}$ \\
6000  & $0.936_{\scriptscriptstyle \pm 0.008}$ & $0.941_{\scriptscriptstyle \pm 0.007}$ & $0.931_{\scriptscriptstyle \pm 0.008}$
      & $0.870_{\scriptscriptstyle \pm 0.011}$ & $0.880_{\scriptscriptstyle \pm 0.010}$ & $0.866_{\scriptscriptstyle \pm 0.011}$
      & $0.778_{\scriptscriptstyle \pm 0.013}$ & $0.778_{\scriptscriptstyle \pm 0.013}$ & $0.768_{\scriptscriptstyle \pm 0.013}$ \\
10000 & $0.938_{\scriptscriptstyle \pm 0.008}$ & $0.947_{\scriptscriptstyle \pm 0.007}$ & $0.938_{\scriptscriptstyle \pm 0.008}$
      & $0.888_{\scriptscriptstyle \pm 0.010}$ & $0.890_{\scriptscriptstyle \pm 0.010}$ & $0.883_{\scriptscriptstyle \pm 0.010}$
      & $0.786_{\scriptscriptstyle \pm 0.013}$ & $0.792_{\scriptscriptstyle \pm 0.013}$ & $0.775_{\scriptscriptstyle \pm 0.013}$ \\
14000 & $0.941_{\scriptscriptstyle \pm 0.007}$ & $0.946_{\scriptscriptstyle \pm 0.007}$ & $0.941_{\scriptscriptstyle \pm 0.007}$
      & $0.895_{\scriptscriptstyle \pm 0.010}$ & $0.900_{\scriptscriptstyle \pm 0.009}$ & $0.888_{\scriptscriptstyle \pm 0.010}$
      & $0.798_{\scriptscriptstyle \pm 0.013}$ & $0.804_{\scriptscriptstyle \pm 0.012}$ & $0.789_{\scriptscriptstyle \pm 0.013}$ \\
\bottomrule
\end{tabularx}
\end{table}

\begin{table}[t]
\centering
\footnotesize
\setlength{\tabcolsep}{3pt}
\caption{Coverage comparison for different methods with confidence levels $\alpha \in \{0.95, 0.9, 0.8\}$ and $\gamma_\nlupdates = 0.2$. Standard deviations are shown as subscripts.}
\label{tab:coverage_gammaH=0.2}
\begin{tabularx}{\linewidth}{c|YYY|YYY|YYY}
\toprule
& \multicolumn{3}{c|}{$\alpha = 0.95$} & \multicolumn{3}{c|}{$\alpha = 0.9$} & \multicolumn{3}{c}{$\alpha = 0.8$} \\
\midrule
$T$ & SDB & EQ & PE & SDB & EQ & PE & SDB & EQ & PE \\
\midrule
2000  & $0.398_{\scriptscriptstyle \pm 0.015}$ & $0.404_{\scriptscriptstyle \pm 0.015}$ & $0.376_{\scriptscriptstyle \pm 0.015}$
      & $0.291_{\scriptscriptstyle \pm 0.014}$ & $0.291_{\scriptscriptstyle \pm 0.014}$ & $0.268_{\scriptscriptstyle \pm 0.014}$
      & $0.173_{\scriptscriptstyle \pm 0.012}$ & $0.177_{\scriptscriptstyle \pm 0.012}$ & $0.161_{\scriptscriptstyle \pm 0.011}$ \\
6000  & $0.959_{\scriptscriptstyle \pm 0.006}$ & $0.961_{\scriptscriptstyle \pm 0.006}$ & $0.959_{\scriptscriptstyle \pm 0.006}$
      & $0.914_{\scriptscriptstyle \pm 0.009}$ & $0.926_{\scriptscriptstyle \pm 0.008}$ & $0.917_{\scriptscriptstyle \pm 0.009}$
      & $0.817_{\scriptscriptstyle \pm 0.012}$ & $0.820_{\scriptscriptstyle \pm 0.012}$ & $0.814_{\scriptscriptstyle \pm 0.012}$ \\
10000 & $0.946_{\scriptscriptstyle \pm 0.007}$ & $0.949_{\scriptscriptstyle \pm 0.007}$ & $0.946_{\scriptscriptstyle \pm 0.007}$
      & $0.903_{\scriptscriptstyle \pm 0.009}$ & $0.907_{\scriptscriptstyle \pm 0.009}$ & $0.907_{\scriptscriptstyle \pm 0.009}$
      & $0.815_{\scriptscriptstyle \pm 0.012}$ & $0.825_{\scriptscriptstyle \pm 0.012}$ & $0.818_{\scriptscriptstyle \pm 0.012}$ \\
14000 & $0.950_{\scriptscriptstyle \pm 0.007}$ & $0.950_{\scriptscriptstyle \pm 0.007}$ & $0.952_{\scriptscriptstyle \pm 0.007}$
      & $0.892_{\scriptscriptstyle \pm 0.010}$ & $0.894_{\scriptscriptstyle \pm 0.010}$ & $0.892_{\scriptscriptstyle \pm 0.010}$
      & $0.788_{\scriptscriptstyle \pm 0.013}$ & $0.789_{\scriptscriptstyle \pm 0.013}$ & $0.788_{\scriptscriptstyle \pm 0.013}$ \\
\bottomrule
\end{tabularx}
\end{table}

\paragraph{Discussion.}
In \Cref{tab:coverage_gammaH=0} and \Cref{tab:coverage_gammaH=0.2}, we compare coverage across different confidence levels and iteration lengths. In our experiments, we consider a decreasing step size with $\gamma_\eta=0.6$. We also consider two regimes: one with a constant number of local updates, $\nlupdates=20$, and another in which the number of local updates increases at the rate $\gamma_\nlupdates=0.2$ with $H=5$. In the case $\gamma_\nlupdates=0$, we can see that the bootstrap-based methods outperform the baseline even for a small number of iterations and, for $T=6000$, produce decent coverage. On the other hand, when we consider an increasing number of local updates, for large $T$ it becomes comparable to, or slightly better than, the bootstrap-based methods. This can be explained by the fact that, as the number of samples increases, the PE estimate becomes sufficiently accurate to provide good coverage.

%% file: appendix/add_lemmas.tex
In this section we collect several auxiliary results that will be used in the proofs of the main theorems.

\begin{proposition}
\label{prop:nonlinearapprox}
Let $\nu$ be a standard Gaussian measure in $\rset^d$. Then for any random vectors $X, Y$ taking values in $\rset^d$, and any $p \geq 1$,
  \begin{equation}
\label{eq:shao_zhang_bound}
\sup_{B \in \Conv(\rset^d)}|\PP(X + Y \in B) - \nu(B)| \le \sup_{B \in \Conv(\rset^d)}|\PP(X
\in B) - \nu(B)| + 2 c_d^{p/(p+1)} \PE^{1/(p+1)}[\|Y\|^p]\eqsp, 
\end{equation}
where $c_d$ is the isoperimetric constant of class $\Conv(\rset^d)$.
\end{proposition}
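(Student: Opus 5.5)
The plan is the standard smoothing/enlargement argument for convex sets, combined with a Markov bound on $Y$ and an optimisation over the smoothing radius. Fix $B \in \Conv(\rset^d)$ and $\varepsilon > 0$. Introduce the outer and inner parallel sets
\[
B^{\varepsilon} = \{x : \operatorname{dist}(x,B) \le \varepsilon\}, \qquad B^{-\varepsilon} = \{x : x + u \in B \text{ for all } \norm{u} \le \varepsilon\}.
\]
Both sets are convex, with $B^{-\varepsilon}$ possibly empty. The isoperimetric constant $c_d$ of $\Conv(\rset^d)$ is understood, as in \cite{bentkus2004}, as the smallest constant such that
\[
\nu(B^{\varepsilon}\setminus B) \le c_d \varepsilon \quad\text{and}\quad \nu(B \setminus B^{-\varepsilon}) \le c_d\varepsilon
\]
for all convex $B$ and $\varepsilon>0$. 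It is known that $c_d \le 4 d^{1/4}$ (Ball, Nazarov). Write $\Delta = \sup_{B}|\PP(X\in B) - \nu(B)|$ and $M = \PE[\norm{Y}^p]$.

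\textbf{Upper bound.} On the event $\{\norm{Y}\le\varepsilon\}$, the inclusion $\{X+Y \in B\}$ implies $X \in B^{\varepsilon}$. Hence
\[
\PP(X+Y\in B) \le \PP(X\in B^{\varepsilon}) + \PP(\norm{Y}>\varepsilon) \le \nu(B^{\varepsilon}) + \Delta + M\varepsilon^{-p} \le \nu(B) + c_d\varepsilon + \Delta + M\varepsilon^{-p}.
\]
Here the second step uses the convexity of $B^{\varepsilon}$ together with Markov's inequality, and the last step is the isoperimetric bound.

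\textbf{Lower bound.} If $X \in B^{-\varepsilon}$ and $\norm{Y}\le\varepsilon$, then $X+Y\in B$. Therefore
\[
\PP(X+Y\in B) \ge \PP(X\in B^{-\varepsilon}) - \PP(\norm{Y}>\varepsilon) \ge \nu(B^{-\varepsilon}) - \Delta - M\varepsilon^{-p} \ge \nu(B) - c_d\varepsilon - \Delta - M\varepsilon^{-p}.
\]
This step uses that $B^{-\varepsilon}$ is convex, and the bound remains trivially valid when $B^{-\varepsilon} = \emptyset$.

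Combining the two bounds gives $|\PP(X+Y\in B)-\nu(B)| \le \Delta + c_d\varepsilon + M\varepsilon^{-p}$ for every $\varepsilon>0$. If $M>0$, choose $\varepsilon = (M/c_d)^{1/(p+1)}$. Then $c_d\varepsilon = M\varepsilon^{-p} = c_d^{p/(p+1)}M^{1/(p+1)}$, which yields the extra term $2c_d^{p/(p+1)}\PE^{1/(p+1)}[\norm{Y}^p]$. If $M=0$, let $\varepsilon\to 0$. Taking the supremum over $B$ gives the claim.

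The only step that is not routine bookkeeping is the inner-layer estimate $\nu(B\setminus B^{-\varepsilon})\le c_d\varepsilon$. I would handle it by appealing to the definition of $c_d$ as the Gaussian surface-area bound for convex sets. Integrating the surface area of $\partial B^{-r}$ over $r\in[0,\varepsilon]$ (coarea formula), and using that each $B^{-r}$ is convex with surface measure at most $c_d$, gives the bound.
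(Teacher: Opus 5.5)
Your proof is correct and is essentially the argument the paper relies on: the paper only cites \citet[Proposition 1]{sheshukova2025gaussian}, whose proof is this same enlargement/shrinkage of $B$ by $\varepsilon$, a Markov bound on $\PP(\norm{Y}>\varepsilon)$, the isoperimetric bounds $\nu(B^{\varepsilon}\setminus B)\le c_d\varepsilon$ and $\nu(B\setminus B^{-\varepsilon})\le c_d\varepsilon$, and the choice $\varepsilon=(\PE[\norm{Y}^p]/c_d)^{1/(p+1)}$. One small remark: under the Bentkus-type definition of $c_d$ that you adopt at the start, the inner-layer estimate holds by definition, so your closing coarea argument is needed only if $c_d$ is instead defined as the supremum of Gaussian surface areas of convex sets.
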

\begin{proof}
    Proof and definition for the isoperimetric constant $c_d$ can be found in \citet[Proposition 1]{sheshukova2025gaussian}.
\end{proof}

\begin{lemma}
\label{lemma:high_prob_bound}
    Fix $\delta \in (0, 1/\rme^2)$ and let $Y$ be a positive random variable, such that $\PE^{1/p}[Y^p] \leq C_1 + C_2 p$ for any $2 \leq p \leq \log(1/\delta)$. Then it holds with probability at least $1-\delta$, that
    \begin{align*}
        Y \leq \rme C_1 + \rme C_2 \log(1/\delta) \eqsp.
    \end{align*}
\end{lemma}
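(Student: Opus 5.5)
The plan is to argue via Markov's inequality applied to a moment of order $p=\log(1/\delta)$. Set $p_0 = \log(1/\delta)$; since $\delta < 1/\rme^2$, we have $p_0 > 2$, so $p_0$ lies in the admissible range $[2,\log(1/\delta)]$ and the hypothesis gives $\PE^{1/p_0}[Y^{p_0}] \le C_1 + C_2 p_0$. (No integrality of $p$ is needed, since the moment bound is assumed for all real $p$ in that interval.)

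Next, I would apply Markov's inequality to $Y^{p_0}$ at the threshold $u = \rme(C_1 + C_2 p_0)$:
\[
\PP(Y > u) \le \frac{\PE[Y^{p_0}]}{u^{p_0}} \le \Big(\frac{C_1 + C_2 p_0}{\rme (C_1 + C_2 p_0)}\Big)^{p_0} = \rme^{-p_0} = \delta \eqsp.
\]
Hence with probability at least $1-\delta$ we get $Y \le \rme C_1 + \rme C_2 \log(1/\delta)$, which is exactly the claim.

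The only points requiring care are degenerate cases. If $C_1 + C_2 p_0 = 0$ (with $C_1, C_2 \ge 0$ implicitly, as $Y$ is positive), then $Y = 0$ almost surely, contradicting positivity, or the claim holds trivially; the threshold is otherwise strictly positive, so the division above is legitimate. There is no real obstacle: the main step is simply choosing the moment order $p_0 = \log(1/\delta)$ so that the factor $\rme^{-p_0}$ equals $\delta$, and the assumption $\delta < \rme^{-2}$ is exactly what makes this choice admissible.
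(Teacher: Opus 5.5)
Your argument is correct. It is the standard proof, which the paper does not reproduce and instead cites as \citet[Lemma 1]{samsonov2024gaussian}: apply Markov's inequality to $Y^{p}$ with $p=\log(1/\delta)$ and threshold $\rme(C_1+C_2p)$. One small point on your degenerate-case remark: you need no sign assumption on $C_1, C_2$ individually. The hypothesis itself gives $C_1+C_2\log(1/\delta)\ge \PE^{1/p}[Y^{p}]>0$ for positive $Y$, so the threshold is automatically strictly positive.
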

\begin{proof}
    For the complete proof, see \citet[Lemma 1]{samsonov2024gaussian}.
\end{proof}

\begin{lemma}
    Under Assumptions \Cref{ass: linear-decay}, ~\Cref{assum:noise-level-flsa}(2), 
~\Cref{ass: lr-polynomial-decay} and
~\Cref{ass: boot_weights}, choosing $\step\nlupdates \leq \beta_\infty$ for some $\beta_\infty > 0$, we have that 
    \begin{align*}
        \norm{\Sigma_t - \Sigma_\infty} \leq C_{\infty, 1} (1+t)^{\gamma -1} + C_{\infty,2}(1+t)^{-\gamma} \eqsp,
    \end{align*}
    where the matrix $\Sigma_\infty$ is the solution of the Lyapunov's equation
    \begin{align}
    \label{eq:Lyapunov_Sigmast}
        \bA \Sigma_\infty + \Sigma_{\infty}\bA[T] = \nagent^{-1}\noisecovavgst \eqsp,
    \end{align}
    where $\noisecovavgst$ is defined in \eqref{eq:Sigma_star_def}.
\end{lemma}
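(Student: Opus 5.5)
We bound $\norm{\Sigma_t - \Sigma_\infty}$ in two stages, passing through the leading covariance $\hat{\Sigma}_t$ from \eqref{eq:hat_Sigma_t_def}:
\[
\norm{\Sigma_t-\Sigma_\infty}\le \norm{\Sigma_t-\hat\Sigma_t}+\norm{\hat\Sigma_t-\Sigma_\infty}.
\]

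\textbf{Stage 1: $\norm{\Sigma_t-\hat\Sigma_t}$.} Write $M_t=\hat M_t+M_{t,3}+M_{t,4}$. Then
\[
\norm{\Sigma_t-\hat\Sigma_t}\le 2\,\PE^{1/2}[\norm{\hat M_t}^2]\,\PE^{1/2}[\norm{M_{t,3}+M_{t,4}}^2]+\PE[\norm{M_{t,3}+M_{t,4}}^2].
\]
Since $\norm{G^c_{s,h+1:H_s}-\Id}\le \step_s\nlupdates_s\norm{\bA[c]}$ and the summands are independent, the cross terms vanish. This gives $\PE[\norm{M_{t,3}+M_{t,4}}^2]\lesssim \step_t^{-1}\sum_s \step_s^4\nlupdates_s^3 e^{-2a\sum_{\ell>s}\step_\ell\nlupdates_\ell}$. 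By \Cref{lemma: legendary-sum}, this is of order $(\step\nlupdates)^2(1+t)^{-2\gamma}$. Combined with $\PE[\norm{\hat M_t}^2]=O(1)$, Stage 1 contributes $C(1+t)^{-\gamma}$, which feeds into $C_{\infty,2}$.

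\textbf{Stage 2: $\norm{\hat\Sigma_t-\Sigma_\infty}$.} Set $S_t=\step_t\hat\Sigma_t$. It satisfies the exact recursion
\[
S_t=\Gavg_t S_{t-1}(\Gavg_t)^\top+N^{-1}\step_t^2\nlupdates_t\noisecovavgst .
\]
We linearize $\Gavg_t=\Id-\step_t\nlupdates_t\bA+E_t$, where $E_t$ collects two contributions. The first is the second-order part $O((\step_t\nlupdates_t)^2)$. The second is the heterogeneity correction $N^{-1}\sum_c[(\Id-\step_t\bA[c])^{H_t}-\Id+\step_t\nlupdates_t\bA[c]]$, which is also $O((\step_t\nlupdates_t)^2)$ because $\sum_c(\bA[c]-\bA)=0$ kills the first-order heterogeneity. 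Substituting $\hat\Sigma_t=\Sigma_\infty+\Delta_t$ and using the Lyapunov equation \eqref{eq:Lyapunov_Sigmast} to cancel the $O(\step_t\nlupdates_t)$ source terms, we obtain a perturbed linear recursion:
\[
\step_t\Delta_t=\Gavg_t\,\step_{t-1}\Delta_{t-1}(\Gavg_t)^\top+r_t .
\]
There are two types of residual in $r_t$. The first is the mismatch $(\step_{t-1}-\step_t)\Sigma_\infty$ together with the factor $\nlupdates_t$ versus the implicit normalization. Since $\step_{t-1}/\step_t-1\asymp t^{-1}$, this has size $\step_t t^{-1}$. The second is the quadratic terms $O(\step_t(\step_t\nlupdates_t)^2)$ and $\step_t\cdot\step_t\nlupdates_t\cdot\norm{\Delta}$-type cross terms. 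Unrolling with the contraction $\norm{\Gavg_s}\le 1-a\step_s\nlupdates_s$ and dividing by $\step_t$, \Cref{lemma: legendary-sum} turns $\step_t^{-1}\sum_s e^{-2a\sum_{\ell>s}\step_\ell\nlupdates_\ell}\,\step_s s^{-1}$ into $(1+t)^{-1}/(\step_t\nlupdates_t)\asymp(1+t)^{\gamma-1}$. The same lemma turns the quadratic residual into $\step_t\nlupdates_t\asymp(1+t)^{-\gamma}$. These give $C_{\infty,1}$ and $C_{\infty,2}$ respectively, with constants depending on $\beta_\infty$ through the requirement $\step\nlupdates\le\beta_\infty$, which keeps the contraction factors uniformly valid.

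\textbf{Main obstacle.} The hardest part is the Stage 2 bookkeeping. The Lyapunov equation is stated with $\bA$, while the recursion involves $\Gavg_t$ and an effective step $\step_t\nlupdates_t$ rather than $\step_t$. The normalizing factor $\step_t^{-1}$ versus $(\step_t\nlupdates_t)^{-1}$ must therefore be matched carefully so that the source term $N^{-1}\step_t^2\nlupdates_t\noisecovavgst$ cancels exactly at first order. If a residual mismatch appears, we would absorb it by rescaling, interpreting $\Sigma_\infty$ consistently with the normalization in $\hat\Sigma_t$. We would then check that the remaining error is only of the two claimed orders.
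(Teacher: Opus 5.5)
Your proposal is correct, and Stage~2 takes a genuinely different route from the paper.

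\textbf{How the paper does Stage 2.} The paper recurses on $E_t=\hat\Sigma_t-\Sigma_\infty$ itself, starting from $\hat\Sigma_{t+1}=\step_t\step_{t+1}^{-1}\Gavg_{t+1}\hat\Sigma_t(\Gavg_{t+1})^\top+\nagent^{-1}\step_{t+1}\nlupdates_{t+1}\noisecovavgst$. It linearizes the propagator, after vectorization, to $\Id-\step_{t+1}\nlupdates_{t+1}(\Id\otimes\bA+\bA\otimes\Id)$. Contraction then comes from Hurwitz stability of this Kronecker sum (Proposition~1 of \citet{durmus2021tight}, which brings in $\qcond$ and $b$). Everything else goes into the residual, including $(\step_t\step_{t+1}^{-1}-1)\hat\Sigma_t$ and $(\Gavg_{t+1}-\Id)\hat\Sigma_t(\Gavg_{t+1}-\Id)^\top$. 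The residual therefore needs the a priori bound $\norm{\hat\Sigma_t}\le\Auxconst_{\hat\Sigma}\nagent^{-1}$ from \Cref{lemma:bound_Mt}.

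\textbf{How your route differs.} You keep the exact propagator acting on $\step_t\Delta_t$. Its Euclidean contraction is immediate from \Cref{ass: linear-decay} and Jensen: $\norm{\Id-\step\bA[c]}\le 1-\step a$. So no change of norm is needed, and the residual involves only the deterministic $\Sigma_\infty$. Concretely, with $\Gavg_t=\Id-\step_t\nlupdates_t\bA+E_t$ one gets exactly
\[
\step_t\Delta_t=\Gavg_t(\step_{t-1}\Delta_{t-1})(\Gavg_t)^\top+(\step_{t-1}-\step_t)\bigl(\Sigma_\infty-\step_t\nlupdates_t\nagent^{-1}\noisecovavgst\bigr)+\step_{t-1}Q_t,
\]
where $Q_t=E_t\Sigma_\infty+\Sigma_\infty E_t^\top+(\Gavg_t-\Id)\Sigma_\infty(\Gavg_t-\Id)^\top=O((\step_t\nlupdates_t)^2)$.

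Consequences of this identity:
\begin{itemize}
\item The first-order cancellation via \eqref{eq:Lyapunov_Sigmast} is exact, so your ``main obstacle'' does not arise.
\item No cross terms in $\Delta$ appear, so the $\norm{\Delta}$-type cross terms you list are unnecessary.
\item Your fallback of reinterpreting $\Sigma_\infty$ would not have been legitimate, since $\Sigma_\infty$ is fixed by the Lyapunov equation in the statement; fortunately it is not needed.
\end{itemize}
Both routes then produce $(1+t)^{\gamma-1}$ and $(1+t)^{-\gamma}$ through \Cref{lemma: legendary-sum}. Yours is more elementary, with constants in terms of $a$ only. The paper's is the standard discretized-Lyapunov argument and uses only Hurwitz-ness of $\bA$ for the propagator.

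\textbf{Stage 1.} Here you are more careful than the paper. The cross-covariance $\PE[\hat M_t(M_{t,3}+M_{t,4})^\top]$ is generally nonzero and of order $(1+t)^{-\gamma}$. By contrast, \eqref{eq:bound_Sigmat_hSigmatboot_diff} records only the $(1+t)^{-2\gamma}$ contribution of $\PE\norm{M_{t,3}}^2+\PE\norm{M_{t,4}}^2$. Your accounting is the right one and still fits inside $C_{\infty,2}(1+t)^{-\gamma}$.

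\textbf{Two small fixes.}
\begin{itemize}
\item The contraction $\norm{\Gavg_s}\le 1-a\step_s\nlupdates_s$ is false in general, since Bernoulli gives $(1-a\step)^{\nlupdates}\ge 1-a\step\nlupdates$. Use $\norm{\Gavg_s}\le(1-a\step_s)^{\nlupdates_s}\le\rme^{-a\step_s\nlupdates_s}$, which is what your unrolled sums actually use.
\item Include the initial term from $\step_1\Delta_1$. It has size $\step_t^{-1}\exp\bigl(-2a\sum_{\ell\ge 2}\step_\ell\nlupdates_\ell\bigr)$ and is absorbed into either rate; the paper carries it explicitly.
\end{itemize}
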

\begin{proof}
    Due to \citet[Lemma 9.1]{poznyak:control}, the solution of Lyapunov's equation \eqref{eq:Lyapunov_Sigmast} exists since the matrix $\bA$ is Hurwitz. Let us consider only the leading part of the term $M_t$, that is, $\hat{M}_t = M_{t,1} + M_{t,2}$. Using the notation $P_s^t = \prod_{i=s+1}^t \Gavg_i$, we define
    \begin{align*}
            \hat{\Sigma}_t = \PE[\hat{M}_t \hat{M}_t^T] = \nagent^{-1} \step_t^{-1} \sum_{s=1}^t \step_s^2 \nlupdates_s P_s^t  \noisecovavgst (P_s^t)^T
    \end{align*}
    To estimate the difference between $\Sigma_t$ and $\Sigma_\infty$, we write the recursion
    \begin{align*}
        \hat{\Sigma}_{t+1} &= \step_t \step_{t+1}^{-1} \Gavg_{t+1}\hat{\Sigma}_t (\Gavg_{t+1})^T + \nagent^{-1}\step_{t+1}\nlupdates_{t+1}\noisecovavgst\\
        &= \step_t \step_{t+1}^{-1} \{\hat{\Sigma}_t + \hat{\Sigma}_t (\Gavg_{t+1} - \Id)^T + (\Gavg_{t+1} - \Id)\hat{\Sigma}_t \} \\
        &+ \step_t \step_{t+1}^{-1}(\Gavg_{t+1} - \Id)\hat{\Sigma}_t (\Gavg_{t+1} - \Id)^T + \nagent^{-1}\step_{t+1}\nlupdates_{t+1}\noisecovavgst \\
        &= \hat{\Sigma}_t - \step_{t+1}\nlupdates_{t+1}\{\bA \hat{\Sigma}_t + \hat{\Sigma}_t \bA[T] - \nagent^{-1} \noisecovavgst\} \\
        &+ (\step_t \step_{t+1}^{-1} - 1)(\hat{\Sigma}_t - \hat{\Sigma}_t \bar{R}_{t+1}^T - \bar{R}_{t+1} \hat{\Sigma}_t) \\
        &+ \step_t \step_{t+1}^{-1}(\Gavg_{t+1} - \Id)\hat{\Sigma}_t (\Gavg_{t+1} - \Id)^T \eqsp,
    \end{align*}
    where we have used the identity 
    \begin{align*}
        (\Id - \step_{t+1}\bA[c])^{\nlupdates_{t+1}} - \Id &= -\step_{t+1}\bA[c] \sum\limits_{h=1}^{\nlupdates_{t+1}}(\Id - \step_{t+1}\bA[c])^{h-1} \\
        &= -\step_{t+1}\nlupdates_{t+1} \bA[c] - \underbrace{\step_{t+1}\bA[c]\sum_{h=1}^{\nlupdates_{t+1}} \{(\Id - \step_{t+1}\bA[c])^{h-1} - \Id\}}_{R_{t+1}^c} \eqsp.
    \end{align*}
    Now, the error $E_t = \hat{\Sigma}_t - \Sigma_\infty$ evolves as
    \begin{align*}
        E_{t+1} &= E_t - \step_{t+1}\nlupdates_{t+1}\{\bA E_t + E_t \bA[T]\}\\
        &+ (\step_t \step_{t+1}^{-1} - 1)(\hat{\Sigma}_t - \hat{\Sigma}_t \bar{R}_{t+1}^T - \bar{R}_{t+1} \hat{\Sigma}_t) + \step_t \step_{t+1}^{-1}(\Gavg_{t+1} - \Id)\hat{\Sigma}_t (\Gavg_{t+1} - \Id)^T\\
        &= E_t - \step_{t+1}\nlupdates_{t+1}\{\bA E_t + E_t \bA[T]\} + r_{t+1} \eqsp.
    \end{align*}
    To quantify the convergence, we transition the recursion into the vectorized form. Therefore, we get
    \begin{align*}
        \text{vec}(E_{t+1}) = (\Id - \step_{t+1}\nlupdates_{t+1}((\Id \otimes \bA) + (\bA \otimes \Id)))\text{vec}(E_t) + \text{vec}(r_{t+1}) = F_{t+1} \text{vec}(E_t) + \text{vec}(r_{t+1}) \eqsp.
    \end{align*}
    Clearly, the matrix $(\Id \otimes \bA) + (\bA \otimes \Id)$ is Hurwitz. Therefore, due to the \citet[Proposition 1]{durmus2021tight}, we have 
    \begin{equation*}
        \norm{F_t} \leq \qcond^{1/2} (1 - \step \nlupdates b)^{1/2}\eqsp, \quad \step_{t}\nlupdates_{t} \leq \beta_\infty \eqsp.
    \end{equation*}
    Now, to bound $R_t$, we apply \Cref{lemma: difference-of-product}, thus
    \begin{align*}
        \norm{R_t} \leq \bConst{A} \step_{t}^2 \nlupdates_t^2 \eqsp.
    \end{align*}
    Therefore, using \Cref{lemma:bound_Mt} and the bound $\norm{\Gavg_t - \Id} \leq \bConst{A}\step_t \nlupdates_t$, the remainder term $r_t$ can be bounded, as
    \begin{align*}
        \norm{r_{t+1}} \leq 8\Auxconst_{\hat{\Sigma}}\gamma_\step (1 + \bConst{A}(\step\nlupdates)^2) (1+t)^{-1} + 2\bConst{A}^2\Auxconst_{\hat{\Sigma}} (\step_{t+1} \nlupdates_{t+1})^2 \eqsp.
    \end{align*}
    Using that $\norm{\text{vec}(r_{t+1})} \leq d^{1/2}\norm{r_{t+1}}$, we get
    \begin{align*}
        \norm{\text{vec}(E_{t+1})} &\leq \qcond^{1/2}\prod_{i=1}^{t+1}(1-\step_i \nlupdates_i b)^{1/2} \norm{\text{vec}(\hat{\Sigma}_1 - \Sigma_\infty)} + \qcond^{1/2}d^{1/2}\sum_{j=1}^{t+1} \prod_{i=j+1}^{t+1}(1-\step_i \nlupdates_i b)^{1/2} \norm{r_j}\\
        &\leq \qcond^{1/2} \exp{\left( -{b\over 2} \sum_{\ell=1}^{t+1} \step_\ell \nlupdates_\ell\right)}\norm{\text{vec}(\hat{\Sigma}_1 - \Sigma_\infty)} \\
        &+ 16\Auxconst_{\hat{\Sigma}}\qcond^{1/2}\gamma_\step (1 + \bConst{A}(\step\nlupdates)^2) d^{1/2}\sum_{j=1}^{t+1} (1+j)^{-1} \exp{\left( -{b\over 2} \sum_{\ell=j+1}^{t+1} \step_\ell \nlupdates_\ell\right)}\\
        &+2\bConst{A}^2\Auxconst_{\hat{\Sigma}}\qcond^{1/2}d^{1/2}\sum_{j=1}^{t+1} \step_j^2 \nlupdates_j^2 \exp{\left( -{b\over 2} \sum_{\ell=j+1}^{t+1} \step_\ell \nlupdates_\ell\right)}\\
        &\leq \qcond^{1/2} \exp{\left( -{b\over 2} \sum_{\ell=1}^{t+1} \step_\ell \nlupdates_\ell\right)}\norm{\text{vec}(\hat{\Sigma}_1 - \Sigma_\infty)} + M_{\Sigma,1} (2+t)^{\gamma - 1} + M_{\Sigma,2} (2+t)^{-\gamma} \eqsp,
    \end{align*}
    where
    \begin{align*}
        M_{\Sigma, 1} &= 16\Auxconst_{\hat{\Sigma}}\qcond^{1/2}\gamma_\step (1 + \bConst{A}(\step\nlupdates)^2)(1+2b^{-1}L(1+((b/2)\step \nlupdates)^{1+\gamma \over 1-\gamma})) d^{1/2} \eqsp,\\
        M_{\Sigma, 2} &= 8\bConst{A}^2\Auxconst_{\hat{\Sigma}}\qcond^{1/2}((\step \nlupdates)^2 + b^{-1}L\step \nlupdates(1+ ((b/2)\step\nlupdates)^{3\gamma \over 1 - \gamma}))d^{1/2} \eqsp.
    \end{align*}
    Finally, using \eqref{eq:bound_Mt3}, \eqref{eq:bound_Mt4}, the bound $x e^{-cx} \leq (c\rme)^{-1}$ and that $\norm{E_t} \leq \|E_t\|_F$, we get
    \begin{align*}
        \norm{\Sigma_t - \Sigma_\infty} \leq \norm{\Sigma_t - \hat{\Sigma}_t} + \norm{\hat{\Sigma}_t - \Sigma_\infty}  \leq C_{\infty,1} (1+t)^{\gamma-1}  + C_{\infty,2}(1+t)^{-\gamma} \eqsp,
    \end{align*}
    where we have used that $\norm{\text{vec}(\Sigma_\infty)} \leq (2\lambda_{\min}(\bA))^{-1} \nagent^{-1}\norm{\text{vec}(\noisecovavgst)}$, and have set
    \begin{align}
    \label{eq:constant_sigma_infty_1}
        C_{\infty,1} &=\qcond^{1/2} \left({b\rme \over 2(1-\gamma)} \step \nlupdates\right)^{-1}e^{{b\over 1-\gamma}\step \nlupdates} d^{1/2}(2\step \nlupdates + (2\lambda_{\min}(\bA))^{-1})\nagent^{-1}\norm{\noisecovavgst} + M_{\Sigma,1} \eqsp,\\
    \label{eq:constant_sigma_infty_2}
        C_{\infty,2} &= \Auxconst_{\Sigma, 2}+ 2(\Auxconst_{M,3}^2 + \Auxconst_{M,4}^2) \eqsp.
    \end{align}
    % \begin{align*}
    %     S_t^1 = \nagent^{-2}\step_t^{-1}\sum_{s=1}^t \step_s^2 \sum_{h=1}^{\nlupdates_s}\sum_{c=1}^\nagent \prod_{i=s+1}^t \Gavg_i G_{s,h+1:\nlupdates_s}^c \noisecov[c] \Big(\prod_{i=s+1}^t \Gavg_i G_{s,h+1:\nlupdates_s}^c \Big)^T \eqsp.
    % \end{align*}
    % Then, we have the following recursive relation
    % \begin{align*}
    %     S_{t+1}^1 &= \step_t \step_{t+1}^{-1} \Gavg_{t+1} S_t^1 (\Gavg_{t+1})^T + \nagent^{-2}\step_{t+1} \sum_{h=1}^{\nlupdates_{t+1}} \sum_{c=1}^\nagent G_{t+1,h+1:\nlupdates_{t+1}}^c \noisecov[c] (G_{t+1,h+1:\nlupdates_{t+1}}^c)^T\\
    %     &= S_t^1 + \step \nlupdates (S_t^1 \bA[T] + \bA S_t^1 + \nagent^{-1}  \noisecovavg) + B_{t+1} \eqsp,
    % \end{align*}
    % where $\norm{B_{t+1}} \leq \mathcal{O}((\step \nlupdates)^2)$. Thus, in the limit $\step \nlupdates \to 0$, these iterations converge to the solution of the following equation
    % \begin{align*}
    %     f_1(S) = S \bA[T] + \bA S + \nagent^{-1}\noisecovavg = 0 \eqsp.
    % \end{align*}
\end{proof}

\begin{lemma}
\label{lemma: sigma-n-norm-estimation}
    Under Assumptions \Cref{ass: linear-decay}, ~\Cref{assum:noise-level-flsa}(2), 
~\Cref{ass: lr-polynomial-decay}, 
~\Cref{ass: boot_weights}, and ~\Cref{assum:sample_size}, we have
    \begin{align}
    \label{eq:bound_lambda_min_Sigma_t}
        \norm{\Sigma_t^{-1/2}} \leq \nagent^{1/2}\bConst{\Sigma} \eqsp,
    \end{align}
    where $\Sigma_t$ is defined in \eqref{eq:sigma_t_def} and
    \begin{align*}
        \bConst{\Sigma} =  \sqrt{2}(\lambda_{\min}(\noisecovavgst))^{-1/2} \eqsp.
    \end{align*}
\end{lemma}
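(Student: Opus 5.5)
The bound $\norm{\Sigma_t^{-1/2}} \le \nagent^{1/2}\bConst{\Sigma}$ is equivalent to the eigenvalue bound $\lambda_{\min}(\Sigma_t) \ge \nagent^{-1}\lambda_{\min}(\noisecovavgst)/2$. I will get it by a perturbation argument around $\Sigma_\infty$, using \Cref{proposition: covariance-matrix-comparison-main} together with the sample size condition \Cref{assum:sample_size} (in its explicit form \Cref{assum:sample_size_prime}).

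\emph{Step 1: lower bound for $\Sigma_\infty$.} Since $-\bA$ is Hurwitz, the Lyapunov equation \eqref{eq:Lyapunov_Sigmast} has a positive definite solution. For a unit eigenvector $u$ of $\Sigma_\infty$ with eigenvalue $\lambda_{\min}(\Sigma_\infty)$, I would multiply \eqref{eq:Lyapunov_Sigmast} by $u^\top(\cdot)u$ to get
\[
2\lambda_{\min}(\Sigma_\infty)\, u^\top \bA u = \nagent^{-1} u^\top \noisecovavgst u \ge \nagent^{-1}\lambda_{\min}(\noisecovavgst).
\]
Together with $u^\top\bA u \le \norm{\bA}$, this gives $\lambda_{\min}(\Sigma_\infty) \ge \lambda_{\min}(\noisecovavgst)/(2\nagent\norm{\bA})$. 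The paper's normalization, with the factor $2\lambda_{\min}(\bA)$ appearing in \Cref{assum:sample_size_prime}, suggests the authors track $\lambda_{\min}(\Sigma_\infty)$ through this identity. I will therefore keep $\lambda_{\min}(\Sigma_\infty)$ as the reference quantity and relate it to $\nagent^{-1}\lambda_{\min}(\noisecovavgst)$ via this Lyapunov identity.

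\emph{Step 2: perturbation.} By Weyl's inequality, $\lambda_{\min}(\Sigma_t) \ge \lambda_{\min}(\Sigma_\infty) - \norm{\Sigma_t-\Sigma_\infty}$. By \Cref{proposition: covariance-matrix-comparison-main},
\[
\norm{\Sigma_t-\Sigma_\infty} \le C_{\infty,1}(1+t)^{\gamma-1} + C_{\infty,2}(1+t)^{-\gamma}.
\]
The third and fourth entries of $t_0$ in \Cref{assum:sample_size_prime} are chosen so that each of these two terms is at most a quarter of the reference level $2\lambda_{\min}(\bA)\lambda_{\min}(\Sigma_\infty)$. Their sum is then at most half of it, which leaves $\lambda_{\min}(\Sigma_t) \ge \nagent^{-1}\lambda_{\min}(\noisecovavgst)/2$. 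Taking the inverse square root gives $\norm{\Sigma_t^{-1/2}} \le \sqrt{2}\,\nagent^{1/2}\lambda_{\min}(\noisecovavgst)^{-1/2}$, which is the claim.

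\emph{Main obstacle.} The delicate step is matching constants between $\lambda_{\min}(\Sigma_\infty)$ and $\nagent^{-1}\lambda_{\min}(\noisecovavgst)$, since the Lyapunov identity involves $u^\top\bA u$ rather than a clean eigenvalue of $\bA$. I would handle this by working with $\lambda_{\min}$ of the symmetric part of $\bA$, which is positive under \Cref{ass: linear-decay}, in the factor $2\lambda_{\min}(\bA)$ that $t_0$ already uses. If the constants do not close exactly, I would absorb the discrepancy into $\bConst{\Sigma}$, since only the $\nagent^{1/2}$ scaling matters downstream.
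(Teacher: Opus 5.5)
You follow the paper's plan: Lidskii/Weyl around $\Sigma_\infty$, \Cref{proposition: covariance-matrix-comparison-main} with the thresholds of \Cref{assum:sample_size_prime}, and the Lyapunov equation to pass from $\lambda_{\min}(\Sigma_\infty)$ to $\nagent^{-1}\lambda_{\min}(\noisecovavgst)$. Your Step~1 is correct. The gap is the sentence ``which leaves $\lambda_{\min}(\Sigma_t)\ge\nagent^{-1}\lambda_{\min}(\noisecovavgst)/2$'', which does not follow. The thresholds give $\norm{\Sigma_t-\Sigma_\infty}\le\lambda_{\min}(\bA)\lambda_{\min}(\Sigma_\infty)$. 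Weyl subtracts this from $\lambda_{\min}(\Sigma_\infty)$, not from your ``reference level'' $2\lambda_{\min}(\bA)\lambda_{\min}(\Sigma_\infty)$. What you actually get is $\lambda_{\min}(\Sigma_t)\ge(1-\lambda_{\min}(\bA))\lambda_{\min}(\Sigma_\infty)$, which is vacuous as soon as $\lambda_{\min}(\bA)\ge 1$.

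Nor is the reference level bounded below by $\nagent^{-1}\lambda_{\min}(\noisecovavgst)$. For $\bA=\mathrm{diag}(\alpha_1,\alpha_2)$ with $\alpha_1<\alpha_2$ and $\noisecovavgst=\Id$, it equals $\alpha_1/(\nagent\alpha_2)$. Your proposed remedy also points the wrong way. In $2\lambda_{\min}(\Sigma_\infty)\,u^\top\bA u=\nagent^{-1}u^\top\noisecovavgst u$, a \emph{lower} bound on $\lambda_{\min}(\Sigma_\infty)$ needs an \emph{upper} bound on $u^\top\bA u$ (your $\norm{\bA}$). Inserting the smallest eigenvalue of the symmetric part of $\bA$ only bounds $\lambda_{\min}(\Sigma_\infty)$ from above. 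That is the same direction as the relation $\lambda_{\min}(\Sigma_\infty)\le\norm{\Sigma_\infty}\le(2\lambda_{\min}(\bA))^{-1}\nagent^{-1}\lambda_{\min}(\noisecovavgst)$ in the last line of the paper's proof, so the paper does not close this step either.

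In fact the step cannot be closed with the stated constant, because the lemma with $\bConst{\Sigma}=\sqrt2\,\lambda_{\min}(\noisecovavgst)^{-1/2}$ is false without a normalization of $\bA$. Take $d=1$, $\mathbf{A}^c(z)\equiv\alpha>1$ for every agent, and $\mathbf{b}^c(Z)$ bounded with variance $\sigma^2$. All assumptions hold with $a=\alpha$, and $\mathrm{U}^{\mathrm{het}}_s=0$ (as $\widetilde{\mathbf{A}}^c\equiv 0$), $\noisecovavgst=\sigma^2$. Then $\Sigma_t\to\Sigma_\infty=\sigma^2/(2\nagent\alpha)$, so $\norm{\Sigma_t^{-1/2}}\to\nagent^{1/2}(2\alpha)^{1/2}\sigma^{-1}>\nagent^{1/2}\sqrt2\,\sigma^{-1}$ for all large $t$, and these $t$ satisfy \Cref{assum:sample_size}.

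Your argument does prove something weaker, once $t_0$ is redefined so that $\norm{\Sigma_t-\Sigma_\infty}\le\lambda_{\min}(\Sigma_\infty)/2$: namely $\lambda_{\min}(\Sigma_t)\ge\lambda_{\min}(\Sigma_\infty)/2\ge\lambda_{\min}(\noisecovavgst)/(4\nagent\norm{\bA})$. This is the bound with $\bConst{\Sigma}$ replaced by $2\norm{\bA}^{1/2}\lambda_{\min}(\noisecovavgst)^{-1/2}$, which matches or beats the stated constant only if $\norm{\bA}\le 1/2$. That suffices downstream, where only the $\nagent^{1/2}$ scaling and independence of $t$ are used. It is still a modified statement with a modified $t_0$, so your ``absorb the discrepancy into $\bConst{\Sigma}$'' fallback must be carried out in this explicit form rather than asserted.
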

\begin{proof}
    Applying Lidski's inequality, we obtain
    \begin{align*}
        \lambda_{\min}(\Sigma_t) \geq \lambda_{\min}(\Sigma_\infty) - \norm{\Sigma_t - \Sigma_\infty}
    \end{align*}
    Thus, under \Cref{assum:sample_size}, we get that 
    \begin{align*}
        \norm{\Sigma_t - \Sigma_\infty} \leq 2\lambda_{\min}(\bA) {\lambda_{\min}(\Sigma_\infty) \over 2}   \eqsp. 
    \end{align*}
    Finally, using the relation $\lambda_{\min}(\Sigma_\infty) \leq \norm{\Sigma_\infty} \leq (2\lambda_{\min}(\bA))^{-1} \nagent^{-1}\lambda_{\min}(\noisecovavgst)$, the result \eqref{eq:bound_lambda_min_Sigma_t} follows.
\end{proof}

\begin{proposition}
\label{lemma:sigma_t_boot_norm_bound}
    Under assumptions \Cref{ass: linear-decay}, \Cref{ass: lr-polynomial-decay} and \Cref{assum:sample_size}, with probability at least $1-2/t$, we have
    \begin{align}
    \label{eq:bound_Sigmat_Sigmatboot_diff}
        \norm{\Sigma_t - \Sigma_t^\boot} \lesssim_{\log_t} \nagent^{-3/2} t^{-\gamma_\step/2} + \nagent^{-1} t^{-2\gamma}\eqsp,
    \end{align}
    where $\Sigma_t^\boot$ is defined in \eqref{eq:sigma_t_def}. Moreover, on the event \eqref{eq:bound_Sigmat_Sigmatboot_diff}, under (assumption on t), we have
    \begin{align*}
        \norm{(\Sigma_t^\boot)^{-1/2}} \leq \nagent^{1/2}\bConst{\Sigma^\boot} \eqsp,
    \end{align*}
    and
    \begin{align*}
        \bConst{\Sigma^\boot} = 2(\lambda_{\min}(\noisecovavgst))^{-1/2} \eqsp.
    \end{align*}
\end{proposition}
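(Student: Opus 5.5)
The plan is to compare $\Sigma_t^\boot$ with $\Sigma_t$ through an intermediate matrix. Conditionally on $\mathcal{Z}^t$, the weights $w_{s,h,c}-1$ are i.i.d., centered, with unit variance. The linear statistic $M_t^\boot$ is therefore a weighted sum of fixed vectors
\[
v_{s,h,c} = \nagent^{-1}\step_t^{-1/2}\step_s \prod_{i=s+1}^t \Gammaavg_i\, \Gamma^c_{s,h+1:\nlupdates_s}\funcnoiseth[c]{\thetalim}{Z_{s,h,c}} ,
\]
so $\Sigma_t^\boot = \sum_{s,h,c} v_{s,h,c}v_{s,h,c}^\top$ exactly. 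I would split
\[
\Sigma_t^\boot - \Sigma_t = \bigl(\Sigma_t^\boot - \tilde\Sigma_t\bigr) + \bigl(\tilde\Sigma_t - \hat\Sigma_t\bigr) + \bigl(\hat\Sigma_t - \Sigma_t\bigr),
\]
where $\tilde\Sigma_t = \sum_{s,h,c}\tilde v_{s,h,c}\tilde v_{s,h,c}^\top$ and $\tilde v_{s,h,c}$ is obtained from $v_{s,h,c}$ by replacing the random products with their deterministic counterparts $\prod_{i}\Gavg_i$ and $G^c_{s,h+1:\nlupdates_s}$. The matrix $\hat\Sigma_t$ is the one from \eqref{eq:hat_Sigma_t_def}.

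\textbf{Term $\hat\Sigma_t - \Sigma_t$.} This is deterministic and is already controlled in the proof of \Cref{proposition: covariance-matrix-comparison-main} through the bounds on $M_{t,3}, M_{t,4}$ (\Cref{lemma:bound_Mt}). Cross terms of the form $\PE[\hat M_t M_{t,j}^\top]$ are handled by Cauchy--Schwarz. The $G_{s,h+1:\nlupdates_s}^c-\Id$ factors cost $\step_s\nlupdates_s$, which after \Cref{lemma: legendary-sum} produces the $t^{-2\gamma}$-type contribution with the $\nagent^{-1}$ prefactor. Note that $\Sigma_t$ carries the covariance of $\funcnoiseth[c]{\thetalim}{Z}$ summed over agents, exactly like $\tilde\Sigma_t$.

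\textbf{Term $\tilde\Sigma_t-\hat\Sigma_t$.} This is a sum of independent centered random matrices
\[
X_{s,h,c} = \tilde v_{s,h,c}\tilde v_{s,h,c}^\top - \PE\bigl[\tilde v_{s,h,c}\tilde v_{s,h,c}^\top\bigr],
\]
once $\tilde v$ is built with $G_{s,h+1:\nlupdates_s}^c$ (the $G$ vs.\ $\Id$ discrepancy is absorbed into the previous step). Each summand is bounded by $U_{\max}\nagent^{-2}\step_t^{-1}\step_s^2 e^{-2a\sum_{\ell>s}\step_\ell\nlupdates_\ell}\supconsteps^2$. The variance proxy is of order $\nagent^{-3}\step_t^{-2}\sum_s \step_s^4\nlupdates_s e^{-4a\sum_{\ell > s}\step_\ell\nlupdates_\ell}\lesssim \nagent^{-3}t^{-\gamma_\step}$ by \Cref{lemma: legendary-sum}. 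Matrix Bernstein with failure probability $1/t$ then gives
\[
\norm{\tilde\Sigma_t-\hat\Sigma_t}\lesssim \sqrt{\Auxconst_U\step\,\nagent^{-3}(1+t)^{-\gamma_\step}\log(2dt)} + U_{\max}\step\,\nagent^{-2}(1+t)^{-\gamma_\step}\log(2dt),
\]
which is the $\nagent^{-3/2}t^{-\gamma_\step/2}$ term. This matches the form of the first condition in \Cref{assum:sample_size_prime}.

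\textbf{Term $\Sigma_t^\boot-\tilde\Sigma_t$.} This is the main obstacle. Writing $vv^\top-\tilde v\tilde v^\top=(v-\tilde v)v^\top+\tilde v(v-\tilde v)^\top$, I would bound $\PE^{1/p}\bigl[\norm{\Sigma_t^\boot-\tilde\Sigma_t}^p\bigr]$ with $p=\log t$. The fluctuations $\prod\Gammaavg_i-\prod\Gavg_i$ and $\Gamma^c-G^c$ are handled with the Burkholder-type moment bounds already used for $D_{t,5},D_{t,6}$ (\Cref{lemma: bound_diff_avg_matrices_prod}, \Cref{lemma:prod_gammaavg_gammaavgboot_diff}). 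These should yield a bound $\bar\Auxconst_{M,2}^\boot\nagent^{-2}p^6(1+t)^{-\gamma_\step}+\bar\Auxconst_{M,3}^\boot\nagent^{-1}p^3(1+t)^{-2\gamma}$, mirroring \Cref{lemma:bound_Mtb} and the second condition in \Cref{assum:sample_size_prime}. The moments are then converted to a probability-$1-1/t$ bound via Markov's inequality, or \Cref{lemma:high_prob_bound}. The delicate point is keeping the $\nagent$-scaling, since the $\nagent^{-1}$ averaging must be exploited through client independence inside the Burkholder step.

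A union bound over the two random events gives probability $1-2/t$ and \eqref{eq:bound_Sigmat_Sigmatboot_diff}. For the second claim, \Cref{lemma: sigma-n-norm-estimation} gives $\lambda_{\min}(\Sigma_t)\ge \nagent^{-1}\lambda_{\min}(\noisecovavgst)/2$. By \Cref{assum:sample_size_prime} each of the two error terms is at most $\nagent^{-1}\lambda_{\min}(\noisecovavgst)/2$, so Weyl's inequality gives $\lambda_{\min}(\Sigma_t^\boot)\ge \nagent^{-1}\lambda_{\min}(\noisecovavgst)/4$. Hence
\[
\norm{(\Sigma_t^\boot)^{-1/2}}\le 2\nagent^{1/2}\lambda_{\min}(\noisecovavgst)^{-1/2}.
\]
The threshold on $t$ may need adjusting so that the total error is at most a quarter of $\nagent^{-1}\lambda_{\min}(\noisecovavgst)$.
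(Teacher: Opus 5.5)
Your architecture (three-way split, matrix Bernstein with $\delta=1/t$, union bound, Weyl) is the paper's. Your $G^c_{s,h+1:\nlupdates_s}$-based $\tilde\Sigma_t$ is actually a better intermediate than the paper's $\Id$-based $\hat\Sigma_t^\boot$. The gap is in the deterministic term $\hat\Sigma_t-\Sigma_t$.

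\textbf{The deterministic term.} A cross term $\PE[\hat M_t M_{t,j}^\top]$, $j\in\{3,4\}$, carries only one factor $G^c-\Id$. Cauchy--Schwarz therefore gives $\PE^{1/2}[\norm{\hat M_t}^2]\,\PE^{1/2}[\norm{M_{t,j}}^2]\lesssim\nagent^{-1}(1+t)^{-\gamma}$, not $\nagent^{-1}(1+t)^{-2\gamma}$. This is not an artefact of the estimate. Sum over $h$ and expand to first order in $\step_s\nlupdates_s$: the discrepancy $\PE[\tilde\Sigma_t]-\hat\Sigma_t$ that you absorb has leading part $-\mathcal{L}_t$, and $\hat\Sigma_t-\Sigma_t$ has leading part $+\mathcal{L}_t$, where
\[
\mathcal{L}_t=\nagent^{-2}\step_t^{-1}\sum_{s=1}^t\step_s^3\,\tfrac{\nlupdates_s(\nlupdates_s-1)}{2}\,P_s^t\Bigl(\sum_{c=1}^{\nagent}\bigl(\bA[c]\noisecovst[c]+\noisecovst[c]\bA[c\top]\bigr)\Bigr)(P_s^t)^\top,\qquad P_s^t=\prod_{i=s+1}^t\Gavg_i .
\]
In general $\norm{\mathcal{L}_t}$ is of exact order $\nagent^{-1}(1+t)^{-\gamma}$ (e.g.\ $\bA[c]=\Id$ and $\nlupdates_s\ge 2$). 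So no separate bound on these two pieces reaches the claimed rate.

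Citing \eqref{eq:bound_Sigmat_hSigmatboot_diff} does not help either. \Cref{lemma:bound_Mt} only bounds $\PE\norm{M_{t,3}}^2$ and $\PE\norm{M_{t,4}}^2$, i.e.\ the quadratic part of $\Sigma_t-\hat\Sigma_t$, and omits exactly these cross terms. The same omission occurs in \Cref{lemma:bound_Mtb}, so the paper's split through $\hat\Sigma_t$ and $\hat\Sigma_t^\boot$ needs the following argument too.

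\textbf{The missing cancellation.} Compare $\PE[\tilde\Sigma_t]$ with $\Sigma_t$ directly. Set $B=\Id-\step_s\bA[c]$, $x=\zmfuncA[c]{Z}(\thetalim[c]-\thetalim)$ and $y=\funcnoise[c]{Z}$. The $(s,c)$-block of $\Sigma_t$ is built from $\sum_h\PE[(B^{h-1}x-B^{\nlupdates_s-h}y)(B^{h-1}x-B^{\nlupdates_s-h}y)^\top]$. The $(s,c)$-block of $\PE[\tilde\Sigma_t]$ is built from $\sum_h B^{\nlupdates_s-h}\PE[(x-y)(x-y)^\top]B^{(\nlupdates_s-h)\top}$.

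Since $h-1$ and $\nlupdates_s-h$ range over the same set, the $xx^\top$ and $yy^\top$ parts coincide exactly. The mixed parts differ only through $\sum_h(B^{h-1}-B^{\nlupdates_s-h})\PE[xy^\top]B^{(\nlupdates_s-h)\top}$ and its transpose. Their first-order term is proportional to $\sum_{h=1}^{\nlupdates_s}(2h-1-\nlupdates_s)=0$. The remainder is $O(\step_s^2\nlupdates_s^3)$ per block. \Cref{lemma: legendary-sum} then turns $\nagent^{-1}\step_t^{-1}\sum_s\step_s^4\nlupdates_s^3\exp(-2a\sum_{\ell>s}\step_\ell\nlupdates_\ell)$ into $\nagent^{-1}(1+t)^{-2\gamma}$.

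\textbf{The bootstrap side.} Using $G^c_{s,h+1:\nlupdates_s}$ rather than $\Id$ in $\tilde v$ is the right choice. Both $\Gamma^c_{s,h+1:\nlupdates_s}-G^c_{s,h+1:\nlupdates_s}$ and $\prod_{i>s}\Gammaavg_i-\prod_{i>s}\Gavg_i$ are centred and independent of $Z_{s,h,c}$. Hence the terms of $\Sigma_t^\boot-\tilde\Sigma_t$ that are linear in these fluctuations have mean zero. The mean difference $\PE[\Sigma_t^\boot]-\PE[\tilde\Sigma_t]$ is genuinely second order, of order $\nagent^{-1}(1+t)^{-(\gamma_\step+\gamma)}+\nagent^{-2}(1+t)^{-\gamma_\step}$.

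You must, however, use this centring on the sum of the linear terms: martingale structure across rounds and independence across clients. Bounding $\norm{\sum(v-\tilde v)\tilde v^\top}$ by $(\sum\norm{v-\tilde v}^2)^{1/2}(\sum\norm{\tilde v}^2)^{1/2}$ gives only $\nagent^{-1}(1+t)^{-(\gamma_\step+\gamma)/2}$ for the within-round factor. That is not $\lesssim\nagent^{-1}(1+t)^{-2\gamma}$ unless $\gamma_\step\ge 3\gamma$.

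Done this way, the linear terms contribute $\nagent^{-3/2}(1+t)^{-\gamma_\step/2}$ up to logarithms. This differs from the squared form of \Cref{lemma:bound_Mtb} that you mirror, which again drops cross terms. The difference is harmless, since that rate is already in the target. The Bernstein step, the union bound and the Weyl argument for $\lambda_{\min}(\Sigma_t^\boot)$ are fine.
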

\begin{proof}
    Firstly, we note that
    \begin{align*}
        \norm{\Sigma_t - \Sigma_t^\boot} \leq \norm{\Sigma_t - \hat{\Sigma}_t} + \norm{\hat{\Sigma}_t - \hat{\Sigma}_t^\boot} + \norm{\hat{\Sigma}_t^\boot - \Sigma_t^\boot} \eqsp,
    \end{align*}
    where $\hat{\Sigma}_t = \PE[\hat{M}_{t} \hat{M}_t^T]$ and $\hat{\Sigma}_t^\boot = \PEb[M_{t,1}^\boot M_{t,1}^\boot)^T]$. Then, we note
    \begin{align*}
        \hat{\Sigma}_t - \hat{\Sigma}_t^\boot = \nagent^{-2}\step_t^{-1}\sum_{s=1}^t\sum_{c=1}^\nagent \sum_{h=1}^{\nlupdates_s} \underbrace{\step_s^2 \prod_{i=s+1}^t \Gavg_i \{\funcnoiseth[c]{\thetalim}{Z_{s,h,c}}(\funcnoiseth[c]{\thetalim}{Z_{s,h,c}})^T - \noisecovst[c]\}(\prod_{i=s+1}^t \Gavg_i)^T}_{U_{s,h,c}} \eqsp.
    \end{align*}
    We note that $\PE[U_{s,h,c}] = 0$ and it can be bounded, as
    \begin{align*}
        \norm{U_{s,h,c}} \leq \step_s^2 \exp{\left(-2a \sum_{\ell=s+1}^t \step_\ell \nlupdates_\ell \right)} (\supconsteps^2 + \norm{\noisecovst[c]}) \leq U_{\max} \step_t^2 \eqsp.
    \end{align*}
    where we have set
    \begin{align*}
        U_{\max} = \left(9 + \left({2\gamma_\step \over 2a\step \nlupdates\rme}\right)^{2\gamma_\step \over1-\gamma}\exp{\left( {6 a\step\nlupdates \over 1-\gamma} \right)}\right)(\supconsteps^2 + \max_{c\in[\nagent]}\norm{\noisecovst[c]})
    \end{align*}
    Hence, applying \Cref{lemma: legendary-sum}, we obtain
    \begin{align*}
        \norm{\sum_{s,h,c} \PE[U_{s,h,c} U_{s,h,c}^T]} \leq 2\nagent \sum_{s=1}^t\step_s^4\nlupdates_s \exp{\left( -4a\sum_{\ell=s+1}^t \step_\ell \nlupdates_\ell \right)}(\supconsteps^4 + \sigma_*^2) \leq  \Auxconst_U \nagent \step_t^3 \eqsp,
    \end{align*}
    where
    \begin{equation*}
        \Auxconst_U = 4(\step \nlupdates + a^{-1}L (1+ (4a\step \nlupdates)^{\gamma + 3\gamma_\step \over 1-\gamma}))(\supconsteps^4 + \sigma_*^2)
    \end{equation*}
    Hence, we can apply the matrix Bernstein inequality \citet[Theorem 6.1.1]{tropp2015introduction}, which implies
    \begin{align*}
        \PP(\norm{\hat{\Sigma}_t - \hat{\Sigma}_t^\boot} \geq x) = \PP(\norm{\sum_{s,h,c}U_{s,h,c}} \geq \nagent^{2}\step_t x) \leq 2d \exp{\left(-{\nagent^3 x^2 \over \step_t(2\Auxconst_U + 2U_{\max}\nagent x/3)}\right)} \eqsp.
    \end{align*}
    Equivalently, with probability at least $1-\delta$, it holds that
    \begin{align}
    \label{eq:bound_hSigmat_hSigmatboot_diff}
        \norm{\hat{\Sigma}_t - \hat{\Sigma}_t^\boot} \leq \left({2\Auxconst_U \step \over \nagent^3}\right)^{1/2}(1+t)^{-\gamma_\step/2} \sqrt{\log{(2d/\delta)}} + {U_{\max}\step \over 3\nagent^2} (1+t)^{-\gamma_\step} \log{(2d/\delta)} \eqsp.
    \end{align}
    Combining the results \eqref{eq:bound_hSigmat_hSigmatboot_diff}, \eqref{eq:bound_Sigmat_hSigmatboot_diff} and  \eqref{eq:bound_Sigmatboot_hSigmatboot_diff} with $\delta=1/t$, we obtain \eqref{eq:bound_Sigmat_Sigmatboot_diff}.
    
    In order to prove the second part of the lemma, we apply Lidski's inequality, and get
    \begin{align*}
        \lambda_{\min} (\Sigma_t^\boot) \geq \lambda_{\min}(\Sigma_t) - \norm{\Sigma_t - \Sigma_t^\boot} \geq \lambda_{\min}(\Sigma_\infty)/4\eqsp.
    \end{align*}
    % Now, we set
    % \begin{align}
    % \label{eq:def_n0}
    %     n_0 = \max\left(\left({8\Auxconst_U \step \log(2d) \over \nagent \gamma_\step \lambda_{\min}(\noisecovavgst)^2}\right)^{1/\gamma_\step}, \left(\frac{8\,\bar{\Auxconst}_{M,3}^\boot}{\lambda_{\min}(\noisecovavgst)}\right)^{1/\gamma}  \right)
    % \end{align}
    Then, by \Cref{assum:sample_size}, we choose $n_0$ such that for $t \geq n_0$, we have $\norm{\Sigma_t - \Sigma_t^\boot} \leq  \lambda_{\min}(\Sigma_\infty)/4$. Also, under this assumption, we have $\lambda_{\min}(\Sigma_t) \geq \lambda_{\min}(\Sigma_\infty)/2$ as stated in \Cref{lemma: sigma-n-norm-estimation}.
\end{proof}

\begin{lemma}
\label{lemma:bound_Mt}
    Under Assumptions \Cref{ass: linear-decay}, ~\Cref{assum:noise-level-flsa}($\log(t)$), 
~\Cref{ass: lr-polynomial-decay}, 
~\Cref{ass: boot_weights}, and ~\Cref{assum:sample_size}, we have that
    \begin{align*}
        \PE[\norm{M_t}^2] \leq \Auxconst_M \nagent^{-1} \eqsp.
    \end{align*}
    Particularly, we get that
    \begin{align*}
        \norm{\hat{\Sigma}_t} = \norm{\PE[\hat{M}_t \hat{M}_t^T]} \leq \Auxconst_{\hat{\Sigma}}\nagent^{-1} \eqsp,
    \end{align*}
    where $\hat{M}_t = M_{t,1} + M_{t,2}$, and
    \begin{align}
    \label{eq:bound_Sigmat_hSigmatboot_diff}
        \norm{\Sigma_t - \hat{\Sigma}_t} \leq (\Auxconst_{M,3}^2 + \Auxconst_{M,4}^2)\nagent^{-1}(1+t)^{-2\gamma} \eqsp.
    \end{align}
\end{lemma}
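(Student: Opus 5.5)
The plan is to write $M_t = \hat M_t + \delta_t$ with $\delta_t := M_{t,3}+M_{t,4}$, bound the second moments of $\hat M_t$ and $\delta_t$ separately, and then expand
\[
\Sigma_t-\hat\Sigma_t=\PE[\hat M_t\delta_t^\top]+\PE[\delta_t\hat M_t^\top]+\PE[\delta_t\delta_t^\top].
\]
Every one of $M_{t,1},\dots,M_{t,4}$ is a sum of independent, centred summands indexed by $(s,h,c)$, with deterministic coefficients built from $\Gavg_i$ and $G^c_{s,\ell:r}$. So all second moments reduce to sums of squares, and no martingale argument is needed.

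\textbf{Step 1: the bound on $\PE[\norm{M_t}^2]$.} Use $\norm{\prod_{i=s+1}^t\Gavg_i}\le\exp(-a\sum_{i=s+1}^t\step_i\nlupdates_i)$, which follows from \Cref{ass: linear-decay} by Jensen. Use also $\norm{G^c_{s,\ell:r}}\le 1$ and the moment bounds $\PE\norm{\funcnoise[c]{Z}}^2\le\supconsteps^2$ and $\PE\norm{\zmfuncA[c]{Z}(\thetalim[c]-\thetalim)}^2\le\bConst{A}^2\norm{\thetalim[c]-\thetalim}^2$. This gives
\[
\PE\norm{M_t}^2\lesssim \nagent^{-1}\step_t^{-1}\sum_{s=1}^t\step_s^2\nlupdates_s\, e^{-2a\sum_{i>s}\step_i\nlupdates_i}\,(\supconsteps^2+\bConst{A}^2\zeta_3^2).
\]
By \Cref{lemma: legendary-sum} the sum is at most a constant times $\step_t$, which yields $\Auxconst_M\nagent^{-1}$. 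The same computation with $\noisecovavgst$ in place of the crude moments bounds $\norm{\hat\Sigma_t}$ through the explicit formula \eqref{eq:hat_Sigma_t_def}, and gives $\Auxconst_{\hat\Sigma}\nagent^{-1}$.

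\textbf{Step 2: the remainders $M_{t,3}$ and $M_{t,4}$.} Here I use $\norm{G^c_{s,h+1:\nlupdates_s}-\Id}\le\bConst{A}\step_s\nlupdates_s$, and likewise for $G^c_{s,1:h-1}-\Id$, by \Cref{lemma: difference-of-product}. Each summand therefore gains a factor $\step_s\nlupdates_s$, which gives
\[
\PE\norm{M_{t,j}}^2\lesssim\nagent^{-1}\step_t^{-1}\sum_s\step_s^4\nlupdates_s^3 e^{-2a\sum_{i>s}\step_i\nlupdates_i}\le \Auxconst_{M,j}^2\nagent^{-1}(1+t)^{-2\gamma},\qquad j=3,4,
\]
again via \Cref{lemma: legendary-sum}, because $\step_s^2\nlupdates_s^2\asymp(1+s)^{-2\gamma}$. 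It follows that $\norm{\PE[\delta_t\delta_t^\top]}\le 2(\Auxconst_{M,3}^2+\Auxconst_{M,4}^2)\nagent^{-1}(1+t)^{-2\gamma}$, which matches the form of \eqref{eq:bound_Sigmat_hSigmatboot_diff}.

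\textbf{Step 3 (main obstacle): the cross term $\PE[\hat M_t\delta_t^\top]$.} Cauchy--Schwarz together with Steps 1--2 gives only $\nagent^{-1}(1+t)^{-\gamma}$, so the cross term has to be computed exactly. By independence only the diagonal $(s,h,c)$ survives. Each diagonal contribution is $\step_s^2\prod\Gavg_i\,\{\PE[\omega\,\funcnoise[c]{Z}^\top](G^c_{s,h+1:\nlupdates_s}-\Id)^\top+\PE[\omega\,(\zmfuncA[c]{Z}(\thetalim[c]-\thetalim))^\top](G^c_{s,1:h-1}-\Id)^\top\}\prod\Gavg_i^\top$, where $\omega=\omega^c(Z)$ denotes the noise at $\thetalim$.

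Expanding $G-\Id$ to first order, the $h$-sums of $(\nlupdates_s-h)$ and $(h-1)$ both equal $\nlupdates_s(\nlupdates_s-1)/2$. The leading part therefore collapses to $-\tfrac{\step_s\nlupdates_s(\nlupdates_s-1)}{2}\,\noisecovst[c]\bA[c]{}^{\top}$, using $\omega=\funcnoise[c]{Z}-\zmfuncA[c]{Z}(\thetalim[c]-\thetalim)$. The second-order Taylor remainders are of size $(\step_s\nlupdates_s)^2$, and Step 2's summation shows they contribute $\nagent^{-1}(1+t)^{-2\gamma}$.

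For the leading first-order piece, I would first try to show that its symmetrisation cancels after averaging over $c$ and summing over $s$ against $\prod\Gavg_i$. The tool for this is the Lyapunov-type recursion in the proof of \Cref{proposition: covariance-matrix-comparison-main}: there this same $\step\nlupdates$-order correction appears, and I would try to show it is the drift term that moves $\hat\Sigma_t$ towards its fixed point rather than a net bias. If that cancellation does not materialise, the fallback is the cruder bound $\nagent^{-1}(1+t)^{-\gamma}$. I expect this cross-term cancellation to be the genuine difficulty of the lemma.
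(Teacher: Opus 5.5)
Your Steps 1 and 2 are correct and are what the paper does. They use independent centred summands, $\norm{\prod_{i=s+1}^t\Gavg_i}\le\exp(-a\sum_{i>s}\step_i\nlupdates_i)$ and $\norm{G^c_{s,\ell:r}-\Id}\le\bConst{A}\step_s\nlupdates_s$, followed by \Cref{lemma: legendary-sum}. They give the first two claims and $\norm{\PE[\delta_t\delta_t^\top]}\lesssim\nagent^{-1}(1+t)^{-2\gamma}$.

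The gap is Step~3, and it cannot be closed, because the third inequality of the lemma is false as stated. Your first-order computation is right. (The $M_{t,4}$ piece of your diagonal contribution should carry a minus sign; that is the sign your collapse to $\noisecovst[c]$ actually uses.) The resulting symmetrised term
\[
-\nagent^{-2}\step_t^{-1}\sum_{s=1}^t\step_s^3\,\frac{\nlupdates_s(\nlupdates_s-1)}{2}\,P_s^t\Big(\sum_{c=1}^{\nagent}\big\{\noisecovst[c](\bA[c])^{\top}+\bA[c]\noisecovst[c]\big\}\Big)(P_s^t)^{\top},\qquad P_s^t=\prod_{i=s+1}^t\Gavg_i,
\]
does not cancel. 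It is in general of exact order $\nagent^{-1}\step_t\nlupdates_t\asymp\nagent^{-1}(1+t)^{-\gamma}$. Since $\Sigma_t-\hat\Sigma_t$ is an explicit deterministic sum, no recursion argument can change its value. The Lyapunov recursion in the proof of \Cref{proposition: covariance-matrix-comparison-main} also concerns $\hat\Sigma_t-\Sigma_\infty$, not $\Sigma_t-\hat\Sigma_t$.

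Here is a concrete counterexample. Take $\nagent=d=1$, $\zfuncA{z}\equiv1$, $\zfuncb{z}=\xi$ with $\xi$ bounded, $\PE\xi=0$, $\PE\xi^2=1$, and $\nlupdates_s\equiv\nlupdates\ge2$. All assumptions hold with $a=1$ and $\step$ small. Then $M_{t,2}=M_{t,4}=0$, and
\[
\Sigma_t-\hat\Sigma_t=\step_t^{-1}\sum_{s=1}^t\step_s^2\prod_{i=s+1}^t(1-\step_i)^{2\nlupdates}\sum_{h=1}^{\nlupdates}\big[(1-\step_s)^{2(\nlupdates-h)}-1\big]\le-\step_t^{-1}\sum_{s=1}^t\step_s^3\prod_{i=s+1}^t(1-\step_i)^{2\nlupdates}\le-c\,\step_t
\]
for some $c>0$ and all large $t$; keep only the last $\lfloor1/\step_t\rfloor$ indices of the sum. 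This is not $O((1+t)^{-2\gamma})$.

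The paper's proof obtains $(1+t)^{-2\gamma}$ only by bounding $\PE\norm{M_{t,3}}^2$ and $\PE\norm{M_{t,4}}^2$. It then drops $\PE[\hat M_t\delta_t^\top]+\PE[\delta_t\hat M_t^\top]$ without comment, which is precisely the term you isolated. What is true, and sharp by this example, is your fallback: Cauchy--Schwarz with Steps 1--2 gives $\norm{\Sigma_t-\hat\Sigma_t}\lesssim\nagent^{-1}(1+t)^{-\gamma}$.

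That rate still suffices for \Cref{proposition: covariance-matrix-comparison-main}, whose bound already contains $C_{\infty,2}(1+t)^{-\gamma}$. However, the $\nagent^{-1}t^{-2\gamma}$ term claimed for $\norm{\Sigma_t-\Sigma_t^{\boot}}$ in \Cref{lemma:sigma_t_boot_norm_bound} rests on this same step and should be revisited.
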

\begin{proof}
    Applying again \Cref{lemma: difference-of-product} and \Cref{lemma: legendary-sum}, we get
    \begin{align}
        \nonumber\PE[\norm{M_{t,3}}^2] &\leq \nagent^{-2}\step_t^{-1}\sum_{s=1}^t \step_s^2 \sum_{c=1}^\nagent \sum_{h=1}^{\nlupdates_s} \PE[\norm{(G_{s,h+1:\nlupdates_s}^c - \Id)\funcnoise[c]{Z_{s,h,c}}}^2] \exp{\Big(-2a\sum_{\ell=s+1}^t \step_\ell \nlupdates_\ell \Big)} \\
        \nonumber&\leq \bConst{A}^2\varbound^2\nagent^{-1}\step_t^{-1} \sum_{s=1}^t \step_s^4 \nlupdates_s^3\exp{\Big(-2a\sum_{\ell=s+1}^t \step_\ell \nlupdates_\ell \Big)} \\
        \nonumber&\leq 4\bConst{A}^2 \varbound^2 ((\step\nlupdates)^3 + a^{-1}L(\step \nlupdates)^2(1 + (2a\step\nlupdates)^{-{3\gamma + \gamma_\step\over 1-\gamma}})) \nagent^{-1} (1+t)^{-2\gamma} \\
        &= \Auxconst_{M,3}^2 \nagent^{-1} (1+t)^{-2\gamma} \eqsp,
        \label{eq:bound_Mt3}
    \end{align}
    and
    \begin{align}
        \nonumber \PE[\norm{M_{t,4}}^2] &\leq \nagent^{-2}\step_t^{-1}\sum_{s=1}^t \step_s^2 \sum_{c=1}^\nagent \sum_{h=1}^{\nlupdates_s} \PE[\norm{(G_{s,1:h-1}^c - \Id)\zmfuncA[c]{Z_{s,h,c}}(\thetalim-\thetalim[c])}^2] \exp{\Big(-2a\sum_{\ell=s+1}^t \step_\ell \nlupdates_\ell \Big)} \\
        \nonumber &\leq 4 \varhet^2 ((\step\nlupdates)^3 +a^{-1}L(\step\nlupdates)^2(1+ (2a\step \nlupdates)^{-{3\gamma + \gamma_\step \over 1-\gamma}})) \nagent^{-1} (1+t)^{-2\gamma}\\
        &= \Auxconst_{M,4}^2 \nagent^{-1} (1+t)^{-2\gamma}\eqsp.
        \label{eq:bound_Mt4}
    \end{align}
    Similarly, we get that
    \begin{align*}
        \PE[\norm{M_{t,1}}^2] &\leq \varbound^2 \nagent^{-1}\step_t^{-1}\sum_{s=1}^t \step_s^2 \nlupdates_s \exp{\Big(-2a\sum_{\ell=s+1}^t \step_\ell \nlupdates_\ell \Big)}\\
        &\leq 2 \varbound^2 (\step \nlupdates + a^{-1}L(1+(2a\step \nlupdates)^{\gamma+\gamma_\step \over 1-\gamma}))\nagent^{-1} = \Auxconst_{M,1}\nagent^{-1} \eqsp,
    \end{align*}
    and
    \begin{align*}
        \PE[\norm{M_{t,2}}^2] &\leq \varhet^2 \nagent^{-1}\step_t^{-1}\sum_{s=1}^t \step_s^2 \nlupdates_s \exp{\Big(-2a\sum_{\ell=s+1}^t \step_\ell \nlupdates_\ell \Big)}\\
        &\leq 2\varhet^2 (\step \nlupdates + a^{-1}L(1+(2a\step \nlupdates)^{\gamma+\gamma_\step \over 1-\gamma}))\nagent^{-1} = \Auxconst_{M,2} \nagent^{-1} \eqsp.
    \end{align*}
    Finally, setting $\Auxconst_{M} = \Auxconst_{M,1} + \Auxconst_{M,2} + \Auxconst_{M,3} + \Auxconst_{M,4}$ and $\Auxconst_{\hat{\Sigma}} = \Auxconst_{M,1} + \Auxconst_{M,2}$, we get the result.
\end{proof}

\begin{lemma}
\label{lemma:bound_Mtb}
    Under Assumptions \Cref{ass: linear-decay}, ~\Cref{assum:noise-level-flsa}($\log(t)$), 
~\Cref{ass: lr-polynomial-decay}, 
~\Cref{ass: boot_weights}, and ~\Cref{assum:sample_size}, we have that
    \begin{align*}
        \PE^{1/p}[\norm{\Sigma_t^\boot - \hat{\Sigma}_t^\boot}^p] \leq 2^{13} \Auxconst_{M,2}^\boot p^6\nagent^{-2} (1+t)^{-\gamma_\step} + 2^8\Auxconst_{M,3}^\boot p^3 \nagent^{-1}(1+t)^{-2\gamma} \eqsp,
    \end{align*}
    where $\hat{\Sigma}_t = \PE[M_{t,1}^\boot (M_{t,1}^\boot)^T]$. Moreover, with probability at least $1-\delta$, we have
    \begin{align}
    \label{eq:bound_Sigmatboot_hSigmatboot_diff}
        \norm{\Sigma_t^\boot - \hat{\Sigma}_t^\boot} \lesssim \bar{\Auxconst}_{M,2}^\boot \nagent^{-2} \log^6(1/\delta) (1+t)^{-\gamma_\step} + \bar{\Auxconst}_{M,3}^\boot \nagent^{-1}\log^3(1/\delta) (1+t)^{-2\gamma} \eqsp.
    \end{align}
\end{lemma}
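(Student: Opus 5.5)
The plan is to split $\Sigma_t^\boot - \hat{\Sigma}_t^\boot$ according to the decomposition $M_t^\boot = M_{t,1}^\boot + M_{t,2}^\boot + M_{t,3}^\boot$ and control each piece separately. Recall $\Sigma_t^\boot = \PEb[M_t^\boot (M_t^\boot)^\top]$ and $\hat{\Sigma}_t^\boot = \PEb[M_{t,1}^\boot (M_{t,1}^\boot)^\top]$. Expanding the square gives
\begin{align*}
\Sigma_t^\boot - \hat{\Sigma}_t^\boot = \sum_{(i,j) \neq (1,1)} \PEb[M_{t,i}^\boot (M_{t,j}^\boot)^\top] .
\end{align*}
By Cauchy--Schwarz for the conditional expectation, it then suffices to bound $\norm{\Sigma_t^\boot - \hat{\Sigma}_t^\boot} \le 2\sum_{i,j} \PEb^{1/2}[\norm{M_{t,i}^\boot}^2]\,\PEb^{1/2}[\norm{M_{t,j}^\boot}^2]$ with $(i,j)\neq(1,1)$. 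Two inputs are needed:
\begin{itemize}
\item $\PEb[\norm{M_{t,1}^\boot}^2] \lesssim \nagent^{-1}$, up to a random but $p$-moment-controlled factor, since it equals a trace;
\item $\PEb[\norm{M_{t,2}^\boot}^2]$ and $\PEb[\norm{M_{t,3}^\boot}^2]$ are small.
\end{itemize}
The cross terms $(1,j)$ then give the product rate, and the $(2,2),(3,3)$ terms give the squared rate.

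\textbf{Bounding the small terms.} Since the weights $w_{s,h,c}-1$ are independent, centered, unit-variance and independent of $\mathcal{Z}^t$, each $\PEb[\norm{M_{t,k}^\boot}^2]$ is an explicit quadratic form in the data. For instance,
\begin{align*}
\PEb[\norm{M_{t,2}^\boot}^2] = \nagent^{-2}\step_t^{-1}\sum_{s,h,c}\step_s^2 \bnorm{\Big(\prod_{i=s+1}^t \Gavg_i - \prod_{i=s+1}^t \Gammaavg_i\Big)\funcnoiseth[c]{\thetalim}{Z_{s,h,c}}}^2 .
\end{align*}
I then take $\PE^{1/p}$ of these data-dependent quantities, using Minkowski's inequality and $\norm{\funcnoiseth[c]{\thetalim}{z}} \le \supconsteps + \bConst{A}\norm{\thetalim[c]-\thetalim}$. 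For $M_{t,2}^\boot$ the factor is handled with the $p$-th moment product-difference bound (the analogue of \Cref{lemma: bound_diff_avg_matrices_prod}, with Burkholder giving $p^2$ factors). This yields $\PE^{2/p}[\norm{\prod\Gavg-\prod\Gammaavg}^p] \lesssim p^4 \nagent^{-1}\step_s e^{-a\sum\step_\ell\nlupdates_\ell}$. After \Cref{lemma: legendary-sum}, this makes the $M_{t,2}^\boot$ contribution of order $p^{4}\nagent^{-2}(1+t)^{-\gamma_\step}$. For $M_{t,3}^\boot$, the factor $\Id - \Gamma^c_{s,h+1:\nlupdates_s}$ has $p$-th moment $\lesssim p\,\step_s\nlupdates_s\bConst{A}$. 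Summing with \Cref{lemma: legendary-sum} as in \eqref{eq:bound_Mt3} gives order $p^2\nagent^{-1}(1+t)^{-2\gamma}$. Since cross terms with $M_{t,1}^\boot$ carry the square-root scale, I expect the pure second-order pieces to already be of the displayed form. The bound on $\PEb[\norm{M_{t,1}^\boot}^2]$ follows as in \Cref{lemma:bound_Mt}, with $\supconsteps$ in place of the variance.

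\textbf{Main obstacle.} The delicate point is the cross term $\PEb[M_{t,1}^\boot (M_{t,2}^\boot)^\top]$. A naive Cauchy--Schwarz gives only $\nagent^{-3/2}(1+t)^{-\gamma_\step/2}$, not the claimed $\nagent^{-2}(1+t)^{-\gamma_\step}$. I would therefore compute it exactly. Because the weights are independent, the bootstrap cross-covariance collapses to the diagonal sum
\begin{align*}
\nagent^{-2}\step_t^{-1}\sum_{s,h,c}\step_s^2 P_s^t\,\varepsilon_{s,h,c}\varepsilon_{s,h,c}^\top (P_s^t - \textstyle\prod\Gammaavg)^\top ,
\end{align*}
where $P_s^t = \prod_{i=s+1}^t \Gavg_i$ and $\varepsilon_{s,h,c} = \funcnoiseth[c]{\thetalim}{Z_{s,h,c}}$. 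Then:
\begin{itemize}
\item Center $\varepsilon\varepsilon^\top$ at $\noisecovst[c]$.
\item The mean part $\sum_s \step_s^2 \nlupdates_s P_s^t \noisecovavgst(P_s^t-\prod\Gammaavg)^\top$ is a martingale-type sum in the future blocks. Burkholder applied there produces the extra $\nagent^{-1/2}$ and the extra $\step_s^{1/2}$ decay.
\item The fluctuation part is handled by Burkholder over $(s,h,c)$.
\end{itemize}
Tracking powers of $p$ through three nested Burkholder applications, I would state the result with a crude $p^6$ and not optimize. If the exact rate cannot be closed, the fallback is to accept the weaker product bound, which still suffices for \Cref{lemma:sigma_t_boot_norm_bound} up to logarithms.

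\textbf{High-probability statement.} This follows from the moment bound by Markov's inequality with $p=\log(1/\delta)$, as in \Cref{lemma:high_prob_bound}, absorbing the factor $\rme$ into $\bar{\Auxconst}^\boot_{M,2},\bar{\Auxconst}^\boot_{M,3}$.
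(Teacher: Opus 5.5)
There is a genuine gap, and no argument can close it, because the first display of the lemma is false in general.

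\textbf{What is right, and the term you miss.} Your expansion is the correct one. Your bounds on the pure terms $(2,2)$ and $(3,3)$ match what the paper proves, and you are right that the cross terms $\PEb[M_{t,1}^\boot (M_{t,j}^\boot)^\top]$, $j=2,3$, are the crux. The paper's proof does not handle them either: its first inequality bounds $\|\Sigma_t^\boot-\hat\Sigma_t^\boot\|$ by $\PEb\|R\|^2$ with $R=M_{t,2}^\boot+M_{t,3}^\boot$, which silently drops $\PEb[M_{t,1}^\boot R^\top]+\PEb[R(M_{t,1}^\boot)^\top]$.

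The term your plan never addresses is $j=3$. Cauchy--Schwarz gives only $\nagent^{-1}(1+t)^{-\gamma}$ for it, and this is sharp. The weights keep only diagonal indices $(s,h,c)$, and $Z_{s,h,c}$, $\Gamma^c_{s,h+1:\nlupdates_s}$ and $\prod_{i>s}\Gammaavg_i$ are independent. So with $P_s^t=\prod_{i=s+1}^t\Gavg_i$,
\[
\PE\bigl[\PEb[M_{t,1}^\boot (M_{t,3}^\boot)^\top]\bigr]=-\nagent^{-2}\step_t^{-1}\sum_{s=1}^t\step_s^2\,P_s^t\Bigl(\sum_{c=1}^{\nagent}\sum_{h=1}^{\nlupdates_s}\noisecovst[c]\bigl(\Id-G^c_{s,h+1:\nlupdates_s}\bigr)^\top\Bigr)(P_s^t)^\top ,
\]
where $\sum_{h}(\Id-G^c_{s,h+1:\nlupdates_s})=\tfrac12\step_s\nlupdates_s(\nlupdates_s-1)\bA[c]+O(\step_s^2\nlupdates_s^3)$.

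Take for instance $d=1$, identical agents, $\bA[c]=1$, $\noisecovst[c]=\sigma^2>0$ and $\nlupdates_s\ge 2$. The sum is dominated by the $\asymp(\step_t\nlupdates_t)^{-1}$ indices $s$ nearest to $t$. This term plus its transpose is then $\asymp-\sigma^2\nagent^{-1}\step_t\nlupdates_t$, i.e.\ of exact order $\nagent^{-1}(1+t)^{-\gamma}$. By contrast:
\begin{itemize}
\item $\PE\PEb[M_{t,1}^\boot(M_{t,2}^\boot)^\top]=0$, because $\prod_{i>s}\Gammaavg_i$ is independent of block $s$ with mean $P_s^t$;
\item $\PEb[RR^\top]$, the only piece the paper bounds, is of order $\nagent^{-2}(1+t)^{-\gamma_\step}+\nagent^{-1}(1+t)^{-2\gamma}$.
\end{itemize}
Hence, once $\min\{\nagent(1+t)^{\gamma_\nlupdates},(1+t)^{\gamma}\}$ is large, Jensen gives $\PE^{1/p}[\|\Sigma_t^\boot-\hat\Sigma_t^\boot\|^p]\ge\|\PE[\Sigma_t^\boot-\hat\Sigma_t^\boot]\|\gtrsim\nagent^{-1}(1+t)^{-\gamma}$. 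This exceeds the claimed right-hand side by a factor of that order.

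\textbf{The $j=2$ refinement does not gain either.} Expand $\prod_{i>s}\Gammaavg_i-P_s^t$ with \Cref{lemma: difference-of-product} and exchange $\sum_s\sum_{i>s}$. The centred mean part $\nagent^{-1}\step_t^{-1}\sum_s\step_s^2\nlupdates_s P_s^t\noisecovavgst(\prod_{i>s}\Gammaavg_i-P_s^t)^\top$ becomes a reverse martingale $\sum_i D_i$ in the block index. Each $D_i$ is the product of three factors:
\begin{itemize}
\item $\Gammaavg_i-\Gavg_i$, of $L^2$-size $\nagent^{-1/2}\step_i\nlupdates_i^{1/2}$;
\item a deterministic weight of size $\nagent^{-1}\step_t^{-1}\sum_{s<i}\step_s^2\nlupdates_s\rme^{-2a\sum_{\ell=s+1}^{i}\step_\ell\nlupdates_\ell}\asymp\nagent^{-1}\step_t^{-1}\step_i$;
\item a contraction $\rme^{-2a\sum_{\ell>i}\step_\ell\nlupdates_\ell}$.
\end{itemize}
Summing $\PE\|D_i\|^2$ gives $L^2$-size of order $\nagent^{-3/2}(1+t)^{-\gamma_\step/2}$. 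This is exactly the Cauchy--Schwarz rate, and it is attained already for $d=1$.

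\textbf{What you can actually prove.} Your fallback is the correct statement. Using that $\PEb\|M_{t,1}^\boot\|^2\lesssim\nagent^{-1}$ deterministically, your scheme proves
\[
\PE^{1/p}\bigl[\|\Sigma_t^\boot-\hat\Sigma_t^\boot\|^p\bigr]\lesssim\nagent^{-3/2}(1+t)^{-\gamma_\step/2}+\nagent^{-1}(1+t)^{-\gamma}
\]
up to polynomial factors in $p$, the squared terms being of lower order. The same Markov step then gives the high-probability version, and the lemma should be restated this way. This is too weak for the rate claimed in \Cref{lemma:sigma_t_boot_norm_bound}, whose proof relies on the present lemma. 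However, since $\|\Sigma_t^{-1}\|\le\nagent\bConst{\Sigma}^2$, the extra term adds only $(1+t)^{-\gamma}$ to the Gaussian-comparison term $T_3$. That is dominated by the $\nagent^{1/2}(1+t)^{-\gamma}$ term already present in \Cref{thm:bootstrap_validity}.
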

\begin{proof}
    We note that
    \begin{align*}
        \PE^{1/p}[\norm{\Sigma_t^\boot - \hat{\Sigma}_t^\boot}^p] \leq \PE^{1/p}[\norm{(M_{t,2}^\boot + M_{t,3}^\boot)}^{2p}] \leq 2 \PE^{1/p}[\norm{M_{t,2}^\boot}^{2p}] + 2\PE^{1/p}[\norm{M_{t,3}^\boot}^{2p}] \eqsp.
    \end{align*}
    Now, applying Burkholder's inequality to the sequence $\{(\prod_{i=s+1}^t \Gavg_i - \prod_{i=s+1}^t \Gammaavg_i) (w_{s,h,c} - 1)\funcnoiseth[c]{\thetalim}{Z_{s,h,c}}\}$, we get
    \begin{align*}
        \PE^{1/p}&[\norm{M_{t,2}^\boot}^p] \\
        &\leq p^3\nagent^{-1} \step_t^{-1/2}\left( \sum_{s=1}^t \step_s^2 \sum_{c=1}^\nagent \sum_{h=1}^{\nlupdates_s} \PE^{2/p}[\norm{\prod_{i=s+1}^t \Gavg_i - \prod_{i=s+1}^t \Gammaavg_i}^p] \PE^{2/p}[\norm{(w_{s,h,c} - 1)\funcnoiseth[c]{\thetalim}{Z_{s,h,c}}}^p] \right)^{1/2}\\
        &\leq 4a^{-1} \bConst{A} W_{\max} p^6 \nagent^{-1}\step_t^{-1/2} \left( \sum_{s=1}^t \step_s^3 \nlupdates_s \exp{\left( -2a \sum_{\ell=s+1}^t \step_\ell \nlupdates_\ell \right)} \right)^{1/2} (\supconsteps + \bConst{A} \zeta_3) \\
        &\leq \Auxconst_{M,2}^\boot p^6 \nagent^{-1} (1+t)^{-\gamma_\step/2} \eqsp,
    \end{align*}
    where
    \begin{align*}
        \Auxconst_{M,2}^\boot = 8a^{-1} W_{\max}\bConst{A} (\step \nlupdates^{1/2} + a^{-1/2}L^{1/2}\step^{1/2}(1+(2a\step \nlupdates)^{\gamma + 2\gamma \over 2(1-\gamma)})) (\supconsteps + \bConst{A} \zeta_3) \eqsp,
    \end{align*}
    and we used that 
    \begin{align*}
        \PE^{1/p}[\norm{\prod_{i=s+1}^t \Gavg_i - \prod_{i=s+1}^t \Gammaavg_i}^p] \leq 2a^{-1/2} \varboundA \nagent^{-1/2} \step_s^{1/2}  \exp{\left(-a \sum_{\ell=s+1}^t \step_\ell \nlupdates_\ell\right)} \eqsp.
    \end{align*}
    Again, applying Burkholder's inequality, we obtain
    \begin{align*}
        \PE^{1/p}&[\norm{M_{t,3}^\boot}^p] \\
        &\leq p^3\nagent^{-1} \step_t^{-1/2}\left( \sum_{s=1}^t \step_s^2 \sum_{c=1}^\nagent \sum_{h=1}^{\nlupdates_s} \PE^{2/p}[\norm{\prod_{i=s+1}^t \Gammaavg_i}^p] \PE^{2/p}[\norm{(w_{s,h,c} - 1)(\Id - \Gamma_{s,h+1:\nlupdates_s}^c)\funcnoiseth[c]{\thetalim}{Z_{s,h,c}}}^p] \right)^{1/2}\\
        &\leq 2\bConst{A} W_{\max} p^3 \nagent^{-1/2} \step_t^{-1/2} \left( \sum_{s=1}^t \step_s^4 \nlupdates_s^3 \exp{\left(-2a\sum_{\ell=s+1}^t \step_\ell \nlupdates_\ell\right)}\right)^{1/2} (\supconsteps + \bConst{A} \zeta_3) \\
        &\leq \Auxconst_{M,3}^\boot p^3 \nagent^{-1/2} (1+t)^{-\gamma} \eqsp,
    \end{align*}
    where
    \begin{align*}
        \Auxconst_{M,3}^\boot = 2\bConst{A} W_{\max} ((\step \nlupdates)^{3/2} + a^{-1/2}L^{1/2} \step\nlupdates(1 + (2a\step\nlupdates)^{3\gamma+\gamma_\step \over 2(1-\gamma)})) (\supconsteps +\bConst{A} \zeta_3)\eqsp,
    \end{align*}
    and we also used that
    \begin{align*}
            \PE^{1/p}[\norm{\Id - \Gamma_{s,h-1}^c}^p] \leq \bConst{A} \step_s \nlupdates_s \eqsp.
    \end{align*}
    To prove \eqref{eq:bound_Sigmatboot_hSigmatboot_diff}, we apply Markov's inequality with $p = \log(t)$.
\end{proof}

\begin{lemma}
\label{lemma: gamma-diff-with-copy}
    Assume \Cref{ass: linear-decay}(2), \Cref{assum:noise-level-flsa} and \Cref{ass: lr-polynomial-decay}. For any $k \neq c$ and $n \neq t$, we have $\Gamma_{n, \alpha:\beta}^k - \Gamma_{n, \alpha:\beta}^{k, (t, j, c)} = 0$. For the case $k=c$ and $n =t$, we obtain the bound
    \begin{align*}
        \PE^{1/2}[\norm{\Gamma_{t, \alpha:\beta}^c - \Gamma_{t, \alpha:\beta}^{c, (t, j, c)}}^2] \leq  4 \eta_t \mathrm{e}^{-a \eta_t (\beta - \alpha + 1)} \sqrt{\sf{Tr}( \noisecovA[c] )},
    \end{align*}
    where $1 \le \alpha \le \beta \le H_n$.
\end{lemma}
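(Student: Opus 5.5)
The first claim is structural. The product $\Gamma_{n,\alpha:\beta}^k$ depends only on $Z_{n,\alpha,k},\dots,Z_{n,\beta,k}$. Replacing $Z_{t,j,c}$ by an independent copy therefore leaves it unchanged unless $k=c$ and $n=t$; in the case $k=c$, $n=t$ it is also unchanged if $j\notin[\alpha,\beta]$. So I only need to treat $k=c$, $n=t$, $\alpha\le j\le\beta$.

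In that case I would use the telescoping identity (\Cref{lemma: difference-of-product}). Only one factor differs, so
\begin{align*}
\Gamma_{t,\alpha:\beta}^c-\Gamma_{t,\alpha:\beta}^{c,(t,j,c)}
=\eta_t\,\Gamma_{t,\alpha:j-1}^c\bigl(\zfuncA[c]{Z_{t,j,c}'}-\zfuncA[c]{Z_{t,j,c}}\bigr)\Gamma_{t,j+1:\beta}^c ,
\end{align*}
where the outer factors are identical in both processes. The middle factor equals $\zmfuncA[c]{Z'_{t,j,c}}-\zmfuncA[c]{Z_{t,j,c}}$, so the $\bA[c]$ terms cancel.

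Next I bound the second moment applied to a fixed vector $u$. I would condition successively, using independence of the blocks.
\begin{itemize}
\item First, $\PE\|\Gamma_{t,j+1:\beta}^c u\|^2\le(1-\eta_ta)^{2(\beta-j)}\|u\|^2$ by \Cref{ass: linear-decay}(2) applied recursively.
\item Second, conditionally on that vector $v$, $\PE\|(\zmfuncA[c]{Z'}-\zmfuncA[c]{Z})v\|^2\le 2\,\mathrm{Tr}(\noisecovA[c])\|v\|^2$. This uses independence of $Z,Z'$ and the fact that $\PE[\zmfuncA[c]{Z}^\top\zmfuncA[c]{Z}]$ has operator norm at most $\mathrm{Tr}(\noisecovA[c])$.
\item Third, the left factor $\Gamma_{t,\alpha:j-1}^c$ is independent of the rest, so it contributes $(1-\eta_ta)^{2(j-\alpha)}$, again by \Cref{ass: linear-decay}.
\end{itemize}
Multiplying gives $\sqrt2\,\eta_t(1-\eta_ta)^{\beta-\alpha}\sqrt{\mathrm{Tr}(\noisecovA[c])}\,\|u\|$. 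Then $(1-\eta_ta)^{\beta-\alpha}\le \mathrm{e}^{\eta_t a}\mathrm{e}^{-a\eta_t(\beta-\alpha+1)}\le \mathrm{e}^{1/2}\mathrm{e}^{-a\eta_t(\beta-\alpha+1)}$, since $\eta_ta\le1/2$. Finally $\sqrt2\,\mathrm{e}^{1/2}<4$, which gives the stated constant.

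The main obstacle is that the claim concerns the operator norm of a random matrix, not its action on a fixed vector. Assumption \Cref{ass: linear-decay} only controls $\PE\|\Gamma u\|^2$ for fixed $u$. I would pass to the matrix bound by the Frobenius-type estimate $\PE\|M\|^2\le\sum_{i}\PE\|Me_i\|^2$. This costs a dimensional factor $d$, which I would absorb; equivalently, the bound is read in the sense it is used later in the paper, namely applied to vectors. If an honest operator-norm bound is needed, I would instead keep the middle factor's bound $\bConst{A}$ almost surely and accept that. I expect the vector version to be what the subsequent applications actually require.
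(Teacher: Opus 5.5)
Your core computation is correct and is essentially the paper's argument. You telescope with \Cref{lemma: difference-of-product} down to the single $i=j$ term. You control the outer blocks $\Gamma^c_{t,\alpha:j-1}$ and $\Gamma^c_{t,j+1:\beta}$ by \Cref{ass: linear-decay}(2). You bound the middle factor $\zmfuncA[c]{Z'_{t,j,c}}-\zmfuncA[c]{Z_{t,j,c}}$ through $\mathrm{Tr}(\noisecovA[c])$. The differences are only in constants: you remove the cross term by independence and centering (factor $\sqrt2$) where the paper uses Minkowski (factor $2$), and you use $\mathrm{e}^{\step_t a}\le\mathrm{e}^{1/2}$ where the paper uses $(1-\step_t a)^{-1}\le 2$. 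Reading the structural claim as ``unless $k=c$ and $n=t$'' is correct, and so is your remark that the difference also vanishes when $j\notin[\alpha,\beta]$.

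The operator-norm issue you raise is genuine, and the paper does not resolve it. Its proof says ``using assumption \Cref{ass: linear-decay}'' and passes directly to $\PE^{1/2}[\norm{\Gamma^c_{t,\alpha:\beta}-\Gamma^{c,(t,j,c)}_{t,\alpha:\beta}}^2]\le\step_t(1-\step_t a)^{\beta-\alpha}\PE^{1/2}[\norm{\zfuncA[c]{Z'_{t,j,c}}-\zfuncA[c]{Z_{t,j,c}}}^2]$. That step treats the fixed-vector contraction as if it bounded $\PE\norm{\Gamma^c_{t,\alpha:j-1}}^2$ and $\PE\norm{\Gamma^c_{t,j+1:\beta}}^2$, which it does not. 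However, neither of your remedies proves the statement as written.
\begin{itemize}
\item The Frobenius route replaces $\mathrm{Tr}(\noisecovA[c])$ by (at best) $d\,\norm{\noisecovA[c]}$. For anisotropic noise, e.g.\ rank-one $\noisecovA[c]$ with $d\ge 3$, this is not covered by the explicit constant $4$, so it cannot be ``absorbed''.
\item Replacing the middle factor by its almost-sure bound $2\bConst{A}$ attacks the wrong factor. The obstruction is the operator norm of the outer random products, which \Cref{ass: linear-decay} still does not control, and this change alters the right-hand side anyway.
\end{itemize}

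What your argument establishes rigorously is the vector form
\[
\PE^{1/2}\bigl[\norm{(\Gamma^c_{t,\alpha:\beta}-\Gamma^{c,(t,j,c)}_{t,\alpha:\beta})u}^2\bigr]\le\sqrt{2\mathrm{e}}\,\step_t\,\mathrm{e}^{-a\step_t(\beta-\alpha+1)}\sqrt{\mathrm{Tr}(\noisecovA[c])}\,\norm{u},
\]
valid also for random $u$ independent of $(Z_{t,\alpha:\beta,c},Z'_{t,j,c})$, by conditioning. The literal operator-norm claim needs a stronger version of the contraction hypothesis, either an operator-norm form or an almost-sure form as on $\Omega_1$. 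The paper's proof tacitly uses such a form. Your expectation that the vector form suffices downstream is reasonable. But the later steps that factor $\PE\norm{\Gamma-\Gamma^{(t,j,c)}}^2$ against an independent vector norm would then have to be redone as conditional vector bounds; these are \Cref{lemma: bound_diff_avg_gamma_prod} and the bounds on $T_{5,2}$ and $T_{6,1}$.
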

\begin{proof}
    Decompose $\Gamma_{t, \alpha:\beta}^c - \Gamma_{t, \alpha:\beta}^{c, (t, j, c)}$ using \Cref{lemma: difference-of-product}
    \begin{align*}
        &\Gamma_{t, \alpha:\beta}^c - \Gamma_{t, \alpha:\beta}^{c, (t, j, c)} = \\
        &=\sum \limits_{i = \alpha}^{\beta} \left \{ \prod_{\ell = \alpha}^{i - 1} (\Id - \step_{t} \zfuncA[c]{Z_{t, \ell}^c}) \right \} \step_t \left( \zfuncA[c]{Z_{t, i, c}^{(t, j, c)}} - \zfuncA[c]{Z_{t, i, c}} \right) \left \{ \prod_{\ell = i + 1}^{\beta} (\Id - \eta_t \zfuncA[c]{Z_{t, \ell, c}^{(t,j,c)}}) \right \}
    \end{align*}
    For any $i \neq j$ the corresponding summand is zero. Therefore
    \begin{align*}
        \Gamma_{t, \alpha:\beta}^c - \Gamma_{t, \alpha:\beta}^{c, (t, j, c)} = \left \{ \prod_{\ell = \alpha}^{j - 1} (\Id - \step_{t} \zfuncA[c]{Z_{t, \ell}^c}) \right \} \step_t \left( \zfuncA[c]{Z_{t, j, c}^{(t, j, c)}} - \zfuncA[c]{Z_{t, j, c}} \right) \left \{ \prod_{\ell = j + 1}^{\beta} (\Id - \eta_t \zfuncA[c]{Z_{t, \ell, c}^{(t,j,c)}}) \right \}
    \end{align*}
    Using assumption \Cref{ass: linear-decay}, we get
    \begin{align*}
        \PE^{1/2}[\norm{ \Gamma_{t, \alpha:\beta}^c - \Gamma_{t, \alpha:\beta}^{c, (t, j, c)} }^2] \leq \step_t (1 - \step_t a)^{\beta - \alpha} \PE^{1/2}[\norm{ \zfuncA[c]{Z_{t, j, c}^{(t, j, c)}} - \zfuncA[c]{Z_{t, j, c}} }^2]
    \end{align*}
    As $\zfuncA[c]{Z_{t, j, c}^{(t, j, c)}} - \zfuncA[c]{Z_{t, j, c}} = \zmfuncA[c]{Z_{t, j, c}^{(t, j, c)}} - \zmfuncA[c]{Z_{t, j, c}}$ then, using Minkowski's inequality, we get
    \begin{align*}
        \PE^{1/2}[\norm{ \Gamma_{t, \alpha:\beta}^c - \Gamma_{t, \alpha:\beta}^{c, (t, j, c)} }^2] &\leq \frac{2}{1 - \eta_t a} \eta_t (1 - \step_t a)^{\beta - \alpha + 1} \PE^{1/2} [\norm{ \zmfuncA[c]{Z_{t, j, c}} }^2] \\
        &\leq 4 \eta_t \mathrm{e}^{-a\eta_t(\beta - \alpha + 1)} \sqrt{\mathrm{Tr}( \noisecovA[c] )}
    \end{align*}
    
 \end{proof}

 \begin{lemma}
    \label{lemma: bound_diff_avg_matrices_prod}
     Under assumptions \Cref{ass: linear-decay}(2), \Cref{assum:noise-level-flsa} and \Cref{ass: lr-polynomial-decay} with $\gamma_\step \geq \gamma_\nlupdates$, for any $s \in \{0, \dots, t-1\}$, we have
     \begin{align*}
         \PE[\norm{\prod_{i=s+1}^t \Gavg_i - \prod_{i=s+1}^t \Gammaavg_i}^2] \leq 2a^{-1} \varboundA^2 \nagent^{-1} \step_s  \exp{\left(-2a \sum_{\ell=s+1}^t \step_\ell \nlupdates_\ell\right)} \eqsp.
     \end{align*}
 \end{lemma}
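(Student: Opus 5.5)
We write the difference of products via the telescoping identity of \Cref{lemma: difference-of-product}:
\begin{align*}
\prod_{i=s+1}^t \Gammaavg_i - \prod_{i=s+1}^t \Gavg_i = \sum_{i=s+1}^t \Big\{\prod_{r=i+1}^t \Gammaavg_r\Big\}\,(\Gammaavg_i - \Gavg_i)\,\Big\{\prod_{r=s+1}^{i-1}\Gavg_r\Big\}\eqsp.
\end{align*}
The summands form a (reverse) martingale difference sequence with respect to $\mathcal{F}_{i+1,1}^+$. Indeed, conditionally on the future blocks $i+1,\dots,t$, the middle factor $\Gammaavg_i - \Gavg_i$ has zero mean and is independent of those blocks, while the right factor is deterministic. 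Hence, applied to any fixed vector $u$, the cross terms vanish and
\begin{align*}
\PE\Big[\Big\|\Big(\prod_{i=s+1}^t \Gammaavg_i - \prod_{i=s+1}^t \Gavg_i\Big)u\Big\|^2\Big] = \sum_{i=s+1}^t \PE\Big[\Big\|\prod_{r=i+1}^t \Gammaavg_r\,(\Gammaavg_i - \Gavg_i)\prod_{r=s+1}^{i-1}\Gavg_r\, u\Big\|^2\Big]\eqsp.
\end{align*}
This is exactly the computation already carried out inside the proof of \Cref{lemma: lemma-theta-fl-bi} for the term $\term{T_2}$, with $\rhoavg_s$ replaced by $u$.

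Each summand is bounded as in that proof. We first apply \Cref{ass: linear-decay}(2) conditionally to the outer product $\prod_{r=i+1}^t\Gammaavg_r$, which gives a factor $\prod_{r=i+1}^t(1-\step_r a)^{2\nlupdates_r}$. We then use the per-client independence bound
\begin{align*}
\PE\big[\|(\Gammaavg_i-\Gavg_i)v\|^2\big] \le (1-\step_i a)^{2(\nlupdates_i-1)}\,\step_i^2\nlupdates_i\,\nagent^{-1}\varboundA^2\,\|v\|^2 \eqsp.
\end{align*}
Finally, $\|\Gavg_r\|\le 1-\step_r a$ for the inner deterministic product. Using $1-\step_i a\ge 1/2$, this gives
\begin{align*}
\PE\big[\|(\cdots)u\|^2\big] \le \frac{4\varboundA^2}{\nagent}\exp\Big(-2a\sum_{\ell=s+1}^t\step_\ell\nlupdates_\ell\Big)\sum_{i=s+1}^t \step_i^2\nlupdates_i\,\|u\|^2 \eqsp.
\end{align*}

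To control the sum, since $\step_i\le\step_s$ for $i>s$ we have $\sum_{i>s}\step_i^2\nlupdates_i\le \step_s\sum_{i>s}\step_i\nlupdates_i$. We then use $x e^{-2ax}\le (ae)^{-1}e^{-ax}$. This produces $\step_s$ times an exponential; note it is $e^{-a\sum}$ rather than $e^{-2a\sum}$.

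\textbf{Main obstacle.} The claimed exponent is $-2a\sum$, which is stronger than what the $x e^{-2ax}$ trick delivers. To keep the full $2a$ rate, I would instead bound $\sum_{i>s}\step_i^2\nlupdates_i$ differently. The first option is to split the contraction as $(1-\step_r a)^{2\nlupdates_r}$ and absorb the $\step_i\nlupdates_i$ weights into a slightly weaker rate. The second option is to rely on the constant $2a^{-1}$ (rather than $4/(ae)$), suggesting a geometric-sum estimate $\sum_{i>s}\step_i^2\nlupdates_i\,(\ldots)\le \step_s/a$ along the lines of $\sum_h(1-\step a)^{2h}\le 1/(\step a)$ used in \Cref{lemma: theta-fl-estimation}. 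That geometric-sum route would work if one keeps per-step contraction factors within each block; if it fails, I would accept the rate $e^{-a\sum}$, which is all that is used downstream (the applications only invoke the square root with exponent $-a\sum$).

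Finally, we pass from the vector bound to the operator norm in one of two ways. One can take the supremum over unit $u$ via the Frobenius norm, summing over a basis; this costs a factor $d$, which is absorbed because $\varboundA^2$ is defined through $\|\Sigma^c_{\mathbf{A}}\|$ and can be traded for a trace. Alternatively, one can apply the same martingale argument directly to matrices with the Frobenius norm.
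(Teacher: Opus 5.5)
\paragraph{Verdict.} Your proposal proves the weaker bound with $\mathrm{e}^{-aS}$, not the displayed statement with $\mathrm{e}^{-2aS}$. The displayed statement cannot be proved as written, because it is false in general; the paper's own proof slips at exactly the step you flagged.

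\paragraph{Where the argument stops.} Up to the last step your argument is essentially the paper's proof: telescoping via \Cref{lemma: difference-of-product}, orthogonality of the summands, and the per-block variance bound of order $\nagent^{-1}\varboundA^2\eta_i^2\nlupdates_i$. The final step is $\eta_i\le\eta_s$ together with $x\mathrm{e}^{-2cx}\le(c\mathrm{e})^{-1}\mathrm{e}^{-cx}$. The obstacle you flag is real, and the paper does not resolve it. Its final line invokes exactly this inequality, yet keeps $\exp(-2aS)$ with $S=\sum_{\ell=s+1}^t\eta_\ell\nlupdates_\ell$. Its own preceding line only gives $a^{-1}\varboundA^2\nagent^{-1}\eta_s\mathrm{e}^{-aS}$. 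That is the rate the paper obtains for the identical computation in \eqref{eq:interm-bound-Delta-rho}, and it matches your $4(a\mathrm{e})^{-1}\varboundA^2\nagent^{-1}\eta_s\mathrm{e}^{-aS}$.

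\paragraph{Why your repairs fail.} Neither repair you suggest reaches the stated bound. Once the products have consumed all of $\mathrm{e}^{-2aS}$, what remains is $\sum_{i>s}\eta_i^2\nlupdates_i$. This is not $O(\eta_s)$ uniformly in $s,t$: under \Cref{ass: lr-polynomial-decay} it diverges as $t\to\infty$ when $\gamma_\eta+\gamma\le 1$. A geometric-sum bound would need a strict gap between the true contraction rate and $a$, which \Cref{ass: linear-decay} does not supply. Splitting the contraction gives only $\mathrm{e}^{-2a'S}$ for some $a'<a$, with a constant of order $(a-a')^{-1}$.

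\paragraph{Counterexample.} Take $\nagent=1$, $\nlupdates_t\equiv1$, $d=2$, $\bar{\mathbf A}=\mathrm{diag}(a,a+\epsilon)$, and $\mathbf A(Z)=\bar{\mathbf A}+\xi e_1e_2^\top$ with $\xi=\pm1$ equiprobable.
\begin{itemize}
\item \Cref{ass: linear-decay}(2) holds for $\eta\le 2\epsilon/(1+2a\epsilon+\epsilon^2)$, and $\varboundA=1$ under each of the normalizations the paper uses.
\item Because $e_1e_2^\top D\,e_1e_2^\top=0$ for diagonal $D$, the difference of products is exactly $\sum_{i=s+1}^t\eta_i\xi_ic_i\,e_1e_2^\top$. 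Each $c_i$ is a product over $j\neq i$ of factors $1-\eta_ja$ or $1-\eta_j(a+\epsilon)\ge(1-\eta_ja)(1-2\epsilon\eta_j)$.
\item Choose $t$ with $\epsilon S\asymp 1$, and $s$ large enough that $\eta_i\ge\eta_s/2$ on $(s,t]$. Then $c_i^2\gtrsim\mathrm{e}^{-2aS}$ for all $i$, with constants depending only on $a$, using $\sum_{j}\eta_j^2\le 2\epsilon S=O(1)$.
\item Hence the left-hand side satisfies $\sum_i\eta_i^2c_i^2\gtrsim\epsilon^{-1}\eta_s\mathrm{e}^{-2aS}$. The claimed right-hand side is $2a^{-1}\eta_s\mathrm{e}^{-2aS}$, so the bound fails for small $\epsilon$.
\end{itemize}
The correct statement is the $\mathrm{e}^{-aS}$ version, which your argument does establish.

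\paragraph{Downstream uses.} Your remark that downstream uses need only the $\mathrm{e}^{-aS}$ rate is not literally right. The bounds on $D_{t,3}$, $D_{t,5}$, $T_{5,2}$, $T_{5,3}$, and the $L^p$ bound quoted in \Cref{lemma:bound_Mtb}, all rely on the $\mathrm{e}^{-2aS}$ form. Each of these is then summed via \Cref{lemma: legendary-sum}, where halving $u$ changes only constants, so the final rates are unaffected.

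\paragraph{Minor point.} The inner factor satisfies $\|\Gavg_r\|\le(1-\eta_ra)^{\nlupdates_r}$, not $1-\eta_ra$.
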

 \begin{proof}
    Using \Cref{lemma: difference-of-product}, we also note that the sequence $\{\prod\limits_{j=s+1}^{i-1}\Gavg_j (\Gavg_i - \Gammaavg_i)\prod\limits_{j=i}^t \Gammaavg_j, s+1 \leq i \leq t\}$ is martingale-difference, thus
    \begin{align*}
        \PE[\norm{\prod_{i=s+1}^t \Gavg_i - \prod_{i=s+1}^t \Gammaavg_i}^2] &= \sum_{i=s+1}^t \PE[\norm{\prod_{j=s+1}^{i-1}\Gavg_j (\Gavg_i - \Gammaavg_i) \prod_{j=i}^t \Gammaavg_j}^2]\\
        &\leq \exp{\left( -2a\sum_{\ell=s+1}^t \step_\ell \nlupdates_\ell \right)} \sum_{i=s+1}^t \exp{(2a\step_i \nlupdates_i)} \PE[\norm{\Gavg_i - \Gammaavg_i}^2] \eqsp.
    \end{align*}
    Again, applying \Cref{lemma: difference-of-product} to the last term, we obtain
    \begin{align*}
        \PE[\norm{\Gavg_i - \Gammaavg_i}^2] &\leq \nagent^{-2}\step_i^2 \sum_{c=1}^\nagent \sum_{h=1}^{\nlupdates_i} \exp{(-2a \step_i (\nlupdates_i - h- 1))}\mathrm{Tr}(\noisecovA[c])\\
        &\leq \varboundA^2 \nagent^{-1}\step_i^2 \nlupdates_i \eqsp.
    \end{align*}
    Therefore, we get
    \begin{align*}
         \PE[\norm{\prod_{i=s+1}^t \Gavg_i - \prod_{i=s+1}^t \Gammaavg_i}^2] &\leq \rme \varboundA^2 \nagent^{-1} \exp{\left(-2a \sum_{\ell=s+1}^t \step_\ell \nlupdates_\ell \right)} \sum_{i=s+1}^t \step_i^2 \nlupdates_i\\
         &\leq 2a^{-1} \varboundA^2 \nagent^{-1} \step_s  \exp{\left(-2a \sum_{\ell=s+1}^t \step_\ell \nlupdates_\ell\right)} \eqsp,
    \end{align*}
    where we used the inequality $k\exp{(-2ck)} \leq (c\rme)^{-1} \exp{(-ck)}$.
 \end{proof}

 \begin{lemma}
    \label{lemma: bound_diff_delta_avg}
      Under assumptions \Cref{ass: linear-decay}(2), \Cref{assum:noise-level-flsa} and \Cref{ass: lr-polynomial-decay} with $\gamma_\step \geq \gamma_\nlupdates$, for any $s \in \{1, \dots, t\}, h \in \{1, \dots, \nlupdates_s\}, c \in \{1,\dots, \nagent\}$, we have
      \begin{align*}
          \PE[\norm{\Deltaavg_s - \Deltaavgpert{s,h,c}_s}^2] \leq  16 \nagent^{-2} \step_s^2 \mathrm{Tr}(\noisecovA[c])\norm{\thetalim[c] - \thetalim}^2 \eqsp.
      \end{align*}
 \end{lemma}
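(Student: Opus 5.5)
The plan is a direct computation using the perturbation structure already exploited in \Cref{lemma: gamma-diff-with-copy}. By definition,
\begin{align*}
\Deltaavg_s - \Deltaavgpert{s,h,c}_s = \nagent^{-1}\sum_{c'=1}^{\nagent} \big(\Gamma_{s,\nlupdates_s}^{c',(s,h,c)} - \Gamma_{s,\nlupdates_s}^{c'}\big)(\thetalim[c'] - \thetalim) \eqsp.
\end{align*}
Replacing $Z_{s,h,c}$ by an independent copy $Z_{s,h,c}'$ changes only the product of agent $c$ in round $s$. Hence, by the first part of \Cref{lemma: gamma-diff-with-copy}, every summand with $c' \neq c$ vanishes, and only
\begin{align*}
\nagent^{-1}\big(\Gamma_{s,1:\nlupdates_s}^{c} - \Gamma_{s,1:\nlupdates_s}^{c,(s,h,c)}\big)(\thetalim[c] - \thetalim)
\end{align*}
survives.

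Next, I expand this difference using \Cref{lemma: difference-of-product}, exactly as in the proof of \Cref{lemma: gamma-diff-with-copy}. Only the $h$-th factor differs, so
\begin{align*}
\Gamma_{s,1:\nlupdates_s}^{c} - \Gamma_{s,1:\nlupdates_s}^{c,(s,h,c)} = \step_s\,\Gamma_{s,1:h-1}^c\big(\zmfuncA[c]{Z_{s,h,c}'} - \zmfuncA[c]{Z_{s,h,c}}\big)\Gamma_{s,h+1:\nlupdates_s}^{c} \eqsp,
\end{align*}
where the right product is common to both processes. I apply this matrix to the fixed vector $u = \thetalim[c] - \thetalim$ and work inside out.

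First, conditioning on $\mathcal{F}^+_{s,h+1}$ and using \Cref{ass: linear-decay}(2), the vector $v = \Gamma_{s,h+1:\nlupdates_s}^c u$ satisfies $\PE[\norm{v}^2] \le \norm{u}^2$. Second, the middle factor is independent of $v$, and $\PE[\norm{\zmfuncA[c]{Z}v}^2 \mid v] \le \mathrm{Tr}(\noisecovA[c])\norm{v}^2$, which is the same trace bound used in \Cref{lemma: gamma-diff-with-copy}. By Minkowski's inequality this gives $\PE^{1/2}[\norm{(\zmfuncA[c]{Z'}-\zmfuncA[c]{Z})v}^2] \le 2\sqrt{\mathrm{Tr}(\noisecovA[c])}\,\PE^{1/2}[\norm{v}^2]$. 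Third, the left factor $\Gamma^c_{s,1:h-1}$ involves variables independent of everything to its right, so conditioning on those and applying \Cref{ass: linear-decay}(2) once more contracts the vector again with constant at most $1$.

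Combining these three steps gives
\begin{align*}
\PE^{1/2}[\norm{\Deltaavg_s - \Deltaavgpert{s,h,c}_s}^2] \le 2\nagent^{-1}\step_s\sqrt{\mathrm{Tr}(\noisecovA[c])}\,\norm{\thetalim[c]-\thetalim} \eqsp.
\end{align*}
Squaring yields the constant $4$, which is well within the claimed $16$. The slack can instead absorb the factor $(1-\step_s a)^{-1}\le 2$ if one routes the argument through the stated bound of \Cref{lemma: gamma-diff-with-copy}.

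The only delicate point is the middle step: the trace bound must be applied conditionally, and it relies on $\zmfuncA[c]{Z_{s,h,c}}$ being independent of $v$. This holds because $v$ depends only on indices $h' > h$. I do not expect any further obstacle.
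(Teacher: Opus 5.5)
Your proof is correct and follows the paper's route: you isolate agent $c$'s term and control the single-factor difference exactly as in \Cref{lemma: gamma-diff-with-copy}. The only difference is that you run the estimate on the fixed vector $\thetalim[c]-\thetalim$, instead of quoting that lemma's $L^2$ operator-norm bound and then using $\|Mu\|\le\|M\|\,\|u\|$; this uses \Cref{ass: linear-decay}(2) only in the vector form in which it is stated (it does not by itself give an operator-norm contraction of random matrix products), and it yields the sharper constant $4$ instead of $16$.
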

 \begin{proof}
     Note that
     \begin{align*}
         \PE[\norm{\Deltaavg_s - \Deltaavgpert{s,h,c}_s}^2]  &= \nagent^{-2}\PE[\norm{(\Gamma_s^{c,(s,h,c)} - \Gamma_s^c)(\thetalim[c] - \thetalim)}^2] \\
         &\leq 16 \nagent^{-2} \step_s^2 \mathrm{Tr}(\noisecovA[c])\norm{\thetalim[c] - \thetalim}^2 \eqsp.
     \end{align*}
     where we have applied \Cref{lemma: gamma-diff-with-copy}.
 \end{proof}

\begin{lemma}
    \label{lemma: bound_diff_avg_gamma_prod}
    Under assumptions \Cref{ass: linear-decay}(2), \Cref{assum:noise-level-flsa} and \Cref{ass: lr-polynomial-decay} with $\gamma_\step \geq \gamma_\nlupdates$, for any $s \in \{1, \dots, t\}, h \in \{1, \dots, \nlupdates_s\}, c \in \{1,\dots, \nagent\}$ and $s' < s$, we have
    \begin{align*}
        \PE[\norm{\prod_{i=s'+1}^t \Gammaavg_i - \prod_{i=s'+1}^t \Gammaavgpert{s,h,c}_i}^2] \leq 4 \rme \nagent^{-2}\step_s^2 \exp{\left( -2a\sum_{\ell=s'+1}^t \step_\ell \nlupdates_\ell \right)} \mathrm{Tr}(\noisecovA[c]) \eqsp.
    \end{align*}
\end{lemma}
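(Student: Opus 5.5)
The plan is to follow the structure of \Cref{lemma: bound_diff_avg_matrices_prod}: telescope the difference of the two products and isolate the single factor that actually depends on $Z_{s,h,c}$. By \Cref{lemma: gamma-diff-with-copy}, the perturbed matrices satisfy $\Gammaavgpert{s,h,c}_i = \Gammaavg_i$ for every $i \neq s$. Since $s' < s \le t$, the index $s$ lies in the range $\{s'+1,\dots,t\}$, so the difference reduces to a single term:
\begin{align*}
\prod_{i=s'+1}^t \Gammaavg_i - \prod_{i=s'+1}^t \Gammaavgpert{s,h,c}_i
= \Big(\prod_{i=s+1}^t \Gammaavg_i\Big)\big(\Gammaavg_s - \Gammaavgpert{s,h,c}_s\big)\Big(\prod_{i=s'+1}^{s-1} \Gammaavg_i\Big) \eqsp,
\end{align*}
with the usual convention that products are ordered by decreasing index from left to right. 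The middle factor equals $\nagent^{-1}(\Gamma_s^c - \Gamma_s^{c,(s,h,c)})$, because only agent $c$'s product at round $s$ is modified.

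Next I would bound the three factors through conditioning, since the blocks are independent: the rounds $i > s$, round $s$, and the rounds $i < s$ involve disjoint sets of random variables. To pass from the operator norm to a vector statement, I would apply the product to an arbitrary unit vector $u$, or equivalently to the basis vectors, paying at most a dimension factor, which I would absorb the same way \Cref{lemma: bound_diff_avg_matrices_prod} does. Applying \Cref{ass: linear-decay}(2) conditionally, exactly as in the transient-term lemma, the left factor contributes $\prod_{i>s}(1-\step_i a)^{\nlupdates_i}$ and the right factor contributes $\prod_{s'<i<s}(1-\step_i a)^{\nlupdates_i}$. The middle factor is controlled by \Cref{lemma: gamma-diff-with-copy} with $\alpha=1$ and $\beta=\nlupdates_s$, which gives $\PE^{1/2}[\|\cdot\|^2] \le 4\step_s e^{-a\step_s \nlupdates_s}\sqrt{\mathrm{Tr}(\noisecovA[c])}$. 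This supplies the missing $i=s$ factor of the exponential.

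Squaring and combining, and using $(1-x)\le e^{-x}$, gives
\[
16\,\nagent^{-2}\step_s^2 \exp\Big(-2a\sum_{\ell=s'+1}^t\step_\ell\nlupdates_\ell\Big)\mathrm{Tr}(\noisecovA[c]).
\]
The remaining task is reconciling the constant with the stated $4\rme$.

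\textbf{Main obstacle.} The delicate step is the middle factor, because the perturbed product $\Gamma^{c,(s,h,c)}_s$ mixes the original variables $Z$ with the independent copy $Z'$. Applying \Cref{ass: linear-decay} to it conditionally requires careful ordering of the conditioning: condition on the future $\mathcal{F}^+$ first, then on $Z_{s,h,c}$ and $Z'_{s,h,c}$. I would also check whether the constant $4\rme$ comes from a sharper version of \Cref{lemma: gamma-diff-with-copy}. If so, I would redo that single-factor bound directly: write the difference as a product with one factor $\step_s(\zfuncA[c]{Z'}-\zfuncA[c]{Z})$, lose a factor $(1-\step_s a)^{-1}\le 2$ on the missing contraction, and use $\PE\|\zmfuncA[c]{Z'}-\zmfuncA[c]{Z}\|^2 = 2\mathrm{Tr}(\noisecovA[c])$. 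This gives $2\cdot 2=4$, and the extra $\rme$ comes from replacing $(1-\step_s a)^{\nlupdates_s-1}$ by $\rme^{-a\step_s\nlupdates_s}$ up to a factor $\rme^{a\step_s}\le \rme$.
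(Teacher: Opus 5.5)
Your plan is essentially the paper's proof: it also reduces the difference to the single round-$s$ term and bounds the outer products by \Cref{ass: linear-decay}(2) applied directly to operator norms. It then bounds the middle factor by $4\nagent^{-2}\step_s^2\mathrm{Tr}(\noisecovA[c])$ with no contraction and pays $\rme$ to restore the whole $i=s$ exponential. Note that no dimension factor is actually absorbed in the paper; you simply inherit its convention of applying \Cref{ass: linear-decay}(2) to operator norms.

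Keeping $(1-\step_s a)^{2(\nlupdates_s-1)}$ on the middle factor, as you propose, needs only $a\step_s\le 1/2$ rather than a bound on $a\step_s\nlupdates_s$. However, your count ``$2\cdot 2$ then an extra $\rme$'' mixes two alternatives. Using $(1-\step_s a)^{-2}\le 4$ gives a total constant $8$, while $(1-\step_s a)^{2(\nlupdates_s-1)}\le \rme^{2a\step_s}\rme^{-2a\step_s\nlupdates_s}$ gives $2\rme$; either one alone is already below $4\rme$.
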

\begin{proof}
    Using Minkowski's inequality and  \Cref{lemma: difference-of-product}, we note that
    \begin{align*}
        \PE[\norm{\prod_{i=s'+1}^t \Gammaavg_i - \prod_{i=s'+1}^t \Gammaavgpert{s,h,c}_i}^2] \leq \PE[\norm{\prod_{i=s'+1}^{s-1}\Gammaavg_i (\Gammaavg_s - \Gammaavgpert{s,h,c}_s)\prod_{i=s'+1}^t \Gammaavgpert{s,h,c}_i}^2] \eqsp,
    \end{align*}
    since $\Gammaavg_i - \Gammaavgpert{s,h,c}_i = 0$ for $i \neq s$. Thus, we get
    \begin{align*}
        \PE[\norm{\prod_{i=s'+1}^t \Gammaavg_i - \prod_{i=s'+1}^t \Gammaavgpert{s,h,c}_i}^2] &\leq \rme  \PE[\norm{\Gammaavg_s - \Gammaavgpert{s,h,c}_s}^2] \exp{\left( -2a\sum_{\ell=s'+1}^t \step_\ell \nlupdates_\ell \right)} \\
        &\leq 4 \rme \nagent^{-2}\step_s^2 \exp{\left( -2a\sum_{\ell=s'+1}^t \step_\ell \nlupdates_\ell \right)} \mathrm{Tr}(\noisecovA[c]) \eqsp.
    \end{align*}
    where we have used that
    \begin{align*}
        \PE[\norm{\Gammaavg_s - \Gammaavgpert{s,h,c}_s}^2] = \nagent^{-2}\PE[\norm{\Gamma_{s,\nlupdates_s}^c - \Gamma_{s,\nlupdates_s}^{c,(s,h,c)}}^2] \leq 4\nagent^{-2}\step_s^2 \mathrm{Tr}(\noisecovA[c]) \eqsp.
    \end{align*}
\end{proof}

\begin{lemma}
\label{lemma:bound_gammaavg_gammaavgboot_diff}
    Under the assumptions \Cref{ass: linear-decay}(2), \Cref{assum:noise-level-flsa} and \Cref{ass: lr-polynomial-decay}, for any $1 \leq i \leq t$, we have
    \begin{align*}
        \PE^{1/p}[\norm{\Gammaavg_i - \Gammaavgboot_i}^p] \leq \bConst{A} m_p p^2 \nagent^{-1/2} \step_i \nlupdates_i^{1/2} \eqsp.
    \end{align*}
\end{lemma}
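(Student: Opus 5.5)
The plan is to treat $\Gammaavg_i - \Gammaavgboot_i = \nagent^{-1}\sum_{c=1}^\nagent (\Gamma_i^c - \Gamma_i^{\boot,c})$ as a sum of $\nagent$ independent, centered random matrices. Centering holds because, conditionally on the data $Z_{i,\cdot,c}$, $\PE[\Gamma_i^{\boot,c}\mid \mathcal{Z}^t]=\Gamma_i^c$: the weights enter linearly in each factor, are independent across $h$, and have $\PE[w]=1$. So $\Gamma_i^c - \Gamma_i^{\boot,c}$ has zero mean conditionally on the data, and different clients are independent. Burkholder's inequality \cite[Theorem 8.6]{osekowski:2012} over $c$ then gives
\[
\PE^{1/p}[\norm{\Gammaavg_i - \Gammaavgboot_i}^p] \le \frac{p}{\nagent}\Big(\sum_{c=1}^\nagent \PE^{2/p}[\norm{\Gamma_i^c - \Gamma_i^{\boot,c}}^p]\Big)^{1/2},
\]
and it remains to prove the per-client bound $\PE^{1/p}[\norm{\Gamma_i^c - \Gamma_i^{\boot,c}}^p] \lesssim \bConst{A} m_p\, p\, \step_i \nlupdates_i^{1/2}$.

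For the per-client term I apply \Cref{lemma: difference-of-product} to the two products of length $\nlupdates_i$:
\[
\Gamma_i^c - \Gamma_i^{\boot,c} = \step_i\sum_{h=1}^{\nlupdates_i} \Gamma_{i,1:h-1}^c\,(w_{i,h,c}-1)\zfuncA[c]{Z_{i,h,c}}\,\Gamma_{i,h+1:\nlupdates_i}^{\boot,c}.
\]
Each summand is a martingale difference with respect to the backward filtration $\mathcal{F}_{i,h}^{\boot,+}$. The left factor $\Gamma_{i,1:h-1}^c$ depends only on indices $<h$, and conditioning on everything at indices $\ge h$ together with all $Z$'s kills the factor $(w_{i,h,c}-1)$. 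The ordering needs care: I will condition on all $Z$'s plus the weights with index $>h$, so that $w_{i,h,c}$ is independent of the rest. A second application of Burkholder gives
\[
\PE^{1/p}[\norm{\cdot}^p]\le p\,\step_i\Big(\sum_h \PE^{2/p}[\ldots]\Big)^{1/2}.
\]

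Each summand is then bounded as follows: $|w-1|$ contributes $m_p$ (independence from the rest), $\norm{\zfuncA[c]{z}}\le\bConst{A}$ by \Cref{assum:noise-level-flsa}, and the two matrix products are bounded in $p$-th moment by $1$. For the product norms I use \Cref{ass: linear-decay}$(p)$ applied conditionally, together with \Cref{ass: boot_weights} ($\step W_{\max}<\step_{\infty,p}$) for the weighted factors. This is the step I expect to be the main obstacle, because \Cref{ass: linear-decay} controls $\PE^{1/p}\norm{(\Id-\step A)u}^p$ for a fixed vector $u$ rather than the operator norm of the matrix. I will handle it either by working on the event $\Omega_1$, where each factor has norm at most $\rme(1-\step a)$ per step (losing a constant $\rme$), or by estimating the action on a fixed vector and absorbing the dimension and operator-norm loss into a constant or one extra power of $p$. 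That looseness is why the statement carries $p^2$ rather than $p$.

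Summing over $h$ gives $\nlupdates_i^{1/2}$, and summing over $c$ gives $\nagent^{1/2}$, so the total is $\nagent^{-1}\cdot p\cdot \nagent^{1/2}\cdot \bConst{A} m_p\, p\, \step_i\nlupdates_i^{1/2}$, which is the claimed bound.
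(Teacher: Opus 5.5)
Your proposal follows the paper's proof step for step: Burkholder over the $\nagent$ independent, conditionally centered client terms, then \Cref{lemma: difference-of-product}, then a second Burkholder over the reverse martingale differences in $h$, with each summand bounded by $\bConst{A} m_p$ and the $p^2$ coming from the two Burkholder steps (your conditioning on all data plus the weights of index $>h$ is the right way to make the paper's martingale-difference claim precise). The operator-norm obstacle you flag is real and is silently ignored in the paper, which bounds the products by $1$, but your $\Omega_1$ option does not work---a per-factor constant $\rme$ compounds over the $\nlupdates_i$ factors, and restricting to an event does not give an unconditional moment bound---and you cannot afford an extra power of $p$; the simplest repair is the deterministic bound $\norm{\Id - \step_i w\, \zfuncA[c]{z}} \le 1 + \step_i W_{\max} \bConst{A}$, which replaces the factor $1$ by $\exp(W_{\max} \bConst{A} \step_i \nlupdates_i) = O(1)$ under \Cref{ass: lr-polynomial-decay}, while running Burkholder in the Frobenius norm (the operator norm is not Hilbertian) costs at most a further $\sqrt{d}$.
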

\begin{proof}
    Applying Burkholder's inequality, we get
    \begin{align*}
        \PE^{1/p}[\norm{\Gammaavg_i - \Gammaavgboot_i}^p] &\leq p \nagent^{-1}\left( \sum_{c=1}^\nagent \PE^{2/p}[\norm{\Gamma_i^c - \Gamma_i^{\boot,c}}^p]\right)^{1/2}\\
        &\leq \bConst{A} m_p p^2 \nagent^{-1/2} \step_i \nlupdates_i^{1/2} \eqsp.
    \end{align*}
    where, using the martingale-difference structure of the sequence $\{(w_{s,h,c}-1)\Gamma_{i,1:h-1}^c\zfuncA[z]{Z_{s,h,c}}\Gamma_{i,h+1:\nlupdates_i}^c, 1 \leq h \leq \nlupdates_i\}$ w.r.t filtration $\mathcal{F}_{i,h}^{\boot}$, we have that
    \begin{align*}
        \PE^{1/p}[\norm{\Gamma_i^c - \Gamma_i^{\boot,c}}^p] &\leq p \step_i \left( \sum_{h=1}^{\nlupdates_i} \PE^{2/p}[\norm{(w_{s,h,c}-1)\Gamma_{i,1:h-1}^c\zfuncA[c]{Z_{s,h,c}}\Gamma_{i,h+1:\nlupdates_i}^{\boot,c}}^p] \right)^{1/2}\\
        &\leq \bConst{A} m_p p \step_i \nlupdates_i^{1/2} \eqsp.
    \end{align*}
\end{proof}

\begin{lemma}
    \label{lemma:prod_gammaavg_gammaavgboot_diff}
    Under the assumptions \Cref{ass: linear-decay}(2), \Cref{assum:noise-level-flsa} and \Cref{ass: lr-polynomial-decay}, we have
    \begin{align*}
        \PE^{1/p}[\norm{\prod_{i=s+1}^t \Gammaavg_i - \prod_{i=s+1}^t \Gammaavgboot_i}^p] \leq \rme^{1/2}a^{-1/2} \bConst{A}m_p p^3 \nagent^{-1/2} \exp{\left(-{a\over 2}\sum_{\ell=s+1}^t \step_\ell \nlupdates_\ell\right)}\eqsp.
    \end{align*}
\end{lemma}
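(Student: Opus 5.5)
We bound the difference of the two products through a telescoping expansion, in the same way as \Cref{lemma: bound_diff_avg_matrices_prod}. By \Cref{lemma: difference-of-product},
\begin{align*}
\prod_{i=s+1}^t \Gammaavg_i - \prod_{i=s+1}^t \Gammaavgboot_i = \sum_{i=s+1}^t \Big\{\prod_{j=i+1}^t \Gammaavg_j\Big\}\,(\Gammaavg_i - \Gammaavgboot_i)\,\Big\{\prod_{j=s+1}^{i-1} \Gammaavgboot_j\Big\} \eqsp.
\end{align*}
The $i$-th summand depends on the noise of round $i$ only through the middle factor. The increments $\Gammaavg_i - \Gammaavgboot_i = \nagent^{-1}\sum_c (\Gamma_i^c - \Gamma_i^{\boot,c})$ have zero conditional mean given the other rounds, because $\PE[w_{i,h,c}]=1$. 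Hence the sum is a (reverse) martingale-difference sequence with respect to $\mathcal{F}^{\boot,+}_{i,1}$ when applied to a vector. Since the bound is stated for the operator norm, we work with a fixed unit vector $u$, or equivalently with the Frobenius norm at the cost of a dimension constant that the paper absorbs. Burkholder's inequality \cite[Theorem 8.6]{osekowski:2012} then yields
\begin{align*}
\PE^{1/p}\Big[\Big\|\prod_{i=s+1}^t \Gammaavg_i - \prod_{i=s+1}^t \Gammaavgboot_i\Big\|^p\Big] \le p \Big(\sum_{i=s+1}^t \PE^{2/p}\big[\|\cdots\|^p\big]\Big)^{1/2} \eqsp.
\end{align*}

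Each summand is bounded by conditioning successively, outer factors first. The left block $\prod_{j>i}\Gammaavg_j$ contracts by $\prod_{j>i}(1-\step_j a)^{\nlupdates_j}$ in $L^p$, which follows from \Cref{ass: linear-decay}($p$) via Minkowski over clients exactly as in \Cref{lemma: high-order-moment-bound-theta-tr}. The right block of bootstrap products contracts in the same way. For this we use $\step < \step_{\infty,p}/W_{\max}$ from \Cref{ass: boot_weights}: this guarantees that $\Id - \step_j w\,\zfuncA[c]{Z}$ still satisfies the contraction of \Cref{ass: linear-decay}, possibly with a constant (this is also where the event $\Omega_1$ with the factor $\rme$ enters). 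The middle factor is controlled by \Cref{lemma:bound_gammaavg_gammaavgboot_diff}, which gives $\bConst{A} m_p p^2 \nagent^{-1/2}\step_i\nlupdates_i^{1/2}$. The one missing round $i$ in the contraction is restored by an $\rme$ factor, since $\step_i\nlupdates_i a \le 1/2$. Combining these,
\begin{align*}
\PE^{1/p}[\|\cdot\|^p] \le \rme^{1/2}\,\bConst{A} m_p p^3 \nagent^{-1/2}\exp\Big(-a\sum_{\ell=s+1}^t \step_\ell\nlupdates_\ell\Big)\Big(\sum_{i=s+1}^t \step_i^2\nlupdates_i\Big)^{1/2} \eqsp.
\end{align*}

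To finish, we use $\step_i \le \step_s$ to write $\sum_{i>s}\step_i^2\nlupdates_i \le \step_s\, x$ with $x = \sum_{\ell=s+1}^t \step_\ell\nlupdates_\ell$. The elementary inequality $x^{1/2}e^{-ax} \le (2a\rme)^{-1/2}\rme^{-ax/2}$ then turns the polynomial factor into $a^{-1/2}$ and halves the exponent. This gives the claimed $\rme^{1/2}a^{-1/2}\bConst{A}m_p p^3\nagent^{-1/2}\exp(-\tfrac a2 x)$. A factor $\step_s^{1/2}$ remains, and we bound it by $1$.

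The main obstacle is the $L^p$ contraction of the bootstrap products $\Gammaavgboot_j$. Its rigorous justification relies on the conditional form of \Cref{ass: linear-decay} with randomized step sizes $\step_j w_{j,h,c} \le \step_{\infty,p}$. We also need to check that the martingale structure survives when the products are interleaved with independent weights.
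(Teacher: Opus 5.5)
Your route is the paper's: telescoping via \Cref{lemma: difference-of-product}, Burkholder, contraction of the two outer blocks with one factor $\rme$ for the missing round, and \Cref{lemma:bound_gammaavg_gammaavgboot_diff} for the middle factor. It then closes with $\sum_{i>s}\step_i^2\nlupdates_i\le\step_s x$ and $x\rme^{-2ax}\le(a\rme)^{-1}\rme^{-ax}$. Two steps need repair.

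\emph{The filtration.} The $i$-th summand $\{\prod_{j>i}\Gammaavg_j\}(\Gammaavg_i-\Gammaavgboot_i)\{\prod_{j<i}\Gammaavgboot_j\}$ involves rounds both before and after $i$. It is therefore adapted neither to $\mathcal{F}^{\boot,+}_{i,1}$ nor to $\mathcal{F}^{\boot,-}_{i,\nlupdates_i}$, and ``zero mean given the other rounds'' is not the martingale property. Use instead $\mathcal{G}_i=\sigma(\mathcal{Z}^t, w_{j,h,c}: j\le i)$, which reveals all data at time $0$ and the weights round by round. The increments are then centred, because $\PE[\Gammaavgboot_i\mid\mathcal{Z}^t]=\Gammaavg_i$ by independence of the $w_{i,h,c}$ over $h$, multilinearity and $\PE w=1$. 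Each summand is a product of three factors from disjoint rounds, hence independent under $\PP$, so you can bound its $L^p$ norm factor by factor.

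\emph{The elementary inequality.} The bound $x^{1/2}\rme^{-ax}\le(2a\rme)^{-1/2}\rme^{-ax/2}$ fails at $x=1/a$; the sharp constant is $(a\rme)^{-1/2}$. Use the paper's factor $\rme$ for the missing round, since \Cref{ass: lr-polynomial-decay} gives $a\step\nlupdates<1$, not $\le 1/2$. Together these produce exactly $\rme^{1/2}a^{-1/2}$.

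The real gap is the one you flag: contraction of $\prod_{j<i}\Gammaavgboot_j$ at rate $a$ in $L^p$. Your proposed fix does not close it. Conditionally on $w$, \Cref{ass: linear-decay} with step $\step_j w$ yields the factor $1-\step_j w a$. Each local bootstrap step therefore contributes $\PE^{1/p}[(1-\step_j w a)^p]$, which by Jensen is strictly larger than $1-\step_j a$ for $p>1$, with equality in the scalar case $\zfuncA[c]{\cdot}\equiv a$. As $p\to\infty$ it approaches $1-\step_j a\,\mathrm{ess\,inf}\,w$, and $\mathrm{ess\,inf}\,w<1$ is forced by $\PE w=\Var w=1$. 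So the bootstrap block contracts in $L^p$ at a rate strictly below $a$, which changes both the exponent and the constant of the final bound.

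Appealing to $\Omega_1$ does not help. The lemma is an unconditional moment bound, and an extra factor $\rme$ per local step would compound over $\sum_j\nlupdates_j$ steps. A fixed-vector bound also does not control the operator norm inside $\PE^{1/p}[\cdot]$; the Frobenius route costs $\sqrt d$, which is absent from the statement.

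The paper's proof makes all of these moves silently. It writes $\exp(-2a\sum_\ell\step_\ell\nlupdates_\ell)$ for the mixed operator-norm products, and the lemma does not even assume \Cref{ass: boot_weights}. So your argument is no less complete than the paper's. Proving the statement as written, however, needs an extra hypothesis, e.g.\ an $L^p$ contraction at rate $a$ for $\Id-\step w\,\zfuncA[c]{Z}$ jointly in $(w,Z)$, or else a weaker rate in the conclusion.
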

\begin{proof}
    Using \Cref{lemma: difference-of-product}, the fact that the sequence $\{\prod_{j=s+1}^{i-1}\Gammaavg_i (\Gammaavg_i - \Gammaavgboot_i)\prod_{j=i}^t \Gammaavgboot_i, s+1 \leq i \leq t\}$ it martingale-difference w.r.t filtration $\mathcal{F}_{s,h}^{\boot,+}$ and applying Burkholder's inequality, together with \Cref{lemma:bound_gammaavg_gammaavgboot_diff}, we obtain
    \begin{align*}
        \PE^{1/p}[\norm{\prod_{i=s+1}^t \Gammaavg_i - \prod_{i=s+1}^t \Gammaavgboot_i}^p] &\leq p\left( \sum_{i=s+1}^t \PE^{2/p}[\norm{\prod_{j=s+1}^{i-1}\Gammaavg_i (\Gammaavg_i - \Gammaavgboot_i)\prod_{j=i}^t \Gammaavgboot_i}^p] \right)^{1/2} \\
        &\leq \rme p  \left( \exp{\left(-2a\sum_{\ell=s+1}^t \step_\ell \nlupdates_\ell \right)} \sum_{i=s+1}^t \PE^{2/p}[\norm{\Gammaavg_i - \Gammaavgboot_i}^p] \right)^{1/2} \\
        &\leq \rme \bConst{A} m_p p^3 \nagent^{-1/2}  \left( \exp{\left(-2a\sum_{\ell=s+1}^t \step_\ell \nlupdates_\ell \right)} \sum_{i=s+1}^t \step_i^2 \nlupdates_i \right)^{1/2}\\
        &\leq \rme^{1/2}a^{-1/2} \bConst{A}m_p p^3 \nagent^{-1/2} \exp{\left(-{a\over 2}\sum_{\ell=s+1}^t \step_\ell \nlupdates_\ell\right)} \eqsp.
    \end{align*}
    where we have used that $k \exp(-2ck) \leq (c\rme)^{-1}\exp(-ck)$.
\end{proof}

Below we state the Gaussian comparison inequality due to \cite{Devroye2018}, see also \cite{BarUly86}.
\begin{lemma}
\label{lemma:gauss_comparison}
    Let $\Sigma_1$ and $\Sigma_2$ be positive definite covariance matrices in $\rset^{d\times d}$. Let $X \sim \gauss(0,\Sigma_1)$ and $Y \sim \gauss(0, \Sigma_2)$. Then
    \begin{align*}
        \mathsf{d}_{\mathsf{TV}}(X, Y) \leq {3 \over 2}\| \Sigma_2^{-1/2} \Sigma_1 \Sigma_2^{-1/2} - \Id \|_{\mathsf{F}} \eqsp.
    \end{align*}
\end{lemma}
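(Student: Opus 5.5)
The plan is to reduce to a diagonal comparison with the standard Gaussian and then bound total variation through the Kullback--Leibler divergence and Pinsker's inequality. Total variation is invariant under bijective measurable maps, so applying the linear map $x \mapsto \Sigma_2^{-1/2}x$ to both $X$ and $Y$ leaves $\mathsf{d}_{\mathsf{TV}}(X,Y)$ unchanged. After this map, $Y$ becomes $\gauss(0,\Id)$ and $X$ becomes $\gauss(0,\Lambda)$, where $\Lambda := \Sigma_2^{-1/2}\Sigma_1\Sigma_2^{-1/2}$ is symmetric positive definite.

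A further orthogonal rotation $U$ with $U\Lambda U^\top = \mathrm{diag}(\lambda_1,\dots,\lambda_d)$ fixes $\gauss(0,\Id)$. It also preserves the Frobenius norm, so the claim becomes
\[
\mathsf{d}_{\mathsf{TV}}\bigl(\gauss(0,\mathrm{diag}(\lambda_i)),\gauss(0,\Id)\bigr) \le \tfrac32 \Bigl(\textstyle\sum_{i=1}^d (\lambda_i-1)^2\Bigr)^{1/2}.
\]

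First I dispose of the trivial regime. Set $x_i := \lambda_i - 1$. If $\|\Lambda-\Id\|_{\mathsf F} \ge 2/3$, the right-hand side is at least $1$, which bounds any total variation distance. Otherwise every $|x_i| < 2/3$, so in particular $x_i > -2/3$ and $\lambda_i > 1/3$. I work in this regime from now on.

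Next I use the closed-form divergence
\[
\mathrm{KL}\bigl(\gauss(0,\Lambda)\,\|\,\gauss(0,\Id)\bigr) = \tfrac12\sum_{i=1}^d \bigl(x_i - \log(1+x_i)\bigr).
\]
The elementary step is to show $x - \log(1+x) \le \frac{x^2}{2(1+x)}$ for $x > -1$. Let $g(x) = \frac{x^2}{2(1+x)} - x + \log(1+x)$, so $g(0)=0$. Then $g'(x) = \frac{x^2}{2(1+x)^2}$, which is nonnegative. Because the sign of $g$ on each side of $0$ must be checked, this inequality is where I expect to spend the most care.

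The sign check goes as follows. For $x \ge 0$, $g$ is increasing from $g(0)=0$, so $g \ge 0$. For $x<0$, integrate backwards: $g(x) = -\int_x^0 g' \le 0$, which has the wrong sign. So on $(-2/3,0)$ I instead bound $x-\log(1+x)$ directly, using convexity of $x-\log(1+x)$ or its series. Since $1+x > 1/3$, the second derivative $\frac{1}{(1+x)^2}$ is at most $9$ there, which gives $x - \log(1+x) \le \frac{9}{2}x^2$.

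For $x \ge 0$ the second derivative is at most $1$, giving $x - \log(1+x) \le \frac12 x^2$. In both cases $x_i - \log(1+x_i) \le \frac92 x_i^2$, hence $\mathrm{KL} \le \frac94 \|\Lambda - \Id\|_{\mathsf F}^2$.

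Finally, Pinsker's inequality $\mathsf{d}_{\mathsf{TV}} \le \sqrt{\mathrm{KL}/2}$ gives $\mathsf{d}_{\mathsf{TV}} \le \sqrt{9/8}\,\|\Lambda-\Id\|_{\mathsf F}$. Since $\sqrt{9/8} \approx 1.06 \le 3/2$, the lemma follows. If sharper constants were needed, the same scheme could be run through the Hellinger affinity $\prod_i \bigl(2\sqrt{\lambda_i}/(1+\lambda_i)\bigr)^{1/2}$, but the target constant $3/2$ leaves enough room that this is unnecessary.
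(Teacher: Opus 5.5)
Your argument is correct and is essentially the paper's route. The paper gives no proof of its own; it quotes the upper bound of \citet{Devroye2018}, which is proved along these same lines: whitening to $\gauss(0,\Lambda)$ versus $\gauss(0,\Id)$, the closed-form Gaussian KL, Pinsker's inequality, and noting the bound is trivial once $\|\Lambda-\Id\|_{\mathsf{F}}\ge 2/3$.

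One slip needs fixing. In fact $g'(x) = -\frac{x^2}{2(1+x)^2}$, not $+\frac{x^2}{2(1+x)^2}$. So $x-\log(1+x)\le \frac{x^2}{2(1+x)}$ actually holds on $(-1,0]$ and fails for $x>0$; at $x=1$, $1-\log 2>1/4$. Your final chain uses only the second-derivative bounds $(1+x)^{-2}\le 9$ on $(-2/3,0)$ and $(1+x)^{-2}\le 1$ on $[0,\infty)$, so the conclusion is unaffected. You should still correct or remove the paragraph about $g$.
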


%% file: appendix/technical_lemmas.tex
Here we present some useful technical lemmas we use during the proofs of our main results.

\begin{lemma}
\label{lemma: difference-of-product}
    For any matrix-valued sequences $(A_i)_{i = 1}^N$ and $(B_i)_{i = 1}^N$ it holds that:
    \begin{align*}
        \prod_{i = 1}^N A_i - \prod_{i = 1}^N B_i = \sum \limits_{i = 1}^N \Big \{ \prod_{j = 1}^{i - 1} A_j  \Big \} (A_i - B_i) \Big \{ \prod_{j = i + 1}^N B_j \Big \}
    \end{align*}
\end{lemma}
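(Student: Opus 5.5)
The statement is the telescoping identity for products of matrices, and I would prove it by induction on $N$ using the convention that empty products equal the identity. This is the convention used throughout the paper, for instance $G_{t,\ell:r}^c = \Id$ for $\ell > r$.

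For $N=1$, the right-hand side reduces to $A_1 - B_1$, so the identity holds trivially.

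For the inductive step, assume the identity holds for $N-1$. Split off the first factor and add and subtract a mixed term:
\[
\prod_{i=1}^N A_i - \prod_{i=1}^N B_i = (A_1 - B_1)\prod_{j=2}^N B_j + A_1\Big(\prod_{i=2}^N A_i - \prod_{i=2}^N B_i\Big).
\]
Apply the induction hypothesis to the shifted sequences $(A_i)_{i=2}^N$ and $(B_i)_{i=2}^N$. This gives
\[
\prod_{i=2}^N A_i - \prod_{i=2}^N B_i = \sum_{i=2}^N \Big\{\prod_{j=2}^{i-1}A_j\Big\}(A_i-B_i)\Big\{\prod_{j=i+1}^N B_j\Big\}.
\]
Multiply this on the left by $A_1$, which turns each prefix $\prod_{j=2}^{i-1}A_j$ into $\prod_{j=1}^{i-1}A_j$. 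The first term, $(A_1 - B_1)\prod_{j=2}^N B_j$, is exactly the $i=1$ summand of the claimed formula. Adding the two pieces gives the identity for $N$.

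An equivalent route avoids induction. Define the interpolating products $P_k = \prod_{j=1}^{k}A_j \prod_{j=k+1}^N B_j$ for $k=0,\dots,N$. Then $P_N - P_0$ is the left-hand side, and the telescoping sum $\sum_{k=1}^N (P_k - P_{k-1})$ has $k$-th term equal to $\{\prod_{j<k}A_j\}(A_k - B_k)\{\prod_{j>k}B_j\}$.

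No step is a real obstacle. The only care needed is in handling noncommutativity: every product must keep its factors in increasing index order, and $A_1$ must be multiplied from the left so the ordering is preserved.
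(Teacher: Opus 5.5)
Your proof is correct and complete. The induction step (splitting off $A_1$ and inserting the mixed term $A_1\prod_{j=2}^N B_j$) and the telescoping over the interpolants $P_k=\prod_{j\le k}A_j\prod_{j>k}B_j$ both work, including the $i=1$ term and the empty-product boundary cases. The paper states this lemma without any proof, so there is no alternative argument to compare against; your telescoping argument is the standard one the paper implicitly relies on.

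One point worth keeping in mind. You insist on increasing left-to-right order, and that is exactly the convention under which the identity as written is true. The products that come from the paper's dynamics have later indices on the left. For example, $\prod_{i=s+1}^t\Gammaavg_i$ is obtained by unrolling $\theta_t-\thetalim=\Gammaavg_t(\theta_{t-1}-\thetalim)+\dots$. In some places, such as the expansion of $\Delta_{s,t}$, the paper actually uses the mirrored form $\sum_i\{\prod_{r>i}\Gammaavg_r\}(\Gammaavg_i-\Gavg_i)\{\prod_{r<i}\Gavg_r\}$. This version follows from your result applied to the reversed sequences $A'_k=A_{N+1-k}$, $B'_k=B_{N+1-k}$, so no new argument is needed.
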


\begin{lemma}
\label{lemma: lr_sum_estimation}
Assume $\eta_t = c_0 (t_0 + t)^{-\gamma}$ for some $\gamma \in [1/2, 1)$. Then:
\begin{align*}
    \sum \limits_{i = s + 1}^t \eta_i \leq \frac{\step}{1 - \gamma} \left( (1 + t)^{1 - \gamma} - (1 + s)^{1 - \gamma}  \right)
\end{align*}
\end{lemma}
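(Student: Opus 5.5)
The bound is a one-sided integral comparison for a decreasing function. I read the statement with the normalisation of \Cref{ass: lr-polynomial-decay}, that is $c_0 = \step$ and $t_0 = 1$, so that $\step_i = \step(1+i)^{-\gamma}$. This is the form in which the right-hand side is written, with $\step$ and $(1+\cdot)$. For a general shift $t_0$ the same argument gives the bound with $(t_0+t)^{1-\gamma} - (t_0+s)^{1-\gamma}$ and prefactor $c_0/(1-\gamma)$.

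The plan is as follows. First, note that $f(x) = (1+x)^{-\gamma}$ is positive and decreasing on $[0,\infty)$ for $\gamma \in [1/2,1)$. Hence, for every integer $i \ge 1$ and every $x \in [i-1,i]$, we have $f(i) \le f(x)$. Integrating over $[i-1,i]$ then gives the termwise bound
\begin{align*}
\step_i = \step f(i) \le \step \int_{i-1}^{i} (1+x)^{-\gamma} \, \rmd x \eqsp.
\end{align*}

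Second, I sum this inequality over $i = s+1, \dots, t$. The intervals $[i-1,i]$ tile $[s,t]$, so
\begin{align*}
\sum_{i=s+1}^t \step_i \le \step \int_s^t (1+x)^{-\gamma}\, \rmd x = \frac{\step}{1-\gamma}\big((1+t)^{1-\gamma} - (1+s)^{1-\gamma}\big) \eqsp,
\end{align*}
where the last equality uses the antiderivative $(1+x)^{1-\gamma}/(1-\gamma)$. This is valid because $\gamma \neq 1$.

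There is no real obstacle in this argument. The only point needing care is the direction of the comparison. Each term is compared with the integral over the interval to its \emph{left}, $[i-1,i]$, which is what produces an upper bound with lower endpoint $s$. The bound holds trivially when $s = t$, since both sides are then zero.
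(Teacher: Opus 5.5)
Your proof is correct and is the same argument as the paper's. The paper substitutes $k = 1+i$ and bounds $k^{-\gamma}$ by $\int_{k-1}^{k} x^{-\gamma}\,\rmd x$ over $[1+s,1+t]$, which is exactly your left-interval comparison for $(1+x)^{-\gamma}$ on $[s,t]$. Your explicit reading $c_0=\step$, $t_0=1$ is also the normalization the paper's proof uses without stating it.
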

\begin{proof}
    By definition of $\eta_t$:
    \begin{align*}
        \sum \limits_{i = s + 1}^t \eta_i = \sum \limits_{i = s + 1}^t \step (1 + i)^{-\gamma} = \step \sum \limits_{k = s + 2}^{1 + t} k^{-\gamma}
    \end{align*}
    As $f(x) = x^{-\gamma}$ is a decreasing function for $x \geq 0$, then:
    \begin{align*}
        c_0 \sum \limits_{k = s + 2}^{1 + t} k^{-\gamma} \leq \step \int_{1 + s}^{1 + t} x^{-\gamma} \mathrm{d} x = \frac{\step}{1 - \gamma} ((1 + t)^{1 - \gamma} - (1 + s)^{1 - \gamma}) \eqsp.
    \end{align*}
\end{proof}

\begin{lemma}
\label{lemma: tech-sum-1}
    Assume $\gamma \in (0, 1)$ and $1 \leq \ell \leq r \in \mathbb{N}$. Then, 
    \begin{align*}
        \frac{(r + 2)^{1 - \gamma} - (\ell + 1)^{1 - \gamma}}{1 - \gamma} \leq \sum \limits_{k = \ell}^{r} (1 + k)^{-\gamma} \leq \frac{(r + 1)^{1 - \gamma} - \ell^{1 - \gamma}}{1 - \gamma} \eqsp.
    \end{align*}
\end{lemma}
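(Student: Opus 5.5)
The plan is a standard integral comparison for the decreasing function $f(x) = (1+x)^{-\gamma}$ on $[0,\infty)$, applied separately to the two inequalities. Throughout I use that $f$ is positive and strictly decreasing, so on each unit interval $[k, k+1]$ we have $f(k+1) \le \int_k^{k+1} f(x)\,\rmd x \le f(k)$. The antiderivative is explicit: $\int_u^v (1+x)^{-\gamma}\rmd x = \frac{(1+v)^{1-\gamma} - (1+u)^{1-\gamma}}{1-\gamma}$.

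\emph{Lower bound.} For each $k \in \{\ell,\dots,r\}$ I use $f(k) \ge \int_k^{k+1} f(x)\,\rmd x$. Summing over $k$ gives $\sum_{k=\ell}^r (1+k)^{-\gamma} \ge \int_\ell^{r+1} (1+x)^{-\gamma}\rmd x$. Evaluating the integral yields $\frac{(r+2)^{1-\gamma} - (\ell+1)^{1-\gamma}}{1-\gamma}$, which is exactly the claimed lower bound.

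\emph{Upper bound.} For each $k$ I use $f(k) \le \int_{k-1}^{k} f(x)\,\rmd x$; this is valid because $k \ge \ell \ge 1$, so $k-1 \ge 0$ and $f$ is decreasing there. Summing gives $\sum_{k=\ell}^r (1+k)^{-\gamma} \le \int_{\ell-1}^{r} (1+x)^{-\gamma}\rmd x$. Evaluating, this equals $\frac{(r+1)^{1-\gamma} - \ell^{1-\gamma}}{1-\gamma}$, matching the stated right-hand side.

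There is no real obstacle. The only points to check are the index bookkeeping: the shifted integration ranges $[\ell, r+1]$ and $[\ell-1, r]$ are exactly the ones that produce the endpoints $(r+2), (\ell+1)$ and $(r+1), \ell$. I also need $1-\gamma > 0$, which holds since $\gamma \in (0,1)$, so that dividing by $1-\gamma$ preserves the inequalities.
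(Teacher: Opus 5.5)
Your proof is correct and is the same integral-comparison argument the paper uses: you bound each $(1+k)^{-\gamma}$ from above by the integral of the decreasing function over $[k-1,k]$ and from below by the integral over $[k,k+1]$, then evaluate. Your direct evaluation of $\int_\ell^{r+1}(1+x)^{-\gamma}\,\rmd x$ is cleaner than the paper's, whose intermediate change of variables in the lower bound is mis-written (after the shift the integrand should be $x^{-\gamma}$), although its final expression agrees with yours.
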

\begin{proof}
    As $f(x) = (1 + x)^{-\gamma}$ is a decreasing function, then
    \begin{align*}
        \sum \limits_{k = \ell}^{r} (1 + k)^{-\gamma} \leq \int_{\ell - 1}^{r} (1 + x)^{-\gamma} \mathrm{d}x = \int_{\ell}^{r + 1} x^{-\gamma} \mathrm{d} x = \frac{(r + 1)^{1 - \gamma} - \ell^{1 - \gamma}}{1 - \gamma} \eqsp .
    \end{align*}
    Similarly, we can give a lower bound of our function as,
    \begin{align*}
        \sum \limits_{k = \ell}^{r} (1 + k)^{-\gamma} \geq \int_{\ell}^{r + 1} (1 + x)^{-\gamma} \mathrm{d} x = \int_{\ell + 1}^{r + 2} (1 + x)^{-\gamma} \mathrm{d} x = \frac{(r + 2)^{1 - \gamma} - (\ell + 1)^{1 - \gamma}}{1 - \gamma}
    \end{align*}
\end{proof}

\begin{lemma}
\label{lemma: legendary-sum}
    For any $t \geq 1$ and $\beta < 1$, it holds that
    \begin{align*}
        \sum_{s=1}^{t-1} s^{-\alpha}\exp{\{-u\sum_{\ell=s+1}^t (1+\ell)^{-\beta}\}} \leq L u^{-1}(1+u^{-{\alpha \over 1-\beta}}) t^{\beta - \alpha} \eqsp,
    \end{align*}
    where $L$ is defined in \eqref{eq:L_def_legendary_sum}.
\end{lemma}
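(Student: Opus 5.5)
The plan is to split the sum at $s = \lfloor t/2 \rfloor$ and to lower bound the inner exponent $\sum_{\ell=s+1}^t (1+\ell)^{-\beta}$ differently on the two halves, throughout assuming $\alpha \ge 0$ as in all applications. The constant $L$ will collect the $\alpha,\beta$-dependent factors produced below.

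\emph{Late indices, $t/2 < s \le t-1$.} Here $s^{-\alpha} \le 2^{\alpha} t^{-\alpha}$. Each summand of the inner sum is at least $(1+t)^{-\beta}$, so
\[
\sum_{\ell=s+1}^t (1+\ell)^{-\beta} \ge (t-s)(1+t)^{-\beta}.
\]
Writing $k = t-s \ge 1$ and $x = u(1+t)^{-\beta}$, this part is at most $2^{\alpha} t^{-\alpha}\sum_{k\ge 1} \mathrm{e}^{-kx}$. It matters that the geometric sum starts at $k=1$: then $\sum_{k \ge 1}\mathrm{e}^{-kx} = \mathrm{e}^{-x}/(1-\mathrm{e}^{-x}) \le 1/x$, with no additive constant. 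This gives the bound $2^{\alpha} u^{-1} (1+t)^{\beta} t^{-\alpha} \le 2^{\alpha+\beta} u^{-1} t^{\beta-\alpha}$, which is already of the required form.

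\emph{Early indices, $1 \le s \le t/2$.} By \Cref{lemma: tech-sum-1},
\[
\sum_{\ell=s+1}^t (1+\ell)^{-\beta} \ge \frac{(t+2)^{1-\beta} - (s+2)^{1-\beta}}{1-\beta}.
\]
For $s \le t/2$ and $t \ge t_\beta$ (some constant depending only on $\beta$), the right-hand side is at least $c_\beta t^{1-\beta}$ with $c_\beta>0$. The finitely many $t < t_\beta$ are absorbed into $L$ by crude bounds. This part is therefore at most $\mathrm{e}^{-c_\beta u t^{1-\beta}}\sum_{s \le t/2} s^{-\alpha}$. Set $y = u t^{1-\beta}$ and treat two cases.
\begin{itemize}
\item If $\alpha<1$, then $\sum_{s\le t/2} s^{-\alpha} \lesssim t^{1-\alpha}$, and $t^{1-\alpha} = u^{-1} t^{\beta-\alpha}\, y$. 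Since $y\,\mathrm{e}^{-c_\beta y} \le (c_\beta \mathrm{e})^{-1}$, this part is $\lesssim u^{-1}t^{\beta-\alpha}$.
\item If $\alpha>1$, the partial sum is bounded by $\zeta(\alpha)$. We write $t^{\alpha-\beta} = y^{(\alpha-\beta)/(1-\beta)}u^{-(\alpha-\beta)/(1-\beta)}$ and use $y^{k}\mathrm{e}^{-c_\beta y}\le C_k$. This yields a bound $\lesssim t^{\beta-\alpha} u^{-(\alpha-\beta)/(1-\beta)} = u^{-1} t^{\beta-\alpha} u^{(1-\alpha)/(1-\beta)}$. Finally, $u^{(1-\alpha)/(1-\beta)} \le 1 + u^{-\alpha/(1-\beta)}$: for $u\ge 1$ the exponent $(1-\alpha)/(1-\beta)$ is nonpositive, and for $u<1$ it is at least $-\alpha/(1-\beta)$.
\end{itemize}
This is where the factor $(1+u^{-\alpha/(1-\beta)})$ in the statement comes from. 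The borderline $\alpha=1$ produces a $\log t$ factor. I would absorb it into $y^{\epsilon}\mathrm{e}^{-c_\beta y}$ using $\log t \lesssim_\epsilon t^{\epsilon(1-\beta)}$, at the cost of enlarging $L$, or else exclude it, since the applications use non-integer exponents.

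The main obstacle is the early-index region. Getting a uniform-in-$t$ lower bound $c_\beta t^{1-\beta}$ on the inner sum requires care for small $t$. The exponent bookkeeping that converts $\mathrm{e}^{-c u t^{1-\beta}}$ into exactly the prefactor $u^{-1}(1+u^{-\alpha/(1-\beta)})$, uniformly over both small and large $u$, is the other delicate point. I would choose $L$ at the end as the maximum of $2^{\alpha+\beta}$, the constants $C_k/c_\beta^{k}$ arising above, and the small-$t$ crude bound.
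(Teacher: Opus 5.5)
Your argument is correct (for $\alpha\ge 0$ and, implicitly, $\beta\ge 0$; the paper's proof uses $(1+\ell)^{-\beta}\ge(1+t)^{-\beta}$ at the same place). It shares the paper's skeleton: the split at $t/2$ and your late-index estimate are exactly the paper's bound on $S_{t,2}$.

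The difference is in the early block. The paper ignores the decay of $s^{-\alpha}$ there. It uses $\sum_{s\le t/2}s^{-\alpha}\le t$ and writes $t\,\rme^{-cut^{1-\beta}}=t^{\beta-\alpha}\cdot t^{1+\alpha-\beta}\rme^{-cut^{1-\beta}}$. It then maximizes $x^{1+\alpha-\beta}\exp(-vx^{1-\beta})$. In your notation $t^{1+\alpha-\beta}=(y/u)^{1+\alpha/(1-\beta)}$, so this gives $u^{-1}u^{-\alpha/(1-\beta)}t^{\beta-\alpha}$ in one step, for all $\alpha\ge 0$ at once. That crude step is the real source of the factor $u^{-\alpha/(1-\beta)}$, and it yields an explicit constant continuous in $\alpha$, of the displayed form.

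Your case analysis buys sharper $u$-dependence: $u^{-1}$ alone for $\alpha<1$, and $u^{-1}u^{(1-\alpha)/(1-\beta)}$ for $\alpha>1$. The price is constants $1/(1-\alpha)$ and $\zeta(\alpha)$ that blow up as $\alpha\to1$, plus a separate patch at $\alpha=1$. That patch is needed, not optional. The terms $\sum_s\eta_s^2H_s\exp(\cdots)$ in the applications have $\alpha=2\gamma_\eta-\gamma_H$, which equals $1$ for $\gamma_\eta=1/2,\ \gamma_H=0$ and for the experimental choice $\gamma_\eta=0.6,\ \gamma_H=0.2$. So excluding integer exponents is not available; keep your log-absorption argument, or simply use the crude bound.

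Finally, the threshold $t_\beta$ and the appeal to \Cref{lemma: tech-sum-1} are unnecessary. Handling small $t$ separately would anyway need a bound decaying like $u^{-1}$, not merely ${\le}\, t$. For $s\le t/2$ the inner sum has at least $t/2$ terms, each at least $(1+t)^{-\beta}\ge(2t)^{-\beta}$. Hence $\sum_{\ell=s+1}^t(1+\ell)^{-\beta}\ge 2^{-1-\beta}t^{1-\beta}$ for every $t\ge 1$, uniformly.
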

\begin{proof}
    We set $m = \lceil {t\over 2} \rceil$. To estimate
    \begin{align*}
        S_t := \sum \limits_{s = 1}^{t-1} s^{-\alpha} \exp{\left( -u \sum \limits_{\ell = s + 1}^t (1+\ell)^{-\beta} \right)} \eqsp,
    \end{align*}
    we split this sum into two parts:
    \begin{align*}
        S_t = \underbrace{\sum \limits_{s = 1}^{m} s^{-\alpha} \exp{\left( -u \sum \limits_{\ell = s + 1}^t (1 + \ell)^{-\beta} \right)}}_\text{$S_{t, 1}$} + \underbrace{\sum \limits_{s = m + 1}^{t} s^{-\alpha} \exp{ \left( -u \sum \limits_{\ell = s + 1}^t (1 + \ell)^{-\beta} \right)}}_\text{$S_{t, 2}$} \eqsp.
    \end{align*}
    \textbf{Estimating $S_{t, 1}$}. Here, we note that for $s \leq m$:
    \begin{align*}
        \sum \limits_{\ell = s + 1}^t (1 + \ell)^{-\beta} \geq \sum \limits_{\ell = m + 1}^{t} (1 + \ell)^{-\beta} \geq \frac{t}{2}t^{-\beta} = \frac{1}{2}t^{1 - \beta} \eqsp.
    \end{align*}
    Then
    \begin{align*}
        S_{t, 1} \leq \exp{\left(-\frac{u}{2} t^{1 - \beta} \right)} \sum \limits_{s = 1}^{m} s^{-\alpha} \leq t \exp{\left(-\frac{u}{2} t^{1 - \beta} \right)}\eqsp.
    \end{align*}
    We can see that the function $g(x) = x^{1+\alpha-\beta}\exp{(-v x^{1-\beta})}$ for $x \geq 1$ can be uniformly bounded by the value
    \begin{align*}
        L_0(v) = \left( {1+\alpha -\beta \over v \rme (1-\beta)} \right)^{1+\alpha-\beta \over 1-\beta} &\leq \rme^{-1} \Big(1+{\alpha \over 1-\beta} \Big)^{1+{\alpha \over 1-\beta}} v^{-\left(1+{\alpha \over 1-\beta} \right)}\\
        &\leq L_1 v^{-\left(1+{\alpha \over 1-\beta}\right)} \eqsp,
    \end{align*}
    where we have set $L_1 = \rme^{-1} \Big(1+{\alpha \over 1-\beta} \Big)^{1+{\alpha \over 1-\beta}}$. Therefore, we obtain the bound
    \begin{align*}
        S_{t, 1} \leq L_0(u/2) t^{\beta-\alpha}  \eqsp.
    \end{align*}

    \textbf{Estimating $S_{t, 2}$.} Here, we know that $s^{-\alpha} \leq (t/2)^{-\alpha}$. That is, we get
    \begin{align*}
        S_{t, 2} \leq 2^\alpha t^{-\alpha} \sum_{s=m+1}^{t-1} \exp{(-u(t-s)(1+t)^{-\beta})} &\leq 2^\alpha t^{-\alpha} e^{-u(1+t)^{-\beta}}\left(1-e^{-u(1+t)^{-\beta}} \right)^{-1} \\
        &\leq 2^{\alpha + \beta}u^{-1} t^{\beta-\alpha} \eqsp,
    \end{align*}
    where we have used that ${1 \over 1-e^{-x}} \leq {a \over 1-e^{-a}}x^{-1}$ for any $x \leq a$, that is, in our case $a = 2^{-\beta}u$. Finally, setting
    \begin{align}
    \label{eq:L_def_legendary_sum}
        L = 2^{\alpha+\beta} + 2^{1+2\alpha}L_1 \eqsp,
    \end{align}
    we obtain the result.
\end{proof}